\pdfoutput=1  \documentclass[11pt]{article}

\def\paperVenue{arxiv}
\def\bibvenuestyle{authoryear}

\ifdefined\paperVenue\else
  \PackageError{preamble/shared}{\string\paperVenue\space is not defined}{Add \string\def\string\paperVenue{arxiv} (or {neurips}) to the main
    wrapper, before it loads preamble/shared.}
\fi

\newif\ifvenuearxiv
\newif\ifvenueneurips

\makeatletter
\edef\@venue@cur{\paperVenue}
\def\@venue@x{arxiv}\ifx\@venue@cur\@venue@x\venuearxivtrue\fi
\def\@venue@x{neurips}\ifx\@venue@cur\@venue@x\venueneuripstrue\fi
\makeatother

\newif\ifanonymous
\anonymousfalse

\usepackage[T1]{fontenc}   \usepackage{microtype}     \usepackage{booktabs}      \usepackage{xltabular}     \usepackage{xcolor}        \usepackage{graphicx}      \usepackage{subcaption}    \usepackage{url}           \usepackage{fontawesome5}  \usepackage{xstring}

\definecolor{primary-red}{RGB}{165,30,55}
\definecolor{secondary-blue}{RGB}{0,105,170}
\definecolor{secondary-mint}{RGB}{130,185,160}
\definecolor{secondary-green}{RGB}{50,110,30}

\usepackage{acro}
\DeclareAcronym{bce}{
  short = BCE,
  long  = binary cross-entropy
}

\DeclareAcronym{mrnp}{
  short = MRNP,
  long  = minimum-representation-norm parameterization
}

\DeclareAcronym{mwnp}{
  short = MWNP,
  long  = minimum-weight-norm parameterization
}

\DeclareAcronym{rsm}{
  short = RSM,
  long  = representational similarity matrix,
  long-plural-form = representational similarity matrices
}

\DeclareAcronym{xor}{
  short = XOR,
  long  = exclusive or
}
\space

\usepackage{enumitem}
\setlist[itemize,enumerate]{
  topsep=2pt,        partopsep=0pt,     leftmargin=1.5em,
  parsep=0pt,        itemsep=2pt        }

\ifdefined\bibvenuestyle\else
  \PackageError{preamble/bibliography}{\string\bibvenuestyle\space is not defined}{Add e.g. \string\def\string\bibvenuestyle{numeric} to the main wrapper,
    before it loads preamble/shared.}
\fi

\usepackage[
  style=\bibvenuestyle,
  sortcites=true,
]{biblatex}

\usepackage{hyperref}
\hypersetup{
  colorlinks=true,
  linkcolor=secondary-blue,
  urlcolor=secondary-blue,
  citecolor=primary-red,
}

\newbibmacro*{doi-link}{\href{https://doi.org/\thefield{doi}}{\raisebox{0.067ex}{\small\faIcon[regular]{file-alt}}}}

\newbibmacro*{url-link}{\href{\thefield{url}}{\raisebox{0.067ex}{\small\faIcon[solid]{external-link-alt}}}}

\newbibmacro*{arxiv-link}{\href{https://arxiv.org/abs/\thefield{eprint}}{\raisebox{0.067ex}{\small\faIcon[solid]{external-link-alt}}}}

\newbibmacro*{biorxiv-link}{\href{https://biorxiv.org/content/\thefield{eprint}}{\raisebox{0.067ex}{\small\faIcon[solid]{external-link-alt}}}}

\newbibmacro*{psyarxiv-link}{\href{https://psyarxiv.com/\thefield{eprint}}{\raisebox{0.067ex}{\small\faIcon[solid]{external-link-alt}}}}

\newbibmacro*{trailing-link}{\iffieldundef{eprint}{\iffieldundef{doi}{\iffieldundef{url}{}{\usebibmacro{url-link}}}{\usebibmacro{doi-link}}}{\iffieldequalstr{eprinttype}{arXiv}{\usebibmacro{arxiv-link}}{\iffieldequalstr{eprinttype}{bioRxiv}{\usebibmacro{biorxiv-link}}{\iffieldequalstr{eprinttype}{PsyArXiv}{\usebibmacro{psyarxiv-link}}{\iffieldundef{doi}{\iffieldundef{url}{}{\usebibmacro{url-link}}}{\usebibmacro{doi-link}}}}}}}

\renewbibmacro*{finentry}{\finentry
  \enspace\usebibmacro{trailing-link}}

\DeclareFieldFormat{doi}{}  \DeclareFieldFormat{url}{}  \renewbibmacro*{eprint}{\iffieldundef{eprint}{}{\printtext{\iffieldundef{eprinttype}{Preprint}{\thefield{eprinttype}\addspace preprint}}}}
\space
\space

\AtBeginDocument{\ifanonymous
    \def\codeurl{https://anonymous.4open.science/r/surfing-symmetries-0325/}\else
    \def\codeurl{https://github.com/mrvnthss/symmetries-representational-geometry}\fi
}

\usepackage{pifont}  

\definecolor{dark-teal}{HTML}{1f6f6f}
\definecolor{dark-red}{HTML}{a00000}

\newcommand{\cmark}{\textcolor{dark-teal}{\ding{51}}}  \newcommand{\xmark}{\textcolor{dark-red}{\ding{55}}}

\newif\ifrevisions
\revisionstrue   

\ifrevisions
  
  \newenvironment{revblock}{\par\color{red}}{\par}
\else

\fi

\ifrevisions
  
\else
  
\fi

\usepackage{mathtools}   \usepackage{amssymb}     \usepackage{amsthm}      \usepackage{mleftright}  \usepackage{nicefrac}

\theoremstyle{plain}
\newtheorem{theorem}{Theorem}[section]
\newtheorem{proposition}[theorem]{Proposition}
\newtheorem{lemma}[theorem]{Lemma}
\newtheorem{corollary}[theorem]{Corollary}

\theoremstyle{definition}
\newtheorem{definition}[theorem]{Definition}
\newtheorem{example}[theorem]{Example}

\theoremstyle{remark}
\newtheorem{remark}[theorem]{Remark}

\newcommand{\proofdeferred}[1]{\hfill\textnormal{(Proof in \cref{#1}.)}}

  \DeclareMathOperator{\spanop}{span}       

\newcommand{\cl}{\operatorname{cl}}      \newcommand{\diag}{\operatorname{diag}}  \newcommand{\tr}{\operatorname{tr}}      

\newcommand{\cone}{\operatorname{cone}}  \newcommand{\Hol}{\mathcal{H}}           

\providecommand{\given}{}
\DeclarePairedDelimiterXPP{\set}[1]{}{\{}{\}}{}{
  \renewcommand{\given}{\nonscript\;\delimsize\vert\nonscript\;\mathopen{}}
  #1
}

\DeclarePairedDelimiter{\abs}{\lvert}{\rvert}

\DeclarePairedDelimiter{\norm}{\lVert}{\rVert}

\DeclareMathOperator{\hardSigmoid}{hard\_sigmoid}  \DeclareMathOperator{\hardSilu}{hard\_silu}        \DeclareMathOperator{\reluSix}{relu6}                \DeclareMathOperator{\sigmoid}{sigmoid}            \DeclareMathOperator{\silu}{silu}                  \DeclareMathOperator{\softplus}{softplus}

\usepackage[
  capitalize,  noabbrev,    nameinlink   ]{cleveref}

\crefname{assumption}{Assumption}{Assumptions}

\usepackage{import}    \usepackage{xifthen}   \usepackage{pdfpages}  \usepackage{xfp}

\newcommand{\figfontsetup}{}

\makeatletter
\NewDocumentCommand{\incfig}{O{\textwidth} m}{\begingroup
    \figfontsetup
    \IfBlankTF{#1}{\def\svgwidth{\textwidth}}{\def\svgwidth{#1}}#2\endgroup
}
\makeatother

\newcommand{\figpt}[1]{\fontsize{#1}{\fpeval{1.2*#1}}\selectfont}

\newcommand{\fR}[1]{{\fRsize$#1$}}
\newcommand{\fP}[1]{{\fPsize\bfseries #1}}
\newcommand{\fH}[1]{{\fHsize\bfseries #1}}
\newcommand{\fC}[1]{{\fCsize$#1$}}
\newcommand{\fA}[1]{{\fAsize$#1$}}
\newcommand{\fa}[1]{{\fasize$#1$}}

\newcommand{\A}{\fP{A}}
\newcommand{\B}{\fP{B}}

\newcommand{\shortminus}{\mathchoice
  {\scalebox{0.6}[1]{$\displaystyle-$}}{\scalebox{0.6}[1]{$\textstyle-$}}{\scalebox{0.6}[1]{$\scriptstyle-$}}{\scalebox{0.6}[1]{$\scriptscriptstyle-$}}
}

\AtBeginDocument{\renewcommand{\wp}{\mathbf{w}}}
\newcommand{\ap}{\mathbf{a}}

\newcommand{\wm}{\shortminus\mathbf{w}}
\newcommand{\bm}{\shortminus b}

\newcommand{\ws}{\mathbf{w}^{\star}}
\newcommand{\bs}{b^{\star}}
\newcommand{\as}{\mathbf{a}^{\star}}

\newcommand{\wsm}{\shortminus\mathbf{w}^{\star}}
\newcommand{\bsm}{\shortminus b^{\star}}

\newcommand{\0}{\mathbf{0}}
\newcommand{\g}{\alpha}
\newcommand{\gi}{\frac{\as}{\g}}

\newcommand{\aA}{\mathbf{a}_{1}}
\newcommand{\aB}{\mathbf{a}_{2}}
\newcommand{\aC}{\mathbf{a}_{K}}

\newcommand{\aAp}{\mathbf{a}_{1}^{+}}
\newcommand{\aBp}{\mathbf{a}_{2}^{+}}
\newcommand{\aCp}{\mathbf{a}_{K}^{+}}

\newcommand{\aAm}{\mathbf{a}_{1}^{\shortminus}}
\newcommand{\aBm}{\mathbf{a}_{2}^{\shortminus}}
\newcommand{\aCm}{\mathbf{a}_{N}^{\shortminus}}

\newcommand{\rhoZero}{\rho \approx 0}
\newcommand{\rhoOne}{\rho \approx 1}

\newcommand{\zero}{\sum_{z} \mathbf{a}_{z} = \0
}
\newcommand{\dup}{\sum_{d} \mathbf{a}_{d} = \mathbf{a}^{\star}
}
\newcommand{\lin}{\mathbf{a}^{\pm} = \0
}
\newcommand{\linDup}{\mathbf{a}^{\pm} = \mathbf{a}^{\star}
}
\newcommand{\const}{\mathbf{a}^{\mp} = \0
}
\newcommand{\constDup}{\mathbf{a}^{\mp} = \mathbf{a}^{\star}
}
\space

\usepackage[in]{fullpage}

\usepackage{libertinus}
\usepackage{libertinust1math}

\usepackage{authblk}

\makeatletter
\patchcmd{\@maketitle}
  {\LARGE \@title}
  {\LARGE\scshape \@title}
  {}
  {\PackageError{preamble/arxiv}{Could not patch \string\@maketitle}{}}
\makeatother

\usepackage{titlesec}

\titleformat{\section}
  {\normalfont\Large\scshape\raggedright}
  {\thesection}
  {1em}
  {}

\titleformat{\subsection}
  {\normalfont\large\scshape\raggedright}
  {\thesubsection}
  {1em}
  {}

\titleformat{\subsubsection}
  {\normalfont\normalsize\scshape\raggedright}
  {\thesubsubsection}
  {1em}
  {}

\titlespacing*{\section}
  {0pt}
  {3.5ex plus 1ex minus .2ex}
  {2.3ex plus .2ex}

\titlespacing*{\subsection}
  {0pt}
  {3.25ex plus 1ex minus .2ex}
  {1.5ex plus .2ex}

\titlespacing*{\subsubsection}
  {0pt}
  {3.25ex plus 1ex minus .2ex}
  {1.5ex plus .2ex}

\titleformat{\paragraph}[runin]
  {\normalfont\normalsize\bfseries}
  {}
  {0pt}
  {}

\titlespacing*{\paragraph}
  {0pt}
  {3.25ex plus 1ex minus .2ex}
  {1em}

\renewenvironment{abstract}{\small
  {\centering\large\scshape\abstractname\par}\vspace{0.5ex}\quote
}{\endquote
}

\usepackage[parfill]{parskip}

\usepackage{enumitem}
\setlist[enumerate]{label=(\roman*)}

\makeatletter
\apptocmd{\@floatboxreset}
  {\ifdefstring{\@captype}{table}{\small}{}}
  {}
  {}
\makeatother

\newcommand{\fRsize}{\figpt{9.0}}  \newcommand{\fPsize}{\figpt{7.5}}  \newcommand{\fHsize}{\figpt{6.5}}  \newcommand{\fCsize}{\figpt{7.0}}  \newcommand{\fAsize}{\figpt{6.0}}  \newcommand{\fasize}{\figpt{5.0}}

\DeclareDelimFormat{nameyeardelim}{\addcomma\space}
\space

\hypersetup{pdftitle={Parameter symmetries determine representational geometry in overparameterized nonlinear networks},
  pdfauthor={Marvin Theiss, Lukas Braun, Andrew M. Saxe, Erin Grant},
  pdfsubject={cs.LG},
}

\title{Parameter symmetries determine representational geometry in overparameterized nonlinear networks
}

\author[1,2,\textdagger]{Marvin Theiss}
\author[3]{Lukas Braun}
\author[4,5]{Andrew M.~Saxe}
\author[6,7]{Erin Grant}

\affil[1]{University of Tübingen}
\affil[2]{International Max Planck Research School for Intelligent Systems}
\affil[3]{Allen Institute for Neural Dynamics}
\affil[4]{Gatsby Computational Neuroscience Unit, University College London}
\affil[5]{Sainsbury Wellcome Centre, University College London}
\affil[6]{University of Alberta}
\affil[7]{Amii}

\date{}

\begin{document}

\maketitle

\begingroup
  \makeatletter
  \def\Hy@footnote@currentHref{authornote}
  \makeatother
  \renewcommand{\thefootnote}{\textdagger}
  \footnotetext{Work partially done while a project research intern at the Gatsby Computational Neuroscience Unit, UCL.
  }
\endgroup

\addtocontents{toc}{\protect\setcounter{tocdepth}{-1}}

\begin{abstract}
  Representations are routinely used across machine learning, psychology, and neuroscience to draw inferences about the computations of biological and artificial systems.
Such inferences presume a meaningful link between representational geometry and the computation being performed.
For artificial neural networks, however, the extent to which function constrains representation remains unclear.
One key obstacle is that these networks admit \emph{parameter symmetries}: changes in parameterization that preserve function exactly while reshaping representational geometry.
Here, we show that a broad class of parameter symmetries acts on representations through just three primitive feature transformations: addition, duplication, and scaling.
This feature-level characterization yields a closed-form decomposition of representational geometry into \emph{essential} and \emph{auxiliary} components, which makes precise how degeneracy in representational geometry can grow with overparameterization even when function is held fixed.
Finally, we show that implementation-level selection rules can resolve this degeneracy, yielding identifiable geometries in which features are weighted according to their contributions to the network's function.
Together, our results delineate when representations can support inferences about computation, and when they cannot.
\space
\end{abstract}

\section{Introduction}
\label{sec:introduction}

What does the geometry of the internal representations of neural networks reveal about the computations these networks implement?
Techniques for interpreting and intervening on the internal activities of artificial neural networks assume that local structure, like weights or activities within a layer, reveals information about the end-to-end computation these networks perform \parencite{zeiler2014visualizing,olah2020zoom,saphra2024mechanistic,mueller2026quest}.
In neuroscience and cognitive science, representational similarity analysis and related techniques for comparing the internal representations of biological and artificial neural networks share this assumption that the local geometry of neural activity reveals the computations it instantiates, and thus is a powerful tool for interpreting neural function in terms of neural activity \parencite{haxby2014decoding,kriegeskorte2008rsa,yamins2016goaldriven}.

However, artificial neural networks, like biological neural networks \parencite{fakhar2024downstream,albantakis2024brain}, are degenerate, as many distinct parameter configurations \parencite{kunin2021neural,entezari2022permutationinvariance,simsek2021geometry} and therefore distinct patterns of neural activity \parencite{hermann2020representations,flesch2022orthogonal,farrell2023lazy,chou2025feature,lampinen2026representation} can support the same function.
Further, \textcite{braun2025dissociation} analytically established a double dissociation between function and representation in two-layer linear networks: networks can compute identical functions while exhibiting distinct representational geometries, and conversely can exhibit identical representational geometries while computing distinct functions.
If such a double dissociation were general, it would call into question whether representational geometry can serve as a suitable basis for studying computation at all.
What remains open is the extent to which function and representation are dissociable in \emph{nonlinear} neural networks, where a given function is compatible with a wider range of representations, while the space of realizable functions is also more complex.

Here, we examine one source of underdetermination of representation by function in nonlinear networks: \emph{parameter symmetries}, transformations of network parameters that leave the realized function unchanged \parencite{nielsen1990algebraic,sussmann1992uniqueness,chen1993geometry,neyshabur2015pathsgd,zhao2026symmetry}.
By refining existing classifications of these parameter symmetries \parencite{simsek2021geometry,martinelli2024expandcluster} and deriving their exact effects on representations, we prove that these symmetries act through compositions of just three primitive feature transformations: \emph{addition}, \emph{duplication}, and \emph{scaling}.
This characterization exposes substantial variability in representational geometry, demonstrating that functional equivalence need not imply representational alignment. 

Yet, this feature-transform view also reveals a path to identifiability: suitable implementation-level constraints remove symmetry-induced degrees of freedom and restore unique representational geometries, yielding a nonlinear analogue of the corresponding linear result \parencite{braun2025dissociation}.
These implementation-level constraints single out parameterizations that calibrate features according to their computational importance, thereby identifying privileged representational geometries even within the degenerate model class of nonlinear neural networks.
Our results thus provide a theoretical foundation for relating computation and representation in nonlinear neural networks by establishing sufficient conditions under which computation constrains representation.

Concretely, our main contributions are as follows:
\begin{itemize}
  \item We refine existing taxonomies of parameter symmetries in nonlinear one-hidden-layer networks and derive a necessary and sufficient criterion for when two parameterizations are symmetry-equivalent (\cref{sec:functional-parameter-symmetries}).
  \item We show that parameter symmetries act on hidden representations through compositions of only three primitive feature transformations, addition, duplication, and scaling, and derive a structural form for hidden-activation matrices arising within a symmetry orbit (\cref{sec:parameter-symmetries-act-through-three-feature-primitives}).
  \item We decompose representational geometry into essential and auxiliary components and quantify the resulting dissociation between function and representation, showing precisely how degeneracy in representational geometry grows with overparameterization (\cref{sec:parameter-symmetries-dissociate-function-and-representation}).
  \item We establish conditions under which minimum-norm selection restores representational identifiability and show that the selected parameterizations weight features according to their contributions to the realized function (\cref{sec:identifiability-through-minimum-norm-selection}).
\end{itemize}
\space

\section{Preliminaries and setting}
\label{sec:preliminaries-and-setting}

\paragraph{Nonlinear networks.}
We consider nonlinear, fully connected one-hidden-layer networks with incoming weights $\mathbf{W} \in \mathbb{R}^{N_{h} \times N_{i}}$, biases $\mathbf{b} \in \mathbb{R}^{N_{h}}$, and readout weights $\mathbf{A} \in \mathbb{R}^{N_{o} \times N_{h}}$.
Let $\mathbf{w}_{j}^{\top}$, $b_{j}$, and $\mathbf{a}_{j}$ denote the $j$th row, entry, and column of $\mathbf{W}$, $\mathbf{b}$, and $\mathbf{A}$, respectively, and write $\overline{\mathbf{w}}_{j}^{\top} \coloneqq (\mathbf{w}_{j}^{\top}, b_{j})$ and $\overline{\mathbf{x}}^{\top} \coloneqq (\mathbf{x}^{\top}, 1)$.
The network then computes
\begin{equation}
  f_{\boldsymbol{\theta}}(\mathbf{x})
  = \sum_{j=1}^{N_{h}}
  \mathbf{a}_{j} \, \sigma(\mathbf{w}_{j}^{\top} \mathbf{x} + b_{j})
  = \sum_{j=1}^{N_{h}}
  \mathbf{a}_{j} \, \sigma(\overline{\mathbf{w}}_{j}^{\top} \overline{\mathbf{x}}),
  \qquad
  \mathbf{x} \in \mathbb{R}^{N_{i}},
\end{equation}
where $\sigma \colon \mathbb{R} \to \mathbb{R}$ is the nonlinearity and $\boldsymbol{\theta} \coloneqq (\sigma; \mathbf{W}, \mathbf{b}, \mathbf{A})$ denotes the parameterization.

\paragraph{Hidden activations.}
Given inputs $\mathbf{x}^{\mu} \in \mathbb{R}^{N_{i}}$, $\mu = 1, \dots, P$, let $\mathbf{h}^{\mu} \in \mathbb{R}^{N_{h}}$ denote the corresponding hidden activation.
We collect inputs and hidden activations as
\begin{equation}
  \mathbf{X} \coloneqq [\mathbf{x}^{1}, \dots, \mathbf{x}^{P}] \in \mathbb{R}^{N_{i} \times P},
  \qquad
  \mathbf{H} \coloneqq [\mathbf{h}^{1}, \dots, \mathbf{h}^{P}] \in \mathbb{R}^{N_{h} \times P}.
\end{equation}
For later use, define $\overline{\mathbf{X}} \coloneqq [\overline{\mathbf{x}}^{1}, \dots, \overline{\mathbf{x}}^{P}]$.
The hidden-activation matrix $\mathbf{H}$ admits two complementary interpretations: its $\mu$th column, $\mathbf{h}^{\mu}$, is the population response to input $\mathbf{x}^{\mu}$, while its $j$th row, $\mathbf{v}_{j}^{\top} \in \mathbb{R}^{1 \times P}$, records the activity of neuron $j$ across inputs.

\paragraph{Representational geometry.}
The hidden-activation matrix $\mathbf{H}$ gives rise to the uncentered Gram matrix $\mathbf{H}^{\top} \mathbf{H} \in \mathbb{R}^{P \times P}$, whose $(\mu, \nu)$th entry is the inner product between the population responses elicited by inputs $\mathbf{x}^{\mu}$ and $\mathbf{x}^{\nu}$ \parencite{edelman1998representation,kriegeskorte2013geometry}.
This matrix, known as the \emph{\ac{rsm}}, is a widely used descriptor of representational geometry and a natural object for studying variation therein, since many measures of representational similarity depend on $\mathbf{H}$ only through $\mathbf{H}^{\top}\mathbf{H}$ \parencite{kriegeskorte2008rsa,kornblith2019cka,williams2024equivalence}.
The two interpretations of $\mathbf{H}$ above yield complementary views of the \ac{rsm}: at the population level, it encodes pairwise similarities between input-evoked responses; at the neuron level, it decomposes into rank-one contributions from individual neurons:\footnote{Since neuron permutations leave $\mathbf{H}^{\top}\mathbf{H}$ unchanged, we understand all identities involving hidden-activation matrices $\mathbf{H}$ up to a permutation of its rows, with all neuron-indexed quantities relabeled consistently.
}
\begin{equation}
  \label{eq:rsm-neuron-decomposition}
  \mathbf{H}^{\top} \mathbf{H}
  = \sum_{j=1}^{N_{h}} \mathbf{v}_{j} \mathbf{v}_{j}^{\top},
  \qquad
  \mathbf{v}_{j}^{\top} = \sigma(\overline{\mathbf{w}}_{j}^{\top} \overline{\mathbf{X}})
  \in \mathbb{R}^{1 \times P}.
\end{equation}
This neuron-wise decomposition is particularly suited to our analysis: by isolating each hidden neuron's contribution, it allows us to track how these contributions change under function-preserving transformations of the network's parameterization $\boldsymbol{\theta}$.
These transformations are precisely the parameter symmetries considered in the next section.
\space

\section{Functional parameter symmetries in nonlinear networks}
\label{sec:functional-parameter-symmetries}

\begin{figure*}[t]
  \centering
  \incfig{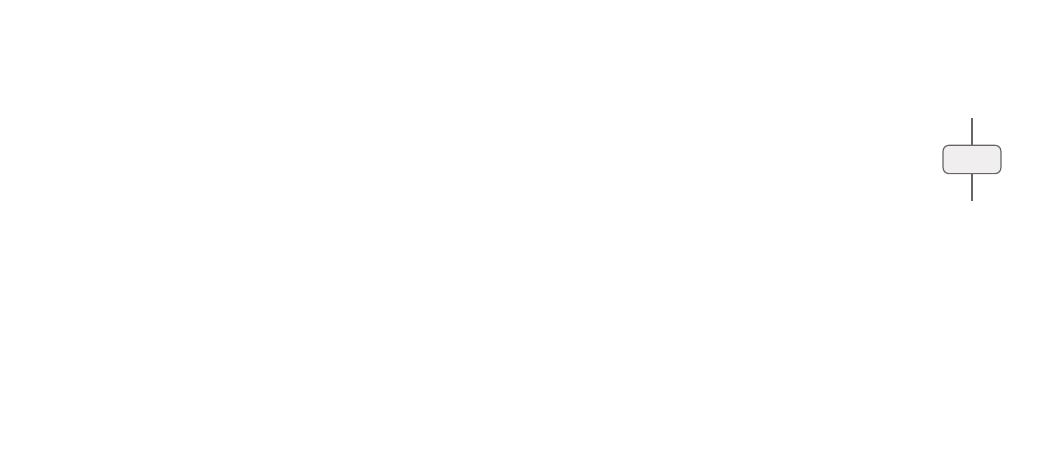}
  \caption{\textbf{Parameter symmetries can doubly dissociate function and representation}.
    \textbf{(A)} Parameter symmetries let networks implement the same function via different parameterizations.
    They act on \emph{individual} neurons (e.g., positive scaling for ReLU networks), redistribute or cancel computation across \emph{groups} (duplicates, zero groups), or couple groups of neurons through algebraic structure in the activation $\sigma$ (linear or constant duplicates, linear or constant groups).
    \textbf{(B)} Gradient descent from random initializations yields six distinct two-neuron ReLU networks solving \acs{xor} (center), shown as input-space heatmaps with neuron decision boundaries overlaid. 
    They form two families (top left and bottom right) that solve the same task using different functions, with \acp{rsm} of solutions from opposite families being nearly uncorrelated.
    Crucially, such representational variability persists even when the function is held fixed: overparameterized networks implementing \emph{exactly the same} function can have nearly uncorrelated \acp{rsm} (top and bottom).
    Conversely, networks implementing \emph{different} functions can have highly correlated \acp{rsm} (left and right), yielding a double dissociation.
  }
  \label{fig:dissociation}
\end{figure*}
\space

A given function $f$ may be realized by many distinct parameterizations, each inducing its own hidden activation matrix $\mathbf{H}$ and corresponding \ac{rsm}.
Here, we study the resulting variation in representational geometry by considering parameterizations related by \emph{parameter symmetries}.
Such symmetries admit several definitions, differing in both the quantity required to remain invariant and the set of inputs over which invariance is imposed \parencite{zhao2026symmetry}.
We focus throughout on the strictest notion, \emph{functional parameter symmetries}, which leave the network function unchanged over the entire input domain.
In this section, we introduce the symmetries we consider, define symmetry orbits and the related notions of irreducibility and overparameterization, and derive orbit invariants that provide a necessary and sufficient criterion for determining whether two parameterizations are symmetry-equivalent.

\subsection{A catalog of parameter symmetries}
\label{subsec:catalog-of-parameter-symmetries}

The parameter symmetries we consider take qualitatively distinct forms.
Some act on \emph{individual} neurons, including the positive scaling symmetry of positively $1$-homogeneous activations such as ReLU \parencite{neyshabur2015pathsgd}, which rescales a neuron's incoming parameters and outgoing weights by reciprocal factors, and the sign-flip symmetry of odd activations such as tanh \parencite{nielsen1990algebraic}.
Others exploit redundancy among \emph{groups} of neurons, including \emph{zero groups}, in which neurons cancel one another through their readouts, and \emph{duplicate groups}, whose copies collectively reproduce the computation of a single neuron \parencite{simsek2021geometry}.
A third class of symmetries couples groups of neurons through algebraic properties of the activation $\sigma$, such as \emph{linear groups} that can arise when $\sigma$ decomposes into the sum of an even function and a linear function, as do ReLU and several other commonly used activations \parencite{martinelli2024expandcluster}.
Panel~A of \cref{fig:dissociation} illustrates all parameter symmetries we consider, building on and refining the taxonomies of \textcite{simsek2021geometry,martinelli2024expandcluster}, with detailed definitions deferred to \cref{app-sec:parameter-symmetries-in-overparameterized-networks}.

Which symmetries can arise in a network depends on the algebraic structure of $\sigma$: whether it is \emph{even-linear}, $\sigma(z) = e(z) + mz$ with $e$ even; \emph{constant-odd}, $\sigma(z) = c + o(z)$ with $o$ odd; or neither \parencite{simsek2021geometry,martinelli2024expandcluster}.
This classification includes positively $1$-homogeneous activations, such as ReLU, which form a subclass of even-linear activations (\cref{cor:positively-homogeneous-functions-are-even-linear}).
\Cref{tab:symmetries-of-jax-activation-functions} classifies other commonly used activations by the symmetries they admit, with detailed derivations in \cref{app-sec:symmetry-properties-of-common-activation-functions}.

\subsection{Symmetry orbits, irreducibility, and overparameterization}
\label{subsec:symmetry-orbits-irreducibility-and-overparameterization}

The functional parameter symmetries described in \cref{subsec:catalog-of-parameter-symmetries} naturally generate an equivalence relation between network parameterizations, possibly of different widths.
We call two parameterizations $\boldsymbol{\theta}$ and $\boldsymbol{\xi}$ \emph{symmetry-equivalent}, and write $\boldsymbol{\theta} \sim \boldsymbol{\xi}$, if they are connected by a finite, function-preserving composition of the parameter symmetries enumerated in \cref{app-subsec:generic-reparameterization-symmetries,app-subsec:activation-independent-symmetries,app-subsec:activation-dependent-symmetries}.

\begin{definition}[Symmetry orbit]
  \label[definition]{def:symmetry-orbit}
  The \emph{symmetry orbit} $\mathcal{O}(\boldsymbol{\theta})$ of a network parameterization $\boldsymbol{\theta}$ is the set $\mathcal{O}(\boldsymbol{\theta}) \coloneqq \set{\boldsymbol{\xi} \given \boldsymbol{\xi} \sim \boldsymbol{\theta}}$ consisting of all parameterizations ${\boldsymbol{\xi}}$ that are symmetry-equivalent to $\boldsymbol{\theta}$.
  Its \emph{orbit at width $N_h$} is the subset $\mathcal{O}_{N_{h}}(\boldsymbol{\theta}) \coloneqq \set{\boldsymbol{\xi} \in \mathcal{O}(\boldsymbol{\theta}) \given \boldsymbol{\xi} \text{ has width } N_{h}} \subseteq \mathcal{O}(\boldsymbol{\theta})$.
\end{definition}

Minimum-width parameterizations within a symmetry orbit play a central role in our analysis.

\begin{definition}[Irreducibility and overparameterization]
  \label[definition]{def:irreducibility-and-overparameterization}
  A parameterization $\boldsymbol{\theta}$ is called \emph{irreducible} if no other element of its symmetry orbit $\mathcal{O}(\boldsymbol{\theta})$ has smaller width, and \emph{overparameterized} otherwise.
\end{definition}

Irreducible parameterizations within an orbit need \emph{not} be unique.
Permuting the neurons of any irreducible parameterization yields a finite set of irreducible parameterizations, while the positive scaling symmetry, when admitted by the activation $\sigma$, generates continuous families thereof.
More surprisingly, distinct irreducible parameterizations in the same orbit need not be related by generic reparameterization symmetries (permutation, positive scaling, sign-flip) alone; see \cref{ex:opposite-irreducible-relu-parameterizations}.

\subsection{Orbit invariants and essential parameter classes}
\label{subsec:orbit-invariants-and-essential-parameter-classes}

The definition of symmetry-equivalence just introduced is not directly operational.
Determining whether two parameterizations are equivalent requires either explicitly constructing a sequence of symmetries that transforms one into the other or proving that no such sequence exists.
Here, we establish a necessary and sufficient criterion characterizing symmetry-equivalence for positively homogeneous activations of degree~$1$, with all remaining activation classes treated in \cref{app-subsec:parameter-classes-orbit-invariants-and-symmetry-equivalence}.

By \cref{cor:positively-homogeneous-functions-are-even-linear}, every positively $1$-homogeneous activation $\sigma$ can be expressed as $\sigma(z) = \delta\abs{z} + mz$ with $\delta \neq 0$ and $m \in \mathbb{R}$.
The contribution of each nonconstant neuron splits into nonlinear and affine components
\begin{equation}
  \norm{\overline{\mathbf{w}}_{j}} \mathbf{a}_{j} 
  \left(
    \delta
    \frac{\abs{z_{j}}}{\norm{\overline{\mathbf{w}}_{j}}}
  \right)
  \qquad\text{and}\qquad
  \mathbf{a}_{j} m z_{j},
\end{equation}
respectively, where $z_{j} \coloneqq \overline{\mathbf{w}}_{j}^{\top} \overline{\mathbf{x}}$.
The nonlinear part depends on the neuron's parameterization only through the activation hyperplane defined by $\overline{\mathbf{w}}_{j}^{\top} \overline{\mathbf{x}} = 0$ and the rescaled readout $\norm{\overline{\mathbf{w}}_{j}} \mathbf{a}_{j}$.
We therefore identify incoming-parameter vectors up to nonzero scaling, defining the \emph{parameter class} of $\overline{\mathbf{w}}$ as $[\overline{\mathbf{w}}] \coloneqq \set{\alpha \overline{\mathbf{w}} \given \alpha \in \mathbb{R}_{\neq 0}}$, and collect the classes represented by $\boldsymbol{\theta}$ in $\mathcal{Q}(\boldsymbol{\theta}) \coloneqq \set{[\overline{\mathbf{w}}_{j}] \given \mathbf{w}_{j} \neq \mathbf{0}}$.
For $q \in \mathcal{Q}(\boldsymbol{\theta})$ we write $\mathcal{J}_q \coloneqq \set{j \given \overline{\mathbf{w}}_j \in q}$ for the neurons of class $q$, and we let $\mathcal{J}_{0} \coloneqq \set{j \given \mathbf{w}_j = \mathbf{0}}$ index all constant neurons.
For each parameter class $q = [\overline{\mathbf{w}}] \in \mathcal{Q}(\boldsymbol{\theta})$, we define
\begin{equation}
  \boldsymbol{\beta}_{q}(\boldsymbol{\theta})
  \coloneqq
  \sum_{j \in \mathcal{J}_{q}}
  \norm{\overline{\mathbf{w}}_{j}} \, \mathbf{a}_{j},
  \qquad
  \phi_{q}(\mathbf{x})
  \coloneqq
  \frac{\delta}{\norm{\overline{\mathbf{w}}}} \,
  \abs{
    \overline{\mathbf{w}}^{\top} \overline{\mathbf{x}}
  }.
\end{equation}
Finally, collecting the affine residual of nonconstant neurons and the contribution of constant neurons into the map $\mathbf{r}_{\boldsymbol{\theta}} \colon \mathbb{R}^{N_{i}} \to \mathbb{R}^{N_{o}}$ given by
\begin{equation}
  \mathbf{r}_{\boldsymbol{\theta}}(\mathbf{x})
  \coloneqq
  m \,
  \sum_{j \notin \mathcal{J}_{0}}
  \mathbf{a}_{j} \, z_{j}
  + \sum_{j \in \mathcal{J}_{0}}
  \mathbf{a}_{j} \, \sigma(b_{j}),
\end{equation}
the function $f_{\boldsymbol{\theta}} \colon \mathbb{R}^{N_{i}} \to \mathbb{R}^{N_{o}}$ realized by $\boldsymbol{\theta}$ can be expressed as
\begin{equation}
  \label{eq:function-decomposition-by-parameter-classes}
  f_{\boldsymbol{\theta}}(\mathbf{x})
  =
  \sum_{q\in\mathcal{Q}(\boldsymbol{\theta})}
  \boldsymbol{\beta}_{q}(\boldsymbol{\theta})
  \,\phi_{q}(\mathbf{x})
  +
  \mathbf{r}_{\boldsymbol{\theta}}(\mathbf{x}).
\end{equation}

\begin{proposition}[Characterization of symmetry-equivalence]
  \label[proposition]{prop:characterization-of-symmetry-equivalence}
  Two parameterizations $\boldsymbol{\theta}$ and $\boldsymbol{\xi}$, possibly of different widths, with the same nonlinear, positively homogeneous activation of degree~$1$, are symmetry-equivalent if and only if
  \begin{equation}
    \boldsymbol{\beta}_{q}(\boldsymbol{\theta})
    =
    \boldsymbol{\beta}_{q}(\boldsymbol{\xi})
    \quad
    \text{for every }
    q \in \mathcal{Q}(\boldsymbol{\theta}) \cup \mathcal{Q}(\boldsymbol{\xi}),
    \qquad
    \mathbf{r}_{\boldsymbol{\theta}}
    =
    \mathbf{r}_{\boldsymbol{\xi}},
  \end{equation}
  with the coefficient of an absent class understood to be zero.
  \proofdeferred{app-subsubsec:characterization-of-symmetry-equivalence}
\end{proposition}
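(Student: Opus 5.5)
The proof splits into the two implications. For necessity, it suffices to show that each elementary symmetry in the catalog preserves every coefficient $\boldsymbol{\beta}_{q}$ and the residual map $\mathbf{r}$. Invariance along a finite composition then follows by induction on its length.

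I will check the generators one at a time.
\begin{itemize}
  \item \emph{Permutation} only relabels summands, so nothing changes.
  \item \emph{Positive scaling} $(\overline{\mathbf{w}}_{j},\mathbf{a}_{j})\mapsto(\alpha\overline{\mathbf{w}}_{j},\mathbf{a}_{j}/\alpha)$ with $\alpha>0$ keeps $[\overline{\mathbf{w}}_{j}]$, $\norm{\overline{\mathbf{w}}_{j}}\mathbf{a}_{j}$ and $\mathbf{a}_{j}z_{j}$ fixed.
  \item A \emph{sign flip} $\overline{\mathbf{w}}_{j}\mapsto-\overline{\mathbf{w}}_{j}$ keeps the class. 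It is admissible only as part of a function-preserving move, which is the linear-group or linear-duplicate mechanism: here $\sigma(-z)=\sigma(z)-2mz$, so the readout contribution $\norm{\overline{\mathbf{w}}}\mathbf{a}$ to $\boldsymbol{\beta}_{q}$ is unchanged and the change in the affine part is absorbed by the accompanying readout adjustments.
  \item \emph{Duplicate groups} split $\mathbf{a}$ among copies of one $\overline{\mathbf{w}}$ (or positive multiples of it), which preserves the sum defining $\boldsymbol{\beta}_{q}$ and the sum defining $\mathbf{r}$.
  \item \emph{Zero groups} consist of neurons with a common class whose rescaled readouts sum to zero. Adding or removing them changes neither $\boldsymbol{\beta}_{q}$ nor $\mathbf{r}$; if the group exhausts a class, the coefficient becomes an absent zero.
  \item \emph{Linear groups and linear duplicates} add neurons whose nonlinear parts cancel within a class ($\sum\norm{\overline{\mathbf{w}}}\mathbf{a}=\mathbf{0}$ after the $|\cdot|$ identification, since $|{-z}|=|z|$) and whose total contribution is affine; the defining condition of the symmetry is exactly that this affine contribution is compensated, leaving $\mathbf{r}$ unchanged.
  \item \emph{Constant neurons} enter only $\mathbf{r}$ through $\sum\mathbf{a}_{j}\sigma(b_{j})$, and the constant symmetries preserve that sum.
\end{itemize}
Each of these is a short bookkeeping computation against the appendix definitions.

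For sufficiency, I will build a canonical form and show that every parameterization can be moved by symmetries to a form determined only by $(\{\boldsymbol{\beta}_{q}\},\mathbf{r})$. Fix a representative $\hat{\mathbf{w}}_{q}$ of unit norm in each class.
\begin{enumerate}
  \item For each class, use positive scaling to normalize all neurons to $\pm\hat{\mathbf{w}}_{q}$.
  \item Use sign flips together with linear-group moves to make all of them $+\hat{\mathbf{w}}_{q}$. Each flip changes the affine part by a multiple of $m\,\mathbf{a}\,\hat{\mathbf{w}}_{q}^{\top}\overline{\mathbf{x}}$, which I compensate inside a linear group.
  \item Merge the resulting duplicates (inverse duplicate symmetry) into a single neuron with readout $\boldsymbol{\beta}_{q}$. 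If $\boldsymbol{\beta}_{q}=\mathbf{0}$, delete it as a zero group.
  \item Reduce the affine part: write $\mathbf{r}(\mathbf{x})=\mathbf{L}\mathbf{x}+\mathbf{c}$. Using linear groups (pairs $\overline{\mathbf{w}},-\overline{\mathbf{w}}$ with opposite rescaled readouts, whose sum is $m\,\mathbf{a}z$ times $2$ when $m\neq0$) and constant neurons, I replace all remaining affine and constant contributions by a fixed canonical gadget depending only on $(\mathbf{L},\mathbf{c})$.
\end{enumerate}
Two parameterizations with equal invariants then reach the same canonical form. Since symmetries are invertible, which I need to verify for each generator, this gives $\boldsymbol{\theta}\sim\boldsymbol{\xi}$.

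I expect the affine reduction in the last step to be the main obstacle. The canonical gadget must be realizable purely by catalogued symmetries, and the case distinctions are delicate.
\begin{itemize}
  \item When $m=0$, flipping signs within a class is free, but linear groups produce only zero functions, so $\mathbf{r}$ reduces to constants alone.
  \item When $\sigma(b)=0$ for the available biases, constant neurons may be needed or unavailable.
  \item An affine map $\mathbf{L}\mathbf{x}$ must be expressible as $m\sum\mathbf{a}_{j}z_{j}$, where the neurons involved may themselves carry nonlinear mass that has to cancel.
\end{itemize}
My approach is to show that any two realizations of the same $\mathbf{r}$ differ by a linear group of net nonlinear coefficient zero in every class. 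I would use the appendix definition of linear groups as precisely the configurations $\{(\overline{\mathbf{w}}_{j},\mathbf{a}_{j})\}$ with $\sum_{j}\mathbf{a}_{j}\sigma(z_{j})$ affine. That lets me add the difference in via one symmetry move and then cancel the redundant neurons as zero or duplicate groups.
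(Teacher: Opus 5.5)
Your necessity direction is essentially the paper's. One difference: the paper obtains $\mathbf{r}_{\boldsymbol{\theta}}=\mathbf{r}_{\boldsymbol{\xi}}$ more cheaply from $f_{\boldsymbol{\theta}}=f_{\boldsymbol{\xi}}$ together with invariance of the $\boldsymbol{\beta}_q$. That matters because individual linear groups and constant neurons do not preserve $\mathbf{r}$; only jointly adjoined, residual-cancelling collections do.

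The sufficiency direction has a genuine gap, exactly where you flag the obstacle. Your plan rests on the claim that linear groups are, by the appendix definition, ``precisely the configurations with $\sum_j\mathbf{a}_j\sigma(z_j)$ affine.'' That is not the definition. A linear-neuron group (\cref{def:linear-neuron-group}) lives in a single class, and its neurons carry exactly $\overline{\mathbf{w}}$ or $-\overline{\mathbf{w}}$. Both orientations must occur, $\mathbf{a}^{\pm}=\mathbf{0}$, and the group must be sum-nondegenerate. The difference of two affine realizations typically spans several classes and contains zero-sum sub-collections. So it cannot be ``added in via one symmetry move'' until it has been partitioned into catalogued groups, and you never provide that partition.

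The same unproved step is hidden in your steps 2--3:
\begin{itemize}
  \item For $m\neq 0$ there is no sign-flip symmetry. Re-orienting $(-\hat{\mathbf{w}}_q,\mathbf{a})$ changes the function by $2m\,\mathbf{a}\,\hat{\mathbf{w}}_q^{\top}\overline{\mathbf{x}}$. Compensating with a linear group in class $q$ reintroduces a $-\hat{\mathbf{w}}_q$ neuron and undoes the normalization. The compensation must therefore come from auxiliary classes, which your canonical form then has to absorb.
  \item Identical copies whose readouts contain a zero-sum subset do not form a duplicate-neuron group. Merging them first requires extracting zero-neuron groups.
\end{itemize}

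The missing ingredient is a minimal-cardinality decomposition, and the paper uses it in a way that makes a canonical form unnecessary.
\begin{enumerate}
  \item After normalizing by positive scaling, form $\boldsymbol{\eta}$ from the neurons of $\boldsymbol{\xi}$ plus a readout-negated copy of each neuron of $\boldsymbol{\theta}$. Then $\boldsymbol{\beta}_q(\boldsymbol{\eta})=\mathbf{0}$ for every $q$ and $\mathbf{r}_{\boldsymbol{\eta}}=\mathbf{0}$.
  \item Repeatedly remove a minimum-cardinality set of identical neurons with zero readout sum. Minimality makes it subset-nonzero, hence a zero-neuron group.
  \item Afterwards, every class still present must contain both orientations; otherwise its zero coefficient would yield another such set.
  \item In such a class, a minimum-cardinality subset $\mathcal{K}$ with $\mathbf{a}^{\pm}_{\mathcal{K}}=\mathbf{0}$ is an aligned/opposite group that is sum-nondegenerate by minimality, i.e.\ a linear-neuron group. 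Iterating leaves only constant neurons.
  \item Because $\mathbf{r}_{\boldsymbol{\eta}}=\mathbf{0}$, these linear groups and constant neurons can be adjoined to $\boldsymbol{\theta}$ jointly as one function-preserving move.
  \item Each neuron of $\boldsymbol{\theta}$ then cancels against its negated copy (a zero-neuron group, two singleton ones, or two cancelling constant neurons), leaving $\boldsymbol{\xi}$.
\end{enumerate}
This avoids your whole case analysis ($m=0$, availability of constant neurons, realizing $\mathbf{L}\mathbf{x}+\mathbf{c}$), because no target configuration is ever built. If you keep the canonical-form route, you still need to prove this same decomposition lemma to justify steps 2--4.
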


Within a fixed orbit $\mathcal{O}$, we suppress the parameterization argument $\boldsymbol{\theta}$ and call a class $q$ \emph{essential} if $\boldsymbol{\beta}_{q} \neq \mathbf{0}$.
We write $\mathcal{E} \coloneqq \set{q \given \boldsymbol{\beta}_{q} \neq \mathbf{0}}$ for the set of essential classes and $\abs{\mathcal{E}}$ for its cardinality.
\space

\section{Parameter symmetries act through three feature primitives}
\label{sec:parameter-symmetries-act-through-three-feature-primitives}

While two symmetry-equivalent parameterizations realize the same function, they can induce drastically different hidden activations.
Overparameterization amplifies this variability, as additional hidden neurons create increasingly many ways to compose parameter symmetries and distribute computation across neurons without changing the realized function.
This raises the question of whether the corresponding hidden-activation matrices $\mathbf{H}$ admit a tractable description.
The key observation of this section is that they do: at the level of hidden activations, arbitrary sequences of parameter symmetries reduce to compositions of just three primitive feature transformations and their inverses.
Combined with the orbit invariants of \cref{subsec:orbit-invariants-and-essential-parameter-classes}, this yields an explicit structural form for hidden-activation matrices throughout a symmetry orbit: every parameterization contains, up to scaling and duplication, at least one feature associated with each essential parameter class, together with additional features from nonessential classes.
For several important activation classes, this strengthens to persistence of every feature of an irreducible parameterization throughout the orbit, up to nonzero scaling.

\subsection{Feature-level action of parameter symmetries}
\label{subsec:feature-level-action-of-parameter-symmetries}

The parameter symmetries introduced in \cref{sec:functional-parameter-symmetries} take several distinct forms in parameter space: they may introduce canceling neuron groups, duplicate existing neurons, or exploit algebraic structure of the activation through sign-flipped incoming parameters.
At the level of hidden activations, however, these distinct parameter-space mechanisms reduce to compositions of just three primitive feature transformations and their inverses, which we introduce next.

Feature \emph{addition} appends features computed by additional neurons.
Collecting these features in $\mathbf{U} \in \mathbb{R}^{K \times P}$, we define $\mathcal{A}_{\mathbf{U}}(\mathbf{H}) \coloneqq [\mathbf{H}^{\top} \; \mathbf{U}^{\top}]^{\top}$.
Feature \emph{duplication} repeats existing features.
Given a duplication pattern $\boldsymbol{\nu} \in \mathbb{N}_{>0}^{N_{h}}$, let $\mathbf{D}_{\boldsymbol{\nu}}$ denote the duplication matrix that repeats row $j$ exactly $\nu_{j}$ times (see \cref{def:duplication-matrix}), and define $\mathcal{D}_{\boldsymbol{\nu}}(\mathbf{H}) \coloneqq \mathbf{D}_{\boldsymbol{\nu}} \mathbf{H}$.
Feature \emph{scaling} multiplies each feature by a nonzero scalar.
For $\boldsymbol{\alpha} \in \mathbb{R}_{\neq 0}^{N_{h}}$, we define $\mathcal{S}_{\boldsymbol{\alpha}}(\mathbf{H}) \coloneqq \diag(\boldsymbol{\alpha}) \mathbf{H}$.

The corresponding inverse operations remove appended features, discard all but one copy of duplicated feature rows, and apply reciprocal scaling factors, respectively.
Together, these operations provide an exhaustive characterization of how parameter symmetries act on hidden activations.

\begin{proposition}[Feature-level characterization of parameter symmetries]
  \label[proposition]{prop:feature-level-characterization-of-parameter-symmetries}
  Up to row permutations, positive-scaling and sign-flip symmetries act through feature scaling, while duplicate-neuron groups act through feature duplication.
  Zero-neuron groups and constant neurons act through feature addition followed, where necessary, by duplication.
  Linear-neuron and constant-neuron groups add two oppositely oriented features and duplicate them according to the sizes of their aligned and opposite subgroups.
  Linear-duplicate and constant-duplicate groups act analogously, except that one orientation is already supplied by the reference neuron.
  Reversing any of these parameter symmetries acts through the corresponding inverse primitives.
  \proofdeferred{app-subsec:feature-level-characterization-of-parameter-symmetries}
\end{proposition}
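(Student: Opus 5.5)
The plan is a case-by-case verification over the catalog of symmetries in \cref{app-subsec:generic-reparameterization-symmetries,app-subsec:activation-independent-symmetries,app-subsec:activation-dependent-symmetries}. For each symmetry we write down the map $\boldsymbol{\theta}\mapsto\boldsymbol{\xi}$ on incoming parameters $\overline{\mathbf{w}}_j$, and read off the new feature rows $\mathbf{v}_j^{\top}=\sigma(\overline{\mathbf{w}}_j^{\top}\overline{\mathbf{X}})$. Readout weights $\mathbf{A}$ play no role, since $\mathbf{H}$ depends only on $\sigma$ and $\overline{\mathbf{W}}$. The only remaining task is to express the new $\mathbf{H}$ as $\mathbf{D}_{\boldsymbol{\nu}}$, $\diag(\boldsymbol{\alpha})$, or $\mathcal{A}_{\mathbf{U}}$ applied to the old one, up to a row permutation, which is harmless by the standing convention after \eqref{eq:rsm-neuron-decomposition}.

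\emph{Individual-neuron symmetries.} Positive scaling replaces $\overline{\mathbf{w}}_j$ by $\lambda\overline{\mathbf{w}}_j$ with $\lambda>0$. By positive $1$-homogeneity, $\sigma(\lambda z)=\lambda\sigma(z)$, so $\mathbf{v}_j\mapsto\lambda\mathbf{v}_j$; this is $\mathcal{S}_{\boldsymbol{\alpha}}$ with $\alpha_j=\lambda$ and all other entries equal to $1$. Sign-flip uses oddness, $\sigma(-z)=-\sigma(z)$, and gives $\alpha_j=-1$. Permutations are absorbed into the row-permutation convention.

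\emph{Group symmetries that do not use activation structure.} Splitting a neuron into a duplicate group of $\nu_j$ copies with the same $\overline{\mathbf{w}}_j$ repeats row $j$, which is $\mathcal{D}_{\boldsymbol{\nu}}$. Inserting a zero group appends neurons with arbitrary incoming parameters, giving $\mathcal{A}_{\mathbf{U}}$ with $\mathbf{U}$ the new rows. If the zero group contains repeated incoming vectors, we may either absorb them directly into $\mathbf{U}$ or, to obtain the canonical form stated, append distinct rows first and then duplicate. Constant neurons have $\mathbf{w}_j=\mathbf{0}$, so their row is the constant vector $\sigma(b_j)\mathbf{1}^{\top}$; this is again an addition, followed by duplication when several share a bias.

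\emph{Activation-dependent groups.} Here $\sigma=e+m\,\mathrm{id}$ (even-linear) or $\sigma=c+o$ (constant-odd). A linear group consists of neurons whose incoming vectors are $\pm\overline{\mathbf{w}}$, possibly after the positive rescaling already handled. Its rows are therefore exactly $\sigma(\overline{\mathbf{w}}^{\top}\overline{\mathbf{X}})$ repeated $n_+$ times and $\sigma(-\overline{\mathbf{w}}^{\top}\overline{\mathbf{X}})$ repeated $n_-$ times, up to per-row scaling. So we append the two oriented features via $\mathcal{A}_{\mathbf{U}}$ with $\mathbf{U}$ having two rows, then apply $\mathcal{D}_{\boldsymbol{\nu}}$ with $\nu=(n_+,n_-)$ on those rows. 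Constant groups are handled identically. For linear-duplicate and constant-duplicate groups, the $+$ orientation coincides with an existing reference row. We append only the opposite orientation and duplicate both the reference row and the new row according to the subgroup sizes.

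The main obstacle is bookkeeping rather than mathematics. It requires confirming from the appendix definitions that every catalogued symmetry places all its neurons on a single line $\pm\overline{\mathbf{w}}$ (or constant), so that no genuinely new row type arises. It also requires making sure that scaling factors inside the groups are accounted for by a trailing $\mathcal{S}_{\boldsymbol{\alpha}}$, and handling any case where an appended orientation happens to coincide with an existing row. If a group definition allows arbitrary positive scalings of each member, we compose with $\mathcal{S}_{\boldsymbol{\alpha}}$, which the statement implicitly permits via ``compositions''.

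Finally, reversibility: each symmetry is invertible within the catalog, with merging duplicates, deleting a zero group, or applying reciprocal scaling. The reversed symmetry deletes exactly the rows that were appended, collapses exactly the rows that were duplicated, and rescales by $\boldsymbol{\alpha}^{-1}$. These are by definition the inverse primitives, which completes the case analysis.
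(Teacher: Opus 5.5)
Your case-by-case verification is essentially the paper's own proof: readouts do not enter $\mathbf{H}$, linear/constant(-duplicate) groups append one or two oriented rows that are then duplicated according to subgroup sizes, and reversal applies the inverse primitives. Two small corrections: the sign-flip symmetry also covers even activations, where flipping $\overline{\mathbf{w}}_j$ leaves the row unchanged (scaling by $1$); and by definition all neurons of a zero-neuron group, like those of each aligned or opposite subgroup, share exactly one incoming vector, so a zero group always appends a single row duplicated $\abs{\mathcal{Z}}$ times and no extra scaling step is ever needed.
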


\begin{figure*}[t]
  \centering
  \incfig{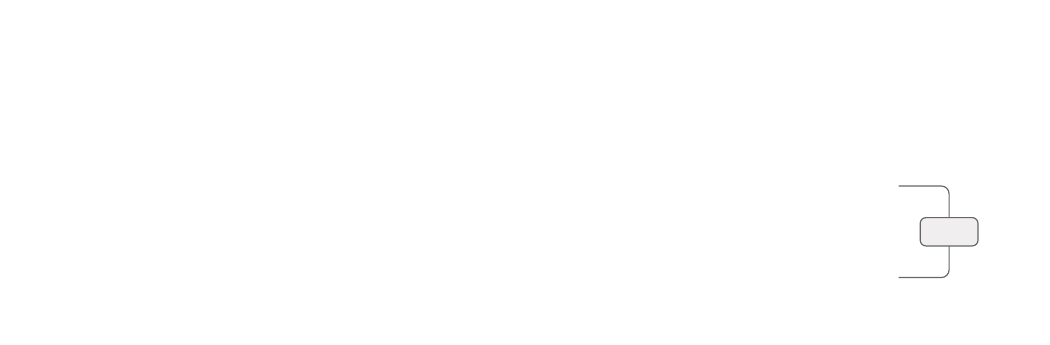}
  \caption{\textbf{Primitive feature transformations unify parameter symmetries and separate essential from auxiliary geometry}.
    \textbf{(A)} Parameter symmetries decompose into three primitive feature transformations and their inverses: \emph{addition}, \emph{duplication}, and \emph{scaling}.
    Shown are their effects on hidden activations and \acp{rsm} when adding a zero-readout neuron to a solution in Panel~B of \cref{fig:dissociation}, duplicating it, and rescaling both copies.
    \textbf{(B)} Decomposing the full \ac{rsm} of the resulting network (center) into essential and auxiliary contributions reveals the source of the dissociation in \cref{fig:dissociation}.
    Whereas the essential component is nearly uncorrelated with the \ac{rsm} of an opposite-family solution, the auxiliary component is nearly perfectly correlated with it, driving the full \ac{rsm} toward the reference geometry without changing the function being computed.
  }
  \label{fig:primitives}
\end{figure*}
\space

\subsection{Structure of hidden-activation matrices within a symmetry orbit}
\label{subsec:structure-of-hidden-activation-matrices-within-a-symmetry-orbit}

Having characterized individual parameter symmetries at the feature level, we now derive a structural form satisfied by hidden-activation matrices across an entire symmetry orbit.
As in \cref{subsec:orbit-invariants-and-essential-parameter-classes}, we focus on positively $1$-homogeneous activations in the main text and defer the remaining activation classes to \cref{app-subsec:hidden-activations-within-a-symmetry-orbit}.
Fix a symmetry orbit, and let $\mathcal{E}$ denote its essential parameter classes.
For each $q \in \mathcal{E}$, choose a unit-norm representative $\overline{\mathbf{w}}_q \in q$ and let $\mathbf{F}_{\mathcal{E}} \in \mathbb{R}^{2\abs{\mathcal{E}} \times P}$ collect the two features $\sigma(\overline{\mathbf{w}}_{q}^{\top} \overline{\mathbf{X}})$ and $\sigma(-\overline{\mathbf{w}}_{q}^{\top} \overline{\mathbf{X}})$.

\begin{proposition}[Hidden activations within a symmetry orbit]
  \label[proposition]{prop:hidden-activations-within-a-symmetry-orbit}
  Fix a symmetry orbit with a positively $1$-homogeneous activation and essential parameter classes $\mathcal{E}$, and let $\mathbf{F}_{\mathcal{E}}$ be defined as above.
  Every parameterization $\boldsymbol{\theta}$ of width $N_h$ in this orbit has a hidden-activation matrix of the form
  \begin{equation}
    \label{eq:hidden-activations-within-a-symmetry-orbit}
    \mathbf{H}
    =
    \diag(\boldsymbol{\alpha})
    \mathbf{D}_{\boldsymbol{\nu}}
    \begin{bmatrix}
      \mathbf{F}_{\mathcal{E},\mathcal{I}} \\
      \mathbf{U}
    \end{bmatrix},
    \qquad
    \boldsymbol{\alpha} \in \mathbb{R}_{>0}^{N_{h}},
    \quad
    \boldsymbol{\nu} \in \mathbb{N}_{>0}^{\abs{\mathcal{I}}+K},
  \end{equation}
  for some $K \geq 0$ and $\mathbf{U} \in \mathbb{R}^{K \times P}$.
  Here, $\mathbf{F}_{\mathcal{E},\mathcal{I}}$ is the submatrix of $\mathbf{F}_{\mathcal{E}}$ indexed by $\mathcal{I} \subseteq \set{1, \dots, 2\abs{\mathcal{E}}}$.
  The index set $\mathcal{I}$ selects the orientations of essential parameter classes represented in $\boldsymbol{\theta}$, with at least one orientation selected for every $q \in \mathcal{E}$, while $\mathbf{U}$ collects pairwise distinct features from nonessential classes and constant neurons.
  \proofdeferred{app-subsec:hidden-activations-within-a-symmetry-orbit}
\end{proposition}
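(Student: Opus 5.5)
The plan is to work directly with the parameterization $\boldsymbol{\theta}$ instead of tracking symmetry sequences, using \cref{prop:characterization-of-symmetry-equivalence} as the bridge. Fix any width-$N_h$ element $\boldsymbol{\theta}$ of the orbit. By \cref{prop:characterization-of-symmetry-equivalence}, $\boldsymbol{\beta}_q(\boldsymbol{\theta})$ is an orbit invariant for every class $q$. So for each essential $q \in \mathcal{E}$ we have $\boldsymbol{\beta}_q(\boldsymbol{\theta}) = \sum_{j\in\mathcal{J}_q}\norm{\overline{\mathbf{w}}_j}\mathbf{a}_j \neq \mathbf{0}$. In particular $\mathcal{J}_q \neq \emptyset$, so $\boldsymbol{\theta}$ contains at least one nonconstant neuron in every essential class. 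This supplies the requirement that $\mathcal{I}$ selects at least one orientation per $q\in\mathcal{E}$.

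Next, the neurons are sorted by row. Take a nonconstant neuron $j$ with $\overline{\mathbf{w}}_j \in q$, and write $\overline{\mathbf{w}}_j = \lambda\overline{\mathbf{w}}_q$ with $\lambda \neq 0$, where $\overline{\mathbf{w}}_q$ is the chosen unit-norm representative. Positive $1$-homogeneity gives
\[
\mathbf{v}_j^\top = \sigma(\lambda\overline{\mathbf{w}}_q^\top\overline{\mathbf{X}}) = \abs{\lambda}\,\sigma(\operatorname{sgn}(\lambda)\,\overline{\mathbf{w}}_q^\top\overline{\mathbf{X}}).
\]
Hence every neuron of an essential class equals a \emph{positive} multiple $\abs{\lambda} = \norm{\overline{\mathbf{w}}_j}$ of one of the two rows of $\mathbf{F}_{\mathcal{E}}$ associated with $q$, namely the row whose orientation is $\operatorname{sgn}(\lambda)$. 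Define $\mathcal{I}$ as the set of rows of $\mathbf{F}_{\mathcal{E}}$ realized in this way. All remaining neurons are either constant neurons, whose rows are $\sigma(b_j)\mathbf{1}^\top$, or nonconstant neurons of nonessential classes. For each of these rows, choose a representative up to positive scaling, for instance a normalized version, and let $\mathbf{U}$ collect one representative per distinct positive-ray equivalence class of rows.

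The representation is then assembled from these pieces. The duplication pattern $\boldsymbol{\nu}$ counts how many neurons map to each row of $[\mathbf{F}_{\mathcal{E},\mathcal{I}}^\top\;\mathbf{U}^\top]^\top$, and $\boldsymbol{\alpha}$ records the positive scalars. After a row permutation, which is permitted by the footnote convention, this yields \eqref{eq:hidden-activations-within-a-symmetry-orbit}. Every entry of $\boldsymbol{\nu}$ is positive because each selected row is realized by at least one neuron.

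The main obstacle is the degenerate cases, where positivity of $\boldsymbol{\alpha}$ or distinctness of the rows of $\mathbf{U}$ could fail. The first case is zero rows. A constant neuron with $\sigma(b_j)=0$, or a ReLU neuron inactive on all of $\overline{\mathbf{X}}$, has $\mathbf{v}_j = \mathbf{0}$, which cannot be written as a positive multiple of a nonzero feature. I would handle this by putting a single zero row into $\mathbf{U}$, duplicating it, and taking $\alpha_j = 1$. The second case is an essential feature row of $\mathbf{F}_{\mathcal{E}}$ that vanishes on the dataset, or that coincides up to positive scaling with a feature from $\mathbf{U}$. Such rows are assigned to $\mathcal{I}$ first, and $\mathbf{U}$ is only required to be pairwise distinct among its own rows, and modulo positive scaling. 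For constant neurons, $\sigma(b)\mathbf{1}^\top$ with $\sigma(b)>0$ and with $\sigma(b)<0$ give two distinct rays, so at most two representatives plus the zero row suffice. I would state these conventions explicitly to make \emph{pairwise distinct} precise.

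Finally, I would remark that \cref{prop:feature-level-characterization-of-parameter-symmetries} gives the same form constructively: starting from an irreducible parameterization, addition, duplication and scaling produce exactly $\mathbf{U}$, $\mathbf{D}_{\boldsymbol{\nu}}$ and $\diag(\boldsymbol{\alpha})$. The direct argument above, however, avoids tracking signs through sign-flip-free ReLU symmetries.
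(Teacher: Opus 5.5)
Your argument is correct and is essentially the paper's proof. You use \cref{prop:characterization-of-symmetry-equivalence} to force every essential class to be represented, write each essential-class neuron as $\norm{\overline{\mathbf{w}}_{j}}$ times the row of $\mathbf{F}_{\mathcal{E}}$ for its orientation, and sweep the remaining neurons into $\mathbf{U}$. The only difference is that the paper takes their exactly distinct rows with unit scaling, which makes your positive-ray merging and separate zero-row treatment unnecessary, though harmless.

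Your closing remark is wrong, though you do not rely on it: for positively homogeneous activations, an orbit element generally cannot be reached from an irreducible one by forward addition, duplication, and scaling alone. For example, passing from $\boldsymbol{\theta}^{+}$ to $\boldsymbol{\theta}^{-}$ in \cref{ex:opposite-irreducible-relu-parameterizations} removes every original feature. This is exactly why the statement is phrased via $\mathbf{F}_{\mathcal{E}}$ rather than $\mathbf{H}^{\star}$, and why your direct invariant-based argument is the right route.
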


Every parameterization in the same symmetry orbit must therefore represent each essential parameter class by at least one hidden neuron.
An even stronger result holds for the following three activation classes: activations that are neither even-linear nor constant-odd \parencite[e.g., those studied by][]{simsek2021geometry}, even activations (e.g., Gaussian), and odd activations (e.g., tanh).
For these classes, persistence extends from essential parameter classes to the features themselves: every feature computed by an irreducible parameterization persists throughout the orbit up to nonzero scaling.

\begin{corollary}
  \label[corollary]{cor:persistence-of-irreducible-features}
  Suppose that $\sigma$ is neither even-linear nor constant-odd, or is purely even or purely odd.
  Let $\boldsymbol{\theta}^{\star}$ be any irreducible parameterization of width $N_{h}^{\star}$ with hidden-activation matrix $\mathbf{H}^{\star}$.
  Then every parameterization $\boldsymbol{\theta} \sim \boldsymbol{\theta}^{\star}$ of width $N_{h}$ has a hidden-activation matrix of the form
  \begin{equation}
    \label{eq:persistence-of-irreducible-features}
    \mathbf{H}
    =
    \diag(\boldsymbol{\alpha})
    \mathbf{D}_{\boldsymbol{\nu}}
    \begin{bmatrix}
      \mathbf{H}^{\star} \\
      \mathbf{U}
    \end{bmatrix},
    \qquad
    \boldsymbol{\alpha} \in \mathbb{R}_{\neq 0}^{N_{h}},
    \quad
    \boldsymbol{\nu} \in \mathbb{N}_{>0}^{N_{h}^\star+K},
  \end{equation}
  for some $K \geq 0$ and $\mathbf{U} \in \mathbb{R}^{K \times P}$ with pairwise distinct rows.
  \proofdeferred{app-subsec:persistence-of-irreducible-features}
\end{corollary}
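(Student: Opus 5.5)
We argue by induction on the length of a chain of parameter symmetries connecting $\boldsymbol{\theta}^{\star}$ to $\boldsymbol{\theta}$, tracking the feature-level action given by \cref{prop:feature-level-characterization-of-parameter-symmetries}. The natural invariant is that the current hidden-activation matrix has the form in \cref{eq:persistence-of-irreducible-features}. The base case is trivial: take $\boldsymbol{\alpha}=\mathbf{1}$, $\boldsymbol{\nu}=\mathbf{1}$ and $K=0$.

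The first step restricts which symmetries occur, using the hypothesis on $\sigma$. If $\sigma$ is neither even-linear nor constant-odd, the only available symmetries are permutation, zero groups, duplicate groups and constant neurons. For purely even or purely odd $\sigma$, one additionally has sign flips and the even/odd analogues of linear and constant groups. In the purely even or odd case, the two oppositely oriented features $\sigma(\pm\overline{\mathbf{w}}^{\top}\overline{\mathbf{X}})$ coincide up to a sign $\pm1$. Hence every "add two oppositely oriented features" operation collapses to addition plus duplication plus nonzero scaling of a \emph{single} feature. This is exactly why $\boldsymbol{\alpha}$ is allowed to be in $\mathbb{R}_{\neq0}$ rather than $\mathbb{R}_{>0}$.

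The second step checks closure of the normal form under each forward primitive:
\begin{itemize}
  \item Scaling composes into $\boldsymbol{\alpha}$.
  \item Duplication of a row of $\diag(\boldsymbol{\alpha})\mathbf{D}_{\boldsymbol{\nu}}[\cdot]$ increments an entry of $\boldsymbol{\nu}$, after a row permutation.
  \item Addition of a new feature $\mathbf{u}$ either enlarges $\mathbf{U}$, or, if $\mathbf{u}$ is a nonzero multiple of an existing base row (of $\mathbf{H}^{\star}$ or $\mathbf{U}$), is absorbed as a duplicate with a scaling factor.
\end{itemize}
This keeps the rows of $\mathbf{U}$ pairwise distinct and distinct from scaled rows of $\mathbf{H}^{\star}$. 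Normalizing base rows up to scale makes "pairwise distinct" well defined.

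The main obstacle is the inverse primitives, since removal could delete a base row of $\mathbf{H}^{\star}$. Removing a scaled copy when $\nu_j\ge 2$ just decrements $\nu_j$, and removing the last copy of a $\mathbf{U}$-row shrinks $K$; both are harmless. The real task is to show that the last copy of a row of $\mathbf{H}^{\star}$ is never removed. For this I would use irreducibility together with the orbit invariants, namely the analogue of \cref{prop:characterization-of-symmetry-equivalence} for these activation classes from the appendix.

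For such $\sigma$, every neuron of an irreducible $\boldsymbol{\theta}^{\star}$ lies in a distinct parameter class, or sign-class in the even/odd case. Each such class carries a nonzero aggregated readout coefficient $\boldsymbol{\beta}_q$; otherwise the neuron could be pruned via a zero group, contradicting minimal width. These coefficients are orbit invariants, so every $\boldsymbol{\theta}\sim\boldsymbol{\theta}^{\star}$ must contain at least one neuron of each such class. Such a neuron's feature is a nonzero multiple of the corresponding row of $\mathbf{H}^{\star}$: for non-even-linear, non-constant-odd $\sigma$ the class is $\overline{\mathbf{w}}$ itself, and for even/odd $\sigma$ it is $\pm\overline{\mathbf{w}}$.

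This argument actually bypasses the induction for the essential rows. Group the neurons of $\boldsymbol{\theta}$ by class. The classes of $\boldsymbol{\theta}^{\star}$ contribute $\diag(\boldsymbol{\alpha})\mathbf{D}_{\boldsymbol{\nu}}\mathbf{H}^{\star}$ with all $\nu_j\ge1$. All remaining classes and constant neurons contribute distinct features, collected up to scale and duplication into $\mathbf{U}$. A final row permutation yields \cref{eq:persistence-of-irreducible-features}.
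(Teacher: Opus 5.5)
\textbf{Gap: the constant neuron of an irreducible parameterization is not handled.} Your direct grouping argument is essentially the paper's route for the nonconstant rows of $\mathbf{H}^{\star}$, and the induction over symmetry chains can be dropped. But an irreducible $\boldsymbol{\theta}^{\star}$ need not consist only of neurons in parameter classes.

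When the orbit residual $\mathbf{r}$ is nonzero, $\boldsymbol{\theta}^{\star}$ contains exactly one constant neuron. It has $\mathbf{w}=\mathbf{0}$, so it lies in no parameter class, and it contributes the row $\sigma(b^{\star})\mathbf{1}^{\top}$ to $\mathbf{H}^{\star}$, with $\sigma(b^{\star})\neq 0$ because $\mathbf{a}^{\star}\sigma(b^{\star})=\mathbf{r}(\mathbf{0})\neq\mathbf{0}$. In that case your claim that every neuron of $\boldsymbol{\theta}^{\star}$ lies in a distinct parameter class is false. Your final step also sends all constant neurons of $\boldsymbol{\theta}$ into $\mathbf{U}$, so this row of $\mathbf{H}^{\star}$ gets multiplicity $0$, violating $\boldsymbol{\nu}\in\mathbb{N}_{>0}^{N_h^{\star}+K}$.

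The invariants $\boldsymbol{\beta}_q$ say nothing about constant neurons, so you need the residual invariant as well. Under the hypothesis on $\sigma$, nonconstant neurons contribute nothing to the residual: $m=0$ for purely even activations, $c=0$ for purely odd ones, and this holds by definition otherwise. Hence $\mathbf{r}(\mathbf{x})=\sum_{j\in\mathcal{J}_0}\mathbf{a}_j\sigma(b_j)$ for every parameterization in the orbit. Since $\mathbf{r}\neq\mathbf{0}$, $\boldsymbol{\theta}$ must contain a constant neuron $i$ with $\sigma(b_i)\neq 0$. Its row is $\bigl(\sigma(b_i)/\sigma(b^{\star})\bigr)$ times the constant row of $\mathbf{H}^{\star}$, so such neurons must be associated with that row rather than with $\mathbf{U}$.

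This is a second, indispensable use of the activation hypothesis, which your proposal misses; you attribute the hypothesis only to the collapse of the two orientations. For ReLU, by contrast, a constant residual can be carried by two linear-neuron groups whose affine contributions differ only by a constant. The constant feature of the irreducible parameterization then need not persist.

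\textbf{Smaller point.} You assert without proof that the nonconstant neurons of $\boldsymbol{\theta}^{\star}$ lie in pairwise distinct classes, and your pruning argument for $\boldsymbol{\beta}_q\neq\mathbf{0}$ presupposes it. The paper obtains this, together with the constant-neuron structure, from one counting lemma. Every element of the orbit has width at least $\abs{\mathcal{E}}$, plus one if $\mathbf{r}\neq\mathbf{0}$, and an explicit construction attains this bound. Hence irreducible parameterizations have exactly one neuron per essential class, no neurons from nonessential classes, and a constant neuron if and only if $\mathbf{r}\neq\mathbf{0}$.
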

\space

\section{Parameter symmetries dissociate function and representation}
\label{sec:parameter-symmetries-dissociate-function-and-representation}

The feature-level characterization in \cref{sec:parameter-symmetries-act-through-three-feature-primitives} lets us translate variability in individual features across a symmetry orbit into variability of the representational geometry these features collectively induce.
Although the realized function remains fixed, feature addition can introduce new features into the representation, while duplication and scaling can alter the relative contributions of existing ones.
These degrees of freedom are especially consequential because similarity to external reference geometries is often used to draw inferences about the computations supported by a representation, for example when comparing network representations with neural data.
We therefore quantify how much similarity to a fixed reference geometry can vary within a symmetry orbit and how this variability depends on overparameterization.

\subsection{Essential and auxiliary contributions to representational geometry}
\label{subsec:essential-and-auxiliary-contributions-to-representational-geometry}

Substituting the factorization of $\mathbf{H}$ from \cref{subsec:structure-of-hidden-activation-matrices-within-a-symmetry-orbit} into $\mathbf{H}^{\top} \mathbf{H}$ yields a decomposition of representational geometry into essential and auxiliary contributions.

\begin{proposition}[RSM decomposition]
  \label[proposition]{prop:rsm-decomposition-and-reweighting}
  For a parameterization $\boldsymbol{\theta}$ with positively $1$-homogeneous activation, let $\mathbf{f}_{\ell}^{\top}$ and $\mathbf{u}_{k}^{\top}$ denote the rows of $\mathbf{F}_{\mathcal{E},\mathcal{I}}$ and $\mathbf{U}$, respectively, in the factorization of \cref{prop:hidden-activations-within-a-symmetry-orbit}.
  Then
  \begin{equation}
    \label{eq:essential-auxiliary-rsm-decomposition}
    \mathbf{H}^{\top}\mathbf{H}
    =
    \underbrace{\sum_{\ell=1}^{\abs{\mathcal{I}}}
    \gamma_{\ell} \mathbf{f}_{\ell} \mathbf{f}_{\ell}^{\top}}_{\text{essential}}
    +
    \underbrace{\sum_{k=1}^{K}
    \gamma_{\abs{\mathcal{I}}+k} \mathbf{u}_{k} \mathbf{u}_{k}^{\top}}_{\text{auxiliary}},
    \qquad
    \boldsymbol{\gamma}
    =
    \mathbf{D}_{\boldsymbol{\nu}}^{\top} \boldsymbol{\alpha}^{2}
    \in \mathbb{R}_{>0}^{\abs{\mathcal{I}}+K},
  \end{equation}
  where $\boldsymbol{\alpha}^{2}$ is the Hadamard square of $\boldsymbol{\alpha}$.
  For any such realized factorization, every strictly positive $\boldsymbol{\gamma}$ is realizable by a symmetry-equivalent parameterization of the same width, with the feature matrices and duplication counts fixed.
  \proofdeferred{app-subsec:rsm-decomposition-and-positive-reweighting}
\end{proposition}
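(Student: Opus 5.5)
The proof has two parts: an algebraic identity for $\mathbf{H}^{\top}\mathbf{H}$, and a realizability statement that we obtain by adjusting positive scalings neuron by neuron.

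\textbf{Decomposition.} By \cref{prop:hidden-activations-within-a-symmetry-orbit}, write $\mathbf{H} = \diag(\boldsymbol{\alpha})\mathbf{D}_{\boldsymbol{\nu}}\mathbf{G}$ with $\mathbf{G} \coloneqq [\mathbf{F}_{\mathcal{E},\mathcal{I}}^{\top}\;\mathbf{U}^{\top}]^{\top}$ and rows $\mathbf{g}_{i}^{\top}$. Then
\begin{equation*}
  \mathbf{H}^{\top}\mathbf{H} = \mathbf{G}^{\top}\mathbf{D}_{\boldsymbol{\nu}}^{\top}\diag(\boldsymbol{\alpha}^{2})\mathbf{D}_{\boldsymbol{\nu}}\mathbf{G}.
\end{equation*}
Each row of $\mathbf{D}_{\boldsymbol{\nu}}$ is a standard basis vector $\mathbf{e}_{i(j)}^{\top}$, where $i(j)$ is the feature copied by neuron $j$. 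Hence $\mathbf{D}_{\boldsymbol{\nu}}^{\top}\diag(\boldsymbol{\alpha}^{2})\mathbf{D}_{\boldsymbol{\nu}} = \sum_{j}\alpha_{j}^{2}\mathbf{e}_{i(j)}\mathbf{e}_{i(j)}^{\top}$. This matrix is diagonal, with $i$th entry $\sum_{j:i(j)=i}\alpha_{j}^{2} = (\mathbf{D}_{\boldsymbol{\nu}}^{\top}\boldsymbol{\alpha}^{2})_{i} = \gamma_{i}$. Therefore $\mathbf{H}^{\top}\mathbf{H} = \sum_{i}\gamma_{i}\mathbf{g}_{i}\mathbf{g}_{i}^{\top}$, which splits into the essential and auxiliary sums. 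Every $\gamma_{i}>0$ because $\nu_{i}\geq 1$ and $\alpha_{j}>0$. The row-permutation convention is harmless because $\mathbf{H}^{\top}\mathbf{H}$ is permutation-invariant.

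\textbf{Realizability.} Fix a realized factorization and a target $\boldsymbol{\gamma}'\in\mathbb{R}_{>0}^{\abs{\mathcal{I}}+K}$. Choose new scalings $\alpha'_{j}>0$ with $\sum_{j:i(j)=i}\alpha_{j}'^{2} = \gamma'_{i}$; one choice is $\alpha'_{j} = \sqrt{\gamma'_{i}/\nu_{i}}$ for every copy $j$ of feature $i$. We then need a parameterization of the same width whose neurons compute $\alpha'_{j}\mathbf{g}_{i(j)}^{\top}$. Take neuron $j$ of $\boldsymbol{\theta}$ and set $c_{j} \coloneqq \alpha'_{j}/\alpha_{j} > 0$. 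Apply the positive scaling symmetry
\begin{equation*}
  (\overline{\mathbf{w}}_{j},\mathbf{a}_{j}) \mapsto (c_{j}\overline{\mathbf{w}}_{j},\,\mathbf{a}_{j}/c_{j}),
\end{equation*}
which positive $1$-homogeneity makes function-preserving: $\sigma(c\,z) = c\,\sigma(z)$ for $c>0$, so the neuron's output is multiplied by $c_{j}$ and its readout contribution is unchanged.

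Each such move is a generic reparameterization symmetry, so the new parameterization lies in $\mathcal{O}(\boldsymbol{\theta})$ and has the same width. Its activation rows are $c_{j}\alpha_{j}\mathbf{g}_{i(j)}^{\top} = \alpha'_{j}\mathbf{g}_{i(j)}^{\top}$, so $\mathbf{G}$ and $\boldsymbol{\nu}$ are unchanged. The resulting weights are $\mathbf{D}_{\boldsymbol{\nu}}^{\top}\boldsymbol{\alpha}'^{2} = \boldsymbol{\gamma}'$.

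\textbf{Main obstacle.} The only subtle case is constant neurons ($\mathbf{w}_{j}=\mathbf{0}$), whose rows lie in $\mathbf{U}$. For these, scaling $b_{j}$ by $c_{j}>0$ scales $\sigma(b_{j})$ by $c_{j}$, again by homogeneity, and the readout compensates exactly as above. The same argument therefore applies, including to zero rows, where any scaling is trivially fine. We must also check that the factorization supplied by \cref{prop:hidden-activations-within-a-symmetry-orbit} uses positive $\boldsymbol{\alpha}$, so that each neuron's row equals $\alpha_{j}$ times its template with $\alpha_{j}>0$. This holds because the stated form already uses $\mathbb{R}_{>0}$, and because unit-norm class representatives together with the two orientations absorb signs.
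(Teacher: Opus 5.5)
Your proposal is correct and follows essentially the same route as the paper. The decomposition comes from the weighted-Gram identity $\mathbf{D}_{\boldsymbol{\nu}}^{\top}\diag(\boldsymbol{\alpha}^{2})\mathbf{D}_{\boldsymbol{\nu}} = \diag(\mathbf{D}_{\boldsymbol{\nu}}^{\top}\boldsymbol{\alpha}^{2})$, and realizability comes from neuron-wise positive rescaling. The only cosmetic difference is that the paper multiplies every copy of feature $\ell$ by the common factor $\sqrt{\omega_{\ell}/\gamma_{\ell}}$, whereas you equalize the copies to $\sqrt{\gamma'_{i}/\nu_{i}}$; both choices hit the target weights with $\mathbf{F}_{\mathcal{E},\mathcal{I}}$, $\mathbf{U}$, $\boldsymbol{\nu}$, and the width unchanged.
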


This decomposition reveals two complementary mechanisms by which parameter symmetries reshape representational geometry.
Feature addition can introduce new auxiliary features and hence new rank-one contributions to the geometry, provided that the class aggregates and global residual are preserved (\cref{subsec:orbit-invariants-and-essential-parameter-classes}), whereas duplication and scaling reweight existing contributions.
Duplication does so through integer feature multiplicities and is available even without positive-scaling symmetry, whereas positive scaling permits arbitrary strictly positive weights on the rank-one contributions while the feature configuration, duplication pattern, and width are held fixed.
Additional width can therefore accommodate more auxiliary features and a broader range of multiplicities, potentially expanding the set of compatible representational geometries.
In \cref{subsec:variability-in-representational-similarity}, we make this dependence precise by characterizing the similarity scores attainable within an orbit at each width.

\subsection{Variability in representational similarity}
\label{subsec:variability-in-representational-similarity}

Fix a symmetry orbit $\mathcal{O}$, and let $\mathbf{M}_{\boldsymbol{\theta}} \coloneqq \mathbf{H}^{\top}\mathbf{H}$ denote the \ac{rsm} induced by $\boldsymbol{\theta}$ on the inputs $\mathbf{X}$.
To compare this representation with an external reference, let $\mathbf{N} \in \mathbb{R}^{P \times P}$ denote a fixed reference \ac{rsm} computed from responses to the same inputs, such as population responses measured by fMRI, EEG, or neuronal recordings.\footnote{We assume throughout that $\mathbf{N}$ has nonconstant strict upper-triangular entries.
}
Following common practice \parencite{kriegeskorte2008rsa}, we quantify alignment with this reference geometry by the Pearson correlation $\rho(\mathbf{M},\mathbf{N})$ between the strict upper-triangular entries of the two \acp{rsm}.

Let $\mathcal{R}$ denote the realizable \acp{rsm} within the orbit whose strict upper-triangular entries are nonconstant:
\begin{equation}
  \label{eq:realizable-rsms}
  \mathcal{R}
  \coloneqq
  \set{
    \mathbf{M}_{\boldsymbol{\theta}}
    \given
    \boldsymbol{\theta} \in \mathcal{O},
    \ \mathbf{M}_{\boldsymbol{\theta}} \text{ has nonconstant strict upper-triangular entries}
  }.
\end{equation}
For each width $N_{h}$, let $\mathcal{R}_{N_{h}} \subseteq \mathcal{R}$ denote the subset of \acp{rsm} realized in $\mathcal{O}_{N_{h}}$.
The attainable similarity scores across all widths and at width $N_{h}$, respectively, are the sets
\begin{equation}
  \label{eq:attainable-similarity-scores}
  \mathcal{S}(\mathbf{N})
  \coloneqq
  \set{
    \rho(\mathbf{M}, \mathbf{N})
    \given
    \mathbf{M} \in \mathcal{R}
  },
  \qquad
  \mathcal{S}_{N_{h}}(\mathbf{N})
  \coloneqq
  \set{
    \rho(\mathbf{M}, \mathbf{N})
    \given
    \mathbf{M} \in \mathcal{R}_{N_{h}}
  },
\end{equation}
which we assume to be nonempty.
The spreads
\begin{equation}
  \Delta_{N_{h}}(\mathbf{N}) \coloneqq \sup \mathcal{S}_{N_{h}}(\mathbf{N}) - \inf \mathcal{S}_{N_{h}}(\mathbf{N}),
  \qquad
  \Delta_{\infty}(\mathbf{N}) \coloneqq \sup \mathcal{S}(\mathbf{N}) - \inf \mathcal{S}(\mathbf{N})
\end{equation}
quantify the variability in attainable similarity scores at width $N_{h}$ and across widths, respectively.

\begin{proposition}[Strict growth until saturation]
  \label[proposition]{prop:monotonicity-of-representational-ambiguity}
  For a symmetry orbit with positively $1$-homogeneous activation, the realizable \ac{rsm} sets and attainable similarity sets are nested with increasing width:
  \begin{equation}
    \label{eq:width-dependent-nesting}
    \mathcal{R}_{N_{h}}
    \subseteq
    \mathcal{R}_{N_{h+1}},
    \qquad
    \mathcal{S}_{N_{h}}(\mathbf{N})
    \subseteq
    \mathcal{S}_{N_{h+1}}(\mathbf{N}).
  \end{equation}
  Moreover, $\Delta_{N_{h+1}}(\mathbf{N}) > \Delta_{N_{h}}(\mathbf{N})$ if and only if $\Delta_{N_{h}}(\mathbf{N}) < \Delta_{\infty}(\mathbf{N})$.
  \proofdeferred{app-subsec:strict-growth-until-saturation}
\end{proposition}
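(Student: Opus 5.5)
The plan has two parts: first establish the nesting by embedding width $N_h$ into width $N_h+1$ without changing the \ac{rsm}, then derive the strict-growth dichotomy from nesting plus a direct construction.

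\textbf{Nesting.} Given $\boldsymbol{\theta} \in \mathcal{O}_{N_h}$, I would build a symmetry-equivalent parameterization of width $N_h+1$ with the same \ac{rsm}. The natural choice is to split one neuron $j$ into two copies with readouts $\mathbf{a}_j/2$ each. This is a duplicate-neuron group, so by \cref{prop:feature-level-characterization-of-parameter-symmetries} it acts by feature duplication. Duplication alone changes $\mathbf{v}_j\mathbf{v}_j^{\top}$ into $2\mathbf{v}_j\mathbf{v}_j^{\top}$, so I would compensate with positive scaling: scale both copies' incoming parameters by $1/\sqrt{2}$ and their readouts by $\sqrt{2}$. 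Positive $1$-homogeneity makes this function-preserving, and the neuron-wise decomposition \eqref{eq:rsm-neuron-decomposition} gives the contribution $2\cdot\tfrac12\mathbf{v}_j\mathbf{v}_j^{\top}$, so the \ac{rsm} is unchanged. If every neuron is constant ($\mathbf{w}_j=\mathbf{0}$), scaling $b_j$ still works by homogeneity. Equivalently, \cref{prop:rsm-decomposition-and-reweighting} with $\boldsymbol{\gamma}$ held fixed gives the same conclusion. Hence $\mathcal{R}_{N_h}\subseteq\mathcal{R}_{N_h+1}$, and applying $\rho(\cdot,\mathbf{N})$ gives the inclusion of the score sets.

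\textbf{The dichotomy.} Nesting gives $\Delta_{N_h}\le\Delta_{N_h+1}\le\Delta_\infty$, so if $\Delta_{N_h}=\Delta_\infty$ there is no strict growth. Conversely, assume $\Delta_{N_h}<\Delta_\infty$. Then either $\sup\mathcal{S}>\sup\mathcal{S}_{N_h}$ or $\inf\mathcal{S}<\inf\mathcal{S}_{N_h}$; take the first case, the second being symmetric. Pick $\mathbf{M}'\in\mathcal{R}$, realized at some width $N'>N_h$, with $\rho(\mathbf{M}',\mathbf{N})>\sup\mathcal{S}_{N_h}$. The key step is to produce an \ac{rsm} realizable at width exactly $N_h+1$ whose score still exceeds $\sup\mathcal{S}_{N_h}$.

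I would get this by continuity of reweighting. Take $\boldsymbol{\theta}\in\mathcal{O}_{N_h}$ with score near the sup, and a width-$N'$ parameterization realizing $\mathbf{M}'$. Write $\mathbf{M}'$ via \cref{prop:rsm-decomposition-and-reweighting} as $\sum_k\gamma_k\mathbf{g}_k\mathbf{g}_k^{\top}$. The idea is that width $N_h+1$ allows one additional rank-one term beyond a width-$N_h$ configuration. Starting from the width-$N_h$ realization, I would add a single feature from $\mathbf{M}'$ that increases correlation, namely one $\mathbf{g}_k$ whose directional derivative $\tfrac{d}{dt}\rho(\mathbf{M}+t\,\mathbf{g}_k\mathbf{g}_k^{\top},\mathbf{N})$ at $t=0$ is positive. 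It has to be added in a function-preserving way: as a zero-readout neuron, or by splitting off a duplicate if $\mathbf{g}_k$ is already present.

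\textbf{Main obstacle.} The hard part is showing that some such improving feature exists and can be inserted at width $N_h+1$. My approach is by contradiction, using the concavity/quasi-structure of $\rho$ along positive cones. Suppose no single added term from the generating features improves on the near-optimal width-$N_h$ \ac{rsm}. Since \cref{prop:rsm-decomposition-and-reweighting} shows all positive weights $\boldsymbol{\gamma}$ are reachable, the sup at width $N_h$ would then be a stationary point along every feature ray. Because $\mathbf{M}'$ is a positive combination of these rays, this would force $\rho(\mathbf{M}',\mathbf{N})\le\sup\mathcal{S}_{N_h}$, a contradiction. Making this rigorous requires care: suprema may not be attained, so one needs compactness after normalizing $\boldsymbol{\gamma}$ (using scale invariance of $\rho$), and added zero-readout features must keep the class aggregates in \cref{prop:characterization-of-symmetry-equivalence} unchanged. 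The second requirement holds automatically because the readout is zero.
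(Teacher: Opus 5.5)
Your nesting argument is valid but takes a slightly different route from the paper. You split a neuron into two rescaled copies. The paper simply appends a neuron with all parameters zero, a zero-readout constant neuron whose feature is $\sigma(0)=0$. This avoids the cases $\mathbf{a}_j=\mathbf{0}$ or $\mathbf{w}_j=\mathbf{0}$, where your split is not literally a duplicate-neuron group.

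Your reduction of the dichotomy is also correct: it suffices to show $\sup\mathcal{S}_{N_h+1}(\mathbf{N})>s$ whenever $s\coloneqq\sup\mathcal{S}_{N_h}(\mathbf{N})<\sup\mathcal{S}(\mathbf{N})$. For $s\geq 0$ your first-order argument is exactly the paper's. Take a unit limit point $\mathbf{Q}$ of the normalized projected width-$N_h$ RSMs. If every projected feature $\mathbf{G}_j$ of the wider network had $\langle\mathbf{G}_j,\mathbf{N}^{\circ}-s\mathbf{Q}\rangle_{F}\leq 0$, summing would give $\langle\mathbf{Y},\mathbf{N}^{\circ}\rangle_{F}\leq s\langle\mathbf{Y},\mathbf{Q}\rangle_{F}\leq s\norm{\mathbf{Y}}_{F}$, contradicting $\rho(\mathbf{M}',\mathbf{N})>s$.

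The gap is the regime $s<0$, and symmetrically $\inf\mathcal{S}_{N_h}(\mathbf{N})>0$ for the lower bound. The last inequality above is Cauchy--Schwarz multiplied by $s$, and it reverses when $s<0$. Geometrically, the superlevel set $\{\mathbf{Y} : \langle\mathbf{Y},\mathbf{N}^{\circ}\rangle_{F}\geq s\norm{\mathbf{Y}}_{F}\}$ is a convex cone only for $s\geq 0$. For $s<0$ it is the complement of an open convex cone, so $\rho$ is not quasi-concave there. Consequently, ``stationary along every feature ray'' does not imply ``no positive combination does better.''

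A planar example shows the failure. Take $\mathbf{N}^{\circ}=(1,0)$, $\mathbf{Q}=(-\tfrac12,\tfrac{\sqrt{3}}{2})$, $s=-\tfrac12$ and $\mathbf{G}=(0.3,-1)$. Then $\langle\mathbf{G},\mathbf{N}^{\circ}-s\mathbf{Q}\rangle<0$, so adding a small multiple of $\mathbf{G}$ lowers the correlation. Yet $\mathbf{G}$ alone has correlation about $0.29>s$, so adding a large multiple raises it above $s$. Your derivative-at-$t=0$ mechanism therefore cannot produce the contradiction in this regime.

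The missing idea is a large-weight argument. Suppose every nonzero $\mathbf{G}_j$ satisfied $\langle\mathbf{G}_j,\mathbf{N}^{\circ}\rangle_{F}\leq s\norm{\mathbf{G}_j}_{F}$. Then the triangle inequality and $s<0$ give $\langle\mathbf{Y},\mathbf{N}^{\circ}\rangle_{F}\leq s\sum_j\norm{\mathbf{G}_j}_{F}\leq s\norm{\mathbf{Y}}_{F}$, a contradiction. Hence some single feature has its own correlation strictly above $s$. Append it to any width-$N_h$ RSM as a zero-readout neuron with weight $t\to\infty$; this drives $\rho$ to that feature's correlation, so a finite $t$ already beats $s$ at width $N_h+1$. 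This is why the paper handles $s\geq 0$ and $s<0$ as separate cases.
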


Hence, as overparameterization increases, the spread of representational similarity scores compatible with the same realized function increases strictly until its maximum within the symmetry orbit is reached.
More strikingly, this limit is independent of the function computed: the limiting similarity bounds depend only on the activation $\sigma$, probe inputs $\mathbf{X}$, and reference geometry $\mathbf{N}$.

\begin{proposition}[Function-independent similarity limits]
  \label[proposition]{prop:function-independent-similarity-limits}
  For fixed positively $1$-homogeneous activation $\sigma$, probe inputs $\mathbf{X}$, and reference geometry $\mathbf{N}$, the bounds $\inf \mathcal{S}(\mathbf{N})$ and $\sup \mathcal{S}(\mathbf{N})$ are the same for \emph{every} symmetry orbit $\mathcal{O}$.
  Moreover, within each orbit, all sufficiently large widths $N_{h}$ satisfy
  \begin{equation}
    \inf \mathcal{S}_{N_{h}}(\mathbf{N})
    =
    \inf \mathcal{S}(\mathbf{N}),
    \qquad
    \sup \mathcal{S}_{N_{h}}(\mathbf{N})
    =
    \sup \mathcal{S}(\mathbf{N}).
  \end{equation}
  Consequently, every orbit reaches the same limiting spread $\Delta_{\infty}(\mathbf{N})$ at finite width.
  \proofdeferred{app-subsec:function-independent-similarity-limits}
\end{proposition}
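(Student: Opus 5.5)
The plan is to show that the closure of the set of realizable \acp{rsm} in any orbit, up to positive rescaling (to which Pearson correlation is invariant), is one function-independent cone. Then we show every orbit reaches points arbitrarily close to the extremes at finite width.

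\textbf{Step 1: a universal feature cone.} By \cref{prop:rsm-decomposition-and-reweighting}, every $\mathbf{M}\in\mathcal{R}$ is a strictly positive combination $\sum_{k}\gamma_{k}\mathbf{g}_{k}\mathbf{g}_{k}^{\top}$ of rank-one terms. Each feature is $\mathbf{g}_{k}^{\top}=\sigma(\overline{\mathbf{w}}^{\top}\overline{\mathbf{X}})$ for some $\overline{\mathbf{w}}\in\mathbb{R}^{N_{i}+1}$; constant neurons are the case $\mathbf{w}=\mathbf{0}$. Let $\mathcal{C}$ be the convex cone generated by all such $\mathbf{g}\mathbf{g}^{\top}$. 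Because $\sigma$ is positively $1$-homogeneous, $\mathcal{C}$ is generated by a compact set, namely unit-norm $\overline{\mathbf{w}}$. Since $\rho$ is scale invariant, $\rho(\cdot,\mathbf{N})$ restricted to $\mathcal{C}$ is a function on a compact slice. The goal is then to prove
\begin{equation}
  \inf\mathcal{S}(\mathbf{N})=\inf\set{\rho(\mathbf{M},\mathbf{N})\given \mathbf{M}\in\mathcal{C}\text{ nonconstant}},
\end{equation}
and the analogous identity for $\sup$. The right-hand side depends only on $\sigma$, $\mathbf{X}$ and $\mathbf{N}$. The inclusion "$\subseteq$" is immediate from Step 1.

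\textbf{Step 2: every orbit realizes every element of $\mathcal{C}$ up to an essential perturbation.} Take any orbit and any $\mathbf{M}'=\sum_{k=1}^{K}c_{k}\mathbf{g}_{k}\mathbf{g}_{k}^{\top}\in\mathcal{C}$, and fix $\boldsymbol{\theta}\in\mathcal{O}$. For each generator, append a zero group of two neurons with incoming vector $\overline{\mathbf{w}}_{k}$ and readouts $\pm\mathbf{a}$. This is a zero-group symmetry, so by \cref{prop:characterization-of-symmetry-equivalence} the class aggregates and the residual $\mathbf{r}$ are unchanged: the class aggregate gains $\norm{\overline{\mathbf{w}}_{k}}(\mathbf{a}-\mathbf{a})=\mathbf{0}$, and the affine parts also cancel. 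For constant generators, use a canceling pair of constant neurons. Then apply the positive reweighting of \cref{prop:rsm-decomposition-and-reweighting}. We give the appended features weights $t c_{k}/2$ per copy and keep the original features' weights fixed. The result is an orbit member whose \ac{rsm} is $\mathbf{M}_{\boldsymbol{\theta}}+t\mathbf{M}'$. As $t\to\infty$, $\rho(\mathbf{M}_{\boldsymbol{\theta}}+t\mathbf{M}',\mathbf{N})\to\rho(\mathbf{M}',\mathbf{N})$ whenever $\mathbf{M}'$ is nonconstant, by continuity of $\rho$ away from constant matrices and its scale invariance. Therefore the infimum and supremum of $\mathcal{S}(\mathbf{N})$ equal those over $\mathcal{C}$, for every orbit.

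\textbf{Step 3: finite-width saturation.} By Carathéodory's theorem in the space of symmetric $P\times P$ matrices, of dimension $d=P(P+1)/2$, every element of $\mathcal{C}$ is a positive combination of at most $d$ generators. So the construction in Step 2 uses width at most $N_{h}(\boldsymbol{\theta})+2d$, independent of the target.

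The delicate point is that the extremes over $\mathcal{C}$ must actually be attained, not just approached, and Step 2 only achieves them in the limit $t\to\infty$. I would handle this in two parts. First, I would argue that the slice of $\mathcal{C}$ is compact (the continuous image of a compact set of unit-norm weights). I would then consider the closure under limits, excluding the possibly degenerate constant matrices. Second, I would absorb the essential part into the supremum/infimum via reweighting. Concretely, I would show that $\inf\mathcal{S}_{N_{h}}$ for width $\ge N_{h}^{\min}+2d$ already equals the infimum over the set $\set{\mathbf{M}_{0}+\mathbf{M}'}$. Here $\mathbf{M}_{0}$ ranges over the essential geometries with arbitrary positive weights, and those weights can be made arbitrarily small by \cref{prop:rsm-decomposition-and-reweighting}. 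This infimum coincides with the infimum over $\mathcal{C}$, because the essential features are themselves generators of $\mathcal{C}$. Hence the width-$N_{h}$ infimum equals the global one once $N_{h}\ge N_{h}^{\min}+2d$, and likewise for the supremum. Monotonicity from \cref{prop:monotonicity-of-representational-ambiguity} then gives equality for all larger widths. Since the suprema are taken over sets, as in the definitions, approximation suffices and attainment is not required.

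The main obstacle is the degenerate case: $\rho$ is discontinuous near \acp{rsm} with constant off-diagonal entries. I would first try to show that approximating sequences can be kept uniformly away from this set, using the assumption that $\mathbf{N}$ is nonconstant.
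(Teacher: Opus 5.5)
Your proof is correct, and it takes a somewhat more elementary route than the paper. The paper projects first. It takes the \emph{closed} conic hull $\mathcal{C}_{\sigma,\mathbf{X}}\subseteq\Hol$ of the projected generators $\Pi_{\Hol}(\mathbf{u}\mathbf{u}^{\top})$. It then shows that for $N_{h}\geq N_{h}^{\star}+d$, with $d=\dim\spanop\mathcal{C}_{\sigma,\mathbf{X}}\leq P(P-1)/2-1$, the closure of the normalized realizable geometries $\mathbf{M}^{\circ}$ equals the unit sphere of that cone. The bounds are then the minimum and maximum of the continuous functional $\mathbf{Y}\mapsto\langle\mathbf{Y},\mathbf{N}^{\circ}\rangle_{F}$ on a compact set. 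Its realization step matches yours up to scale: it appends single zero-readout neurons, applies Carath\'eodory in $\spanop\mathcal{C}_{\sigma,\mathbf{X}}$, and shrinks the irreducible part by $\lambda\to0$ instead of amplifying the appended part by $t\to\infty$.

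You instead compare $\mathcal{S}(\mathbf{N})$ directly with the values of $\rho(\cdot,\mathbf{N})$ on the unprojected, unclosed cone of the $\mathbf{g}\mathbf{g}^{\top}$. You use only the fact that adjoining limit points does not change an infimum or supremum. This needs no closure or compactness. The cost is a cruder threshold, $N_{h}^{\star}+P(P+1)$, and you lose the min/max characterization that the paper reuses in \cref{cor:alignment-with-reference-geometry,cor:alignment-with-affinely-independent-inputs}.

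It follows that your ``delicate point'' and ``main obstacle'' are non-issues: the first paragraph of Step~3 already finishes the proof. Each nonconstant target $\mathbf{M}'$ is approached along its own ray, with $\Pi_{\Hol}(t^{-1}\mathbf{M}_{\boldsymbol{\theta}}+\mathbf{M}')\to\Pi_{\Hol}(\mathbf{M}')\neq\mathbf{0}$. So neither attainment nor uniform separation from the degenerate set is required. The compact slice in Step~1 would not give attainment anyway: it contains points such as $\mathbf{1}\mathbf{1}^{\top}$ at which $\rho$ is undefined, which is why the paper normalizes after projecting.

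One minor repair is needed. \Cref{prop:rsm-decomposition-and-reweighting} reweights per distinct feature row, so if an appended feature coincides with an existing one, you cannot hold the original weight fixed. Instead, give the appended pair incoming parameters $\sqrt{tc_{k}/2}\,\overline{\mathbf{w}}_{k}$ from the outset; the readouts still cancel, so the addition remains function-preserving.
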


For sets of inputs satisfying $\operatorname{rank}(\overline{\mathbf{X}}) = P$, the attainable similarity range reaches its maximal possible extent: for every reference geometry $\mathbf{N}$, correlations arbitrarily close to both $1$ and $-1$ are attainable within every symmetry orbit (\cref{cor:alignment-with-affinely-independent-inputs}).
This rank condition is generic whenever the number of probe inputs satisfies $P \leq N_{i} + 1$, and is therefore typically satisfied for high-dimensional inputs such as images.
Moreover, these limiting bounds are reached at every width $N_{h} \geq N_{h}^{\star} + \frac{P(P-1)}{2} - 1$.

A particularly relevant special case arises when the reference $\mathbf{N}$ is itself the \ac{rsm} of a network with the same activation evaluated on $\mathbf{X}$.
In this case, the common upper similarity limit is $1$, even if the reference network computes an entirely different function (\cref{cor:alignment-with-same-activation-reference-geometries}).
Its features can be added through zero-neuron groups, and their rank-one contributions amplified through duplication and scaling until they dominate the \ac{rsm}, without changing the original computation.
The construction in \cref{fig:primitives} illustrates this mechanism using a single feature borrowed from an opposite-family solution in \cref{fig:dissociation}, whose rank-one contribution is already nearly perfectly correlated with the reference \ac{rsm}.
\space

\section{Identifiability through minimum-norm selection}
\label{sec:identifiability-through-minimum-norm-selection}

The preceding results show that function alone can leave substantial freedom in representational geometry.
Within a single symmetry orbit, the range of attainable representational similarities grows with width and can eventually reach function-independent limits.
Under broad conditions, the same realized function is compatible with representations ranging from near-perfect alignment to near-perfect anti-alignment with any reference geometry.
Together, these results pose a fundamental obstacle to representational identifiability.
Here, we show that representational identifiability can be restored by accounting for how an implementation is selected among parameterizations realizing the same function: function alone need not determine representational geometry, but suitable implementation-level selection rules can.
Combining the orbit invariants of \cref{subsec:orbit-invariants-and-essential-parameter-classes} with the \ac{rsm} decomposition of \cref{subsec:essential-and-auxiliary-contributions-to-representational-geometry}, we derive conditions under which minimum-norm selection yields a unique representational geometry.

Following the minimum-norm selection rules studied by \textcite{braun2025dissociation} for linear networks, we consider two criteria for selecting among symmetry-equivalent parameterizations.
Within a fixed-width orbit $\mathcal{O}_{N_{h}}$, we call a parameterization a \emph{\ac{mwnp}} or \emph{\ac{mrnp}} if it minimizes, respectively,
\begin{equation}
  \label{eq:minimum-norm-objectives}
  \Omega_{W}
  \coloneqq
  \norm{\mathbf{W}}_{F}^{2}
  + \norm{\mathbf{b}}^{2}
  + \norm{\mathbf{A}}_{F}^{2},
  \qquad
  \Omega_{H}
  \coloneqq
  \norm{\mathbf{H}}_{F}^{2}
  + \norm{\mathbf{A}}_{F}^{2}.
\end{equation}
Both criteria balance the cost of generating features against the cost of reading them out, albeit in slightly different ways.
As in previous sections, we focus on positively $1$-homogeneous activations and, here, on \acp{mwnp}.
The corresponding \ac{mrnp} analysis is deferred to \cref{app-sec:identifiability-through-minimum-norm-selection}.

Fix a symmetry orbit with nonlinear, positively $1$-homogeneous activation $\sigma(z) = \delta\abs{z} + mz$.
Let $(\mathbf{f}_{q}^{\pm})^{\top} \coloneqq \sigma(\pm\overline{\mathbf{w}}_{q}^{\top} \overline{\mathbf{X}})$ denote the feature vectors generated by the two orientations of the unit-norm representative $\overline{\mathbf{w}}_{q}$ introduced in \cref{subsec:structure-of-hidden-activation-matrices-within-a-symmetry-orbit}.
For each essential parameter class $q$, the orbit invariant $\boldsymbol{\beta}_{q}$ fixes the total effective readout contributed by that class, while leaving open how this readout is distributed among neurons of its two opposite orientations.

We parameterize this freedom by letting $t_{q} \boldsymbol{\beta}_{q}$ and $(1-t_{q}) \boldsymbol{\beta}_{q}$ denote the sums of the effective readouts $\norm{\overline{\mathbf{w}}_{j}} \mathbf{a}_{j}$ over all neurons in the positive and negative orientations of class $q$, respectively, with $t_{q} \in [0,1]$.
Let $\mathcal{T} \subseteq [0,1]^{\abs{\mathcal{E}}}$ denote the set of residual-compatible splits:
\begin{equation}
  \label{eq:residual-compatible-readout-splits}
  \mathcal{T}
  \coloneqq
  \set[\Big]{
    \mathbf{t} \in [0,1]^{\abs{\mathcal{E}}}
    \given
    \mathbf{r}(\mathbf{x})
    = m
    \sum_{q \in \mathcal{E}}
    (2t_{q}-1)
    \boldsymbol{\beta}_{q}
    \overline{\mathbf{w}}_{q}^{\top}
    \overline{\mathbf{x}}
    \quad \text{for every } \mathbf{x} \in \mathbb{R}^{N_{i}}
  }.
\end{equation}
Realizing a split $0 < t_{q} < 1$ requires at least two neurons, one of each orientation, whereas $t_{q} \in \set{0,1}$ requires only a single neuron.
Thus, the minimum width required to realize a split $\mathbf{t} \in \mathcal{T}$ using only neurons from essential parameter classes is $\kappa(\mathbf{t}) \coloneqq \abs{\mathcal{E}} + \abs{\set{q \in \mathcal{E} \given 0 < t_{q} < 1}}$.
We write
\begin{equation}
  \label{eq:width-feasible-readout-splits}
  \mathcal{T}_{N_{h}}
  \coloneqq
  \set{
    \mathbf{t} \in \mathcal{T}
    \given
    \kappa(\mathbf{t}) \leq N_{h}
  }
\end{equation}
for the residual-compatible splits feasible at width $N_{h}$.
When $\mathcal{T}_{N_{h}} \neq \emptyset$, the orbit residual $\mathbf{r}$ can be realized entirely by essential parameter classes at width $N_{h}$.
Under precisely this condition, minimum-weight-norm selection removes all auxiliary contributions and reduces the remaining representational freedom to the feasible orientation splits in $\mathcal{T}_{N_{h}}$.

\begin{proposition}[Minimum-weight-norm representational geometry]
  \label[proposition]{prop:mwnp-representational-geometry}
  Fix a symmetry orbit $\mathcal{O}$ with positively $1$-homogeneous activation and essential parameter classes $\mathcal{E}$.
  If $\mathcal{T}_{N_{h}} \neq \emptyset$, the minimum of $\Omega_{W}$ over $\mathcal{O}_{N_{h}}$ is $2 \sum_{q \in \mathcal{E}} \norm{\boldsymbol{\beta}_{q}}$, and the \acp{rsm} of all \acp{mwnp} are precisely
  \begin{equation}
    \label{eq:mwnp-representational-geometry}
    \sum_{q \in \mathcal{E}}
    \norm{\boldsymbol{\beta}_{q}}
    \bigl(
      t_{q} \mathbf{f}_{q}^{+}(\mathbf{f}_{q}^{+})^{\top}
      + (1-t_{q}) \mathbf{f}_{q}^{-}(\mathbf{f}_{q}^{-})^{\top}
    \bigr)
  \end{equation}
  as $\mathbf{t}$ ranges over $\mathcal{T}_{N_{h}}$.
  \proofdeferred{app-subsec:mwnp-representational-geometry}
\end{proposition}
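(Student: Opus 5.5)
The plan is to prove a lower bound on $\Omega_{W}$ over $\mathcal{O}_{N_{h}}$, show it is attained exactly when $\mathcal{T}_{N_{h}} \neq \emptyset$, and then characterize the equality cases.

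\textbf{Lower bound.} The starting point is the per-neuron AM--GM inequality
\[
\norm{\overline{\mathbf{w}}_{j}}^{2} + \norm{\mathbf{a}_{j}}^{2} \geq 2\norm{\overline{\mathbf{w}}_{j}}\norm{\mathbf{a}_{j}},
\]
with equality if and only if $\norm{\overline{\mathbf{w}}_{j}} = \norm{\mathbf{a}_{j}}$. Summing over nonconstant neurons and grouping them by parameter class gives
\[
\Omega_{W} \geq 2\sum_{q}\sum_{j\in\mathcal{J}_{q}} \norm{\norm{\overline{\mathbf{w}}_{j}}\mathbf{a}_{j}} \geq 2\sum_{q}\norm{\boldsymbol{\beta}_{q}}.
\]
The second step is the triangle inequality. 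By \cref{prop:characterization-of-symmetry-equivalence}, $\boldsymbol{\beta}_{q}$ is an orbit invariant, so the right-hand side is $2\sum_{q\in\mathcal{E}}\norm{\boldsymbol{\beta}_{q}}$. Constant neurons and neurons in nonessential classes contribute nonnegatively.

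\textbf{Attainment.} Given $\mathbf{t}\in\mathcal{T}_{N_{h}}$, I build a width-$\kappa(\mathbf{t})$ parameterization as follows. For each $q\in\mathcal{E}$ with $t_{q}>0$, place one neuron with incoming parameters $s\,\overline{\mathbf{w}}_{q}$ and readout $t_{q}\boldsymbol{\beta}_{q}/s$, where $s=\sqrt{t_{q}\norm{\boldsymbol{\beta}_{q}}}$ so that AM--GM is tight. Do the same for the $-\overline{\mathbf{w}}_{q}$ orientation with weight $1-t_{q}$. Next I check the orbit invariants. Note $\abs{\cdot}$ is even, so both orientations feed $\boldsymbol{\beta}_{q}$ with the same sign convention, and the $\boldsymbol{\beta}$'s match. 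The residual works out to $m\sum_{q}(2t_{q}-1)\boldsymbol{\beta}_{q}\overline{\mathbf{w}}_{q}^{\top}\overline{\mathbf{x}}$, which equals $\mathbf{r}$ by the definition of $\mathcal{T}$. By \cref{prop:characterization-of-symmetry-equivalence}, this parameterization lies in the orbit.

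To pad to width $N_{h}$ at no extra cost, I split one neuron into duplicates, which I expect to be norm-preserving. Splitting a neuron of weight $c$ into $k$ copies, each with incoming and outgoing norms scaled by $1/\sqrt{k}$, keeps both $\Omega_{W}$ and $\boldsymbol{\beta}_{q}$ fixed, since each copy contributes $c/k$. I need to confirm that such duplicate-neuron groups are admitted symmetries; if they are, this settles both the value of the minimum and the "if" direction for RSMs. For the RSM of this construction, a neuron $s\,\overline{\mathbf{w}}_{q}$ produces the feature $s\,\mathbf{f}_{q}^{+}$ by positive homogeneity. Its rank-one contribution is $s^{2}\mathbf{f}_{q}^{+}(\mathbf{f}_{q}^{+})^{\top} = t_{q}\norm{\boldsymbol{\beta}_{q}}\mathbf{f}_{q}^{+}(\mathbf{f}_{q}^{+})^{\top}$. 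Contributions from duplicates add up to the same total, which gives \eqref{eq:mwnp-representational-geometry}.

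\textbf{Equality cases (main obstacle).} For the converse I analyze when every inequality is tight. First, every neuron must have $\norm{\mathbf{a}_{j}}\norm{\overline{\mathbf{w}}_{j}}$ contributing fully to some $\norm{\boldsymbol{\beta}_{q}}$ with $q$ essential. This rules out constant neurons with $\mathbf{a}_{j}\neq\mathbf{0}$; those with $\mathbf{a}_{j}=\mathbf{0}$ and $b_{j}=0$ have zero features, and I need to handle them as zero rows. It also rules out nonessential-class neurons: for those the triangle inequality sums to $\mathbf{0}$, so they must have $\mathbf{a}_{j}=\mathbf{0}$ and, by AM--GM, $\overline{\mathbf{w}}_{j}=\mathbf{0}$. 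Second, equality in the triangle inequality within a class forces all effective readouts $\norm{\overline{\mathbf{w}}_{j}}\mathbf{a}_{j}$ to be nonnegative multiples of $\boldsymbol{\beta}_{q}$. Writing the positive-orientation total as $t_{q}\boldsymbol{\beta}_{q}$ then gives $t_{q}\in[0,1]$. Third, AM--GM equality gives $\norm{\overline{\mathbf{w}}_{j}}^{2}=\norm{\norm{\overline{\mathbf{w}}_{j}}\mathbf{a}_{j}}$, so the per-orientation sum of $\norm{\overline{\mathbf{w}}_{j}}^{2}$ equals $t_{q}\norm{\boldsymbol{\beta}_{q}}$ (respectively $(1-t_{q})\norm{\boldsymbol{\beta}_{q}}$). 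Since the features are $\norm{\overline{\mathbf{w}}_{j}}\mathbf{f}_{q}^{\pm}$, this reproduces the stated RSM. Finally, residual invariance forces $\mathbf{t}\in\mathcal{T}$, and counting neurons forces $\kappa(\mathbf{t})\le N_{h}$.

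The delicate point is the zero-parameter neurons, meaning all-zero neurons, which contribute zero rows and so do not affect the RSM. I also have to verify that such neurons are genuinely admissible in the orbit, so that the attainment construction pads cleanly. The alternative is to rely on duplication, which is cleaner, and I would use that first.
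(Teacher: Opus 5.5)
Your proposal is correct and follows the paper's proof essentially step for step. It uses the same AM--GM plus triangle-inequality lower bound and the same three equality conditions: vanishing parameters for every neuron outside the essential classes, norm balance, and effective readouts that are nonnegative multiples of $\boldsymbol{\beta}_{q}$. It also uses the same balanced construction from a split $\mathbf{t}\in\mathcal{T}_{N_{h}}$.

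The only difference is the padding. The paper pads with all-zero neurons. These are zero-readout constant neurons, so they change no invariant in \cref{prop:characterization-of-symmetry-equivalence} and are automatically admissible. Your duplicate-and-rescale padding is also valid, but only when $\mathcal{E}\neq\emptyset$. In the degenerate zero-function orbit there is no neuron to split, so you need the zero-padding there.
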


Minimum-weight-norm selection therefore removes the two sources of variability in representational geometry identified in \cref{subsec:essential-and-auxiliary-contributions-to-representational-geometry}: auxiliary contributions disappear, and the total weight of each essential parameter class is fixed by its invariant coefficient $\boldsymbol{\beta}_{q}$.
The only remaining freedom is the residual-compatible allocation of this weight between the two orientations of each class.

The remaining nonuniqueness in the orientation split disappears when the residual constraint uniquely determines $\mathbf{t}$.
In particular, this holds when the class-specific residual contributions $\boldsymbol{\beta}_{q} \overline{\mathbf{w}}_{q}^{\top}$ are linearly independent.

\begin{corollary}[Representational identifiability]
  \label[corollary]{cor:mwnp-representational-identifiability}
  In the setting of \cref{prop:mwnp-representational-geometry}, suppose additionally that $m \neq 0$ and that the matrices $\boldsymbol{\beta}_{q} \overline{\mathbf{w}}_{q}^{\top}$, $q\in\mathcal{E}$, are linearly independent.
  Then all \acp{mwnp} in $\mathcal{O}_{N_{h}}$ have the same unique \ac{rsm}, and this \ac{rsm} is unchanged at every larger width.
  \proofdeferred{app-subsec:mwnp-representational-geometry}
\end{corollary}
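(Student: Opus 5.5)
The plan is to deduce the corollary from \cref{prop:mwnp-representational-geometry}: I will show that $\mathcal{T}$ has at most one element, so $\mathcal{T}_{N_h}$ does too. The corollary assumes we are in the setting of the proposition, so $\mathcal{T}_{N_h} \neq \emptyset$. Hence $\mathcal{T}_{N_h} = \{\mathbf{t}^\star\}$ for a single split $\mathbf{t}^\star$. By \cref{prop:mwnp-representational-geometry}, every \ac{mwnp} in $\mathcal{O}_{N_h}$ then has the RSM \eqref{eq:mwnp-representational-geometry} evaluated at $\mathbf{t}^\star$. That matrix is a fixed matrix, since it depends only on the orbit invariants $\boldsymbol{\beta}_q$, the unit representatives $\overline{\mathbf{w}}_q$ and the probe inputs $\overline{\mathbf{X}}$.

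To get uniqueness, suppose $\mathbf{t}, \mathbf{t}' \in \mathcal{T}$. Subtracting the two residual identities in \eqref{eq:residual-compatible-readout-splits} gives
\[
m \sum_{q\in\mathcal{E}} 2(t_q - t'_q)\, \boldsymbol{\beta}_q \overline{\mathbf{w}}_q^\top \overline{\mathbf{x}} = \0
\quad\text{for every } \mathbf{x} \in \mathbb{R}^{N_i}.
\]
The vectors $\overline{\mathbf{x}} = (\mathbf{x}^\top, 1)^\top$ span $\mathbb{R}^{N_i+1}$: the choices $\mathbf{x} = \0$ and $\mathbf{x} = \mathbf{e}_i$ already give a basis. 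So the matrix $\sum_q (t_q - t'_q)\, \boldsymbol{\beta}_q \overline{\mathbf{w}}_q^\top$ must vanish, using $m \neq 0$. Linear independence of the rank-one matrices $\boldsymbol{\beta}_q \overline{\mathbf{w}}_q^\top$ then forces $t_q = t'_q$ for every $q$.

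One well-definedness check is needed: the split $t_q$ refers to the representative $\overline{\mathbf{w}}_q$ chosen in \cref{subsec:structure-of-hidden-activation-matrices-within-a-symmetry-orbit}. This is fixed once per orbit, so the argument above is unaffected. Replacing $\overline{\mathbf{w}}_q$ by $-\overline{\mathbf{w}}_q$ would swap $t_q$ with $1-t_q$ and swap $\mathbf{f}_q^{+}$ with $\mathbf{f}_q^{-}$, which leaves \eqref{eq:mwnp-representational-geometry} unchanged.

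For the claim about larger widths, take any $N_h' \geq N_h$. Since $\kappa(\mathbf{t}^\star) \le N_h \le N_h'$, we have $\mathbf{t}^\star \in \mathcal{T}_{N_h'}$, and $\mathcal{T}_{N_h'} \subseteq \mathcal{T} = \{\mathbf{t}^\star\}$. Hence $\mathcal{T}_{N_h'} = \{\mathbf{t}^\star\}$ as well. Applying \cref{prop:mwnp-representational-geometry} at width $N_h'$ gives the same RSM.

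The main step is establishing uniqueness of the split. It rests on two facts: the affine extensions $\overline{\mathbf{x}}$ span the full space, so a functional identity becomes a matrix identity, and $m \neq 0$. Everything else follows directly from the proposition.
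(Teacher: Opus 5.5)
Your proof is correct and follows essentially the same route as the paper's. Subtracting the residual constraints and using $m \neq 0$, together with the fact that an affine map vanishing on all inputs has zero coefficient matrix, linear independence shows that $\mathcal{T}$ contains at most one split; this split exists because $\mathcal{T}_{N_h} \neq \emptyset$, and it remains the unique feasible split at every larger width. Your explicit spanning argument for the vectors $\overline{\mathbf{x}}$ and your orientation-invariance check are harmless additions that the paper leaves implicit.
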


Although the set of representational geometries within a symmetry orbit can expand considerably with width (\cref{prop:monotonicity-of-representational-ambiguity}), minimum-weight-norm selection can single out a unique geometry that is independent of width.
In particular, all six analytical ReLU solutions in \cref{fig:dissociation} satisfy the conditions of \cref{cor:mwnp-representational-identifiability} (\cref{ex:mwnp-identifiability-of-analytical-relu-solutions}).
Their \acp{mwnp} therefore induce the same unique \ac{rsm} within each orbit for every width $N_h \geq 2$, despite the substantial variability in representational geometry inherent to those orbits.
Thus, the same solutions that demonstrate the failure of function alone to identify representation become representationally identifiable once minimum-weight-norm selection is imposed.
\space

\section{Discussion}
\label{sec:discussion}

It is common knowledge that function underdetermines the parameterization of artificial neural networks \parencite{nielsen1990algebraic,sussmann1992uniqueness,kurkova1994equivalent,neyshabur2015pathsgd,dinh2017sharp,elbraechter2019degenerate,phuong2020equivalence}.
Building on work that makes this underdetermination precise by enumerating parameter symmetries in nonlinear networks \parencite{simsek2021geometry,martinelli2024expandcluster}, we examine instead the underdetermination of \emph{representational geometry} (\cref{sec:parameter-symmetries-act-through-three-feature-primitives}).
We characterize how parameter symmetries can reshape representational geometry while leaving function unchanged, and how the resulting ambiguity can worsen with overparameterization (\cref{sec:parameter-symmetries-dissociate-function-and-representation}), providing a theoretical account of dissociations observed empirically in prior work \parencite{lampinen2024learned,cloos2025differentiable,bo2025evaluating,lampinen2026representation}.
Finally, we demonstrate that the coupling between representation and function can be restored for certain implementations characterized by efficiency constraints (\cref{sec:identifiability-through-minimum-norm-selection}), mirroring a result from the linear regime \parencite{braun2025dissociation}.
Though our work is theoretical at present, it holds consequences for the affordances of neural representations; we comment on three.

\paragraph{Completeness.}
\textcite{simsek2021geometry} establish a \emph{complete} characterization of the global minima manifold for a teacher-student learning problem under specific assumptions on the activation function and the input distribution, showing that \emph{all} zero-population-loss solutions lie in the orbit of the irreducible teacher under the symmetries they consider.
\textcite{martinelli2024expandcluster} extend this framework to additional activation functions that admit further parameter symmetries, but do not establish that the resulting symmetry orbits exhaust the corresponding zero-loss sets.
Further, both works study identifiability \emph{within} a layer, and thus do not address degeneracies \emph{across} layers, such as collapse of the representational geometry in an intermediate layer of a deep network \parencite[mechanism (iv)]{grigsby2023hidden}.
Accordingly, our results characterize representational variation generated by these parameter symmetries, without claiming that they exhaust the full fiber of a realized function.

\paragraph{Contravariance and task complexity.}
\emph{Universal} representations have been observed across artificial neural networks trained on different tasks and modalities, and even across different architectures \parencite{vanrossem2024universality,huh2024platonic,chen2025universal}. 
One proposed explanation is that the complexity of a function is \emph{contravariant} to the ``dispersion'' (variability) of its implementations: harder problems admit fewer solutions \parencite{cao2024explanatoryb}.
This principle could explain why networks trained on increasingly broad or complex tasks converge toward universal representations \parencite{li2016convergent,bansal2021revisiting,wolfram2025layers}.
Our results, however, demonstrate that contravariance need not hold for overparameterized implementations without additional implementation-level constraints.
In a proportional regime, where model size grows alongside task complexity \parencite[as arguably is the case in practice;][]{bahri2024explaining}, the space of implementations need not narrow as the task becomes more complex.
Contravariance, as a principle at the level of \emph{function} rather than \emph{implementation}, therefore cannot by itself account for representational universality.

\paragraph{Individual differences.}
Idiosyncrasies in representational geometry persist even among models of the same architecture \parencite{schrimpf2020integrative,conwell2024largescale,tuckute2023many,linsley2023performanceoptimized}.
A central goal of cognitive computational neuroscience is nevertheless to isolate differences in neural activity that meaningfully distinguish individuals \parencite{thobani2025modelbrain,feather2025brainmodel}, while recognizing that meaningfulness is necessarily context-dependent \parencite{baker2026use}.
Our characterization of representational ambiguity in \cref{sec:parameter-symmetries-dissociate-function-and-representation} provides a framework for distinguishing inter-individual differences that may be inconsequential from those that reflect genuine differences in computation.
In particular, the normative selection rules of \cref{sec:identifiability-through-minimum-norm-selection} can distinguish individuals along computationally relevant dimensions such as noise robustness and transfer performance \parencite{braun2025dissociation,holton2026humans}.
\space

\section*{Reproducibility}
The code used to generate all figures and experimental results reported in this paper is publicly available at \href{https://github.com/mrvnthss/symmetries-representational-geometry}{github.com/mrvnthss/symmetries-representational-geometry}.
\space

\section*{Acknowledgments}
We thank Flavio Martinelli for helpful discussions that clarified aspects of the parameter symmetries cataloged in \textcite{martinelli2024expandcluster}.
MT thanks Felix A.~Wichmann for his continued guidance and support throughout MT's doctoral studies.

MT was supported by the International Max Planck Research School for Intelligent Systems (\mbox{IMPRS-IS}).
LB was supported by the Allen Institute.
AMS was supported by a Schmidt Science Polymath Award, a Sainsbury Wellcome Centre Core Grant from Wellcome (219627/Z/19/Z), and the Gatsby Charitable Foundation (GAT3850).
EG was supported by the Natural Sciences and Engineering Research Council of Canada (NSERC; RGPIN-2026-07018 and DGECR-2026-00191).
AMS is a CIFAR Fellow in Learning in Machines \& Brains, and EG is a CIFAR Azrieli Global Scholar in Learning in Machines \& Brains and a Canada CIFAR AI Chair.
\space

\medskip
\printbibliography

\clearpage
\appendix

\renewcommand{\thefigure}{\Alph{section}.\arabic{figure}}
\counterwithin{figure}{section}
\renewcommand{\thetable}{\Alph{section}.\arabic{table}}
\counterwithin{table}{section}

\addtocontents{toc}{\protect\setcounter{tocdepth}{3}}
\begingroup
  \renewcommand{\contentsname}{Appendix Contents}
  \tableofcontents
\endgroup
\clearpage

\section{Notation}
\label[appendix]{app-sec:notation}

This appendix collects the notation used throughout the paper, organized by topic and sorted alphabetically within each block.
We adopt the following standard conventions:
\begin{itemize}
  \item lowercase letters (e.g., $b_{j}$, $c$, $m$) denote scalars,
  \item bold lowercase letters (e.g., $\mathbf{w}_{j}$, $\mathbf{a}_{j}$, $\mathbf{h}^{\mu}$) denote vectors,
  \item and bold uppercase letters (e.g., $\mathbf{W}$, $\mathbf{A}$, $\mathbf{H}$) denote matrices.
\end{itemize}
When corresponding quantities are defined for both an arbitrary parameterization and an irreducible parameterization, the latter are marked by a superscript star; for example, $\mathbf{w}_{j}^{\star}$, $b_{j}^{\star}$, $\mathbf{a}_{j}^{\star}$ (irreducible) vs.\ $\mathbf{w}_{j}$, $b_{j}$, $\mathbf{a}_{j}$ (arbitrary).

\let\oldarraystretch\arraystretch
\renewcommand{\arraystretch}{1.5}

\subsection*{Indices and dimensions}

\begin{xltabular}{\textwidth}{@{}p{0.24\textwidth}@{\hskip 1em}X@{}}
  $j$ & Index enumerating neurons \\
  $\mu$ & Index enumerating inputs \\
  $N_{h}$, $N_{h}^{\star}$ & Width of a parameterization \\
  $N_{i}$ & Input dimension \\
  $N_{o}$ & Output dimension \\
  $P$ & Number of inputs
\end{xltabular}

\subsection*{Constants, norms, inner products, and operators}

\begin{xltabular}{\textwidth}{@{}p{0.24\textwidth}@{\hskip 1em}X@{}}
  $\mathbf{0}$, $\mathbf{1}$ & Zero vector or matrix, all-ones vector \\
  $\abs{\cdot}$ & Absolute value, or set cardinality \\
  $\diag(\cdot)$ & Diagonal matrix from a vector, or extraction of the diagonal of a matrix \\
  $\langle \mathbf{M}, \mathbf{N} \rangle_{F}$ & Frobenius inner product $\tr(\mathbf{M}^{\top} \mathbf{N})$ \\
  $\norm{\cdot}$ & Euclidean norm \\
  $\norm{\cdot}_{F}$ & Frobenius norm \\
  $\rho(\mathbf{M}, \mathbf{N})$ & Pearson correlation between strict upper-triangular entries of two \acsp{rsm} \\
  $\tr$ & Matrix trace
\end{xltabular}

\ifvenuearxiv\clearpage\fi

\subsection*{Network parameters}

\begin{xltabular}{\textwidth}{@{}p{0.24\textwidth}@{\hskip 1em}X@{}}
  $\mathbf{a}_{j}$, $\mathbf{a}_{j}^{\star}$ & Readout weights of a single neuron \\
  $\mathbf{a}^{\pm}$, $\mathbf{a}^{\mp}$ & Sum of and difference between aggregate readouts of aligned and opposite subgroups \\
  $\mathbf{a}_{\mathcal{J}}^{\pm}$, $\mathbf{a}_{\mathcal{J}}^{\mp}$ & Corresponding sum and difference restricted to a subset $\mathcal{J}$ of neurons \\
  $\mathbf{A}, \mathbf{A}^{\star}$ & Readout-weight matrix \\
  $\mathbf{b}, \mathbf{b}^{\star}$ & Bias vector \\
  $b_{j}$, $b_{j}^{\star}$ & Bias of a single neuron \\
  $f_{\boldsymbol{\theta}}$ & Function realized by the network with parameterization $\boldsymbol{\theta}$ \\
  $\boldsymbol{\theta}$, $\boldsymbol{\theta}^{\star}$ & Parameterization of a network \\
  $\mathbf{w}_{j}$, $\mathbf{w}_{j}^{\star}$ & Incoming-weight vector of a single neuron \\
  $\overline{\mathbf{w}}_{j}$, $\overline{\mathbf{w}}_{j}^{\star}$ & Incoming-parameter vector $(\mathbf{w}_{j}^{\top}, b_{j})^{\top} \in \mathbb{R}^{N_{i}+1}$ of a single neuron \\
  $\mathbf{W}, \mathbf{W}^{\star}$ & Incoming-weight matrix
\end{xltabular}

\subsection*{Inputs and hidden activations}

\begin{xltabular}{\textwidth}{@{}p{0.24\textwidth}@{\hskip 1em}X@{}}
  $\mathbf{f}_{\ell}^{\top}$ & $\ell$th row of $\mathbf{F}_{\mathcal{E},\mathcal{I}}$ \\
  $(\mathbf{f}_{q}^{+})^{\top}$, $(\mathbf{f}_{q}^{-})^{\top}$ & Rows of $\mathbf{F}_{\mathcal{E}}$ corresponding to the two orientations $\pm\overline{\mathbf{w}}_{q}$ of essential parameter class $q$ \\
  $\mathbf{F}_{\mathcal{E}}$, $\mathbf{F}_{\mathcal{E},\mathcal{I}}$ & Feature matrices for both orientations of every essential parameter class and for the subset of represented orientations indexed by $\mathcal{I}$ \\
  $\mathbf{h}^{\mu}$ & Hidden representation elicited by input $\mathbf{x}^{\mu}$ \\
  $\mathbf{H}$, $\mathbf{H}^{\star}$ & Hidden activation matrix \\
  $\mathbf{u}_{k}^{\top}$ & $k$th row of $\mathbf{U}$ \\
  $\mathbf{U}$ & Matrix collecting the pairwise distinct additional feature rows contributed by nonessential parameter classes and constant neurons \\
  $\mathbf{v}_{j}^{\top}$, ${\mathbf{v}_{j}^{\star}}^{\top}$ & $j$th feature row of $\mathbf{H}$ or $\mathbf{H}^{\star}$, respectively \\
  $\mathbf{x}^{\mu}$ & $\mu$th input vector \\
  $\overline{\mathbf{x}}$ & Extended input vector $(\mathbf{x}^{\top}, 1)^{\top}$ \\
  $\mathbf{X}$ & Input matrix $[\mathbf{x}^{1}, \dots, \mathbf{x}^{P}]$ collecting all $P$ inputs as columns \\
  $\overline{\mathbf{X}}$ & Extended input matrix $[\overline{\mathbf{x}}^{1}, \dots, \overline{\mathbf{x}}^{P}] = [\mathbf{X}^{\top}, \mathbf{1}]^{\top}$
\end{xltabular}

\ifvenuearxiv\clearpage\fi

\subsection*{Activation functions}

\begin{xltabular}{\textwidth}{@{}p{0.24\textwidth}@{\hskip 1em}X@{}}
  $c$ & Constant value of the even component of a constant-odd activation \\
  $\delta$ & Coefficient of the absolute-value component of a positively $1$-homogeneous activation \\
  $e(z)$, $o(z)$ & Even and odd components of a function \\
  $\lambda_{-}$, $\lambda_{+}$ & Slopes of a positively $1$-homogeneous activation on the negative and positive half-lines, respectively \\
  $m$ & Slope of the linear component of an even-linear activation \\
  $\psi(z)$ & ReLU activation $\psi(z) = \max(0, z)$ \\
  $\sigma \colon \mathbb{R} \to \mathbb{R}$ & Nonlinear scalar activation function
\end{xltabular}

\subsection*{Symmetry groups and orbits}

\begin{xltabular}{\textwidth}{@{}p{0.24\textwidth}@{\hskip 1em}X@{}}
  $\mathcal{D}$ & Index set of a duplicate-neuron group \\
  $\mathcal{K}$ & Index set of an aligned/opposite group \\
  $\mathcal{N}^{+}$, $\mathcal{N}^{-}$ & Index sets of aligned and opposite subgroups of an aligned/opposite group \\
  $\mathcal{O}(\boldsymbol{\theta})$, $\mathcal{O}_{N_{h}}(\boldsymbol{\theta})$ & Symmetry orbit of $\boldsymbol{\theta}$ (at width $N_{h}$) \\
  $\pi \in \mathfrak{S}_{n}$ & Permutation of the set $\set{1, \dots, n}$ \\
  $\mathfrak{S}_{n}$ & Symmetric group on the set $\set{1, \dots, n}$ \\
  $\boldsymbol{\theta} \sim \boldsymbol{\xi}$ & Symmetry equivalence of network parameterizations $\boldsymbol{\theta}$ and $\boldsymbol{\xi}$ \\
  $\mathcal{Z}$ & Index set of a zero-neuron group
\end{xltabular}

\subsection*{Parameter classes and orbit invariants}

\begin{xltabular}{\textwidth}{@{}p{0.24\textwidth}@{\hskip 1em}X@{}}
  $\boldsymbol{\beta}_{q}(\boldsymbol{\theta})$, $\boldsymbol{\beta}_{q}$ & Aggregate coefficient of the nonlinear contribution associated with parameter class $q$ in $\boldsymbol{\theta}$, and its common value on a fixed symmetry orbit \\
  $\mathcal{E}$ & Orbit-invariant set of essential parameter classes, characterized by $\boldsymbol{\beta}_{q} \neq \mathbf{0}$ \\
  $\mathcal{J}_{q}$, $\mathcal{J}_{0}$ & Index sets of the neurons in parameter class $q$ and of constant neurons, respectively \\
  $\mathcal{J}_{q}^{+}$, $\mathcal{J}_{q}^{-}$ & Neurons of $\mathcal{J}_{q}$ whose incoming parameters are positive multiples of $\overline{\mathbf{w}}_{q}$ and $-\overline{\mathbf{w}}_{q}$, respectively \\
  $\phi_{q}(\mathbf{x})$ & Nonlinear feature associated with parameter class $q$ \\
  $q$ & Activation-dependent parameter class of a nonconstant neuron \\
  $\mathcal{Q}(\boldsymbol{\theta})$ & Set of parameter classes represented by the nonconstant neurons of $\boldsymbol{\theta}$ \\
  $\mathbf{r}_{\boldsymbol{\theta}}$, $\mathbf{r}$ & Global residual function of $\boldsymbol{\theta}$, and its common value on a fixed orbit \\
  $\overline{\mathbf{w}}_{q}$ & Chosen incoming-parameter representative of class $q$, normalized to unit norm in the positively $1$-homogeneous case, and with a positive first nonzero incoming-weight entry in the constant-odd case
\end{xltabular}

\subsection*{Feature-transformation primitives}

\begin{xltabular}{\textwidth}{@{}p{0.24\textwidth}@{\hskip 1em}X@{}}
  $\boldsymbol{\alpha}$ & Vector of nonzero scaling factors \\
  $\boldsymbol{\alpha}^{2}$ & Hadamard square of $\boldsymbol{\alpha}$ \\
  $\mathcal{A}_{\mathbf{U}}$, $\mathcal{D}_{\boldsymbol{\nu}}$, $\mathcal{S}_{\boldsymbol{\alpha}}$ & Feature-transformation primitives for addition, duplication, and scaling, together with their inverses \\
  $\mathbf{D}_{\boldsymbol{\nu}}$ & Binary duplication matrix that repeats the $j$th pre-duplication feature row exactly $\nu_{j}$ times \\
  $\boldsymbol{\gamma} = \mathbf{D}_{\boldsymbol{\nu}}^{\top} \boldsymbol{\alpha}^{2}$ & Effective weights on the pre-duplication feature rows induced by duplication and scaling along a chosen orbit path \\
  $\boldsymbol{\nu}$ & Duplication pattern, with $\nu_{j} \in \mathbb{N}_{>0}$ denoting the number of copies of the $j$th pre-duplication feature row
\end{xltabular}

\subsection*{Representational geometry and similarity}

\begin{xltabular}{\textwidth}{@{}p{0.24\textwidth}@{\hskip 1em}X@{}}
  $\cl$ & Closure with respect to the Frobenius norm \\
  $\cone(\cdot), \overline{\cone}(\cdot)$ & Conic hull operator and its closure, respectively \\
  $\mathcal{C}_{\sigma,\mathbf{X}}$ & Closed conic hull of $\mathcal{G}_{\sigma,\mathbf{X}}$ \\
  $\Delta_{N_{h}}(\mathbf{N})$, $\Delta_{\infty}(\mathbf{N})$ & Spread of attainable similarity scores relative to $\mathbf{N}$ at width $N_{h}$ and across all widths, respectively \\
  $\mathcal{G}_{\sigma,\mathbf{X}}$ & Set of projected rank-one contributions from all single-neuron features realizable with activation $\sigma$ on $\mathbf{X}$ \\
  $\Hol$ & Subspace of hollow symmetric matrices with zero off-diagonal mean \\
  $\mathbf{M}_{\boldsymbol{\theta}}$ & \ac{rsm} induced by parameterization $\boldsymbol{\theta}$ \\
  $\mathbf{M}^{\circ}$ & Centered, Frobenius-normalized form of a symmetric matrix $\mathbf{M}$ \\
  $\mathbf{N}$ & Fixed reference \ac{rsm} used for representational comparison \\
  $\Pi_{\Hol}$ & Orthogonal projection onto $\Hol$ \\
  $\mathcal{R}_{N_{h}}$, $\mathcal{R}$ & Sets of realizable \acp{rsm} within the symmetry orbit at width $N_{h}$ and across all widths, respectively \\
  $\mathcal{S}_{N_{h}}(\mathbf{N})$, $\mathcal{S}(\mathbf{N})$ & Sets of similarity scores to reference $\mathbf{N}$ attainable within the symmetry orbit at width $N_{h}$ and across all widths, respectively
\end{xltabular}

\subsection*{Readout allocation and norm-minimization}

\begin{xltabular}{\textwidth}{@{}p{0.24\textwidth}@{\hskip 1em}X@{}}
  $g_{q}$ & Minimum of $g_{q}^{+}$ and $g_{q}^{-}$ \\
  $g_{q}^{+}, g_{q}^{-}$  &  Norms of the two orientation features of essential parameter class $q$ \\
  $\kappa(\mathbf{t})$ & Number of active essential-class orientations specified by $\mathbf{t}$ \\
  $\Omega_{H}$ & \Acl{mrnp} objective $\norm{\mathbf{H}}_{F}^{2} + \norm{\mathbf{A}}_{F}^{2}$ \\
  $\Omega_{W}$ & \Acl{mwnp} objective $\norm{\mathbf{W}}_{F}^{2} + \norm{\mathbf{b}}^{2} + \norm{\mathbf{A}}_{F}^{2}$ \\
  $t_{q} \in [0,1]$ & Fraction of $\boldsymbol{\beta}_{q}$ allocated to orientation $\overline{\mathbf{w}}_{q}$, with the remaining fraction $1-t_{q}$ allocated to $-\overline{\mathbf{w}}_{q}$ \\
  $\mathbf{t} = (t_{q})_{q \in \mathcal{E}}$ & Vector of readout splits across essential parameter classes \\
  $\mathcal{T}$ & Polytope of readout splits preserving the orbit residual $\mathbf{r}$ \\
  $\mathcal{T}_{N_{h}}$ & Width-feasible subset of $\mathcal{T}$, comprising the splits satisfying $\kappa(\mathbf{t}) \leq N_{h}$ \\
  $\mathcal{T}^{H}, \mathcal{T}^{H}_{N_h}$ & Splits in $\mathcal{T}$ and $\mathcal{T}_{N_h}$, respectively, that use only orientations of minimal feature norm $g_{q}$
\end{xltabular}

\let\arraystretch\oldarraystretch
\clearpage
\space

\section{Parameter symmetries in overparameterized nonlinear networks}
\label[appendix]{app-sec:parameter-symmetries-in-overparameterized-networks}

This appendix provides a formal treatment of the function-preserving parameter symmetries in overparameterized nonlinear networks that are summarized in the main text.
We distinguish three classes of symmetries:
\begin{itemize}
  \item generic reparameterization symmetries (\cref{app-subsec:generic-reparameterization-symmetries}),
  \item symmetries induced by overparameterization that are independent of the activation function (\cref{app-subsec:activation-independent-symmetries}),
  \item and symmetries induced by overparameterization that depend on algebraic symmetries of the activation function \mbox{(\cref{app-subsec:activation-dependent-symmetries})}.
\end{itemize}
After introducing all parameter symmetries considered in this work, we establish a necessary and sufficient criterion for symmetry-equivalence (\cref{app-subsec:parameter-classes-orbit-invariants-and-symmetry-equivalence}).

\subsection{Generic reparameterization symmetries}
\label[appendix]{app-subsec:generic-reparameterization-symmetries}

We briefly revisit three well-known function-preserving symmetries of neural networks:
the permutation symmetry \mbox{(\cref{app-subsubsec:permutation-symmetry})}, the positive scaling symmetry (\cref{app-subsubsec:positive-scaling-symmetry}), and the sign-flip symmetry (\cref{app-subsubsec:sign-flip-symmetry}).

\subsubsection{Permutation symmetry}
\label[appendix]{app-subsubsec:permutation-symmetry}

For a hidden layer of width $N_{h}$, the ordering of the neurons within that layer is arbitrary.
Let $\mathfrak{S}_{N_{h}}$ denote the \emph{symmetric group} of the set of integers $\set{1, \dots, N_{h}}$, and let $\pi \in \mathfrak{S}_{N_{h}}$ be a permutation.
Reordering the parameters of the layer according to
\begin{equation}
  (\mathbf{w}_{j}, b_{j}, \mathbf{a}_{j})_{j=1}^{N_{h}}
  \mapsto
  (\mathbf{w}_{\pi(j)}, b_{\pi(j)}, \mathbf{a}_{\pi(j)})_{j=1}^{N_{h}}
\end{equation}
does not change the function computed by the layer, since
\begin{equation}
  \sum_{j=1}^{N_{h}} \mathbf{a}_{\pi(j)} \, \sigma(\mathbf{w}_{\pi(j)}^{\top} \mathbf{x} + b_{\pi(j)})
  = \sum_{j=1}^{N_{h}} \mathbf{a}_{j} \, \sigma(\mathbf{w}_{j}^{\top} \mathbf{x} + b_{j}),
  \qquad
  \mathbf{x} \in \mathbb{R}^{N_{i}}.
\end{equation}
This invariance is commonly referred to as the \emph{permutation symmetry}.

\subsubsection{Positive scaling symmetry}
\label[appendix]{app-subsubsec:positive-scaling-symmetry}
 
Suppose the activation function $\sigma \colon \mathbb{R} \to \mathbb{R}$ is positively homogeneous of degree~$1$, i.e.,
\begin{equation}
  \sigma(\alpha z) = \alpha \, \sigma(z),
  \qquad
  \alpha > 0.
\end{equation}
Rescaling the parameters of a single neuron via
\begin{equation}
  (\mathbf{w}_{j}, b_{j}, \mathbf{a}_{j}) \mapsto (\alpha \mathbf{w}_{j}, \alpha b_{j}, \alpha^{-1} \mathbf{a}_{j}),
  \qquad
  \alpha > 0
\end{equation}
does not change the function computed by that neuron since, for $\mathbf{x} \in \mathbb{R}^{N_{i}}$,
\begin{equation}
  \alpha^{-1} \mathbf{a}_{j} \, \sigma(\alpha \mathbf{w}_{j}^{\top} \mathbf{x} + \alpha b_{j})
  = (\alpha^{-1} \alpha) \mathbf{a}_{j} \, \sigma(\mathbf{w}_{j}^{\top} \mathbf{x} + b_{j})
  = \mathbf{a}_{j} \, \sigma(\mathbf{w}_{j}^{\top} \mathbf{x} + b_{j}).
\end{equation}
This invariance is commonly referred to as the \emph{positive scaling symmetry}.

\subsubsection{Sign-flip symmetry}
\label[appendix]{app-subsubsec:sign-flip-symmetry}

Suppose the activation function $\sigma \colon \mathbb{R} \to \mathbb{R}$ is odd, i.e., $\sigma(-z) = -\sigma(z)$.
Flipping the sign of both the incoming parameters $(\mathbf{w}_{j}, b_{j})$ and the readout weights $\mathbf{a}_{j}$, i.e.,
\begin{equation}
  (\mathbf{w}_{j}, b_{j}, \mathbf{a}_{j}) \mapsto (-\mathbf{w}_{j}, -b_{j}, -\mathbf{a}_{j})
\end{equation}
does not change the function computed by that neuron since, for $\mathbf{x} \in \mathbb{R}^{N_{i}}$,
\begin{equation}
  -\mathbf{a}_{j} \, \sigma(-\mathbf{w}_{j}^{\top} \mathbf{x} - b_{j})
  = -\mathbf{a}_{j} \, \sigma(-(\mathbf{w}_{j}^{\top} \mathbf{x} + b_{j}))
  = \mathbf{a}_{j} \, \sigma(\mathbf{w}_{j}^{\top} \mathbf{x} + b_{j}).
\end{equation}
Similarly, if $\sigma$ is even, i.e., $\sigma(-z) = \sigma(z)$, flipping the sign of the incoming parameters,
\begin{equation}
  (\mathbf{w}_{j}, b_{j}, \mathbf{a}_{j})
  \mapsto
  (-\mathbf{w}_{j}, -b_{j}, \mathbf{a}_{j}),
\end{equation}
also leaves the realized function invariant.
We refer to this as the \emph{sign-flip symmetry}.

\subsection{Activation-independent overparameterization symmetries}
\label[appendix]{app-subsec:activation-independent-symmetries}

In contrast to the generic function-preserving reparameterization symmetries discussed in \cref{app-subsec:generic-reparameterization-symmetries}, the symmetries considered in this and the subsequent subsections are induced by overparameterization: they rely on the presence of ``redundant'' neurons.
We first review overparameterization symmetries that do \emph{not} depend on the choice of activation function $\sigma \colon \mathbb{R} \to \mathbb{R}$.
In a two-layer, bias-free setting, \textcite{simsek2021geometry} show that these symmetries generate affine subspaces of equivalent networks (an \emph{expansion manifold}) in a teacher-student setup.

This subsection first introduces a nondegeneracy condition on readout weights that rules out decomposable symmetry groups, before defining the three activation-independent symmetry classes: zero-neuron groups, duplicate-neuron groups, and constant neurons.
We then discuss how these symmetry classes give rise to function-preserving realizations (\cref{app-subsubsec:activation-independent-symmetries-function-preserving-realizations}), and demonstrate that the symmetry classes are minimal and mutually distinct  (\cref{app-subsubsec:activation-independent-symmetries-minimality-and-distinctness}).

Compared with prior formulations \parencite{simsek2021geometry,martinelli2024expandcluster}, we impose the following nondegeneracy condition to exclude decomposable cases.

\begin{definition}[Subset-nonzero]
  \label[definition]{def:subset-nonzero}
  Let $\mathcal{I} \subseteq \set{1, \dots, N_{h}}$ be a finite index set.
  We say that the collection of readout weights $\set{\mathbf{a}_{i}}_{i\in\mathcal{I}}$ is \emph{subset-nonzero} if,
  for every nonempty proper subset $\mathcal{J} \subsetneq \mathcal{I}$,
  \begin{equation}
    \sum_{j \in \mathcal{J}} \mathbf{a}_{j} \neq \mathbf{0}.
  \end{equation}
\end{definition}

We now formalize the three activation-independent overparameterization symmetry classes considered in this subsection.

\begin{definition}[Zero-neuron group]
  \label[definition]{def:zero-neuron-group}
  Fix $(\mathbf{w}, b) \in \mathbb{R}^{N_{i}} \times \mathbb{R}$ with $\mathbf{w} \neq \mathbf{0}$, and let $\mathcal{Z} \neq \emptyset$ index a set of hidden neurons satisfying $\mathbf{w}_{z} = \mathbf{w}$ and $b_{z} = b$ for all $z \in \mathcal{Z}$, with readout weights $\mathbf{a}_{z}$.
  We call the neurons indexed by $\mathcal{Z}$ a \emph{zero-neuron group} if
  \begin{equation}
    \label{eq:zero-neuron-group-condition}
    \sum_{z \in \mathcal{Z}} \mathbf{a}_{z} = \mathbf{0}
    \qquad \text{and} \qquad
    \set{\mathbf{a}_{z}}_{z \in \mathcal{Z}} \ \text{is subset-nonzero.}
  \end{equation}
\end{definition}
 
\begin{definition}[Duplicate-neuron group]
  \label[definition]{def:duplicate-neuron-group}
  Fix $(\mathbf{w}^{\star}, b^{\star}, \mathbf{a}^{\star}) \in \mathbb{R}^{N_{i}} \times \mathbb{R} \times \mathbb{R}^{N_{o}}$ with $\mathbf{w}^{\star} \neq \mathbf{0}$ and $\mathbf{a}^{\star} \neq \mathbf{0}$, and let $\mathcal{D}$ index a set of at least two hidden neurons satisfying $\mathbf{w}_{d} = \mathbf{w}^{\star}$ and $b_{d} = b^{\star}$ for all $d \in \mathcal{D}$, with readout weights $\mathbf{a}_{d}$.
  We call the neurons indexed by $\mathcal{D}$ a \emph{duplicate-neuron group} if
  \begin{equation}
    \label{eq:duplicate-neuron-group-condition}
    \sum_{d \in \mathcal{D}} \mathbf{a}_{d} = \mathbf{a}^{\star}
    \qquad \text{and} \qquad
    \set{\mathbf{a}_{d}}_{d \in \mathcal{D}} \ \text{is subset-nonzero.}
  \end{equation}
\end{definition}
 
\begin{definition}[Constant neuron]
  \label[definition]{def:constant-neuron}
  A \emph{constant neuron} is a hidden neuron with vanishing incoming weights $\mathbf{w}=\mathbf{0}$.
\end{definition}

\subsubsection{Function-preserving realizations}
\label[appendix]{app-subsubsec:activation-independent-symmetries-function-preserving-realizations}

\paragraph{Zero-neuron groups.}
Adding or removing a zero-neuron group to or from a hidden layer leaves the layer function unchanged.
Indeed,
\begin{equation}
  \sum_{z \in \mathcal{Z}} \mathbf{a}_{z} \, \sigma(\mathbf{w}_{z}^{\top} \mathbf{x} + b_{z})
  = \sigma(\mathbf{w}^{\top} \mathbf{x} + b) \underbrace{\sum_{z \in \mathcal{Z}} \mathbf{a}_z}_{=\mathbf{0}}
  = \mathbf{0}.
\end{equation}

\paragraph{Duplicate-neuron groups.}
The same holds when a neuron with parameters $(\mathbf{w}^{\star}, b^{\star}, \mathbf{a}^{\star})$ is replaced by a duplicate-neuron group, since
\begin{equation}
  \sum_{d \in \mathcal{D}} \mathbf{a}_{d} \, \sigma(\mathbf{w}_{d}^{\top} \mathbf{x} + b_{d})
  = \sigma({\mathbf{w}^{\star}}^{\top} \mathbf{x} + b^{\star}) \sum_{d \in \mathcal{D}} \mathbf{a}_{d}
  = \mathbf{a}^{\star} \, \sigma({\mathbf{w}^{\star}}^{\top} \mathbf{x} + b^{\star}).
\end{equation}
Merging copies of a duplicate-neuron group into a single neuron leaves the realized function invariant as well.

\paragraph{Constant neurons.}
A constant neuron, by contrast, contributes an input-independent offset:
\begin{equation}
  \mathbf{a} \, \sigma(\mathbf{w}^{\top} \mathbf{x} + b) = \mathbf{a} \, \sigma(b).
\end{equation}
Thus, for the layer function to remain unchanged, the constant contributions of all such neurons must cancel.
If $\sigma$ is constant-odd (\cref{def:constant-odd-activations}), this offset may also be compensated by constant-neuron groups (\cref{def:constant-neuron-group}) or constant-duplicate-neuron groups (\cref{def:constant-duplicate-neuron-group}).

\subsubsection{Minimality and distinctness}
\label[appendix]{app-subsubsec:activation-independent-symmetries-minimality-and-distinctness}

Requiring that the shared incoming weight vector be nonzero for zero-neuron and duplicate-neuron groups, and incorporating the subset-nonzero condition from \cref{def:subset-nonzero} into their definitions, guarantees that the resulting taxonomy of activation-independent symmetries is \emph{minimal} in the sense that groups do not decompose into smaller subgroups.

\paragraph{Zero-neuron groups.}
Let $\mathcal{Z}$ index a zero-neuron group.
Then the subset-nonzero condition guarantees that any nonempty proper subset $\mathcal{J} \subsetneq \mathcal{Z}$ satisfies $\sum_{j \in \mathcal{J}} \mathbf{a}_{j} \neq \mathbf{0}$.
Hence, a zero-neuron group can never contain a proper zero-neuron subgroup.
In other words, \cref{def:zero-neuron-group} singles out the smallest groups of neurons that could be removed from a hidden layer without altering its realized function.

\paragraph{Duplicate-neuron groups.}
Let $\mathcal{D}$ index a duplicate-neuron group with shared parameters $(\mathbf{w}^{\star}, b^{\star})$ and aggregate readout weights $\sum_{d \in \mathcal{D}} \mathbf{a}_{d} = \mathbf{a}^{\star} \neq \mathbf{0}$.
Then there exists no proper duplicate-neuron subgroup \emph{with respect to} $\mathbf{a}^{\star}$,\footnote{In contrast to zero-neuron groups, for duplicate-neuron groups the notion of minimality is only sensible with respect to a fixed reference readout $\mathbf{a}^{\star}$, since \emph{any} subset $\mathcal{J} \subsetneq \mathcal{D}$ indexing at least two neurons again defines a duplicate-neuron group with respect to $\mathbf{a}_{\mathcal{J}}^{\star} \coloneqq \sum_{j \in \mathcal{J}} \mathbf{a}_{j}$.
}
i.e., there does not exist a subset of neurons $\mathcal{J} \subsetneq \mathcal{D}$ such that $\sum_{j \in \mathcal{J}} \mathbf{a}_{j} = \mathbf{a}^{\star}$.
To see this, assume the opposite.
The identity $\sum_{j \in \mathcal{J}} \mathbf{a}_{j} = \mathbf{a}^{\star}$ implies
\begin{equation}
  \sum_{d \in \mathcal{D} \setminus \mathcal{J}} \mathbf{a}_{d}
  = \sum_{d \in \mathcal{D}} \mathbf{a}_{d} - \sum_{j \in \mathcal{J}} \mathbf{a}_{j}
  = \mathbf{a}^{\star} - \mathbf{a}^{\star} = \mathbf{0},
\end{equation}
violating the subset-nonzero condition.
As for zero-neuron groups, the subset-nonzero condition also guarantees that a duplicate-neuron group can never contain a zero-neuron subgroup.

\paragraph{Constant neurons.}
Since constant neurons are individual neurons, rather than groups of neurons, they trivially do not admit a decomposition into subgroups.

\Cref{def:zero-neuron-group,def:duplicate-neuron-group,def:constant-neuron} not only satisfy this minimality property, but also yield a proper taxonomy of activation-independent overparameterization symmetries: the three classes are mutually \emph{distinct}.
Zero-neuron groups and duplicate-neuron groups require the shared incoming weights to be nonzero, which guarantees that both are distinct from constant neurons, since the latter are defined by having vanishing incoming weights $\mathbf{w} = \mathbf{0}$.
Similarly, a zero-neuron group never qualifies as a duplicate-neuron group because the latter requires the aggregate readout $\mathbf{a}^{\star}$ to be nonzero.
Hence, \cref{def:zero-neuron-group,def:duplicate-neuron-group,def:constant-neuron} are all mutually exclusive.

\subsection{Activation-dependent overparameterization symmetries}
\label[appendix]{app-subsec:activation-dependent-symmetries}

The three activation-\emph{independent} symmetries in \cref{app-subsec:activation-independent-symmetries} arise from overparameterization alone, regardless of the activation function $\sigma \colon \mathbb{R} \to \mathbb{R}$.
Additional symmetry groups, composed of ``aligned'' and ``opposite'' subgroups of neurons whose incoming weights and biases agree up to sign flips, emerge when $\sigma$ has algebraic structure relating $\sigma(z)$ and $\sigma(-z)$.
Formalizing these symmetries requires two ingredients:
we first identify two classes of activation functions that exhibit the relevant algebraic structure (\cref{app-subsubsec:even-linear-and-constant-odd-activations}),
and then formalize the notion of a group of neurons splitting into aligned and opposite subgroups (\cref{app-subsubsec:aligned-opposite-groups}).
We then review the corresponding symmetry groups for even-linear activations (\cref{app-subsubsec:symmetries-even-linear-activations}) and constant-odd activations (\cref{app-subsubsec:symmetries-constant-odd-activations}), before discussing the choice of sign in aligned/opposite subgroups (\cref{app-subsubsec:choice-of-sign-aligned-opposite-groups}), their function-preserving realizations (\cref{app-subsubsec:activation-dependent-symmetries-function-preserving-realizations}), and the degenerate cases excluded by the nondegeneracy requirements placed on aligned/opposite subgroups (\cref{app-subsubsec:activation-dependent-symmetries-minimality-and-nondegeneracy}).
Finally, we demonstrate that, when $\sigma$ admits activation-dependent overparameterization symmetries, irreducible parameterizations can differ by more than generic reparameterization symmetries alone (\cref{app-subsubsec:nonuniqueness-irreducible-parameterizations}).

\subsubsection{Even-linear and constant-odd activations}
\label[appendix]{app-subsubsec:even-linear-and-constant-odd-activations}

Recall that any function $\sigma \colon \mathbb{R} \to \mathbb{R}$ can be uniquely decomposed into even and odd components $\sigma(z) = e(z) + o(z)$, where
\begin{equation}
  e(z) = \frac{\sigma(z) + \sigma(-z)}{2}, \qquad
  o(z) = \frac{\sigma(z) - \sigma(-z)}{2},
\end{equation}
satisfying $e(-z) = e(z)$ and $o(-z) = -o(z)$.
Following \textcite{martinelli2024expandcluster}, we distinguish two classes of activations by the structure of their even and odd components.

\begin{definition}[Even-linear activations]
  \label[definition]{def:even-linear-activations}
  An activation function is called \emph{even-linear} if its odd component is linear: $\sigma(z) = e(z) + mz$ for $m \in \mathbb{R}$.
\end{definition}

\begin{definition}[Constant-odd activations]
  \label[definition]{def:constant-odd-activations}
  An activation function is called \emph{constant-odd} if its even component is constant: $\sigma(z) = c + o(z)$ for $c \in \mathbb{R}$.
\end{definition}

These two classes are not mutually exclusive: an activation is both even-linear and constant-odd if and only if it is affine, $\sigma(z) = mz + c$.
We exclude affine activations throughout so that the two classes are mutually exclusive in our setting.

\subsubsection{Aligned/opposite groups}
\label[appendix]{app-subsubsec:aligned-opposite-groups}

Here, we formalize subgroups of hidden neurons that share incoming weights and biases up to a sign flip as \emph{aligned/opposite groups}.
These underlie the activation-dependent symmetry groups of \textcite{martinelli2024expandcluster}, which we revisit and refine by imposing additional nondegeneracy conditions on the aligned and opposite subgroups.
Much like the subset-nonzero condition for activation-independent symmetries, these conditions prevent symmetry groups from decomposing into smaller ones.
Without them, activation-\emph{dependent} symmetry groups can be broken up into smaller activation-\emph{dependent} symmetry groups or even collapse into disjoint activation-\emph{independent} symmetry groups.
A detailed discussion of these degenerate cases is provided in \cref{app-subsubsec:activation-dependent-symmetries-minimality-and-nondegeneracy}.

\begin{definition}[Aligned/opposite group]
  \label[definition]{def:aligned-opposite-group}
  Fix $(\mathbf{w}, b) \in \mathbb{R}^{N_{i}} \times \mathbb{R}$ with $\mathbf{w} \neq \mathbf{0}$,
  and let $\mathcal{K}$ index a set of hidden neurons with parameters $(\mathbf{w}_{k}, b_{k}, \mathbf{a}_{k})$ for $k \in \mathcal{K}$.
  Let $\mathcal{N}^{+}, \mathcal{N}^{-} \subsetneq \mathcal{K}$ be nonempty, disjoint index sets such that $\mathcal{K} = \mathcal{N}^{+} \sqcup \mathcal{N}^{-}$.
  The group of neurons indexed by $\mathcal{K}$ is called an \emph{aligned/opposite group} with respect to $(\mathbf{w}, b)$ if it splits into an aligned subgroup $\mathcal{N}^{+}$ and an opposite subgroup $\mathcal{N}^{-}$ such that
  \begin{equation}
    (\mathbf{w}_{k}, b_{k}) = (\mathbf{w}, b), \quad k \in \mathcal{N}^{+},
    \qquad
    (\mathbf{w}_{k}, b_{k}) = (-\mathbf{w}, -b), \quad k \in \mathcal{N}^{-}.
  \end{equation}
\end{definition}

Note that the splitting is defined only up to a global sign flip: the same set of neurons splits into an aligned subgroup $\mathcal{N}^{-}$ and an opposite subgroup $\mathcal{N}^{+}$ with respect to parameters $(-\mathbf{w}, -b)$.
In \cref{app-subsubsec:choice-of-sign-aligned-opposite-groups} we show that for some of the symmetry groups to be introduced in \cref{app-subsubsec:symmetries-even-linear-activations,app-subsubsec:symmetries-constant-odd-activations} the labels ``aligned'' and ``opposite'' are indeed interchangeable, whereas for others they acquire a semantic meaning that removes this sign ambiguity altogether.

Next, we introduce two nondegeneracy conditions that yield \emph{minimal} activation-\emph{dependent} symmetry groups, paralleling the role of the subset-nonzero condition in ensuring minimality for activation-\emph{independent} symmetry groups (\cref{def:subset-nonzero}).

\begin{definition}[Nondegeneracy of aligned/opposite groups]
  \label[definition]{def:nondegeneracy-aligned-opposite-groups}
  Let $\mathcal{K}$ index an aligned/opposite group with subgroups indexed by $\mathcal{N}^{+}$ and $\mathcal{N}^{-}$.
  For every nonempty subset $\mathcal{J} \subseteq \mathcal{K}$, we define
  \begin{equation}
    \mathbf{a}_{\mathcal{J}}^{\pm} \coloneqq \sum_{k \in \mathcal{J} \cap \mathcal{N}^{+}} \mathbf{a}_{k} + \sum_{k \in \mathcal{J} \cap \mathcal{N}^{-}} \mathbf{a}_{k},
    \qquad
    \mathbf{a}_{\mathcal{J}}^{\mp} \coloneqq \sum_{k \in \mathcal{J} \cap \mathcal{N}^{+}} \mathbf{a}_{k} - \sum_{k \in \mathcal{J} \cap \mathcal{N}^{-}} \mathbf{a}_{k}.
  \end{equation}
  We write
  \begin{equation}
    \mathbf{a}^{\pm} \coloneqq \mathbf{a}_{\mathcal{K}}^{\pm} = \sum_{k \in \mathcal{N}^{+}} \mathbf{a}_{k} + \sum_{k \in \mathcal{N}^{-}} \mathbf{a}_{k},
    \qquad
    \mathbf{a}^{\mp} \coloneqq \mathbf{a}_{\mathcal{K}}^{\mp} = \sum_{k \in \mathcal{N}^{+}} \mathbf{a}_{k} - \sum_{k \in \mathcal{N}^{-}} \mathbf{a}_{k}
  \end{equation}
  for the aggregate weights of the full group.
  An aligned/opposite group is called \emph{sum-nondegenerate} if $\mathbf{a}_{\mathcal{J}}^{\pm} \neq \mathbf{0}$ for every nonempty proper subset $\mathcal{J} \subsetneq \mathcal{K}$, and \emph{difference-nondegenerate} if $\mathbf{a}_{\mathcal{J}}^{\mp} \neq \mathbf{0}$ for every such $\mathcal{J}$.
\end{definition}

Both notions are invariant under the global sign flip described above: exchanging $\mathcal{N}^{+}$ and $\mathcal{N}^{-}$ leaves $\mathbf{a}_{\mathcal{J}}^{\pm}$ unchanged and negates $\mathbf{a}_{\mathcal{J}}^{\mp}$.

\subsubsection{Symmetries arising from even-linear activations}
\label[appendix]{app-subsubsec:symmetries-even-linear-activations}
 
Assume $\sigma(z) = e(z) + mz$ is even-linear in the sense of \cref{def:even-linear-activations}.
For an aligned/opposite group with parameters $(\mathbf{w}, b)$, letting $z_{k} \coloneqq \mathbf{w}_{k}^{\top} \mathbf{x} + b_{k}$ and $z \coloneqq \mathbf{w}^{\top} \mathbf{x} + b$, we have $z_{k} = z$ for $k \in \mathcal{N}^{+}$ and $z_{k} = -z$ for $k \in \mathcal{N}^{-}$.
Thus, the combined contribution of the aligned/opposite group to the layer output is
\begin{equation}
  \label{eq:aligned-opposite-contribution-even-linear}
  \sum_{k} \mathbf{a}_{k} \, \sigma(z_{k})
  = \sum_{k} \mathbf{a}_{k} \, e(z_{k}) + \sum_{k} \mathbf{a}_{k} \, m z_{k}
  = \mathbf{a}^{\pm} e(z) + \mathbf{a}^{\mp} m z.
\end{equation}
Hence, the aggregate readout weights $\mathbf{a}^{\pm}$ and $\mathbf{a}^{\mp}$ control the even and linear components of the group's contribution, respectively.
In particular, an aligned/opposite group with $\mathbf{a}^{\pm} = \mathbf{0}$ generates an affine contribution, while an aligned/opposite group satisfying $\mathbf{a}^{\pm} = \mathbf{a}^{\star}$ reproduces the contribution of a single reference neuron with parameters $(\mathbf{w}, b, \mathbf{a}^{\star})$ up to a residual affine contribution.
This motivates the following two symmetry groups, which correspond to the ``even + linear'' symmetries of \textcite{martinelli2024expandcluster}.
 
\begin{definition}[Linear-neuron group]
  \label[definition]{def:linear-neuron-group}
  Suppose $\sigma(z) = e(z) + mz$ is even-linear.
  A \emph{linear-neuron group} is a sum-nondegenerate aligned/opposite group with parameters $(\mathbf{w}, b)$ such that $\mathbf{a}^{\pm} = \mathbf{0}$.
\end{definition}
 
\begin{definition}[Linear-duplicate-neuron group]
  \label[definition]{def:linear-duplicate-neuron-group}
  Suppose $\sigma(z) = e(z) + mz$ is even-linear, and fix parameters $(\mathbf{w}^{\star}, b^{\star}, \mathbf{a}^{\star})$ with $\mathbf{a}^{\star} \neq \mathbf{0}$.
  A \emph{linear-duplicate-neuron group} is a sum-nondegenerate aligned/opposite group with parameters $(\mathbf{w}^{\star}, b^{\star})$ such that $\mathbf{a}^{\pm} = \mathbf{a}^{\star}$.
\end{definition}

\subsubsection{Symmetries arising from constant-odd activations}
\label[appendix]{app-subsubsec:symmetries-constant-odd-activations}
 
Now let $\sigma(z) = c + o(z)$ be constant-odd in the sense of \cref{def:constant-odd-activations}.
Using the same notation as before, the combined contribution of an aligned/opposite group with parameters $(\mathbf{w}, b)$ is
\begin{equation}
  \label{eq:aligned-opposite-contribution-constant-odd}
  \sum_{k} \mathbf{a}_{k} \, \sigma(z_{k})
  = \sum_{k} \mathbf{a}_{k} \, c + \sum_{k} \mathbf{a}_{k} \, o(z_{k})
  = \mathbf{a}^{\pm} c + \mathbf{a}^{\mp} o(z).
\end{equation}
Analogously to the even-linear case, $\mathbf{a}^{\pm}$ and $\mathbf{a}^{\mp}$ now control the constant and odd components, respectively.
In particular, choosing $\mathbf{a}^{\mp} = \mathbf{0}$ generates a purely constant contribution, while an aligned/opposite group satisfying $\mathbf{a}^{\mp} = \mathbf{a}^{\star}$ reproduces the contribution of a single reference neuron with parameters $(\mathbf{w}, b, \mathbf{a}^{\star})$ up to a residual constant contribution.
This motivates the following two symmetry groups, corresponding to the ``odd (+ constant)'' symmetries of \textcite{martinelli2024expandcluster}.
 
\begin{definition}[Constant-neuron group]
  \label[definition]{def:constant-neuron-group}
  Suppose $\sigma(z) = c + o(z)$ is constant-odd.
  A \emph{constant-neuron group} is a difference-nondegenerate aligned/opposite group with parameters $(\mathbf{w}, b)$ such that $\mathbf{a}^{\mp} = \mathbf{0}$.
\end{definition}
  
\begin{definition}[Constant-duplicate-neuron group]
  \label[definition]{def:constant-duplicate-neuron-group}
  Suppose $\sigma(z) = c + o(z)$ is constant-odd, and fix parameters $(\mathbf{w}^{\star}, b^{\star}, \mathbf{a}^{\star})$ with $\mathbf{a}^{\star} \neq \mathbf{0}$.
  A \emph{constant-duplicate-neuron group} is a difference-nondegenerate aligned/opposite group with parameters $(\mathbf{w}^{\star}, b^{\star})$ such that $\mathbf{a}^{\mp} = \mathbf{a}^{\star}$.
\end{definition}

Note that constant-neuron \emph{groups} are distinct from the \emph{individual} constant neurons introduced in \cref{def:constant-neuron}: the former involve aligned/opposite groups with nonzero incoming weights whereas the latter are individual neurons with vanishing incoming weights.

\subsubsection{Choice of sign in aligned/opposite groups}
\label[appendix]{app-subsubsec:choice-of-sign-aligned-opposite-groups}

As noted after \cref{def:aligned-opposite-group}, the decomposition of an aligned/opposite group into subgroups $\mathcal{N}^{+}$ and $\mathcal{N}^{-}$ is defined only up to a global sign flip of the reference parameters $(\mathbf{w}, b)$.

For nonduplicate symmetry groups, this ambiguity is intrinsic and irrelevant.
Linear-neuron groups and constant-neuron groups are defined by the conditions $\mathbf{a}^{\pm} = \mathbf{0}$ and $\mathbf{a}^{\mp} = \mathbf{0}$, respectively, which are symmetric under exchanging the aligned and opposite subgroups.
In these cases, there is no intrinsic meaning attached to the labels ``aligned'' and ``opposite'': either choice yields the same symmetry group.

Duplicate-neuron groups are conceptually different.
They are intended to replicate the behavior of a single reference neuron with parameters $(\mathbf{w}^{\star}, b^{\star}, \mathbf{a}^{\star})$, which induces a distinguished notion of alignment \emph{relative to that neuron}.
This semantic ``orientation'' is present for both linear-duplicate and constant-duplicate groups.
In the linear-duplicate case, the defining condition $\mathbf{a}^{\pm} = \mathbf{a}^{\star}$ happens to be compatible with either choice of aligned/opposite decomposition, so that the sign ambiguity remains at the level of the formal definition.
In contrast, for constant-duplicate groups the defining condition $\mathbf{a}^{\mp} = \mathbf{a}^{\star} \neq \mathbf{0}$ selects a unique admissible decomposition, since reversing the sign would violate the defining equation.

Thus, while the aligned/opposite decomposition is a priori sign-ambiguous, this ambiguity either plays no role (for nonduplicate groups), matters semantically but not algebraically (for linear-duplicate groups), or is resolved by the defining equations themselves (for constant-duplicate groups).

\subsubsection{Function-preserving realizations}
\label[appendix]{app-subsubsec:activation-dependent-symmetries-function-preserving-realizations}

\paragraph{Even-linear activations.}
For even-linear $\sigma(z) = e(z) + mz$, the combined contribution of an aligned/opposite group with parameters $(\mathbf{w}, b)$ equals $\mathbf{a}^{\pm} e(z) + \mathbf{a}^{\mp} m z$, where $z \coloneqq \mathbf{w}^{\top} \mathbf{x} + b$.
Adding a linear-neuron group ($\mathbf{a}^{\pm} = \mathbf{0}$,  \cref{def:linear-neuron-group}) thus contributes an affine function of the input:
\begin{equation}
  \sum_{k} \mathbf{a}_{k} \, \sigma(z_{k})
  = \mathbf{a}^{\pm} e(z) + \mathbf{a}^{\mp} m z
  = \mathbf{a}^{\mp} m z.
\end{equation}
Replacing an individual neuron with parameters $(\mathbf{w}^{\star}, b^{\star}, \mathbf{a}^{\star})$ by a linear-duplicate-neuron group ($\mathbf{a}^{\pm} = \mathbf{a}^{\star}$, \cref{def:linear-duplicate-neuron-group}) reproduces the reference neuron's contribution up to a residual affine term:
\begin{equation}
  \sum_{k} \mathbf{a}_{k} \, \sigma(z_{k})
  = \mathbf{a}^{\pm} e(z^{\star}) + \mathbf{a}^{\mp} m z^{\star}
  = \mathbf{a}^{\star} \, \sigma(z^{\star}) + (\mathbf{a}^{\mp} - \mathbf{a}^{\star}) \, m z^{\star},
\end{equation}
where $z^{\star} \coloneqq {\mathbf{w}^{\star}}^{\top} \mathbf{x} + b^{\star}$.
For the realized function to remain unchanged, these affine contributions must cancel collectively across linear- and linear-duplicate-neuron groups and individual constant neurons.

\paragraph{Even activations.}
When $\sigma$ is even, corresponding to the even-linear case with $m = 0$, both the affine contribution $\mathbf{a}^{\mp} mz$ of a linear-neuron group and the affine residual $(\mathbf{a}^{\mp} - \mathbf{a}^{\star}) \, m z^{\star}$ of a linear-duplicate-neuron group vanish.
Consequently, linear-neuron groups and linear-duplicate-neuron groups are always function-preserving.

\paragraph{Constant-odd activations.}
For constant-odd $\sigma(z) = c + o(z)$, the combined contribution of an aligned/opposite group with parameters $(\mathbf{w}, b)$ is $\mathbf{a}^{\pm} c + \mathbf{a}^{\mp} o(z)$, with $z$ as above.
Adding a constant-neuron group ($\mathbf{a}^{\mp} = \mathbf{0}$, \cref{def:constant-neuron-group}) thus contributes an input-independent offset:
\begin{equation}
  \sum_{k} \mathbf{a}_{k} \, \sigma(z_{k})
  = \mathbf{a}^{\pm} c + \mathbf{a}^{\mp} o(z)
  = \mathbf{a}^{\pm} c.
\end{equation}
Replacing an individual neuron with parameters $(\mathbf{w}^{\star}, b^{\star}, \mathbf{a}^{\star})$ by a constant-duplicate-neuron group ($\mathbf{a}^{\mp} = \mathbf{a}^{\star}$, \cref{def:constant-duplicate-neuron-group}) reproduces the reference neuron's contribution up to a residual constant term:
\begin{equation}
  \sum_{k} \mathbf{a}_{k} \, \sigma(z_{k})
  = \mathbf{a}^{\pm} c + \mathbf{a}^{\mp} o(z^{\star})
  = \mathbf{a}^{\star} \, \sigma(z^{\star}) + (\mathbf{a}^{\pm} - \mathbf{a}^{\star}) \, c,
\end{equation}
with $z^{\star}$ as above.
For the realized function to remain unchanged, these constant contributions must cancel collectively across constant- and constant-duplicate-neuron groups and individual constant neurons (\cref{def:constant-neuron}).

\paragraph{Odd activations.}
When $\sigma$ is odd, corresponding to the constant-odd case with $c = 0$, both the constant contribution $\mathbf{a}^{\pm} c$ of a constant-neuron group and the constant residual $(\mathbf{a}^{\pm} - \mathbf{a}^{\star}) \, c$ of a constant-duplicate-neuron group vanish.
Consequently, constant-neuron groups and constant-duplicate-neuron groups are always function-preserving.

Throughout, a finite collection of additions, replacements, removals, or collapses of the applicable activation-dependent groups and individual constant neurons is regarded as a single function-preserving transformation whenever its net residual contribution vanishes identically.
The constituent operations need not preserve the function in isolation.

\subsubsection{Minimality and nondegeneracy}
\label[appendix]{app-subsubsec:activation-dependent-symmetries-minimality-and-nondegeneracy}

Under our standing exclusion of affine activations, the structural conditions in \cref{def:aligned-opposite-group}, the class-specific nondegeneracy conditions in \cref{def:nondegeneracy-aligned-opposite-groups}, and the requirement $\mathbf{a}^{\star} \neq \mathbf{0}$ in the duplicate variants jointly ensure that the four activation-dependent symmetry classes (\cref{def:linear-neuron-group,def:linear-duplicate-neuron-group,def:constant-neuron-group,def:constant-duplicate-neuron-group}) form a minimal taxonomy of mutually distinct classes disjoint from the activation-independent classes of \cref{app-subsec:activation-independent-symmetries}.
Building on the definitions of \textcite{martinelli2024expandcluster}, these conditions prevent activation-dependent groups from decomposing into smaller activation-dependent or activation-independent groups or degenerating into another activation-dependent class.
This subsection treats each condition in turn, spelling out the degeneracy it rules out.

\paragraph{Nonzero shared incoming weights ($\mathbf{w} \neq \mathbf{0}$).}
If $\mathbf{w} = \mathbf{0}$, every neuron in the aligned/opposite group has incoming weights $\pm \mathbf{w} = \mathbf{0}$ and therefore qualifies as an individual constant neuron in the sense of \cref{def:constant-neuron}.
The combined contribution reduces to $\mathbf{a}^{\pm} e(b) + \mathbf{a}^{\mp} m b$ for an even-linear activation, and to $\mathbf{a}^{\pm} c + \mathbf{a}^{\mp} o(b)$ for a constant-odd activation, in either case indistinguishable from the contribution of an unstructured collection of constant neurons.
Requiring $\mathbf{w} \neq \mathbf{0}$ thus ensures that activation-dependent symmetry groups capture genuinely input-dependent structure.

\paragraph{Two nonempty subgroups ($\mathcal{N}^{+}, \mathcal{N}^{-} \neq \emptyset$).}
If one of the subgroups were empty, all neurons would share the same incoming parameters.
For the nonduplicate classes, $\mathbf{a}^{\pm} = \mathbf{0}$ (linear-neuron) and $\mathbf{a}^{\mp} = \mathbf{0}$ (constant-neuron) then both become $\sum_{k} \mathbf{a}_{k} = \mathbf{0}$, the defining condition of a zero-neuron group, whichever subgroup is empty.
For the duplicate classes, by contrast, the resulting activation-independent symmetry depends on which subgroup is empty.
If $\mathcal{N}^{-} = \emptyset$, both $\mathbf{a}^{\pm} = \mathbf{a}^{\star}$ (linear-duplicate) and $\mathbf{a}^{\mp} = \mathbf{a}^{\star}$ (constant-duplicate) become $\sum_{k} \mathbf{a}_{k} = \mathbf{a}^{\star}$, the defining condition of a duplicate-neuron group of the reference neuron.
If $\mathcal{N}^{+} = \emptyset$, they become $\sum_{k} \mathbf{a}_{k} = \mathbf{a}^{\star}$ and $\sum_{k} \mathbf{a}_{k} = -\mathbf{a}^{\star}$, respectively.
The group then duplicates the flipped neuron $(-\mathbf{w}^{\star}, -b^{\star}, \mathbf{a}^{\star})$ or $(-\mathbf{w}^{\star}, -b^{\star}, -\mathbf{a}^{\star})$, whose contribution differs from that of the reference neuron by $-2m \, \mathbf{a}^{\star} z^{\star}$ or $-2c \, \mathbf{a}^{\star}$, respectively.
Requiring both subgroups to be nonempty thus prevents activation-dependent symmetry groups from reducing to activation-independent ones, possibly applied to a flipped copy of the reference neuron.

\paragraph{Sum- and difference-nondegeneracy.}
For linear-neuron and linear-duplicate-neuron groups, the relevant aggregate of a subset $\mathcal{J} \subseteq \mathcal{K}$ is $\mathbf{a}_{\mathcal{J}}^{\pm}$, whereas for constant-neuron and constant-duplicate-neuron groups it is $\mathbf{a}_{\mathcal{J}}^{\mp}$.
The applicable nondegeneracy condition requires this aggregate to be nonzero for every nonempty proper subset $\mathcal{J} \subsetneq \mathcal{K}$.

Suppose that this condition is violated and, among all nonempty proper subsets with vanishing relevant aggregate, choose the smallest such subset $\mathcal{J}$.
If $\mathcal{J}$ lies entirely within $\mathcal{N}^{+}$ or $\mathcal{N}^{-}$, its relevant aggregate (i.e., $\mathbf{a}_{\mathcal{J}}^{\pm}$ or $\mathbf{a}_{\mathcal{J}}^{\mp}$) agrees, up to sign, with the sum of its readout weights.
By the choice of $\mathcal{J}$, no nonempty proper subset of these readout weights sums to zero, so $\mathcal{J}$ is subset-nonzero (\cref{def:subset-nonzero}).
Because their sum vanishes, the neurons indexed by $\mathcal{J}$ form a zero-neuron group.
If instead $\mathcal{J}$ intersects both $\mathcal{N}^{+}$ and $\mathcal{N}^{-}$, the same choice ensures that it is sum-nondegenerate or difference-nondegenerate, as applicable.
The neurons indexed by $\mathcal{J}$ therefore form a smaller linear-neuron or constant-neuron group.
In either case, the complementary subset $\mathcal{K} \setminus \mathcal{J}$ retains the defining aggregate of the original group (i.e., $\mathbf{0}$ for a nonduplicate group and $\mathbf{a}^{\star}$ for a duplicate group) so that the original group is decomposable and hence not minimal.

Because $\mathcal{N}^{+}$ and $\mathcal{N}^{-}$ are themselves nonempty proper subsets of $\mathcal{K}$, either nondegeneracy condition also requires both these subgroups to have nonzero aggregate readout.
Indeed,
\begin{equation}
  \mathbf{a}^{\pm} = \mathbf{a}^{\mp}
  \iff
  \sum_{k \in \mathcal{N}^{-}} \mathbf{a}_{k} = \mathbf{0},
  \qquad
  \mathbf{a}^{\pm} = -\mathbf{a}^{\mp}
  \iff
  \sum_{k \in \mathcal{N}^{+}} \mathbf{a}_{k} = \mathbf{0}.
\end{equation}
If either equality held, the neurons in one entire subgroup could be decomposed into zero-neuron groups.
In the nonduplicate case, the defining equation would force the aggregate readout of the other subgroup to vanish as well, so the full group would decompose into zero-neuron groups.
In the duplicate case, the remaining subgroup consists entirely of neurons with identical incoming parameters and can therefore be reduced to a single neuron, possibly together with zero-neuron groups.
Thus, in either case, the activation-dependent group would decompose into the activation-independent symmetry groups of \cref{app-subsec:activation-independent-symmetries}, possibly together with a flipped copy of the reference neuron.

Finally, in either duplicate class (i.e., linear or constant), a nonempty proper subset has the same relevant aggregate $\mathbf{a}^{\star}$ as the full group if and only if its nonempty complement has vanishing relevant aggregate.
The applicable nondegeneracy condition therefore also excludes proper duplicate subgroups with respect to the same reference neuron, directly paralleling the minimality argument for activation-independent duplicate-neuron groups in \cref{app-subsubsec:activation-independent-symmetries-minimality-and-distinctness}.

\paragraph{Nonzero reference readout ($\mathbf{a}^{\star} \neq \mathbf{0}$, duplicate variants).}
If $\mathbf{a}^{\star} = \mathbf{0}$, the defining conditions of the linear- and constant-duplicate-neuron groups, $\mathbf{a}^{\pm} = \mathbf{a}^{\star}$ and $\mathbf{a}^{\mp} = \mathbf{a}^{\star}$, would reduce to those of the linear- and constant-neuron groups, $\mathbf{a}^{\pm} = \mathbf{0}$ and $\mathbf{a}^{\mp} = \mathbf{0}$, making the duplicate variants indistinguishable from their nonduplicate counterparts.
Moreover, a reference neuron with $\mathbf{a}^{\star} = \mathbf{0}$ contributes nothing to the realized function, so ``duplicating'' it would carry no semantic content.
Requiring $\mathbf{a}^{\star} \neq \mathbf{0}$ keeps the duplicate and nonduplicate variants disjoint and ensures that duplicate variants genuinely replicate a nontrivial reference neuron.

\subsubsection{Nonuniqueness of irreducible parameterizations}
\label[appendix]{app-subsubsec:nonuniqueness-irreducible-parameterizations}

We now demonstrate that distinct irreducible parameterizations within the same symmetry orbit need not be related by generic reparameterization symmetries alone.

\begin{figure}[t]
  \centering
  \includegraphics[width=0.95\textwidth]{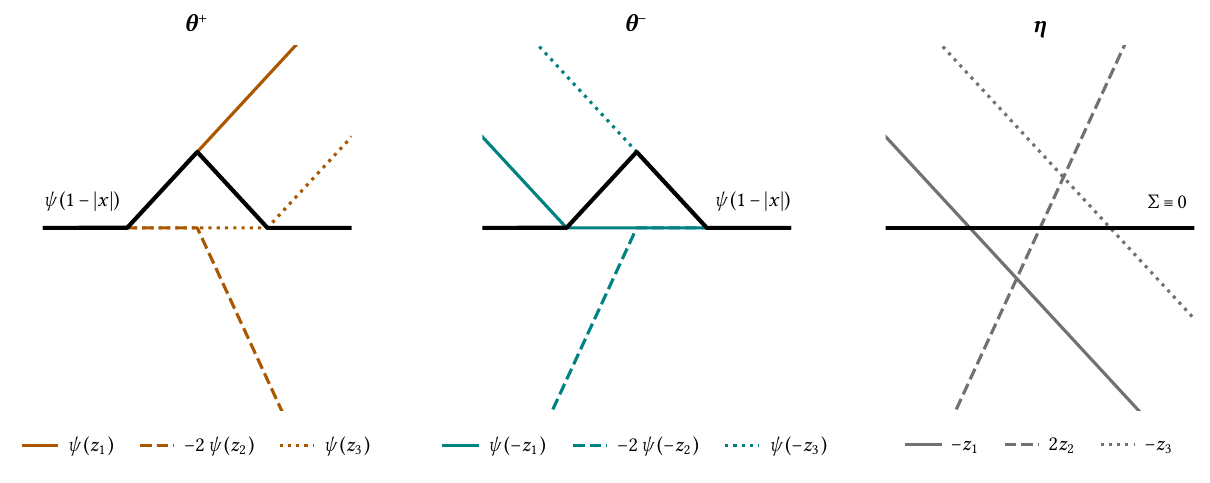}
  \caption{\textbf{Oppositely oriented irreducible parameterizations of the tent function}.
    \textbf{(Left)} A parameterization $\boldsymbol{\theta}^{+}$ of the tent function $\psi(1 - \abs{x}) = \max(0, 1 - \abs{x})$ composed of three neurons, all with positive incoming weight.
    \textbf{(Center)} The same function, realized by an implementation $\boldsymbol{\theta}^{-}$ consisting of the same three neurons, but with negated incoming parameters $(-w_{j}, -b_{j})$.
    \textbf{(Right)} The net contributions $-a_{j}z_{j}$ of the three linear-neuron groups comprising the auxiliary parameterization $\boldsymbol{\eta}$ used to transform $\boldsymbol{\theta}^{+}$ into $\boldsymbol{\theta}^{-}$.
    None vanishes individually, but their contributions sum to zero, allowing $\boldsymbol{\eta}$ to be adjoined without changing the realized function.
  }
  \label{fig:opposite-orientations}
\end{figure}
\space

\begin{example}[Oppositely oriented irreducible ReLU parameterizations]
  \label[example]{ex:opposite-irreducible-relu-parameterizations}
  Consider a one-hidden-layer network with one-dimensional input, one-dimensional output, and the ReLU activation
  \begin{equation}
    \psi(z) \coloneqq \max(0, z).
  \end{equation}
  Let $\boldsymbol{\theta}^{+}$ be the width-$3$ parameterization consisting of the neurons
  \begin{equation}
    (w_{1}, b_{1}, a_{1}) = (1, 1, 1),
    \qquad
    (w_{2}, b_{2}, a_{2}) = (1, 0, -2),
    \qquad
    (w_{3}, b_{3}, a_{3}) = (1, -1, 1).
  \end{equation}
  The three neurons have kinks at $-1$, $0$, and $1$, respectively, and together realize the \emph{tent function}
  \begin{equation}
    f_{\boldsymbol{\theta}^{+}}(x)
    = \psi(x+1) - 2 \, \psi(x) + \psi(x-1)
    = \psi(1-\abs{x}).
  \end{equation}

  The same function is realized by the oppositely oriented parameterization $\boldsymbol{\theta}^{-}$ consisting of
  \begin{equation}
    \begin{aligned}
      (-w_{1}, -b_{1}, a_{1}) &= (-1, -1, 1),\\
      (-w_{2}, -b_{2}, a_{2}) &= (-1, 0, -2),\\
      (-w_{3}, -b_{3}, a_{3}) &= (-1, 1, 1).
    \end{aligned}
  \end{equation}
  Indeed, using $\psi(z) - \psi(-z) = z$, we obtain
  \begin{equation}
    \begin{aligned}
      & f_{\boldsymbol{\theta}^{+}}(x)
        - f_{\boldsymbol{\theta}^{-}}(x) \\
      ={}& \bigl[ \psi(x+1) - \psi(-x-1) \bigr]
        - 2 \bigl[ \psi(x) - \psi(-x) \bigr]
        + \bigl[ \psi(x-1) - \psi(-x+1) \bigr] \\
      ={}& (x+1) - 2x + (x-1)
        = 0.
    \end{aligned}
  \end{equation}

  \Cref{fig:opposite-orientations} shows both decompositions.

  Moreover, these two parameterizations are symmetry-equivalent as defined in \cref{subsec:symmetry-orbits-irreducibility-and-overparameterization}.
  To see this explicitly, write
  \begin{equation}
    z_{1} \coloneqq x+1,
    \qquad
    z_{2} \coloneqq x,
    \qquad
    z_{3} \coloneqq x-1.
  \end{equation}
  Define an auxiliary width-$6$ parameterization $\boldsymbol{\eta}$ consisting, for each $j \in \set{1,2,3}$, of the two neurons
  \begin{equation}
    (w_{j}, b_{j}, -a_{j})
    \qquad \text{and} \qquad
    (-w_{j}, -b_{j}, a_{j}).
  \end{equation}
  Since ReLU is even-linear (see \cref{app-subsec:even-linear-activations}) and $a_{j} \neq 0$, each such pair is sum-nondegenerate.
  Its aggregate readout vanishes, $a^{\pm}=-a_{j}+a_{j}=0$, so each pair forms a linear-neuron group.
  The contribution of the $j$th group to $f_{\boldsymbol{\eta}}$ is
  \begin{equation}
    -a_{j} \, \psi(z_{j}) + a_{j} \, \psi(-z_{j})
    = -a_{j}z_{j}.
  \end{equation}
  These contributions cancel collectively:
  \begin{equation}
    f_{\boldsymbol{\eta}}(x)
    = -\sum_{j=1}^{3} a_{j}z_{j}
    = -(x+1) + 2x - (x-1)
    = 0.
  \end{equation}
  Thus, $\boldsymbol{\eta}$ realizes the zero function, so adjoining it to $\boldsymbol{\theta}^{+}$ preserves the realized function (\cref{fig:opposite-orientations}, right).
  For each $j$, the original neuron $(w_{j}, b_{j}, a_{j})$ of $\boldsymbol{\theta}^{+}$ and its readout-negated copy $(w_{j}, b_{j}, -a_{j})$ in $\boldsymbol{\eta}$ form a zero-neuron group.
  Removing these three groups leaves precisely the oppositely oriented neurons of $\boldsymbol{\eta}$, which constitute $\boldsymbol{\theta}^{-}$.

  Both parameterizations are irreducible.
  The realized tent function has three distinct kinks with nonzero changes in slope, whereas each one-dimensional ReLU neuron can introduce at most one such kink.
  Any ReLU network realizing the tent function must therefore have width at least $3$, and both $\boldsymbol{\theta}^{+}$ and $\boldsymbol{\theta}^{-}$ attain this minimum.

  Finally, the two parameterizations cannot be related by neuron permutations and positive scaling.
  Every neuron in $\boldsymbol{\theta}^{+}$ has positive incoming weight, whereas every neuron in $\boldsymbol{\theta}^{-}$ has negative incoming weight, and neither permutation nor positive scaling can reverse this orientation.
\end{example}

\Cref{ex:opposite-irreducible-relu-parameterizations} illustrates three important facts about symmetry-equivalent parameterizations.
\begin{itemize}
    \item The activation-dependent symmetry groups discussed in this subsection need not be function-preserving individually.
    In fact, none of the linear-neuron groups in \cref{ex:opposite-irreducible-relu-parameterizations} is function-preserving when added on its own; only their joint addition yields a function-preserving transformation.

    \item Irreducible parameterizations are \emph{not} unique up to generic reparameterization symmetries (\cref{app-subsec:generic-reparameterization-symmetries}) when the activation admits any of the activation-dependent symmetries discussed in this subsection.

    \item Finally, the auxiliary-parameterization construction connecting the oppositely oriented parameterizations $\boldsymbol{\theta}^{+}$ and $\boldsymbol{\theta}^{-}$ generalizes to yield a quantitative criterion for symmetry-equivalence (Proof of \cref{prop:characterization-of-symmetry-equivalence-all-activation-classes}).
\end{itemize}

\subsection{Parameter classes, orbit invariants, and symmetry-equivalence}
\label[appendix]{app-subsec:parameter-classes-orbit-invariants-and-symmetry-equivalence}

The definition of symmetry-equivalence established in \cref{subsec:symmetry-orbits-irreducibility-and-overparameterization} of the main text is not directly operational: given two parameterizations $\boldsymbol{\theta}$ and $\boldsymbol{\xi}$, we lack a criterion for deciding whether they are symmetry-equivalent.
To establish such a criterion, we first introduce \emph{parameter classes} and rewrite the realized function $f_{\boldsymbol{\theta}}$ in terms of them (\cref{app-subsubsec:parameter-classes}).
We then show that these classes yield a quantitative, easily checkable criterion for symmetry-equivalence (\cref{app-subsubsec:characterization-of-symmetry-equivalence}).

\paragraph{Terminology.}
Positively homogeneous activations of degree~$1$ form an important subclass of even-linear activations (\cref{cor:positively-homogeneous-functions-are-even-linear}).
Because their inherent positive scaling symmetry affects the theory developed in this subsection, they must be treated separately from even-linear activations that are not positively $1$-homogeneous.
Accordingly, throughout this subsection, we reserve the term \emph{even-linear} exclusively for even-linear activations that are \emph{not} positively $1$-homogeneous.

\subsubsection{Parameter classes}
\label[appendix]{app-subsubsec:parameter-classes}

We write
\begin{equation}
  \overline{\mathbf{w}}_{j}
  \coloneqq
  (\mathbf{w}_{j}^{\top}, b_{j})^{\top},
  \qquad
  \overline{\mathbf{x}}
  \coloneqq
  (\mathbf{x}^{\top}, 1)^{\top},
  \qquad
  z_{j}
  \coloneqq
  \overline{\mathbf{w}}_{j}^{\top} \overline{\mathbf{x}}.
\end{equation}

\begin{definition}[Parameter classes]
  \label[definition]{def:parameter-classes}
  Fix a nonlinear activation $\sigma$ and an incoming-parameter vector $\overline{\mathbf{w}} = (\mathbf{w}^{\top}, b)^{\top}$ with $\mathbf{w} \neq \mathbf{0}$.
  Its \emph{parameter class} is
  \begin{equation}
    \label{eq:parameter-classes-by-activation}
    [\overline{\mathbf{w}}] \coloneqq
    \begin{cases}
      \set{\alpha \overline{\mathbf{w}} \given \alpha \in \mathbb{R}_{\neq 0}},
      &
      \sigma \text{ is positively homogeneous of degree~} 1,
      \\
      \set{\overline{\mathbf{w}}, -\overline{\mathbf{w}}},
      &
      \sigma \text{ is even-linear or constant-odd},
      \\
      \set{\overline{\mathbf{w}}},
      &
      \text{ otherwise}.
    \end{cases}
  \end{equation}
  We denote the parameter classes represented by a parameterization $\boldsymbol{\theta}$ by
  \begin{equation}
    \label{eq:represented-parameter-classes}
    \mathcal{Q}(\boldsymbol{\theta})
    \coloneqq
    \set{
      [\overline{\mathbf{w}}_{j}] \given \mathbf{w}_{j} \neq \mathbf{0}
    }.
  \end{equation}
\end{definition}

For every $q \in \mathcal{Q}(\boldsymbol{\theta})$ we fix a representative $\overline{\mathbf{w}}_{q} \in q$, taken to be unit-norm when $\sigma$ is positively homogeneous of degree~$1$ and, when $\sigma$ is constant-odd, the unique element of $q$ whose incoming-weight vector has a positive first nonzero entry. We write
\begin{equation}
  \mathcal{J}_{q}^{\pm} \coloneqq \set{j \in \mathcal{J}_{q} \given \overline{\mathbf{w}}_{j} \text{ is a positive multiple of } \pm\overline{\mathbf{w}}_{q}},
\end{equation}
so that $\mathcal{J}_{q} = \mathcal{J}_{q}^{+} \sqcup \mathcal{J}_{q}^{-}$.

\paragraph{Positively homogeneous activations of degree~$\mathbf{1}$.}
By \cref{cor:positively-homogeneous-functions-are-even-linear}, we can write
\begin{equation}
  \sigma(z) = \delta \abs{z} + mz,
  \qquad
  \delta \coloneqq \frac{\lambda_{+} - \lambda_{-}}{2} \neq 0,
  \qquad
  m \coloneqq \frac{\lambda_{+} + \lambda_{-}}{2},
\end{equation}
for $\lambda_{-}, \lambda_{+} \in \mathbb{R}$.
For every $q = [\overline{\mathbf{w}}] \in \mathcal{Q}(\boldsymbol{\theta})$, define
\begin{equation}
  \label{eq:orbit-invariants-positively-homogeneous}
  \boldsymbol{\beta}_{q}(\boldsymbol{\theta})
  \coloneqq
  \sum_{j \in \mathcal{J}_{q}}
  \norm{\overline{\mathbf{w}}_{j}} \, \mathbf{a}_{j},
  \qquad
  \phi_{q}(\mathbf{x})
  \coloneqq
  \frac{\delta}{\norm{\overline{\mathbf{w}}}} \,
  \abs{
    \overline{\mathbf{w}}^{\top} \overline{\mathbf{x}}
  },
\end{equation}
where $\overline{\mathbf{w}}$ is any element of $q$, and
\begin{equation}
  \label{eq:global-residual-positively-homogeneous}
  \mathbf{r}_{\boldsymbol{\theta}}(\mathbf{x})
  \coloneqq
  m \,
  \sum_{j \notin \mathcal{J}_{0}}
  \mathbf{a}_{j} \, z_{j}
  +
  \sum_{j \in \mathcal{J}_{0}}
  \mathbf{a}_{j} \, \sigma(b_{j}).
\end{equation}
Note that $\phi_{q} \colon \mathbb{R}^{N_{i}} \to \mathbb{R}$ depends only on the parameter class $q = \set{\alpha \overline{\mathbf{w}} \given \alpha \in \mathbb{R}_{\neq 0}}$, \emph{not} on the particular representative, since
\begin{equation}
  \frac{\delta}{\norm{\alpha \overline{\mathbf{w}}}} \,
  \abs{
    \alpha \overline{\mathbf{w}}^{\top} \overline{\mathbf{x}}
  }
  =
  \frac{\abs{\alpha}}{\abs{\alpha}} \frac{\delta}{\norm{\overline{\mathbf{w}}}} \,
  \abs{
    \overline{\mathbf{w}}^{\top} \overline{\mathbf{x}}
  }
  =
  \frac{\delta}{\norm{\overline{\mathbf{w}}}} \,
  \abs{
    \overline{\mathbf{w}}^{\top} \overline{\mathbf{x}}
  },
  \qquad
  \alpha \in \mathbb{R}_{\neq 0}.
\end{equation}

\paragraph{Even-linear activations.}
Let $\sigma(z) = e(z) + mz$, with $e$ even.
For every parameter class $q = [\overline{\mathbf{w}}] = \set{\overline{\mathbf{w}}, -\overline{\mathbf{w}}} \in \mathcal{Q}(\boldsymbol{\theta})$, define
\begin{equation}
  \label{eq:orbit-invariants-even-linear}
  \boldsymbol{\beta}_{q}(\boldsymbol{\theta})
  \coloneqq
  \sum_{j \in \mathcal{J}_{q}}
  \mathbf{a}_{j},
  \qquad
  \phi_{q}(\mathbf{x})
  \coloneqq
  e(\overline{\mathbf{w}}^{\top} \overline{\mathbf{x}}),
\end{equation}
and
\begin{equation}
  \label{eq:global-residual-even-linear}
  \mathbf{r}_{\boldsymbol{\theta}}(\mathbf{x})
  \coloneqq
  m
  \sum_{j \notin \mathcal{J}_{0}}
  \mathbf{a}_{j} \, z_{j}
  +
  \sum_{j \in \mathcal{J}_{0}}
  \mathbf{a}_{j} \, \sigma(b_{j}).
\end{equation}
Again, the definition of $\phi_{q}$ is independent of the choice of representative since $e(-\overline{\mathbf{w}}^{\top} \overline{\mathbf{x}}) = e(\overline{\mathbf{w}}^{\top} \overline{\mathbf{x}})$.

\paragraph{Constant-odd activations.}
Let $\sigma(z) = c + o(z)$, with $o$ odd.
For every parameter class $q = [\overline{\mathbf{w}}] = \set{\overline{\mathbf{w}}_{q}, -\overline{\mathbf{w}}_{q}} \in \mathcal{Q}(\boldsymbol{\theta})$, define
\begin{equation}
  \label{eq:orbit-invariants-constant-odd}
  \boldsymbol{\beta}_{q}(\boldsymbol{\theta})
  \coloneqq
  \sum_{j \in \mathcal{J}_{q}^{+}}
  \mathbf{a}_{j}
  -
  \sum_{j \in \mathcal{J}_{q}^{-}}
  \mathbf{a}_{j},
  \qquad
  \phi_{q}(\mathbf{x})
  \coloneqq
  o(\overline{\mathbf{w}}_{q}^{\top}\overline{\mathbf{x}}),
\end{equation}
and
\begin{equation}
  \label{eq:global-residual-constant-odd}
  \mathbf{r}_{\boldsymbol{\theta}}(\mathbf{x})
  \coloneqq
  c
  \sum_{j \notin \mathcal{J}_{0}}
  \mathbf{a}_{j}
  +
  \sum_{j \in \mathcal{J}_{0}}
  \mathbf{a}_{j} \, \sigma(b_{j}).
\end{equation}
Unlike in the positively $1$-homogeneous and even-linear cases, the definitions of $\boldsymbol{\beta}_{q}(\boldsymbol{\theta})$ and $\phi_{q}$ for constant-odd activations depend on a choice of orientation for $q$: exchanging $\overline{\mathbf{w}}_{q}$ with $-\overline{\mathbf{w}}_{q}$ negates both quantities.
Their product $\boldsymbol{\beta}_{q}(\boldsymbol{\theta})\phi_{q}(\mathbf{x})$ is therefore orientation-independent, but the convention defining $\overline{\mathbf{w}}_{q}$ is needed to specify the two factors individually.

\paragraph{Activations with neither symmetry property.}
For every singleton parameter class $q = \set{\overline{\mathbf{w}}} \in \mathcal{Q}(\boldsymbol{\theta})$, define
\begin{equation}
  \label{eq:orbit-invariants-neither-symmetry-property}
  \boldsymbol{\beta}_{q}(\boldsymbol{\theta})
  \coloneqq
  \sum_{j \in \mathcal{J}_{q}}
  \mathbf{a}_{j},
  \qquad
  \phi_{q}(\mathbf{x})
  \coloneqq
  \sigma(\overline{\mathbf{w}}^{\top} \overline{\mathbf{x}}),
\end{equation}
and
\begin{equation}
  \label{eq:global-residual-neither-symmetry-property}
  \mathbf{r}_{\boldsymbol{\theta}}(\mathbf{x})
  \coloneqq
  \sum_{j \in \mathcal{J}_{0}}
  \mathbf{a}_{j} \, \sigma(b_{j}).
\end{equation}

Under each of the preceding definitions, the realized function can be expressed as
\begin{equation}
  \label{eq:function-decomposition-all-activation-classes}
  f_{\boldsymbol{\theta}}(\mathbf{x})
  =
  \sum_{q \in \mathcal{Q}(\boldsymbol{\theta})}
  \boldsymbol{\beta}_{q}(\boldsymbol{\theta}) \, \phi_{q}(\mathbf{x})
  +
  \mathbf{r}_{\boldsymbol{\theta}}(\mathbf{x}),
  \qquad
  \mathbf{x} \in \mathbb{R}^{N_{i}}.
\end{equation}

\subsubsection{Characterization of symmetry-equivalence}
\label[appendix]{app-subsubsec:characterization-of-symmetry-equivalence}

\begin{proposition}[Characterization of symmetry-equivalence]
  \label[proposition]{prop:characterization-of-symmetry-equivalence-all-activation-classes}
  Two parameterizations $\boldsymbol{\theta}$ and $\boldsymbol{\xi}$, possibly of different widths, with the same nonlinear activation $\sigma$, are symmetry-equivalent if and only if
  \begin{equation}
    \label{eq:characterization-of-symmetry-equivalence-all-activation-classes}
    \boldsymbol{\beta}_{q}(\boldsymbol{\theta})
    =
    \boldsymbol{\beta}_{q}(\boldsymbol{\xi})
    \quad
    \text{for every }
    q \in
    \mathcal{Q}(\boldsymbol{\theta})
    \cup
    \mathcal{Q}(\boldsymbol{\xi}),
    \qquad
    \mathbf{r}_{\boldsymbol{\theta}}
    =
    \mathbf{r}_{\boldsymbol{\xi}},
  \end{equation}
  with $\boldsymbol{\beta}_{q}(\cdot)$ understood to be zero whenever $q$ is absent from the corresponding parameterization.
\end{proposition}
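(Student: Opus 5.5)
The plan is to prove the two directions separately. Necessity reduces to a check on generators. Sufficiency is proved by an explicit construction modeled on \cref{ex:opposite-irreducible-relu-parameterizations}.

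\emph{Necessity.} Every generating symmetry preserves the realized function, and by \eqref{eq:function-decomposition-all-activation-classes} we have $\mathbf{r}_{\boldsymbol{\theta}} = f_{\boldsymbol{\theta}} - \sum_{q} \boldsymbol{\beta}_{q}(\boldsymbol{\theta}) \phi_{q}$. So it suffices to show that each generating operation leaves every $\boldsymbol{\beta}_{q}$ unchanged; equality of residuals then follows automatically. This is a finite case check against the definitions \eqref{eq:orbit-invariants-positively-homogeneous}--\eqref{eq:orbit-invariants-neither-symmetry-property}:
\begin{itemize}
  \item Permutations are trivial.
  \item Positive scaling maps $(\overline{\mathbf{w}}_{j}, \mathbf{a}_{j}) \mapsto (\alpha\overline{\mathbf{w}}_{j}, \alpha^{-1}\mathbf{a}_{j})$, which preserves $\norm{\overline{\mathbf{w}}_{j}}\mathbf{a}_{j}$.
  \item The even sign flip preserves the unsigned sum $\sum_{j\in\mathcal{J}_q}\mathbf{a}_{j}$. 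The odd sign flip moves a neuron from $\mathcal{J}_{q}^{+}$ to $\mathcal{J}_{q}^{-}$ while negating its readout, so the signed sum in \eqref{eq:orbit-invariants-constant-odd} is unchanged.
  \item Constant neurons lie in no class.
  \item Zero- and duplicate-neuron groups consist of neurons with identical $\overline{\mathbf{w}}$, hence identical norm. Adding or removing such a group changes $\boldsymbol{\beta}_{q}$ by a norm factor times $\mathbf{0}$, or replaces $\mathbf{a}^{\star}$ by an equal aggregate.
  \item All neurons of an aligned/opposite group share $\norm{\overline{\mathbf{w}}}$. A linear(-duplicate) group therefore changes the unsigned $\boldsymbol{\beta}_{q}$ by a multiple of $\mathbf{a}^{\pm} \in \set{\mathbf{0}, \mathbf{a}^{\star}}$, which exactly matches the removed reference neuron in the duplicate case. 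A constant(-duplicate) group changes the signed $\boldsymbol{\beta}_{q}$ by $\mathbf{a}^{\mp} \in \set{\mathbf{0}, \mathbf{a}^{\star}}$. In the latter case, the choice-of-sign discussion in \cref{app-subsubsec:choice-of-sign-aligned-opposite-groups} fixes the orientation consistently with the representative $\overline{\mathbf{w}}_{q}$.
\end{itemize}
Composite transformations with vanishing net residual are covered by the same argument, since $\boldsymbol{\beta}$ is additive over the constituent operations.

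\emph{Sufficiency.} Assume the invariants agree. I would build a single admissible transformation $\boldsymbol{\theta} \to \boldsymbol{\xi}$ in three stages.
\begin{enumerate}
  \item Normalize. In the positively homogeneous case, rescale every nonconstant neuron to $\overline{\mathbf{w}}_{j} = \pm\overline{\mathbf{w}}_{q}$. Then, within each class and orientation, merge equal-parameter neurons into a single neuron, using duplicate-neuron groups, or remove them via zero-neuron groups when their readouts cancel. Each such step is individually function-preserving.
  \item Match classes orientation by orientation. After normalization, $\boldsymbol{\theta}$ has for each $q$ at most one neuron per orientation, with readouts $\mathbf{p}_{q}^{\pm}$. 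Here $\mathbf{p}_{q}^{+} + \mathbf{p}_{q}^{-} = \boldsymbol{\beta}_{q}$ in the even-linear/homogeneous case, and $\mathbf{p}_{q}^{+} - \mathbf{p}_{q}^{-} = \boldsymbol{\beta}_{q}$ in the constant-odd case; $\boldsymbol{\xi}$ has analogous readouts $\mathbf{s}_{q}^{\pm}$. The difference is absorbed by adjoining, for each $q$, the pair $\set{(\overline{\mathbf{w}}_{q}, \mathbf{s}_{q}^{+}-\mathbf{p}_{q}^{+}), (-\overline{\mathbf{w}}_{q}, \mathbf{s}_{q}^{-}-\mathbf{p}_{q}^{-})}$. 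Equality of $\boldsymbol{\beta}_{q}$ makes this pair have $\mathbf{a}^{\pm} = \mathbf{0}$ (respectively $\mathbf{a}^{\mp} = \mathbf{0}$), so it is a linear (respectively constant) neuron group. After adjoining, merge each orientation via zero/duplicate groups, exactly as in \cref{ex:opposite-irreducible-relu-parameterizations}. Classes with $\boldsymbol{\beta}_{q} = \mathbf{0}$ present in only one network are handled by the same mechanism with one side empty. For activations with neither symmetry property, classes are singletons and only duplicate/zero groups are needed.
  \item Adjust constants. Add or remove constant neurons to match those of $\boldsymbol{\xi}$.
\end{enumerate}
Stages 2 and 3 individually alter the residual by affine (or constant) terms. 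Their sum, however, equals $\mathbf{r}_{\boldsymbol{\xi}} - \mathbf{r}_{\boldsymbol{\theta}} = \mathbf{0}$ by hypothesis, so the whole collection is one admissible function-preserving transformation under the convention at the end of \cref{app-subsubsec:activation-dependent-symmetries-function-preserving-realizations}.

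The main obstacle I expect is that the groups appearing in stage 2 must satisfy the minimality requirements: the subset-nonzero condition, sum/difference-nondegeneracy, both subgroups nonempty, and $\mathbf{w} \neq \mathbf{0}$. A degenerate pair may occur, for instance when $\mathbf{s}_{q}^{+} = \mathbf{p}_{q}^{+}$. My approach would be to argue, as in the decomposition discussion of \cref{app-subsubsec:activation-dependent-symmetries-minimality-and-nondegeneracy}, that any degenerate aligned/opposite collection with vanishing relevant aggregate splits, by repeatedly extracting a minimal vanishing subset, into a disjoint union of genuine zero-neuron groups and nondegenerate linear/constant groups. Trivial pieces with all-zero readout would simply be omitted. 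Carrying this out carefully, together with verifying that each merge step in stage 1 is a legitimate duplicate/zero group, is the technical core of the argument.
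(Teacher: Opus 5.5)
Your proposal is correct, but the sufficiency argument is organized differently from the paper's. The paper never merges neurons within either network. It forms an auxiliary network $\boldsymbol{\eta}$ from all neurons of $\boldsymbol{\xi}$ plus a readout-negated copy of every neuron of $\boldsymbol{\theta}$. All of $\boldsymbol{\eta}$'s invariants vanish, so $f_{\boldsymbol{\eta}}=\mathbf{0}$. The paper then greedily peels off minimum-cardinality zero-sum same-parameter subsets as zero-neuron groups. Next, within each class, it peels off minimum-cardinality vanishing subsets; minimality makes these automatically sum- or difference-nondegenerate linear/constant groups. It adjoins everything to $\boldsymbol{\theta}$ in one joint step and finally cancels each neuron of $\boldsymbol{\theta}$ against its negated copy. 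You instead reduce both networks to a canonical form with at most one neuron per orientation per class. You then connect the canonical forms by at most one two-neuron group per class plus a swap of constant neurons, and you should state explicitly that $\boldsymbol{\xi}$'s canonicalization is undone at the end. The paper's route handles arbitrary multiplicities in a single pass. Yours keeps the activation-dependent groups as small as possible and yields an explicit normal form for each orbit.

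The stage-2 obstacle you anticipate does not actually arise. Equality of $\boldsymbol{\beta}_q$ gives $\mathbf{s}_q^+-\mathbf{p}_q^+=-(\mathbf{s}_q^--\mathbf{p}_q^-)$ in the even-linear/homogeneous case and $\mathbf{s}_q^+-\mathbf{p}_q^+=\mathbf{s}_q^--\mathbf{p}_q^-$ in the constant-odd case. So each pair is either identically zero (omit it) or has both readouts nonzero, which for a two-element group is exactly sum- or difference-nondegeneracy. The subsequent merges are also legitimate: two nonzero readouts with nonzero sum form a duplicate group, and $\mathbf{p}$, $-\mathbf{p}$ form a zero group.

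The real technicality sits in stage 1. A same-parameter collection with readouts $(1,-1,2)$ is not subset-nonzero, so you must first remove minimum-cardinality zero-sum subsets as zero-neuron groups before collapsing the remainder as a duplicate group. This is precisely the paper's extraction step.

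There is also a small slip in your necessity argument. A constant(-duplicate) group changes the signed $\boldsymbol{\beta}_q$ by $\varepsilon\mathbf{a}^{\mp}$, where $\varepsilon\overline{\mathbf{w}}=\overline{\mathbf{w}}_q$, not by $\mathbf{a}^{\mp}$. The duplicate case still balances because the reference neuron contributes $\varepsilon\mathbf{a}^{\star}$ with the same sign. The choice-of-sign discussion aligns the group with the reference neuron, not with $\overline{\mathbf{w}}_q$, so it is not the reason the case balances.
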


\begin{proof}
  Suppose first that $\boldsymbol{\theta}$ and $\boldsymbol{\xi}$ are symmetry-equivalent, $\boldsymbol{\theta} \sim \boldsymbol{\xi}$.
  To establish that the aggregates $\boldsymbol{\beta}_{q}(\boldsymbol{\theta})$ and $\boldsymbol{\beta}_{q}(\boldsymbol{\xi})$ coincide for all parameter classes $q$, it suffices to demonstrate that these are invariant under each symmetry considered in \cref{app-subsec:generic-reparameterization-symmetries,app-subsec:activation-independent-symmetries,app-subsec:activation-dependent-symmetries}.

  The permutation symmetry (\cref{app-subsubsec:permutation-symmetry}) simply permutes neurons, and has no effect on the aggregate readouts $\boldsymbol{\beta}_{q}(\boldsymbol{\theta})$. The positive scaling symmetry (\cref{app-subsubsec:positive-scaling-symmetry}) of positively $1$-homogeneous activations preserves both the parameter class $q$ and its contribution to the corresponding coefficient $\boldsymbol{\beta}_{q}(\boldsymbol{\theta})$, since
  \begin{equation}
    \norm{\alpha \overline{\mathbf{w}}_{j}}
    \, \alpha^{-1} \mathbf{a}_{j}
    =
    \norm{\overline{\mathbf{w}}_{j}} \, \mathbf{a}_{j},
    \qquad
    \alpha > 0.
  \end{equation}
  For odd activations, the sign-flip symmetry (\cref{app-subsubsec:sign-flip-symmetry}) exchanges the two orientations of a parameter class $q$ while negating the readout and therefore preserves their signed difference $\boldsymbol{\beta}_{q}(\boldsymbol{\theta})$.
  For even activations, the same symmetry exchanges the two orientations of a parameter class $q$ while leaving the readout unchanged, and therefore preserves the aggregate $\boldsymbol{\beta}_{q}(\boldsymbol{\theta})$.
  
  Zero-neuron groups (\cref{def:zero-neuron-group}) have vanishing aggregate readout, so adding them to or removing them from a parameterization leaves every aggregate coefficient $\boldsymbol{\beta}_{q}(\boldsymbol{\theta})$ unchanged.
  Similarly, introducing or collapsing duplicate-neuron groups (\cref{def:duplicate-neuron-group}) preserves the parameter class $q$ and its aggregate $\boldsymbol{\beta}_{q}$ by definition.
  Individual constant neurons (\cref{def:constant-neuron}) only contribute to the residual $\mathbf{r}_{\boldsymbol{\theta}}$.

  Finally, an aligned/opposite group (\cref{def:aligned-opposite-group}) with incoming-parameter vector $\overline{\mathbf{w}}$ affects only the aggregate coefficient associated with the parameter class $q=[\overline{\mathbf{w}}]$.
  For positively homogeneous activations of degree~$1$, the group's contribution to this coefficient is
  \begin{equation}
    \sum_{k \in \mathcal{K}}
    \norm{\overline{\mathbf{w}}_{k}} \, \mathbf{a}_{k}
    =
    \norm{\overline{\mathbf{w}}}
    \sum_{k \in \mathcal{K}} \mathbf{a}_{k}
    =
    \norm{\overline{\mathbf{w}}} \, \mathbf{a}^{\pm},
  \end{equation}
  where $\mathcal{K}$ indexes the neurons comprising the aligned/opposite group.
  For even-linear activations, the corresponding contribution is
  \begin{equation}
    \sum_{k \in \mathcal{K}} \mathbf{a}_{k} = \mathbf{a}^{\pm}.
  \end{equation}
  For constant-odd activations, it is
  \begin{equation}
    \varepsilon \sum_{k \in \mathcal{N}^{+}} \mathbf{a}_k
    -
    \varepsilon \sum_{k \in \mathcal{N}^{-}} \mathbf{a}_k
    =
    \varepsilon \mathbf{a}^{\mp},
  \end{equation}
  where $\varepsilon \in \set{-1,1}$ is such that $\varepsilon \overline{\mathbf{w}} = \overline{\mathbf{w}}_{q}$.
  Linear-neuron groups (\cref{def:linear-neuron-group}) and constant-neuron groups (\cref{def:constant-neuron-group}) satisfy $\mathbf{a}^{\pm} = \mathbf{0}$ and $\mathbf{a}^{\mp} = \mathbf{0}$, respectively, so adding or removing such groups leaves $\boldsymbol{\beta}_{q}(\boldsymbol{\theta})$ unchanged.
  Likewise, linear-duplicate-neuron groups (\cref{def:linear-duplicate-neuron-group}) and constant-duplicate-neuron groups (\cref{def:constant-duplicate-neuron-group}) satisfy $\mathbf{a}^{\pm} = \mathbf{a}^{\star}$ and $\mathbf{a}^{\mp} = \mathbf{a}^{\star}$, respectively, where $\mathbf{a}^{\star}$ denotes the reference neuron's readout.
  Thus, replacing the reference neuron by such a duplicate group, or collapsing the group back to a single neuron, leaves $\boldsymbol{\beta}_{q}(\boldsymbol{\theta})$ unchanged.

  Since every individual parameter symmetry preserves all aggregates $\boldsymbol{\beta}_{q}(\boldsymbol{\theta})$, so does any finite composition of such symmetries.
  Hence, whenever $\boldsymbol{\theta} \sim \boldsymbol{\xi}$, we have $\boldsymbol{\beta}_{q}(\boldsymbol{\theta}) = \boldsymbol{\beta}_{q}(\boldsymbol{\xi})$ for every parameter class $q$.

  Moreover, symmetry-equivalence requires the realized functions to coincide, $f_{\boldsymbol{\theta}} = f_{\boldsymbol{\xi}}$.
  Combining this with $\boldsymbol{\beta}_{q}(\boldsymbol{\theta}) = \boldsymbol{\beta}_{q}(\boldsymbol{\xi})$ for every parameter class $q$, the decomposition in \cref{eq:function-decomposition-all-activation-classes} gives
  \begin{equation}
    \mathbf{r}_{\boldsymbol{\theta}}(\mathbf{x}) - \mathbf{r}_{\boldsymbol{\xi}}(\mathbf{x})
    =
    \underbrace{f_{\boldsymbol{\theta}}(\mathbf{x}) - f_{\boldsymbol{\xi}}(\mathbf{x})}_{=\mathbf{0}}
    -
    \sum_{q} (\underbrace{\boldsymbol{\beta}_{q}(\boldsymbol{\theta}) - \boldsymbol{\beta}_{q}(\boldsymbol{\xi})}_{=\mathbf{0}}) \, \phi_{q}(\mathbf{x})
    =
    \mathbf{0},
    \qquad
    \mathbf{x} \in \mathbb{R}^{N_{i}},
  \end{equation}
  and hence $\mathbf{r}_{\boldsymbol{\theta}} = \mathbf{r}_{\boldsymbol{\xi}}$.

  Conversely, suppose that $\boldsymbol{\theta}$ and $\boldsymbol{\xi}$ satisfy $\boldsymbol{\beta}_{q}(\boldsymbol{\theta}) = \boldsymbol{\beta}_{q}(\boldsymbol{\xi})$ for all parameter classes $q \in \mathcal{Q}(\boldsymbol{\theta}) \cup \mathcal{Q}(\boldsymbol{\xi})$, and $\mathbf{r}_{\boldsymbol{\theta}} = \mathbf{r}_{\boldsymbol{\xi}}$.
  To show that such parameterizations are symmetry-equivalent, we generalize the construction used in \cref{ex:opposite-irreducible-relu-parameterizations} to connect the two oppositely oriented parameterizations $\boldsymbol{\theta}^{+}$ and $\boldsymbol{\theta}^{-}$ discussed there.
  That is, we construct an auxiliary parameterization $\boldsymbol{\eta}$ whose nonconstant neurons can be partitioned into symmetry groups and whose residual contributions, including those of its individual constant neurons, cancel collectively.
  Adjoining $\boldsymbol{\eta}$ to $\boldsymbol{\theta}$ then places every neuron of $\boldsymbol{\theta}$ into a removable symmetry group, leaving precisely the neurons of $\boldsymbol{\xi}$.
  
  If $\sigma$ is positively homogeneous of degree~$1$, we first apply the positive scaling symmetry to every nonconstant neuron in both $\boldsymbol{\theta}$ and $\boldsymbol{\xi}$ via
  \begin{equation}
    (\overline{\mathbf{w}}_{j}, \mathbf{a}_{j})
    \mapsto
    (
    \norm{\overline{\mathbf{w}}_{j}}^{-1} 
    \overline{\mathbf{w}}_{j},
    \norm{\overline{\mathbf{w}}_{j}} \, \mathbf{a}_{j}
    ).
  \end{equation}
  These invertible transformations preserve all parameter classes, aggregate coefficients, and residual functions.
  It therefore suffices to connect the normalized parameterizations, which we continue to denote by $\boldsymbol{\theta}$ and $\boldsymbol{\xi}$.
  Afterward, all neurons in a given parameter class $q = [\overline{\mathbf{w}}]$ have one of the two opposite unit incoming-parameter vectors $\pm \norm{\overline{\mathbf{w}}}^{-1} \overline{\mathbf{w}}$.
  Thus, from this point onward, the positively $1$-homogeneous case can be treated identically to the even-linear case.

  Next, we construct an auxiliary parameterization $\boldsymbol{\eta}$ consisting of all neurons in $\boldsymbol{\xi}$ together with, for each neuron $(\overline{\mathbf{w}}, \mathbf{a})$ in $\boldsymbol{\theta}$, a neuron $(\overline{\mathbf{w}}, -\mathbf{a})$ with negated readout.
  We will show that this auxiliary parameterization decomposes into zero-neuron groups, linear- or constant-neuron groups (depending on the activation), and individual constant neurons whose combined contribution vanishes identically, so that $\boldsymbol{\eta}$ can be added to $\boldsymbol{\theta}$ in a function-preserving manner.

  By construction,
  \begin{equation}
    \mathcal{Q}(\boldsymbol{\eta})
    =
    \mathcal{Q}(\boldsymbol{\theta})
    \cup
    \mathcal{Q}(\boldsymbol{\xi}),
  \end{equation}
  and linearity in the readout weights gives
  \begin{equation}
    \boldsymbol{\beta}_{q}(\boldsymbol{\eta})
    =
    \boldsymbol{\beta}_{q}(\boldsymbol{\xi})
    -
    \boldsymbol{\beta}_{q}(\boldsymbol{\theta})
    =
    \mathbf{0}
    \quad
    \text{for every }
    q \in \mathcal{Q}(\boldsymbol{\eta}),
    \qquad
    \mathbf{r}_{\boldsymbol{\eta}}
    =
    \mathbf{r}_{\boldsymbol{\xi}}
    -
    \mathbf{r}_{\boldsymbol{\theta}}
    =
    \mathbf{0}.
  \end{equation}
  Hence, by \cref{eq:function-decomposition-all-activation-classes},
  \begin{equation}
    f_{\boldsymbol{\eta}}(\mathbf{x})
    =
    \sum_{q \in \mathcal{Q}(\boldsymbol{\eta})}
    \boldsymbol{\beta}_{q}(\boldsymbol{\eta}) \, \phi_{q}(\mathbf{x})
    +
    \mathbf{r}_{\boldsymbol{\eta}}(\mathbf{x})
    =
    \mathbf{0},
    \qquad
    \mathbf{x} \in \mathbb{R}^{N_{i}}.
  \end{equation}

  We first extract all zero-neuron groups from $\boldsymbol{\eta}$.
  Fix a parameter class $q$.
  Suppose there exist an incoming-parameter vector $\overline{\mathbf{w}} \in q$ and a nonempty index set $\mathcal{Z}$ such that $\overline{\mathbf{w}}_{z} = \overline{\mathbf{w}}$ for all $z \in \mathcal{Z}$ and $\sum_{z \in \mathcal{Z}} \mathbf{a}_{z} = \mathbf{0}$.
  Among all such sets, choose $\mathcal{Z}$ with minimum cardinality.
  By minimality, its readout weights are subset-nonzero, and hence the neurons indexed by $\mathcal{Z}$ form a zero-neuron group.
  Because this group contributes identically zero, removing it from $\boldsymbol{\eta}$ preserves all aggregates $\boldsymbol{\beta}_{q}(\boldsymbol{\eta})$ and the residual $\mathbf{r}_{\boldsymbol{\eta}}$, while adjoining it to $\boldsymbol{\theta}$ preserves the realized function.
  Repeating this procedure across all parameter classes terminates after finitely many steps, leaving no nonempty collection of neurons with identical incoming parameters whose readout weights sum to zero.

  If $\sigma$ is neither even-linear nor constant-odd, every represented parameter class contains only a single incoming-parameter vector.
  Since $\boldsymbol{\beta}_{q}(\boldsymbol{\eta})=\mathbf{0}$ would then imply the existence of a zero-neuron group, contradicting the preceding removal of all such groups, no neurons remain in any parameter class of $\boldsymbol{\eta}$.
  Thus, only individual constant neurons, which belong to no parameter class by definition, may remain.

  Now suppose that $\sigma$ is even-linear or constant-odd.
  Every parameter class represented among the remaining neurons of $\boldsymbol{\eta}$ must contain both orientations.
  Otherwise, all neurons in that class would share the same incoming parameters, and $\boldsymbol{\beta}_{q}(\boldsymbol{\eta})=\mathbf{0}$ would again imply the existence of a zero-neuron group.

  Fix such a parameter class $q$ and, among its remaining neurons, choose a nonempty subset $\mathcal{K}$ of minimum cardinality whose contribution to $\boldsymbol{\beta}_{q}(\boldsymbol{\eta})$ is zero.
  Since no zero-neuron groups remain, $\mathcal{K}$ must contain both orientations and therefore indexes an aligned/opposite group (\cref{def:aligned-opposite-group}).

  If $\sigma$ is even-linear, the vanishing contribution of $\mathcal{K}$ to $\boldsymbol{\beta}_{q}(\boldsymbol{\eta})$ is equivalent to $\mathbf{a}_{\mathcal{K}}^{\pm} = \mathbf{0}$.
  By minimality of $\mathcal{K}$, no nonempty proper subset $\mathcal{J} \subsetneq \mathcal{K}$ can satisfy $\mathbf{a}_{\mathcal{J}}^{\pm}=\mathbf{0}$.
  Thus, the group is sum-nondegenerate and forms a linear-neuron group (\cref{def:linear-neuron-group}).
  Setting this group aside leaves the aggregate contribution of the remaining neurons in $q$ equal to zero.
  Repeating this construction across all parameter classes therefore partitions all nonconstant neurons of $\boldsymbol{\eta}$ into linear-neuron groups, leaving only individual constant neurons.

  For constant-odd activations, the same argument with $\mathbf{a}^{\mp}$ in place of $\mathbf{a}^{\pm}$ partitions all nonconstant neurons of $\boldsymbol{\eta}$ into constant-neuron groups (\cref{def:constant-neuron-group}), again leaving only individual constant neurons.

  At this point, we have decomposed the original auxiliary parameterization $\boldsymbol{\eta}$ into zero-neuron groups, which have already been adjoined to $\boldsymbol{\theta}$, activation-dependent symmetry groups where admitted by $\sigma$, and individual constant neurons.
  Each activation-dependent symmetry group has vanishing parameter-class coefficient, so its contribution to the realized function lies entirely in the residual.
  Because $\mathbf{r}_{\boldsymbol{\eta}} = \mathbf{0}$, the residual contributions of these groups cancel collectively with those of the individual constant neurons.
  Hence, all remaining neurons of $\boldsymbol{\eta}$ can be adjoined jointly to $\boldsymbol{\theta}$ without changing its realized function.

  The resulting parameterization contains the original neurons of $\boldsymbol{\theta}$ and $\boldsymbol{\xi}$, together with a readout-negated copy of every neuron of $\boldsymbol{\theta}$.
  For each original neuron $(\overline{\mathbf{w}}, \mathbf{a})$ of $\boldsymbol{\theta}$, the pair $(\overline{\mathbf{w}}, \mathbf{a})$ and $(\overline{\mathbf{w}}, -\mathbf{a})$ consists of two canceling constant neurons if $\mathbf{w} = \mathbf{0}$, two singleton zero-neuron groups if $\mathbf{w} \neq \mathbf{0}$ and $\mathbf{a} = \mathbf{0}$, or a single zero-neuron group if $\mathbf{w} \neq \mathbf{0}$ and $\mathbf{a} \neq \mathbf{0}$.
  In all cases, these pairs can be removed, leaving precisely the neurons of $\boldsymbol{\xi}$, up to permutation.
  Hence, $\boldsymbol{\theta} \sim \boldsymbol{\xi}$.
\end{proof}

\Cref{prop:characterization-of-symmetry-equivalence-all-activation-classes} establishes that the aggregates $\boldsymbol{\beta}_{q}(\boldsymbol{\theta})$ and the residual function $\mathbf{r}_{\boldsymbol{\theta}}$ are invariants of the symmetry orbit containing $\boldsymbol{\theta}$.
We therefore extend the shorthand notation introduced in the main text for positively $1$-homogeneous activations to all activation classes considered here, suppressing the dependence on the particular parameterization.

\begin{definition}[Essential parameter classes]
  \label[definition]{def:essential-parameter-classes}
  Fix a symmetry orbit $\mathcal{O}$ and let $\boldsymbol{\theta} \in \mathcal{O}$ be any parameterization.
  We write
  \begin{equation}
    \boldsymbol{\beta}_{q}
    \coloneqq
    \boldsymbol{\beta}_{q}(\boldsymbol{\theta}),
    \qquad
    q \in \mathcal{Q}(\boldsymbol{\theta}),
  \end{equation}
  and
  \begin{equation}
    \mathbf{r} \coloneqq \mathbf{r}_{\boldsymbol{\theta}},
    \qquad
    \mathcal{E} \coloneqq
    \set{
      q \in \mathcal{Q}(\boldsymbol{\theta})
      \given
      \boldsymbol{\beta}_{q} \neq \mathbf{0}
    }.
  \end{equation}
  A parameter class $q$ is called \emph{essential} if $q \in \mathcal{E}$.
\end{definition}
\space

\section{Symmetry properties of common activation functions}
\label[appendix]{app-sec:symmetry-properties-of-common-activation-functions}

Which of the overparameterization symmetries introduced in \cref{app-sec:parameter-symmetries-in-overparameterized-networks} can arise in a given network depends on algebraic properties of its activation function.
Here, we classify every scalar activation available from the \texttt{jax.nn} module \parencite[v0.10.0,][]{jax2018github}, covering the standard activations commonly used in practice, according to whether it is even-linear (\cref{app-subsec:even-linear-activations}), constant-odd (\cref{app-subsec:constant-odd-activations}), positively homogeneous of degree~$1$ (\cref{app-subsec:positively-homogeneous-activations}), or none of the above (\cref{app-subsec:activations-with-neither-symmetry-property}).
These classes are not mutually exclusive, and the complete classification is summarized in \cref{tab:symmetries-of-jax-activation-functions}.
For completeness, this classification includes the identity activation, although affine activations are excluded from the nonlinear-network setting considered in this work.

\begin{table}
  \caption{Symmetries of all scalar activation functions in \texttt{jax.nn} (v0.10.0).
    The aliases \texttt{swish} and \texttt{hard\_swish} for \texttt{silu} and \texttt{hard\_silu}, respectively, are omitted.
    We assume $\alpha>0$ (\texttt{celu}, \texttt{elu}, \texttt{leaky\_relu}, \texttt{selu}), $\lambda\geq1$ (\texttt{selu}), and $b>0$ (\texttt{squareplus}).
    For \texttt{leaky\_relu}, we additionally assume $\alpha\neq1$, since $\alpha=1$ gives the identity.
    \textbf{Even+Lin} and \textbf{Const+Odd} report the linear slope $m$ and constant $c$, respectively.
    Odd functions, which are the ones exhibiting the sign-flip symmetry, have $c=0$.
    \textbf{Scaling} denotes the positive-scaling symmetry of positively $1$-homogeneous activations.
  }
  \label{tab:symmetries-of-jax-activation-functions}
  \centering
  \begin{tabular}{@{}llccc@{}}
    \toprule
    \textbf{Activation}      & $\sigma(z)$                                                                                          & \textbf{Even+Lin}      & \textbf{Const+Odd} & \textbf{Scaling} \\
    \midrule
    \texttt{celu}            & $\begin{cases} \alpha (\mathrm{e}^{\nicefrac{z}{\alpha}} - 1), & z < 0 \\ z, & z \geq 0 \end{cases}$ & --                     & --                 & \xmark           \\[4pt]
    \texttt{elu}             & $\begin{cases} \alpha (\mathrm{e}^{z}-1), & z < 0 \\ z, & z \geq 0 \end{cases}$                      & --                     & --                 & \xmark           \\[4pt]
    \texttt{gelu}            & $z \, \Phi(z)$                                                                                       & $\nicefrac{1}{2}$      & --                 & \xmark           \\[4pt]
    \texttt{hard\_sigmoid}   & $\reluSix(z + 3) / 6$                                                                                  & --                     & $\nicefrac{1}{2}$  & \xmark           \\[4pt]
    \texttt{hard\_silu}      & $z \hardSigmoid(z)$                                                                                  & $\nicefrac{1}{2}$      & --                 & \xmark           \\[4pt]
    \texttt{hard\_tanh}      & $\begin{cases} -1, & z \leq -1 \\ z, & -1 < z < 1 \\ 1, & z \geq 1 \end{cases}$                      & --                     & $0$                & \xmark           \\[4pt]
    \texttt{identity}        & $z$                                                                                                  & $1$                    & $0$                & \cmark           \\[4pt]
    \texttt{leaky\_relu}     & $\begin{cases} \alpha z, & z < 0 \\ z, & z \geq 0 \end{cases}$                                       & $\frac{1 + \alpha}{2}$ & --                 & \cmark           \\[4pt]
    \texttt{log\_sigmoid}    & $-\log(1 + \mathrm{e}^{-z})$                                                                         & $\nicefrac{1}{2}$      & --                 & \xmark           \\[4pt]
    \texttt{mish}            & $z \tanh(\softplus(z))$                                                                              & --                     & --                 & \xmark           \\[4pt]
    \texttt{relu}            & $\max(z, 0)$                                                                                         & $\nicefrac{1}{2}$      & --                 & \cmark           \\[4pt]
    \texttt{relu6}           & $\min(\max(z, 0), 6)$                                                                                & --                     & --                 & \xmark           \\[4pt]
    \texttt{selu}            & $\lambda \begin{cases} \alpha (\mathrm{e}^{z} - 1), & z < 0 \\ z, & z \geq 0 \end{cases}$            & --                     & --                 & \xmark           \\[4pt]
    \texttt{sigmoid}         & $(1 + \mathrm{e}^{-z})^{-1}$                                                                         & --                     & $\nicefrac{1}{2}$  & \xmark           \\[4pt]
    \texttt{silu}            & $z \sigmoid(z)$                                                                                      & $\nicefrac{1}{2}$      & --                 & \xmark           \\[4pt]
    \texttt{soft\_sign}      & $z / (\abs{z} + 1)$                                                                                  & --                     & $0$                & \xmark           \\[4pt]
    \texttt{softplus}        & $\log(1 + \mathrm{e}^{z})$                                                                           & $\nicefrac{1}{2}$      & --                 & \xmark           \\[4pt]
    \texttt{sparse\_plus}    & $\begin{cases} 0, & z \leq -1 \\ \frac{1}{4}(z+1)^{2}, & -1 < z < 1 \\ z, & z \geq 1 \end{cases}$    & $\nicefrac{1}{2}$      & --                 & \xmark           \\[4pt]
    \texttt{sparse\_sigmoid} & $\begin{cases} 0, & z \leq -1 \\ \frac{1}{2}(z+1), & -1 < z < 1 \\ 1, & z \geq 1 \end{cases}$        & --                     & $\nicefrac{1}{2}$  & \xmark           \\[4pt]
    \texttt{squareplus}      & $\frac{1}{2} (z + \sqrt{z^{2} + b})$                                                                 & $\nicefrac{1}{2}$      & --                 & \xmark           \\[4pt]
    \texttt{tanh}            & $\tanh(z)$                                                                                           & --                     & $0$                & \xmark           \\
    \bottomrule
  \end{tabular}
\end{table}

\subsection{Even-linear activations}
\label[appendix]{app-subsec:even-linear-activations}

We verify that each of the following \texttt{jax.nn} activations is even-linear in the sense of \cref{def:even-linear-activations}, and we report the slope $m$ of its linear odd component.

\paragraph{GELU.}
For $\sigma(z) = z \, \Phi(z)$,\footnote{Here, GELU refers to the exact formulation rather than the approximation used by default in \texttt{jax.nn}.
}
where $\Phi(z) = \frac{1}{2}(1 + \mathrm{erf}(z / \sqrt{2}))$ is the standard normal CDF,
the symmetry $\Phi(-z) = 1 - \Phi(z)$ gives
\begin{equation}
  \sigma(-z) = -z \, \Phi(-z) = -z (1 - \Phi(z)).
\end{equation}
The even component is not constant,
\begin{equation}
  e(z) = \frac{\sigma(z) + \sigma(-z)}{2}
  = \frac{z \, \Phi(z) - z (1 - \Phi(z))}{2}
  = z \, \Phi(z) - \frac{z}{2},
\end{equation}
so GELU is not constant-odd.
On the other hand, the odd component is linear,
\begin{equation}
  o(z) = \frac{\sigma(z) - \sigma(-z)}{2}
  = \frac{z \, \Phi(z) + z (1 - \Phi(z))}{2}
  = \frac{z}{2},
\end{equation}
implying that GELU is even-linear with $m = \frac{1}{2}$.

\begin{figure}[t]
  \centering
  \includegraphics[width=0.95\textwidth]{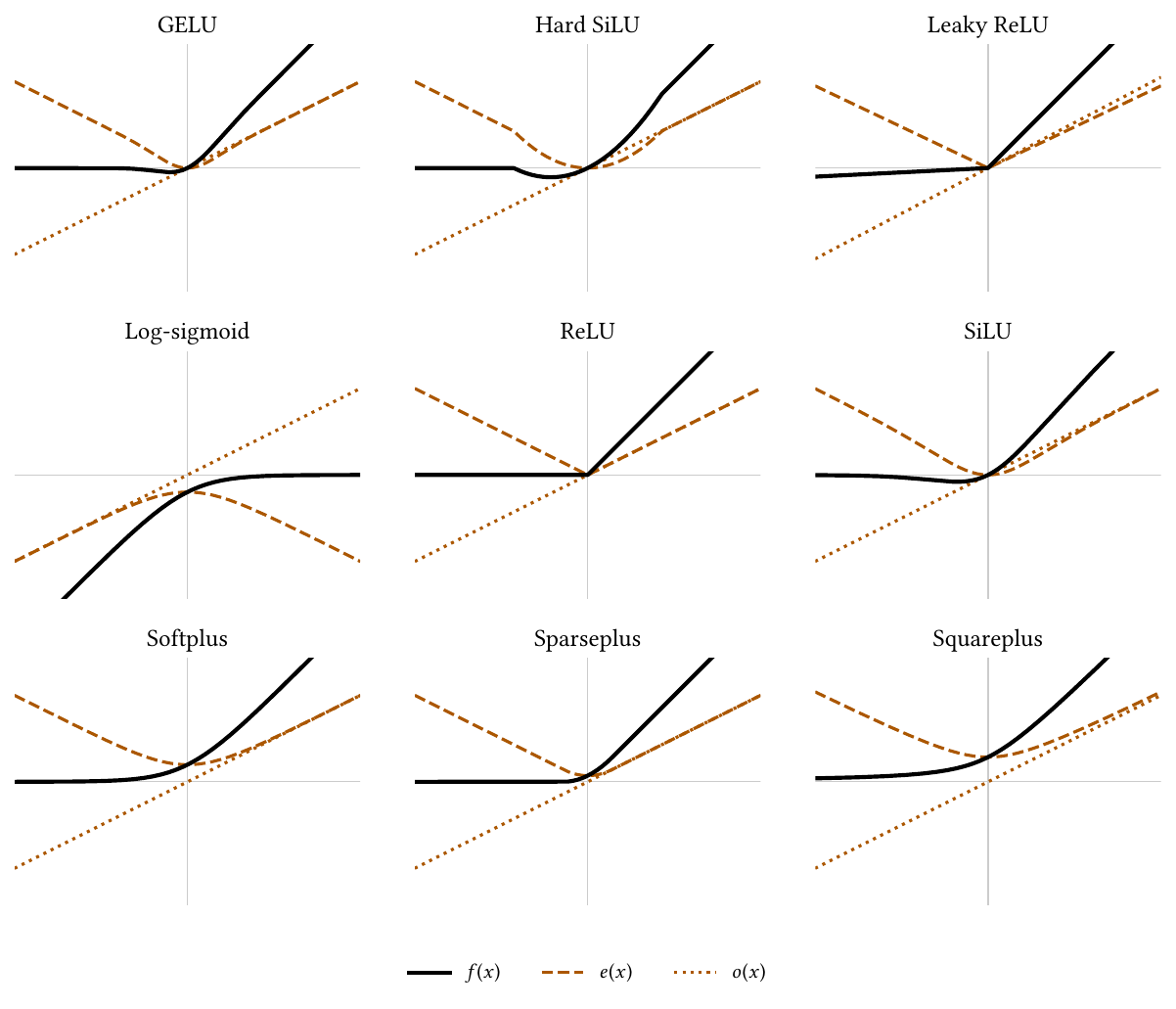}
  \caption{Activations from \texttt{jax.nn} classified as even-linear, shown with their decomposition into even and odd components.
    Solid dark lines show the activations, dashed lines their even components, and dotted lines their odd components.
    In each case, the odd component is a line passing through the origin.
  }
  \label{fig:activations-even-linear}
\end{figure}
\space

\paragraph{Hard SiLU.}
Since the hard sigmoid satisfies $\hardSigmoid(z) = \frac{1}{2} + o(z)$ for some odd function $o$, and $\hardSilu(z) = z \, \hardSigmoid(z)$, we have
\begin{equation}
  \hardSilu(z) = z \, o(z) + \frac{z}{2},
\end{equation}
where the product $z \, o(z)$ is even and $\frac{z}{2}$ is odd.
The product $z \, o(z)$ is not constant (e.g., $z \, o(z) = \frac{z}{2}$ for $z \geq 3$), so the hard SiLU is not constant-odd.
The odd component $\frac{z}{2}$ is linear, and the hard SiLU activation function is even-linear with $m = \frac{1}{2}$.

\paragraph{Identity.}
The identity $\sigma(z) = z$ is linear, so it is trivially even-linear with slope $m = 1$.

\paragraph{Leaky ReLU.}
From the definition of leaky ReLU, we have
\begin{equation}
  \sigma(z) =
  \begin{cases}
    \alpha z, & z < 0    \\
    z,        & z \geq 0
  \end{cases}
  \qquad \text{and} \qquad
  \sigma(-z) =
  \begin{cases}
    -z,        & z < 0    \\
    -\alpha z, & z \geq 0
  \end{cases}
\end{equation}
For $z \geq 0$, the even component is not constant,
\begin{equation}
  e(z) = \frac{\sigma(z) + \sigma(-z)}{2}
  = \frac{z - \alpha z}{2}
  = \frac{1 - \alpha}{2} z,
\end{equation}
so leaky ReLU is not constant-odd.
The odd component is linear,
\begin{equation}
  o(z) = \frac{\sigma(z) - \sigma(-z)}{2}
  = \frac{z + \alpha z}{2}
  = \frac{1 + \alpha}{2} z,
\end{equation}
so leaky ReLU is even-linear with $m = \frac{1 + \alpha}{2}$.

\paragraph{Log-sigmoid.}
Using standard logarithm identities, the even component can be written as
\begin{equation}
  \begin{aligned}
    e(z) &= \frac{\sigma(z) + \sigma(-z)}{2}
    = \frac{-\log(1 + \mathrm{e}^{-z}) - \log(1 + \mathrm{e}^{z})}{2} \\
    &= -\frac{\log((1 + \mathrm{e}^{-z}) (1 + \mathrm{e}^{z}))}{2}
    = -\frac{\log(2 (1 + \cosh(z)))}{2},
  \end{aligned}
\end{equation}
which is not constant, so the log-sigmoid activation function is not constant-odd.
Similarly, for the odd component:
\begin{equation}
  \begin{aligned}
    o(z) &= \frac{\sigma(z) - \sigma(-z)}{2}
    = \frac{-\log(1 + \mathrm{e}^{-z}) + \log(1 + \mathrm{e}^{z})}{2} \\
    &= \frac{1}{2} \log\mleft(\frac{1 + \mathrm{e}^{z}}{1 + \mathrm{e}^{-z}}\mright)
    = \frac{1}{2} \log\mleft(\frac{\mathrm{e}^{z}(\mathrm{e}^{-z} + 1)}{1 + \mathrm{e}^{-z}}\mright)
    = \frac{\log(\mathrm{e}^{z})}{2}
    = \frac{z}{2}.
  \end{aligned}
\end{equation}
Thus, the log-sigmoid is even-linear with $m = \frac{1}{2}$.

\paragraph{ReLU.}
For $z \geq 0$, we have $\max(z, 0) = z$ and $\max(-z, 0) = 0$.
Conversely, $z < 0$ gives $\max(z, 0) = 0$ and $\max(-z, 0) = -z$.
Therefore,
\begin{equation}
  e(z) = \frac{\sigma(z) + \sigma(-z)}{2} = \frac{1}{2}
  \begin{cases}
    z,  & z \geq 0 \\
    -z, & z < 0
  \end{cases}
  = \frac{\abs{z}}{2},
\end{equation}
which is not constant, so ReLU is not constant-odd.
Similarly, the odd component is linear,
\begin{equation}
  o(z) = \frac{\sigma(z) - \sigma(-z)}{2}
  = \frac{z}{2},
\end{equation}
so that ReLU is even-linear with $m = \frac{1}{2}$.

\paragraph{SiLU.}
By definition, $\silu(z) = z \sigmoid(z)$.
Since the sigmoid is constant-odd with $c = \frac{1}{2}$, we have $\sigmoid(z) = \frac{1}{2} + o(z)$ for the odd function $o(z) = \sigmoid(z) - \frac{1}{2}$, and thus SiLU satisfies
\begin{equation}
  \silu(z) = z \, o(z) + \frac{z}{2},
\end{equation}
where the product $z \, o(z)$ is even and $\frac{z}{2}$ is odd.
The product $z \, o(z)$ is not constant,
\begin{equation}
  z \, o(z) = z \, \sigmoid(z) - \frac{z}{2}
  = \frac{z}{1 + \mathrm{e}^{-z}} - \frac{z}{2},
\end{equation}
so SiLU is not constant-odd.
Since the odd component $\frac{z}{2}$ of the sigmoid linear unit is linear, SiLU is even-linear with $m = \frac{1}{2}$.

\paragraph{Softplus.}
Using the identity
\begin{equation}
  \label{eq:softplus-identity}
  \sigma(-z)
  = \log(1 + \mathrm{e}^{-z})
  = \log(1 + \mathrm{e}^{z}) - z
  = \sigma(z) - z
\end{equation}
of the softplus activation $\sigma$, we get
\begin{equation}
  e(z) = \frac{\sigma(z) + \sigma(-z)}{2}
  = \sigma(z) - \frac{z}{2},
\end{equation}
which is not constant, so softplus is not constant-odd.
Using the same identity, the odd component simplifies to
\begin{equation}
  o(z) = \frac{\sigma(z) - \sigma(-z)}{2}
  = \frac{z}{2},
\end{equation}
which is linear, so softplus is even-linear with $m = \frac{1}{2}$.

\paragraph{Sparseplus.}
From the definition of the sparseplus activation function, we have
\begin{equation}
  \sigma(z) =
  \begin{cases}
    0,                       & z \leq -1  \\
    \frac{1}{4} (z + 1)^{2}, & -1 < z < 1 \\
    z,                       & z \geq 1
  \end{cases}
  \qquad \text{and} \qquad
  \sigma(-z) =
  \begin{cases}
    -z,                       & z \leq -1  \\
    \frac{1}{4} (-z + 1)^{2}, & -1 < z < 1 \\
    0,                        & z \geq 1
  \end{cases}
\end{equation}
For $\abs{z} < 1$, we have
\begin{equation}
  e(z) = \frac{\sigma(z) + \sigma(-z)}{2}
  = \frac{(z + 1)^{2} + (-z + 1)^{2}}{8}
  = \frac{z^{2} + 1}{4}
\end{equation}
and
\begin{equation}
  o(z) = \frac{\sigma(z) - \sigma(-z)}{2}
  = \frac{(z + 1)^{2} - (-z + 1)^{2}}{8}
  = \frac{z}{2}.
\end{equation}
Thus, the even component is not constant, and sparseplus is not constant-odd.
The identity $o(z) = \frac{z}{2}$ involving the odd component also holds for $\abs{z} \geq 1$, so sparseplus is even-linear with $m = \frac{1}{2}$.

\paragraph{Squareplus.}
The squareplus activation function satisfies
\begin{equation}
  \sigma(-z) = \frac{-z + \sqrt{z^{2} + b}}{2},
\end{equation}
so the even component equals
\begin{equation}
  e(z) = \frac{\sigma(z) + \sigma(-z)}{2}
  = \frac{(z + \sqrt{z^{2} + b}) + (-z + \sqrt{z^{2} + b})}{4}
  = \frac{\sqrt{z^{2} + b}}{2},
\end{equation}
which is not constant.
Thus, squareplus is not constant-odd. The odd component is linear,
\begin{equation}
  o(z) = \frac{\sigma(z) - \sigma(-z)}{2}
  = \frac{(z + \sqrt{z^{2} + b}) - (-z + \sqrt{z^{2} + b})}{4}
  = \frac{z}{2},
\end{equation}
so squareplus is even-linear with $m = \frac{1}{2}$.

\subsection{Constant-odd activations}
\label[appendix]{app-subsec:constant-odd-activations}

We verify that each of the following \texttt{jax.nn} activations is constant-odd in the sense of \cref{def:constant-odd-activations}, and we report the value $c$ of its constant even component.

\paragraph{Hard sigmoid.}
For $\sigma(z) = \reluSix(z+3) / 6$, where $\reluSix(y) = \min(\max(y, 0), 6)$:
\begin{equation}
  \sigma(z) =
  \begin{cases}
    0,           & z \leq -3  \\
    (z + 3) / 6, & -3 < z < 3 \\
    1,           & z \geq 3
  \end{cases}
  \qquad \text{and} \qquad
  \sigma(-z) =
  \begin{cases}
    1,            & z \leq -3  \\
    (-z + 3) / 6, & -3 < z < 3 \\
    0,            & z \geq 3
  \end{cases}
\end{equation}
From this, it follows that
\begin{equation}
  e(z) = \frac{\sigma(z) + \sigma(-z)}{2}
  = \frac{(z + 3) + (-z + 3)}{12}
  = \frac{1}{2},
  \qquad \abs{z} < 3.
\end{equation}
By inspection, the same is true for $\abs{z} \geq 3$.
Thus, the hard sigmoid activation function is constant-odd with $c = \frac{1}{2}$.
The odd component satisfies $o(z) = -\frac{1}{2}$ for $z \leq -3$.
Being constant but nonzero on an unbounded interval precludes the odd component from being linear, so the hard sigmoid is not even-linear.

\begin{figure}[t]
  \centering
  \includegraphics[width=0.95\textwidth]{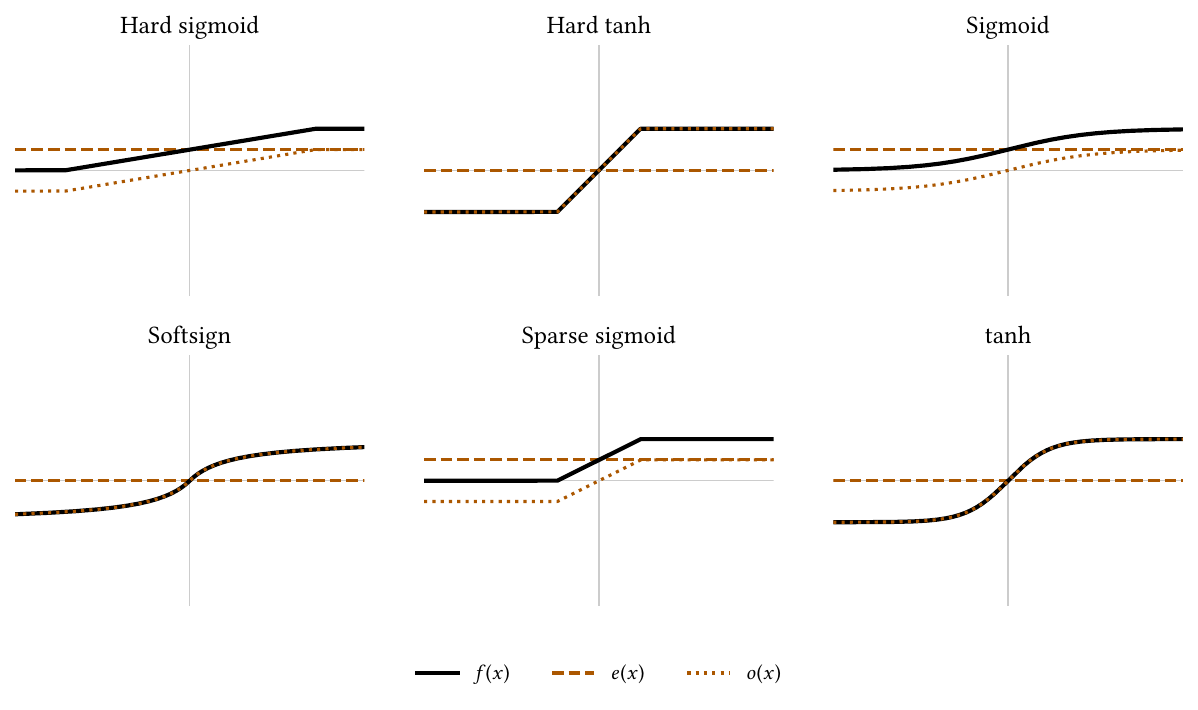}
  \caption{Activations from \texttt{jax.nn} classified as constant-odd, shown with their decomposition into even and odd components.
    Solid dark lines show the activations, dashed lines their even components, and dotted lines their odd components.
    In each case, the even component is a horizontal line---at $y=0$ for purely odd functions, where the odd component coincides with the activation itself.
  }
  \label{fig:activations-constant-odd}
\end{figure}
\space

\paragraph{Hard tanh.}
Evidently, the hard tanh activation function is odd, making it constant-odd with $c = 0$.
Since the hard tanh is piecewise linear but not linear, it is not even-linear.

\paragraph{Identity.}
The identity $\sigma(z) = z$ is odd, so it is trivially constant-odd with constant $c = 0$.

\paragraph{Sigmoid.}
The sigmoid $\sigma(z) = (1 + \mathrm{e}^{-z})^{-1}$ satisfies the identity
\begin{equation}
  \label{eq:sigmoid-identity}
  \sigma(-z) = \frac{1}{1 + \mathrm{e}^{z}}
  = \frac{\mathrm{e}^{-z}}{1 + \mathrm{e}^{-z}}
  = 1 - \frac{1}{1 + \mathrm{e}^{-z}}
  = 1 - \sigma(z).
\end{equation}
Hence, the even component is constant,
\begin{equation}
  e(z) = \frac{\sigma(z) + \sigma(-z)}{2}
  = \frac{\sigma(z) + (1 - \sigma(z))}{2}
  = \frac{1}{2},
\end{equation}
so the sigmoid activation function is constant-odd with $c = \frac{1}{2}$.
Applying \cref{eq:sigmoid-identity} once more shows that the odd component is not linear,
\begin{equation}
  o(z) = \frac{\sigma(z) - \sigma(-z)}{2}
  = \frac{\sigma(z) - (1 - \sigma(z))}{2}
  = \sigma(z) - \frac{1}{2},
\end{equation}
so the sigmoid is not even-linear.

\paragraph{Softsign.}
The softsign activation function satisfies
\begin{equation}
  \sigma(-z) = \frac{-z}{\abs{-z} + 1}
  = -\frac{z}{\abs{z} + 1}
  = -\sigma(z),
\end{equation}
and thus is odd, making it constant-odd with $c = 0$.
Since softsign is not itself linear, it is not even-linear.

\paragraph{Sparse sigmoid.}
From the definition of the sparse sigmoid, we have
\begin{equation}
  \sigma(z) =
  \begin{cases}
    0,                   & z \leq -1  \\
    \frac{1}{2} (z + 1), & -1 < z < 1 \\
    1,                   & z \geq 1
  \end{cases}
  \qquad \text{and} \qquad
  \sigma(-z) =
  \begin{cases}
    1,                    & z \leq -1  \\
    \frac{1}{2} (-z + 1), & -1 < z < 1 \\
    0,                    & z \geq 1
  \end{cases}
\end{equation}
For $\abs{z} < 1$, this gives
\begin{equation}
  e(z) = \frac{\sigma(z) + \sigma(-z)}{2}
  = \frac{(z + 1) + (-z + 1)}{4}
  = \frac{1}{2}.
\end{equation}
The same is trivially true for $\abs{z} \geq 1$, so the sparse sigmoid is constant-odd with $c = \frac{1}{2}$.
The odd component
\begin{equation}
  o(z) = \frac{\sigma(z) - \sigma(-z)}{2}
  = \frac{1}{2},
  \qquad
  z \geq 1,
\end{equation}
is constant and nonzero on the half-line $[1, \infty)$, and thus cannot be linear, so the sparse sigmoid is not even-linear.

\paragraph{tanh.}
The even component of the hyperbolic tangent
\begin{equation}
  \tanh(z) = \frac{
    \mathrm{e}^{z} - \mathrm{e}^{-z}
    }{
    \mathrm{e}^{z} + \mathrm{e}^{-z}
    }
\end{equation}
vanishes:
\begin{equation}
  e(z) = \frac{\sigma(z) + \sigma(-z)}{2}
  = \frac{
    (\mathrm{e}^{z} - \mathrm{e}^{-z}) + (\mathrm{e}^{-z} - \mathrm{e}^{z})
    }{
    2 (\mathrm{e}^{z} + \mathrm{e}^{-z})
  }
  = 0.
\end{equation}
Hence, $\tanh$ is odd, making it constant-odd with $c = 0$.
Since the hyperbolic tangent is not itself linear, it is not even-linear.

\subsection{Positively homogeneous activations}
\label[appendix]{app-subsec:positively-homogeneous-activations}

We now identify which \texttt{jax.nn} activations are positively homogeneous of degree~$1$.
Recall that a function $\sigma \colon \mathbb{R} \to \mathbb{R}$ is positively homogeneous of degree~$1$ if and only if $\sigma(\alpha z) = \alpha \sigma(z)$ for all $\alpha > 0$ and all $z \in \mathbb{R}$.
The following characterization is standard:

\begin{lemma}[Characterization of positively homogeneous functions]
  \label[lemma]{lem:characterization-of-positively-homogeneous-functions}
  A function $\sigma \colon \mathbb{R} \to \mathbb{R}$ is positively homogeneous of degree~$1$ if and only if there exist constants $\lambda_{+}, \lambda_{-} \in \mathbb{R}$ such that
  \begin{equation}
    \label{eq:characterization-of-positively-homogeneous-functions}
    \sigma(z) =
    \begin{cases}
      \lambda_{-} z, & z < 0    \\
      \lambda_{+} z, & z \geq 0
    \end{cases}
  \end{equation}
\end{lemma}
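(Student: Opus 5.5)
For the forward direction, assume $\sigma$ has the stated piecewise-linear form. Fix $\alpha > 0$ and $z \in \mathbb{R}$. Since $\alpha z$ has the same sign as $z$ (with $\alpha z \geq 0$ exactly when $z \geq 0$), the same branch applies to both $z$ and $\alpha z$. This gives $\sigma(\alpha z) = \lambda_{\pm}\alpha z = \alpha\,\sigma(z)$, so $\sigma$ is positively homogeneous of degree~$1$. This direction is immediate.

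For the converse, assume $\sigma(\alpha z) = \alpha\,\sigma(z)$ for all $\alpha>0$ and all $z$. I will define the constants directly by $\lambda_{+} \coloneqq \sigma(1)$ and $\lambda_{-} \coloneqq -\sigma(-1)$.
\begin{itemize}
  \item For $z > 0$, apply homogeneity with $\alpha = z$ at the point $1$: this gives $\sigma(z) = z\,\sigma(1) = \lambda_{+} z$.
  \item For $z < 0$, apply homogeneity with $\alpha = -z > 0$ at the point $-1$: this gives $\sigma(z) = (-z)\,\sigma(-1) = \lambda_{-} z$.
\end{itemize}

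The only point needing separate care is $z = 0$, where no scaling connects $0$ to $\pm 1$. Here I will use $\sigma(0) = \sigma(2\cdot 0) = 2\,\sigma(0)$, which forces $\sigma(0) = 0 = \lambda_{+}\cdot 0$. This places $z=0$ consistently in the $z \geq 0$ branch and completes the characterization.

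I expect no genuine obstacle; the only subtle point is handling $z=0$ correctly, since it is not reached by the scaling argument from $\pm 1$.
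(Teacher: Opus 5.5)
Your proposal is correct and follows essentially the same route as the paper's proof: the same choices $\lambda_{+} = \sigma(1)$ and $\lambda_{-} = -\sigma(-1)$, scaling from $\pm 1$ for $z \neq 0$, and $\sigma(0) = \alpha\,\sigma(0)$ to force $\sigma(0) = 0$ (you take $\alpha = 2$ where the paper uses arbitrary $\alpha > 0$). The only quibble is cosmetic: what you call the ``forward direction'' (piecewise form $\Rightarrow$ homogeneity) is the ``if'' direction of the statement.
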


\begin{proof}
  Any function of the form presented in \cref{eq:characterization-of-positively-homogeneous-functions} is readily verified to be positively homogeneous of degree~$1$.
  Conversely, suppose $\sigma$ is positively homogeneous of degree~$1$.
  For $z < 0$, we have $-z > 0$ and hence
  \begin{equation}
    \sigma(z) = -z \, \sigma(-1) = \lambda_{-} z,
    \qquad
    z < 0,
  \end{equation}
  with $\lambda_{-} \coloneqq -\sigma(-1)$.
  Similarly, for $z > 0$:
  \begin{equation}
    \sigma(z) = z \, \sigma(1) = \lambda_{+} z,
    \qquad
    z > 0,
  \end{equation}
  with $\lambda_{+} \coloneqq \sigma(1)$.
  Finally, positive homogeneity implies $\sigma(0) = \sigma(\alpha \cdot 0) = \alpha \, \sigma(0)$ for every $\alpha > 0$, which requires $\sigma(0) = 0 = \lambda_{+} \cdot 0$.
\end{proof}

This characterization immediately implies that positively $1$-homogeneous activations form a subclass of even-linear activations.

\begin{corollary}
  \label[corollary]{cor:positively-homogeneous-functions-are-even-linear}
  A positively homogeneous function of degree~$1$ is even-linear with even and odd components
  \begin{equation}
    e(z) = \frac{\lambda_{+} - \lambda_{-}}{2} \, \abs{z},
    \qquad
    o(z) = \frac{\lambda_{+} + \lambda_{-}}{2} \, z,
  \end{equation}
  where $\lambda_{+}, \lambda_{-} \in \mathbb{R}$ are as in \cref{eq:characterization-of-positively-homogeneous-functions}.
\end{corollary}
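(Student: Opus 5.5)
By \cref{lem:characterization-of-positively-homogeneous-functions}, $\sigma$ is two-sided linear: $\sigma(z) = \lambda_{-} z$ for $z<0$ and $\sigma(z)=\lambda_{+} z$ for $z \geq 0$. The plan is therefore to compute the even and odd components directly from their defining formulas $e(z) = \tfrac12(\sigma(z)+\sigma(-z))$ and $o(z) = \tfrac12(\sigma(z)-\sigma(-z))$, splitting into the cases $z>0$, $z<0$ and $z=0$.

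For $z>0$ we have $\sigma(z)=\lambda_{+}z$ and, since $-z<0$, $\sigma(-z)=\lambda_{-}(-z)=-\lambda_{-}z$. This gives
\begin{equation*}
  e(z)=\tfrac{\lambda_{+}-\lambda_{-}}{2}\,z=\tfrac{\lambda_{+}-\lambda_{-}}{2}\abs{z},
  \qquad
  o(z)=\tfrac{\lambda_{+}+\lambda_{-}}{2}\,z .
\end{equation*}

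For $z<0$ we have $\sigma(z)=\lambda_{-}z$ and $\sigma(-z)=\lambda_{+}(-z)$. Then $e(z)=\tfrac{\lambda_{-}-\lambda_{+}}{2}z=\tfrac{\lambda_{+}-\lambda_{-}}{2}\abs{z}$, and $o(z)=\tfrac{\lambda_{-}+\lambda_{+}}{2}z$.

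At $z=0$, $\sigma(0)=0$ by the lemma, so both components vanish, which agrees with the formulas.

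Since $o(z)=mz$ with $m=\tfrac{\lambda_{+}+\lambda_{-}}{2}$ is linear, $\sigma$ is even-linear in the sense of \cref{def:even-linear-activations}, with the stated $e$.

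There is no real obstacle here. The only care needed is the sign bookkeeping when evaluating $\sigma(-z)$ with the correct branch, together with the boundary point $z=0$.
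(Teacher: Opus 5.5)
Your proposal is correct and follows essentially the same route as the paper: a direct case split computing $e$ and $o$ from their defining formulas, using the two-sided linear form from \cref{lem:characterization-of-positively-homogeneous-functions}. The only difference is that you check $z=0$ separately, whereas the paper splits into $z<0$ and $z\geq 0$ and absorbs the boundary point into the second case.
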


\begin{proof}
  For $z < 0$, \cref{eq:characterization-of-positively-homogeneous-functions} gives
  \begin{equation}
    e(z) = \frac{\lambda_{-} z + \lambda_{+} (-z)}{2} = \frac{\lambda_{+} - \lambda_{-}}{2} \, \abs{z}
    \qquad \text{and} \qquad
    o(z) = \frac{\lambda_{-} z - \lambda_{+}(-z)}{2} = \frac{\lambda_{+} + \lambda_{-}}{2} \, z.
  \end{equation}
  Similarly, for $z \geq 0$, we have
  \begin{equation}
    e(z) = \frac{\lambda_{+} z + \lambda_{-}(-z)}{2} = \frac{\lambda_{+} - \lambda_{-}}{2} \, \abs{z}
    \qquad \text{and} \qquad
    o(z) = \frac{\lambda_{+} z - \lambda_{-}(-z)}{2} = \frac{\lambda_{+} + \lambda_{-}}{2} \, z.
  \end{equation}
  Hence, $\sigma$ is even-linear in the sense of \cref{def:even-linear-activations} with $m = \nicefrac{(\lambda_{+} + \lambda_{-})}{2}$.
\end{proof}

\Cref{lem:characterization-of-positively-homogeneous-functions} also yields the following result:

\begin{corollary}[Unboundedness of positively homogeneous functions]
  \label[corollary]{cor:positively-homogeneous-functions-are-unbounded-or-zero}
  On each of the half-lines $(-\infty, 0]$ and $[0, \infty)$, considered separately, a positively homogeneous function of degree~$1$ is either identically zero or unbounded on that half-line.
\end{corollary}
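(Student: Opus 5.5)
This follows directly from \cref{lem:characterization-of-positively-homogeneous-functions}, by treating the two half-lines separately.

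On $[0,\infty)$ the lemma gives $\sigma(z) = \lambda_{+} z$ for all $z \geq 0$. We split into two cases.
\begin{enumerate}
  \item If $\lambda_{+} = 0$, then $\sigma$ vanishes identically on $[0,\infty)$.
  \item If $\lambda_{+} \neq 0$, then $\abs{\sigma(z)} = \abs{\lambda_{+}} z \to \infty$ as $z \to \infty$. Concretely, for any bound $B > 0$, the point $z = (B+1)/\abs{\lambda_{+}}$ satisfies $\abs{\sigma(z)} > B$, so $\sigma$ is unbounded on $[0,\infty)$.
\end{enumerate}

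On $(-\infty,0]$ the argument is symmetric. For $z < 0$ the lemma gives $\sigma(z) = \lambda_{-} z$, and positive homogeneity forces $\sigma(0) = 0$, as shown in the proof of the lemma. Hence $\sigma(z) = \lambda_{-} z$ on the whole closed half-line. The same dichotomy applies: $\lambda_{-} = 0$ gives the zero function, while $\lambda_{-} \neq 0$ gives $\abs{\sigma(z)} = \abs{\lambda_{-}}\abs{z} \to \infty$ as $z \to -\infty$.

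There is no real obstacle here. The only point needing care is the endpoint $z = 0$, which belongs to both closed half-lines. It is consistent with either case because $\sigma(0) = 0$, so the statement holds on the closed half-lines and not merely on the open ones.
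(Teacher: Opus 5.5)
Your proof is correct and takes the same route as the paper. The paper gives no separate proof and only remarks that the corollary follows from \cref{lem:characterization-of-positively-homogeneous-functions}; your case split on whether $\lambda_{+}$ (respectively $\lambda_{-}$) vanishes, together with $\sigma(0)=0$ at the shared endpoint, supplies exactly that omitted verification.
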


\Cref{cor:positively-homogeneous-functions-are-unbounded-or-zero} rules out all activation functions except leaky ReLU, ReLU, and the identity from being positively homogeneous of degree~$1$.
These three satisfy the characterization from \cref{lem:characterization-of-positively-homogeneous-functions}, and hence are positively homogeneous of degree~$1$.

\subsection{Activations with neither symmetry property}
\label[appendix]{app-subsec:activations-with-neither-symmetry-property}

The remaining \texttt{jax.nn} activations are neither even-linear nor constant-odd.
We verify this explicitly by computing the even-odd decomposition in each case.

\paragraph{CELU.}
From the definition of CELU, we have
\begin{equation}
  \sigma(z) =
  \begin{cases}
    \alpha (\mathrm{e}^{\nicefrac{z}{\alpha}} - 1), & z < 0    \\
    z,                                              & z \geq 0
  \end{cases}
  \qquad \text{and} \qquad
  \sigma(-z) =
  \begin{cases}
    -z,                                              & z < 0    \\
    \alpha (\mathrm{e}^{\nicefrac{-z}{\alpha}} - 1), & z \geq 0
  \end{cases}
\end{equation}
For $z \geq 0$, this yields
\begin{equation}
  e(z) = \frac{z + \alpha (\mathrm{e}^{\nicefrac{-z}{\alpha}} - 1)}{2}
  \qquad \text{and} \qquad
  o(z) = \frac{z - \alpha (\mathrm{e}^{\nicefrac{-z}{\alpha}} - 1)}{2}.
\end{equation}
The linear term in $e$ implies that $e$ is not constant, and the exponential term in $o$ implies that $o$ is not linear.
Consequently, CELU is neither constant-odd nor even-linear.

\paragraph{ELU.}
From the definition of ELU, we have
\begin{equation}
  \sigma(z) =
  \begin{cases}
    \alpha(\mathrm{e}^{z} - 1), & z < 0    \\
    z,                          & z \geq 0
  \end{cases}
  \qquad \text{and} \qquad
  \sigma(-z) =
  \begin{cases}
    -z,                          & z < 0    \\
    \alpha(\mathrm{e}^{-z} - 1), & z \geq 0
  \end{cases}
\end{equation}
For $z \geq 0$, this yields
\begin{equation}
  e(z) = \frac{z + \alpha (\mathrm{e}^{-z} - 1)}{2}
  \qquad \text{and} \qquad
  o(z) = \frac{z - \alpha (\mathrm{e}^{-z} - 1)}{2}.
\end{equation}
As with CELU, the linear term in $e$ implies that $e$ is not constant, and the exponential term in $o$ implies that $o$ is not linear.
Again, ELU is neither constant-odd nor even-linear.

\paragraph{Mish.}
For brevity, write $s(z) \coloneqq \softplus(z) = \log(1 + \mathrm{e}^{z})$.
For $\sigma(z) = z \tanh(s(z))$, the even and odd components are
\begin{align}
  e(z)
  &= \frac{\sigma(z) + \sigma(-z)}{2}
  = \frac{z}{2} \bigl( \tanh(s(z)) - \tanh(s(-z)) \bigr), \\
  o(z)
  &= \frac{\sigma(z) - \sigma(-z)}{2}
  = \frac{z}{2} \bigl( \tanh(s(z)) + \tanh(s(-z)) \bigr).
\end{align}
As $z \to \infty$, we have $s(z) \to \infty$ and $s(-z) \to 0$.
Therefore,
\begin{equation}
  \lim_{z \to \infty} \frac{e(z)}{z}
  = \frac{1}{2}(1-0)
  = \frac{1}{2},
  \qquad
  \lim_{z \to \infty} \frac{o(z)}{z}
  = \frac{1}{2}(1+0)
  = \frac{1}{2}.
\end{equation}
Thus, the even component grows asymptotically as $z/2$ and is not constant.
On the other hand, as $z \to 0$, both $s(z)$ and $s(-z)$ converge to $\log 2$, yielding
\begin{equation}
  \lim_{z \to 0} \frac{o(z)}{z}
  = \tanh(\log 2)
  = \frac{\mathrm{e}^{2\log 2}-1}{\mathrm{e}^{2\log 2}+1}
  = \frac{\mathrm{e}^{\log 2^{2}}-1}{\mathrm{e}^{\log 2^{2}}+1}
  = \frac{3}{5}.
\end{equation}
Since $o(z)/z$ has different limits as $z \to 0$ and $z \to \infty$, it cannot be constant, and hence the odd component is not linear.
Consequently, Mish is neither constant-odd nor even-linear.

\begin{figure}[t]
  \centering
  \includegraphics[width=0.95\textwidth]{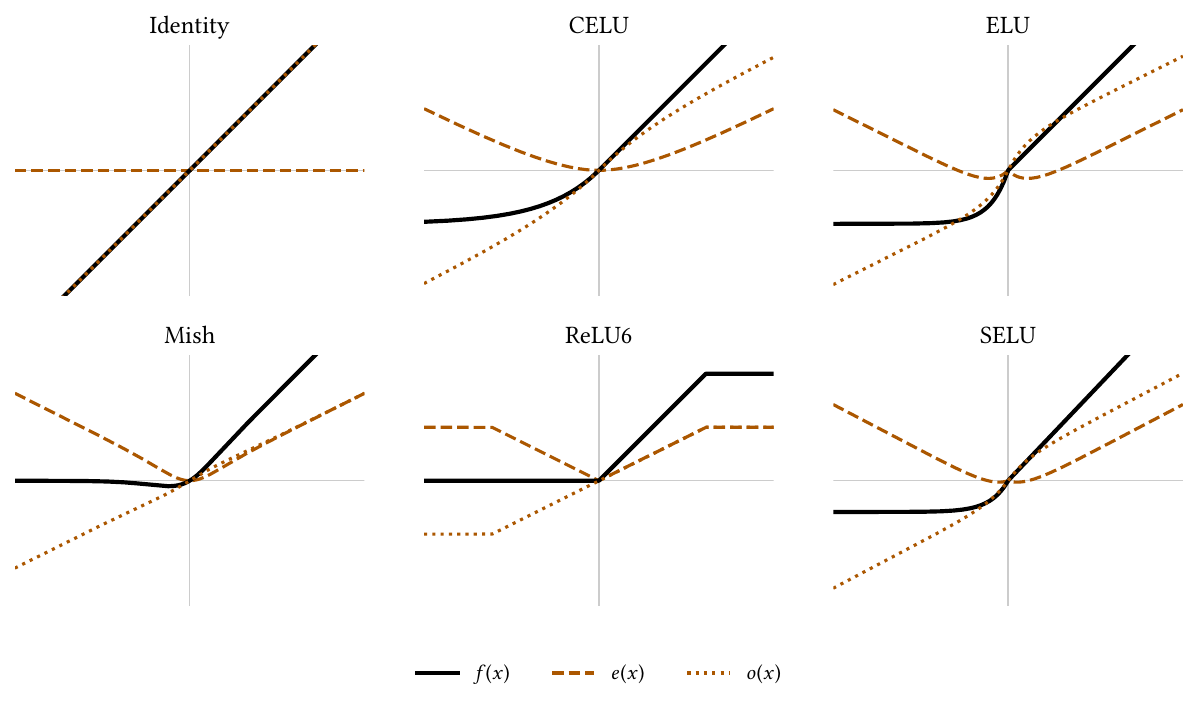}
  \caption{Activations from \texttt{jax.nn} that are either both even-linear and constant-odd (the identity) or neither (the asymmetric activations), shown with their decomposition into even and odd components.
    Solid dark lines show the activations, dashed lines their even components, and dotted lines their odd components.
  }
  \label{fig:activations-remaining}
\end{figure}
\space

\paragraph{ReLU6.}
From the definition of ReLU6, we have
\begin{equation}
  \sigma(z) =
  \begin{cases}
    0, & z \leq 0  \\
    z, & 0 < z < 6 \\
    6, & z \geq 6
  \end{cases}
  \qquad \text{and} \qquad
  \sigma(-z) =
  \begin{cases}
    6,  & z \leq -6  \\
    -z, & -6 < z < 0 \\
    0,  & z \geq 0
  \end{cases}
\end{equation}
For $0 < z < 6$, the even component is given by
\begin{equation}
  e(z) = \frac{\sigma(z) + \sigma(-z)}{2} = \frac{z}{2},
\end{equation}
which is not constant, so ReLU6 is not constant-odd.
At the same time, the odd component satisfies
\begin{equation}
  o(z) = \frac{\sigma(z) - \sigma(-z)}{2} = 3,
  \qquad
  z \geq 6.
\end{equation}
Being constant on an unbounded interval implies that the odd component is not linear, so ReLU6 is not even-linear.

\paragraph{SELU.}
From the definition of SELU, we have
\begin{equation}
  \sigma(z) = \lambda
  \begin{cases}
    \alpha(\mathrm{e}^{z} - 1), & z < 0    \\
    z,                          & z \geq 0
  \end{cases}
  \qquad \text{and} \qquad
  \sigma(-z) = \lambda
  \begin{cases}
    -z,                          & z < 0    \\
    \alpha(\mathrm{e}^{-z} - 1), & z \geq 0
  \end{cases}
\end{equation}
For $z \geq 0$, this yields
\begin{equation}
  e(z) = \lambda \frac{z + \alpha (\mathrm{e}^{-z} - 1)}{2}
  \qquad \text{and} \qquad
  o(z) = \lambda \frac{z - \alpha (\mathrm{e}^{-z} - 1)}{2}.
\end{equation}
The even component is clearly not constant, so SELU is not constant-odd.
At the same time, the odd component is not linear due to the presence of the exponential term, so SELU is not even-linear either.
\space

\section{Feature transformations within symmetry orbits}
\label[appendix]{app-sec:feature-transformations-within-orbits}

This section provides proofs and extensions of the results in \cref{sec:parameter-symmetries-act-through-three-feature-primitives}.
We first introduce duplication patterns and duplication matrices and establish identities governing their action on vectors and matrices (\cref{app-subsec:duplication-matrices}).
We then show that every parameter symmetry considered in this work induces a finite composition of feature additions, duplications, scalings, and their inverses (\cref{app-subsec:feature-level-characterization-of-parameter-symmetries}).
Next, we extend the characterization of hidden activations within a symmetry orbit to all activation classes considered in this work and prove it using the orbit invariants established previously (\cref{app-subsec:hidden-activations-within-a-symmetry-orbit}).
Finally, under the activation assumptions of \cref{cor:persistence-of-irreducible-features}, we characterize irreducible parameterizations within a symmetry orbit, establish their uniqueness up to generic reparameterization, and prove that their feature rows persist up to nonzero scaling throughout the orbit (\cref{app-subsec:persistence-of-irreducible-features}).

\subsection{Duplication matrices}
\label[appendix]{app-subsec:duplication-matrices}

To express feature duplication as matrix multiplication, we first record the number of copies of each feature row in a \emph{duplication pattern} and then construct the corresponding \emph{duplication matrix}.

\begin{definition}[Duplication pattern]
  \label[definition]{def:duplication-pattern}
  Let $L \in \mathbb{N}$ denote the number of feature rows before duplication.
  A \emph{duplication pattern} is a vector $\boldsymbol{\nu} = (\nu_{1}, \dots, \nu_{L})^{\top}$, where $\nu_{i} \in \mathbb{N}_{>0}$ denotes the number of copies of the $i$th row after duplication.
  We refer to the resulting number of rows,
  \begin{equation}
    N_{h} = \sum_{i=1}^{L} \nu_{i},
  \end{equation}
  as the duplication pattern's \emph{induced width}.
\end{definition}

\begin{definition}[Duplication matrix]
  \label[definition]{def:duplication-matrix}
  Given a duplication pattern $\boldsymbol{\nu} \in \mathbb{N}_{>0}^{L}$ with induced width $N_{h}$,
  define the \emph{duplication matrix} $\mathbf{D}_{\boldsymbol{\nu}} \in \set{0,1}^{N_{h} \times L}$ by
  \begin{equation}
    \mathbf{D}_{\boldsymbol{\nu}} \coloneqq
    \begin{bmatrix}
      \mathbf{1}_{\nu_{1}} & \mathbf{0}           & \cdots & \mathbf{0}         \\
      \mathbf{0}           & \mathbf{1}_{\nu_{2}} & \cdots & \mathbf{0}         \\
      \vdots               & \vdots               & \ddots & \vdots             \\
      \mathbf{0}           & \mathbf{0}           & \cdots & \mathbf{1}_{\nu_{L}}
    \end{bmatrix},
  \end{equation}
  where each $\mathbf{1}_{\nu_{i}} \in \mathbb{R}^{\nu_{i}}$ is a vector of all ones.
  Equivalently,
  \begin{equation}
    \label{eq:duplication-matrix}
    (\mathbf{D}_{\boldsymbol{\nu}})_{rj} \coloneqq
    \begin{cases}
      1, & \text{if } \sum_{i=1}^{j-1} \nu_{i} < r \leq \sum_{i=1}^{j} \nu_{i} \\
      0, & \text{otherwise}
    \end{cases}
  \end{equation}
\end{definition}

\begin{table}[t]
  \caption{Effect of multiplying vectors and matrices by the duplication matrix $\mathbf{D}_{\boldsymbol{\nu}} \in \set{0,1}^{N_{h} \times L}$ and its transpose.
  }
  \label{tab:multiplying-with-duplication-matrices}
  \centering
  \begingroup
  \begin{tabular}{@{}lll@{}}
    \toprule
    Input                                    & Transformation                                                                    & Result \\
    \midrule
    $\mathbf{p} \in \mathbb{R}^{L}$          & $\mathbf{D}_{\boldsymbol{\nu}} \mathbf{p} \in \mathbb{R}^{N_{h}}$                 & repeats the $i$th entry of $\mathbf{p}$ exactly $\nu_{i}$ times \\[4pt]
    $\mathbf{M} \in \mathbb{R}^{L \times Q}$ & $\mathbf{D}_{\boldsymbol{\nu}} \mathbf{M} \in \mathbb{R}^{N_{h} \times Q}$        & repeats each row $i$ of $\mathbf{M}$ exactly $\nu_{i}$ times \\[4pt]
    $\mathbf{N} \in \mathbb{R}^{Q \times L}$ & $\mathbf{N} \mathbf{D}_{\boldsymbol{\nu}}^{\top} \in \mathbb{R}^{Q \times N_{h}}$ & repeats each column $i$ of $\mathbf{N}$ exactly $\nu_{i}$ times \\
    \bottomrule
  \end{tabular}
  \endgroup
\end{table}

\begin{remark}[Index shorthand]
  \label[remark]{rem:index-shorthand}
  For a vector $\boldsymbol{\nu} \in \mathbb{R}^{L}$, we denote the sum of its first $j-1$ entries by
  \begin{equation}
    \label{eq:index-shorthand}
    \nu_{<j} \coloneqq \sum_{i=1}^{j-1} \nu_{i},
    \qquad
    j = 1, \dots, L+1,
  \end{equation}
  with $\nu_{<1} \coloneqq 0$.
  With this shorthand notation, we can rewrite the definition of the duplication matrix $\mathbf{D}_{\boldsymbol{\nu}}$ given in \cref{eq:duplication-matrix} as
  \begin{equation}
    (\mathbf{D}_{\boldsymbol{\nu}})_{rj} =
    \begin{cases}
      1, & \text{if } \nu_{<j} < r \leq \nu_{<(j+1)} \\
      0, & \text{otherwise}
    \end{cases}
  \end{equation}
\end{remark}

Left multiplication by $\mathbf{D}_{\boldsymbol{\nu}}$ repeats entries of a column vector or rows of a matrix according to the duplication pattern $\boldsymbol{\nu}$;
right multiplication by $\mathbf{D}_{\boldsymbol{\nu}}^{\top}$ repeats columns analogously.
These operations are summarized in \cref{tab:multiplying-with-duplication-matrices} and made precise by the following result.

\begin{lemma}[Multiplication by duplication matrices]
  \label[lemma]{lem:multiplication-by-duplication-matrices}
  Let $\boldsymbol{\nu} \in \mathbb{N}_{>0}^{L}$ be a duplication pattern with induced width $N_{h}$.
  Further, let $\mathbf{p} \in \mathbb{R}^{L}$,
  let $\mathbf{M} \in \mathbb{R}^{L \times Q}$,
  and let $\mathbf{N} \in \mathbb{R}^{Q \times L}$.
  Then
  \begin{enumerate}
    \item $(\mathbf{D}_{\boldsymbol{\nu}} \mathbf{p})_{r} = p_{j}$
    \item $(\mathbf{D}_{\boldsymbol{\nu}} \mathbf{M})_{r,:} = \mathbf{M}_{j,:}$
    \item $(\mathbf{N} \mathbf{D}_{\boldsymbol{\nu}}^{\top})_{:,r} = \mathbf{N}_{:,j}$
  \end{enumerate}
  whenever $\nu_{<j} < r \leq \nu_{<(j+1)}$.
\end{lemma}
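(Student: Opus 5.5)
The three identities follow directly from the entrywise description of $\mathbf{D}_{\boldsymbol{\nu}}$ in \cref{rem:index-shorthand}. The key observation is that each row $r$ of $\mathbf{D}_{\boldsymbol{\nu}}$ has exactly one nonzero entry. Because $\boldsymbol{\nu}$ has strictly positive entries, the partial sums satisfy $0 = \nu_{<1} < \nu_{<2} < \dots < \nu_{<(L+1)} = N_{h}$. The half-open intervals $(\nu_{<j}, \nu_{<(j+1)}]$, $j = 1, \dots, L$, are therefore nonempty, pairwise disjoint, and together cover $\set{1, \dots, N_{h}}$. Hence for each $r$ there is a unique $j$ with $\nu_{<j} < r \leq \nu_{<(j+1)}$. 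For this $j$ we have $(\mathbf{D}_{\boldsymbol{\nu}})_{rj} = 1$ and $(\mathbf{D}_{\boldsymbol{\nu}})_{ri} = 0$ for all $i \neq j$. In other words, the $r$th row of $\mathbf{D}_{\boldsymbol{\nu}}$ is the standard basis row vector $\mathbf{e}_{j}^{\top} \in \mathbb{R}^{1 \times L}$.

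The three items then follow in order.

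For (i), we compute $(\mathbf{D}_{\boldsymbol{\nu}} \mathbf{p})_{r} = \sum_{i} (\mathbf{D}_{\boldsymbol{\nu}})_{ri} p_{i} = p_{j}$.

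For (ii), the same computation applied to each column $\mathbf{M}_{:,c}$ gives $(\mathbf{D}_{\boldsymbol{\nu}} \mathbf{M})_{r,c} = \mathbf{M}_{j,c}$ for every $c$. Equivalently, $(\mathbf{D}_{\boldsymbol{\nu}} \mathbf{M})_{r,:} = \mathbf{e}_{j}^{\top} \mathbf{M} = \mathbf{M}_{j,:}$.

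For (iii), we transpose: $(\mathbf{N} \mathbf{D}_{\boldsymbol{\nu}}^{\top})_{:,r} = \bigl((\mathbf{D}_{\boldsymbol{\nu}} \mathbf{N}^{\top})_{r,:}\bigr)^{\top}$. Applying (ii) with $\mathbf{M} = \mathbf{N}^{\top}$ turns this into $(\mathbf{N}^{\top})_{j,:}^{\top} = \mathbf{N}_{:,j}$.

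No step here is a genuine obstacle. The only point needing care is the uniqueness and existence of $j$ for each row index $r$. This relies on $\nu_{i} \geq 1$, which guarantees that the partial sums are strictly increasing and that no interval is empty.
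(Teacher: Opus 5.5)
Your proof is correct and follows the paper's argument: both show that the intervals $(\nu_{<i},\nu_{<(i+1)}]$ partition $\{1,\dots,N_{h}\}$, so each row of $\mathbf{D}_{\boldsymbol{\nu}}$ is a standard basis row, and then derive (ii) column by column from (i) and (iii) by transposing (ii). One small correction: the existence and uniqueness of $j$ do not depend on $\nu_{i}\geq 1$ (empty intervals would still give a partition); positivity only guarantees that every interval is nonempty, and this does not affect your argument.
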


\begin{proof}
  For part~(i), we have
  \begin{equation}
    (\mathbf{D}_{\boldsymbol{\nu}} \mathbf{p})_{r}
    = \sum_{i=1}^{L} (\mathbf{D}_{\boldsymbol{\nu}})_{ri} p_{i}.
  \end{equation}
  Since the intervals $(\nu_{<i}, \nu_{<(i+1)}]$ partition $(0, N_{h}]$, each $r \in \set{1, \dots, N_{h}}$ lies in exactly one such interval, say for index $j$.
  Then $(\mathbf{D}_{\boldsymbol{\nu}})_{rj} = 1$ and $(\mathbf{D}_{\boldsymbol{\nu}})_{ri} = 0$ for $i \neq j$, giving $(\mathbf{D}_{\boldsymbol{\nu}} \mathbf{p})_{r} = p_{j}$.
  Part~(ii) follows by applying~(i) to each column of $\mathbf{M}$, and part~(iii) follows by transposing~(ii).
\end{proof}

The next identity describes how weights assigned to individual copies combine when those copies are grouped by their original row.
Each diagonal entry of the resulting weighted Gram matrix is the sum of the weights over one duplication group.

\begin{lemma}[Weighted Gram of duplication matrix]
  \label[lemma]{lem:weighted-gram-of-duplication-matrix}
  Let $\boldsymbol{\nu} \in \mathbb{N}_{>0}^{L}$ be a duplication pattern with induced width $N_{h}$, and let $\boldsymbol{\omega} \in \mathbb{R}^{N_{h}}$.
  Then the $\diag(\boldsymbol{\omega})$-weighted Gram matrix of the duplication matrix $\mathbf{D}_{\boldsymbol{\nu}}$ is the diagonal matrix given by
  \begin{equation}
    \mathbf{D}_{\boldsymbol{\nu}}^{\top} \diag(\boldsymbol{\omega}) \mathbf{D}_{\boldsymbol{\nu}}
    = \diag(\mathbf{D}_{\boldsymbol{\nu}}^{\top} \boldsymbol{\omega}).
  \end{equation}
\end{lemma}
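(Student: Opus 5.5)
The plan is to compute the product entrywise, using the block structure of $\mathbf{D}_{\boldsymbol{\nu}}$ from \cref{def:duplication-matrix} in the index form of \cref{rem:index-shorthand}. For $i,j \in \set{1,\dots,L}$, write
\begin{equation}
  \bigl(\mathbf{D}_{\boldsymbol{\nu}}^{\top} \diag(\boldsymbol{\omega}) \mathbf{D}_{\boldsymbol{\nu}}\bigr)_{ij}
  = \sum_{r=1}^{N_{h}} (\mathbf{D}_{\boldsymbol{\nu}})_{ri} \, \omega_{r} \, (\mathbf{D}_{\boldsymbol{\nu}})_{rj}.
\end{equation}
The key observation, already used in the proof of \cref{lem:multiplication-by-duplication-matrices}, is that the intervals $(\nu_{<i}, \nu_{<(i+1)}]$ partition $\set{1,\dots,N_{h}}$. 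Consequently, each row $r$ of $\mathbf{D}_{\boldsymbol{\nu}}$ has exactly one nonzero entry, equal to $1$, located in the column whose interval contains $r$.

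From this, the off-diagonal case $i \neq j$ is immediate. No $r$ lies in both intervals, so every summand has at least one zero factor, and the entry vanishes. In the diagonal case $i = j$, the product $(\mathbf{D}_{\boldsymbol{\nu}})_{ri}^{2}$ equals $1$ exactly when $\nu_{<i} < r \leq \nu_{<(i+1)}$, so the entry reduces to $\sum_{r=\nu_{<i}+1}^{\nu_{<(i+1)}} \omega_{r}$.

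It remains to identify this sum with $(\mathbf{D}_{\boldsymbol{\nu}}^{\top}\boldsymbol{\omega})_{i} = \sum_{r} (\mathbf{D}_{\boldsymbol{\nu}})_{ri}\,\omega_{r}$, which is the same sum over the same interval. Together with the vanishing off-diagonal entries, this gives the claimed identity $\mathbf{D}_{\boldsymbol{\nu}}^{\top} \diag(\boldsymbol{\omega}) \mathbf{D}_{\boldsymbol{\nu}} = \diag(\mathbf{D}_{\boldsymbol{\nu}}^{\top} \boldsymbol{\omega})$.

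I do not expect any genuine obstacle. The only point needing care is the partition property, which guarantees both that the off-diagonal entries vanish and that each diagonal entry sums over exactly one duplication block with no overlap.
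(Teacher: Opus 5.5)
Your proof is correct and follows essentially the same route as the paper. Both compute $\mathbf{D}_{\boldsymbol{\nu}}^{\top}\diag(\boldsymbol{\omega})\mathbf{D}_{\boldsymbol{\nu}}$ entrywise and use that each row of $\mathbf{D}_{\boldsymbol{\nu}}$ has exactly one nonzero entry, equal to $1$, so the off-diagonal entries vanish and each diagonal entry reduces to $(\mathbf{D}_{\boldsymbol{\nu}}^{\top}\boldsymbol{\omega})_{i}$. The paper phrases this through the columns $\mathbf{d}_{j}$ and the identity $(\mathbf{d}_{j})_{i}^{2} = (\mathbf{d}_{j})_{i}$, whereas you write out the duplication block interval explicitly.
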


\begin{proof}
  Let $\mathbf{d}_{j}$ denote the $j$th column of $\mathbf{D}_{\boldsymbol{\nu}}$.
  The entry at position $(r,j)$ of the weighted Gram matrix $\mathbf{D}_{\boldsymbol{\nu}}^{\top} \diag(\boldsymbol{\omega}) \mathbf{D}_{\boldsymbol{\nu}}$ equals
  \begin{equation}
    (\mathbf{D}_{\boldsymbol{\nu}}^{\top} \diag(\boldsymbol{\omega}) \mathbf{D}_{\boldsymbol{\nu}})_{rj}
    = \mathbf{d}_{r}^{\top} \diag(\boldsymbol{\omega}) \mathbf{d}_{j}
    = \sum_{i=1}^{N_{h}} \omega_{i} (\mathbf{d}_{r})_{i} (\mathbf{d}_{j})_{i}.
  \end{equation}
  This sum accumulates the weights $\omega_{i}$ over all rows $i$ in which columns $r$ and $j$ of $\mathbf{D}_{\boldsymbol{\nu}}$ are both equal to $1$.
  Since each row of $\mathbf{D}_{\boldsymbol{\nu}}$ contains \emph{exactly one} nonzero entry, no two distinct columns can have ones in the same row, i.e., $(\mathbf{d}_{r})_{i} (\mathbf{d}_{j})_{i} = 0$ whenever $r \neq j$.
  For the diagonal entries $r = j$, each entry of $\mathbf{d}_{j}$ is either $0$ or $1$, so $(\mathbf{d}_{j})_{i}^{2} = (\mathbf{d}_{j})_{i}$, and hence
  \begin{equation}
    (\mathbf{D}_{\boldsymbol{\nu}}^{\top} \diag(\boldsymbol{\omega}) \mathbf{D}_{\boldsymbol{\nu}})_{jj}
    = \sum_{i=1}^{N_{h}} \omega_{i} (\mathbf{d}_{j})_{i}^{2}
    = \sum_{i=1}^{N_{h}} \omega_{i} (\mathbf{d}_{j})_{i}
    = \mathbf{d}_{j}^{\top} \boldsymbol{\omega}
    = (\mathbf{D}_{\boldsymbol{\nu}}^{\top} \boldsymbol{\omega})_{j},
  \end{equation}
  completing the proof.
\end{proof}

\subsection{Feature-level characterization of parameter symmetries}
\label[appendix]{app-subsec:feature-level-characterization-of-parameter-symmetries}

We now prove \cref{prop:feature-level-characterization-of-parameter-symmetries} by translating the cataloged parameter symmetries into operations on hidden feature rows.
The decompositions established in the proof are summarized in \cref{tab:decomposition-of-primitives}.

\begin{table}[t]
  \caption{Decomposition of parameter symmetries into primitive feature transformations.
    Checkmarks indicate the primitives appearing in the displayed decompositions, possibly acting trivially when duplication counts equal $1$.
    For width-changing operations, the table describes the addition or replacement direction; reverse operations use the corresponding inverse primitives.
  }
  \label{tab:decomposition-of-primitives}
  \centering
  \begin{tabular}{lccc}
    \toprule
    Symmetry                        & Addition & Duplication & Scaling \\
    \midrule
    Permutation                     & \xmark   & \xmark      & \xmark  \\[2pt]
    Positive scaling                & \xmark   & \xmark      & \cmark  \\[2pt]
    Sign flip                       & \xmark   & \xmark      & \cmark  \\[2pt]
    Zero-neuron group               & \cmark   & \cmark      & \xmark  \\[2pt]
    Duplicate-neuron group          & \xmark   & \cmark      & \xmark  \\[2pt]
    Constant neuron                 & \cmark   & \xmark      & \xmark  \\[2pt]
    Linear-neuron group             & \cmark   & \cmark      & \xmark  \\[2pt]
    Linear-duplicate-neuron group   & \cmark   & \cmark      & \xmark  \\[2pt]
    Constant-neuron group           & \cmark   & \cmark      & \xmark  \\[2pt]
    Constant-duplicate-neuron group & \cmark   & \cmark      & \xmark  \\
    \bottomrule
  \end{tabular}
\end{table}

\begin{proof}[Proof of \cref{prop:feature-level-characterization-of-parameter-symmetries}]
  We verify the claim for the parameter symmetries cataloged in \cref{app-sec:parameter-symmetries-in-overparameterized-networks}.
  In each calculation, $\mathbf{H}$ denotes the hidden-activation matrix before the transformation, and $N_{h}$ its number of rows.

  \paragraph{Permutation symmetry.}
  Permuting hidden neurons permutes the rows of the hidden-activation matrix.
  Because such row permutations leave $\mathbf{H}^{\top}\mathbf{H}$ unchanged, we disregard the order of the rows of $\mathbf{H}$, as established in \cref{subsec:feature-level-action-of-parameter-symmetries}.
  This allows us, without loss of generality, to place newly introduced rows after existing ones and duplicate copies of a given row consecutively.

  \paragraph{Positive scaling symmetry.}
  For positively homogeneous activations of degree~$1$, rescaling the $j$th neuron's incoming weights $\mathbf{w}_{j}$ and bias $b_{j}$ by $\alpha_{j} > 0$, and inversely scaling its readout weights $\mathbf{a}_{j}$ by $\alpha_{j}^{-1}$, multiplies the $j$th row of $\mathbf{H}$ by $\alpha_{j}$.
  This is an instance of feature scaling:
  \begin{equation}
    \label{eq:hidden-activations-scaling}
    \mathbf{H}
    \mapsto \diag(\boldsymbol{\alpha}) \mathbf{H},
    \qquad
    \alpha_{i} =
    \begin{cases}
      1,          & i \neq j \\
      \alpha_{j}, & i = j
    \end{cases}
  \end{equation}

  \paragraph{Sign-flip symmetry.}
  For odd activations, flipping the signs of the $j$th neuron's incoming weights $\mathbf{w}_{j}$, bias $b_{j}$, and readout weights $\mathbf{a}_{j}$ leaves the realized function unchanged.
  Since $\sigma(-z) = -\sigma(z)$, the hidden feature computed by the $j$th neuron changes sign.
  This is another instance of feature scaling:
  \begin{equation}
    \label{eq:hidden-activations-sign-flip}
    \mathbf{H}
    \mapsto \diag(\boldsymbol{\alpha}) \mathbf{H},
    \qquad
    \alpha_{i} =
    \begin{cases}
      1,  & i \neq j \\
      -1, & i = j
    \end{cases}
  \end{equation}
  For even activations, by contrast, flipping the incoming weights and bias leaves the hidden feature unchanged because $\sigma(-z) = \sigma(z)$.
  Thus, the sign flip is absorbed by the activation, so $\mathbf{H}$ is unchanged, corresponding to feature scaling by the identity.

  \paragraph{Zero-neuron groups.}
  Since all neurons in a zero-neuron group share incoming parameters $\mathbf{w}$ and $b$, they induce the same feature vector $\mathbf{u} \coloneqq \sigma(\mathbf{w}^{\top} \mathbf{X} + b \mathbf{1}^{\top}) \in \mathbb{R}^{1 \times P}$.
  Introducing a zero-neuron group indexed by $\mathcal{Z}$ therefore amounts to adding this feature once and duplicating it according to the group size:
  \begin{equation}
    \label{eq:hidden-activations-zero-neuron-groups}
    \mathbf{H}
    \mapsto \mathbf{D}_{\boldsymbol{\nu}}
    \begin{bmatrix}
      \mathbf{H} \\
      \mathbf{u}
    \end{bmatrix},
    \qquad
    \boldsymbol{\nu} =
    \begin{bmatrix}
      \mathbf{1}_{N_{h}} \\
      \abs{\mathcal{Z}}
    \end{bmatrix}.
  \end{equation}

  \paragraph{Duplicate-neuron groups.}
  Replacing the $j$th neuron with a duplicate-neuron group indexed by $\mathcal{D}$ replicates the $j$th row of $\mathbf{H}$ into $\abs{\mathcal{D}}$ copies.
  This is an instance of feature duplication:
  \begin{equation}
    \label{eq:hidden-activations-duplicate-neuron-groups}
    \mathbf{H}
    \mapsto \mathbf{D}_{\boldsymbol{\nu}} \mathbf{H},
    \qquad
    \nu_{i} =
    \begin{cases}
      1,                 & i \neq j \\
      \abs{\mathcal{D}}, & i = j
    \end{cases}
  \end{equation}

  \paragraph{Constant neurons.}
  Since a constant neuron has vanishing incoming weights, its activation $\sigma(b)$ is independent of the input, producing the constant feature vector $\mathbf{u} \coloneqq \sigma(b)\mathbf{1}^{\top} \in \mathbb{R}^{1 \times P}$.
  Adding a constant neuron is therefore a feature addition:
  \begin{equation}
    \label{eq:hidden-activations-constant-neurons}
    \mathbf{H}
    \mapsto
    \begin{bmatrix}
      \mathbf{H} \\
      \mathbf{u}
    \end{bmatrix}.
  \end{equation}

  \paragraph{Linear-neuron groups.}
  For an even-linear activation $\sigma(z) = e(z) + mz$, the aligned and opposite subgroups of a linear-neuron group produce the feature rows associated with the incoming parameters $(\mathbf{w}, b)$ and $(-\mathbf{w}, -b)$, respectively.
  Writing
  \begin{equation}
    \label{eq:pre-activation-feature-vector-arbitrary}
    \mathbf{z}
    \coloneqq \mathbf{w}^{\top} \mathbf{X} + b \mathbf{1}^{\top}
    \in \mathbb{R}^{1 \times P}
  \end{equation}
  for the pre-activation feature vector, aligned neurons produce
  $\mathbf{u}^{+} \coloneqq e(\mathbf{z}) + m\mathbf{z}$, whereas opposite neurons produce $\mathbf{u}^{-} \coloneqq e(\mathbf{z}) - m\mathbf{z}$, with $e$ applied elementwise.
  These rows need not be distinct.
  Adding a linear-neuron group is therefore a feature addition of these two rows, followed by a feature duplication that replicates $\mathbf{u}^{+}$ for each aligned neuron and $\mathbf{u}^{-}$ for each opposite neuron:
  \begin{equation}
    \label{eq:hidden-activations-linear-neuron-groups}
    \mathbf{H}
    \mapsto \mathbf{D}_{\boldsymbol{\nu}}
    \begin{bmatrix}
      \mathbf{H}     \\
      \mathbf{u}^{+} \\
      \mathbf{u}^{-}
    \end{bmatrix},
    \qquad
    \boldsymbol{\nu} =
    \begin{bmatrix}
      \mathbf{1}_{N_{h}}    \\
      \abs{\mathcal{N}^{+}} \\
      \abs{\mathcal{N}^{-}}
    \end{bmatrix}.
  \end{equation}

  \paragraph{Linear-duplicate-neuron groups.}
  Since aligned neurons share the incoming weights and bias of the $j$th neuron, they replicate the $j$th row of $\mathbf{H}$.
  Opposite neurons have sign-flipped incoming weights and bias.
  Writing
  \begin{equation}
    \label{eq:pre-activation-feature-vector-reference}
    \mathbf{z}_{j}
    \coloneqq \mathbf{w}_{j}^{\top} \mathbf{X} + b_{j} \mathbf{1}^{\top}
    \in \mathbb{R}^{1 \times P}
  \end{equation}
  for the pre-activation feature vector of the $j$th neuron, opposite neurons produce $\mathbf{u}^{-} \coloneqq e(\mathbf{z}_{j}) - m\mathbf{z}_{j}$.
  Replacing the $j$th neuron with a linear-duplicate-neuron group is therefore a feature addition of $\mathbf{u}^{-}$, followed by a feature duplication that replicates the $j$th row of $\mathbf{H}$ for each aligned neuron and $\mathbf{u}^{-}$ for each opposite neuron:
  \begin{equation}
    \label{eq:hidden-activations-linear-duplicate-groups}
    \mathbf{H}
    \mapsto \mathbf{D}_{\boldsymbol{\nu}}
    \begin{bmatrix}
      \mathbf{H} \\
      \mathbf{u}^{-}
    \end{bmatrix},
    \qquad
    \boldsymbol{\nu} =
    \begin{bmatrix}
      \boldsymbol{\nu}^{+} \\
      \abs{\mathcal{N}^{-}}
    \end{bmatrix},
    \qquad
    \nu_{i}^{+} =
    \begin{cases}
      1,                     & i \neq j \\
      \abs{\mathcal{N}^{+}}, & i = j
    \end{cases}
  \end{equation}

  \paragraph{Constant-neuron groups.}
  For a constant-odd activation $\sigma(z) = c + o(z)$, aligned and opposite subgroups produce $\mathbf{u}^{+} \coloneqq c\mathbf{1}^{\top} + o(\mathbf{z})$ and $\mathbf{u}^{-} \coloneqq c\mathbf{1}^{\top} - o(\mathbf{z})$, respectively, where $\mathbf{z}$ is defined in \cref{eq:pre-activation-feature-vector-arbitrary} and $o$ is applied elementwise.
  The resulting transformation is otherwise identical to \cref{eq:hidden-activations-linear-neuron-groups}.

  \paragraph{Constant-duplicate-neuron groups.}
  Opposite neurons produce $\mathbf{u}^{-} \coloneqq c\mathbf{1}^{\top} - o(\mathbf{z}_{j})$, where $\mathbf{z}_{j}$ is defined in \cref{eq:pre-activation-feature-vector-reference}.
  The resulting transformation is otherwise identical to \cref{eq:hidden-activations-linear-duplicate-groups}.

  \paragraph{Reverse operations.}
  Each displayed transformation can be reversed, on its image, by applying the corresponding inverse primitives in reverse order.
  In particular, removing an added group first merges duplicated rows and then removes the newly appended rows.
  Collapsing a duplicate-neuron group only merges its copies, whereas collapsing a linear- or constant-duplicate-neuron group additionally removes the appended opposite feature.
  Because both aligned and opposite subgroups are nonempty, all duplication counts are strictly positive: inverse duplication therefore retains at least one copy of every pre-duplication row, while rows are removed only through inverse feature addition.
  Scaling is reversed by reciprocal scaling.

  \paragraph{Function-preserving combinations.}
  As established in \cref{app-subsubsec:activation-independent-symmetries-function-preserving-realizations,app-subsubsec:activation-dependent-symmetries-function-preserving-realizations}, operations involving individual constant neurons or activation-dependent symmetry groups may need to be performed jointly for their residual contributions to cancel.
  For any such function-preserving combination, the induced transformation of hidden activations is obtained by composing the feature-level transformations of its constituent operations.

  Together, these decompositions establish \cref{prop:feature-level-characterization-of-parameter-symmetries}.
\end{proof}

\subsection{Hidden activations within a symmetry orbit}
\label[appendix]{app-subsec:hidden-activations-within-a-symmetry-orbit}

The preceding subsection describes how parameter symmetries transform hidden activations.
We next use the orbit invariants to prove \cref{prop:hidden-activations-within-a-symmetry-orbit} and extend its characterization to all activation classes considered in this work.
We fix a symmetry orbit with essential parameter classes $\mathcal{E}$.
Accordingly, we collect below the definitions of $\mathbf{F}_{\mathcal{E}}$ for all activation classes, including the positively $1$-homogeneous case already introduced in the main text.

\paragraph{Positively homogeneous activations of degree~$\mathbf{1}$.}
For each $q \in \mathcal{E}$, choose a unit-norm representative $\overline{\mathbf{w}}_{q} \in q$, and let $\mathbf{F}_{\mathcal{E}} \in \mathbb{R}^{2\abs{\mathcal{E}} \times P}$ collect the two features $\sigma(\overline{\mathbf{w}}_{q}^{\top} \overline{\mathbf{X}})$ and $\sigma(-\overline{\mathbf{w}}_{q}^{\top} \overline{\mathbf{X}})$.

\paragraph{Even-linear and constant-odd activations.}
If $\sigma$ is even-linear but not positively homogeneous of degree~$1$, or is constant-odd, each parameter class $q = [\overline{\mathbf{w}}_{q}] = \set{\overline{\mathbf{w}}_{q}, -\overline{\mathbf{w}}_{q}}$ contains exactly two opposite incoming-parameter vectors.
We let $\mathbf{F}_{\mathcal{E}} \in \mathbb{R}^{2\abs{\mathcal{E}} \times P}$ collect the features $\sigma(\overline{\mathbf{w}}_{q}^{\top}\overline{\mathbf{X}})$ and $\sigma(-\overline{\mathbf{w}}_{q}^{\top}\overline{\mathbf{X}})$ for $q \in \mathcal{E}$.

\paragraph{Activations with neither symmetry property.}
If $\sigma$ is neither even-linear nor constant-odd, every parameter class is a singleton.
For each $q \in \mathcal{E}$, let $\overline{\mathbf{w}}_{q}$ denote its unique element, and let $\mathbf{F}_{\mathcal{E}} \in \mathbb{R}^{\abs{\mathcal{E}} \times P}$ collect the feature $\sigma(\overline{\mathbf{w}}_{q}^{\top} \overline{\mathbf{X}})$.

Each incoming-parameter vector specified above contributes a separate row to $\mathbf{F}_{\mathcal{E}}$, even when distinct vectors produce identical features on the given inputs.
We arrange these rows in any fixed order.
Although $\mathbf F_{\mathcal E}$ contains one row for each orientation, representing an essential class does not require both of its orientations to occur.
The following proposition records the represented choices, their multiplicities, and any positive scaling arising from homogeneity of the activation function.

\begin{proposition}[Hidden activations within a symmetry orbit]
  \label[proposition]{prop:hidden-activations-within-a-symmetry-orbit-all-activation-classes}
  Fix a symmetry orbit with nonlinear activation $\sigma$ and essential parameter classes $\mathcal{E}$, and let $\mathbf{F}_{\mathcal{E}}$ be defined as above.
  Every parameterization $\boldsymbol{\theta}$ of width $N_{h}$ in this orbit has a hidden-activation matrix of the form
  \begin{equation}
    \label{eq:hidden-activations-within-a-symmetry-orbit-all-activation-classes}
    \mathbf{H}
    =
    \diag(\boldsymbol{\alpha})
    \mathbf{D}_{\boldsymbol{\nu}}
    \begin{bmatrix}
      \mathbf{F}_{\mathcal{E},\mathcal{I}} \\
      \mathbf{U}
    \end{bmatrix},
    \qquad
    \boldsymbol{\alpha} \in \mathbb{R}_{>0}^{N_{h}},
    \quad
    \boldsymbol{\nu} \in \mathbb{N}_{>0}^{\abs{\mathcal{I}}+K},
  \end{equation}
  for some $K \geq 0$ and $\mathbf{U} \in \mathbb{R}^{K \times P}$, with $\sum_{j} \nu_{j} = N_{h}$.
  Here, $\mathbf{F}_{\mathcal{E},\mathcal{I}}$ is the submatrix of $\mathbf{F}_{\mathcal{E}}$ indexed by $\mathcal{I}$.
  The set $\mathcal{I}$ indexes exactly those incoming-parameter vectors represented by neurons of $\boldsymbol{\theta}$, up to positive scaling when $\sigma$ is positively homogeneous of degree~$1$, and contains at least one index for every essential class $q \in \mathcal{E}$.
  The matrix $\mathbf{U}$ collects pairwise distinct features from nonessential classes and constant neurons.
  If $\sigma$ is not positively homogeneous of degree~$1$, $\boldsymbol{\alpha}$ may be chosen as the all-ones vector.
\end{proposition}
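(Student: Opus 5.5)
The plan is to prove the statement directly from the structure of a single parameterization, using the orbit invariants of \cref{prop:characterization-of-symmetry-equivalence-all-activation-classes} only to guarantee that every essential class is represented. No path of symmetries needs to be tracked. Fix $\boldsymbol{\theta}$ of width $N_h$ in the orbit.

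\textbf{Step 1: normalize the nonconstant neurons.} If $\sigma$ is positively homogeneous of degree~$1$, write $\overline{\mathbf{w}}_j = \alpha_j \varepsilon_j \overline{\mathbf{w}}_q$ for each nonconstant neuron $j$ of class $q$, with $\alpha_j = \norm{\overline{\mathbf{w}}_j} > 0$ and $\varepsilon_j \in \set{\pm 1}$. Then row $j$ of $\mathbf{H}$ equals $\alpha_j \sigma(\varepsilon_j \overline{\mathbf{w}}_q^{\top}\overline{\mathbf{X}})$. For constant neurons, set $\alpha_j = 1$. For all other activation classes, every neuron of class $q$ has incoming parameters exactly equal to $\pm\overline{\mathbf{w}}_q$, or to $\overline{\mathbf{w}}_q$ in the singleton case. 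Hence $\alpha_j = 1$ works throughout, which gives the final sentence of the statement.

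\textbf{Step 2: group the neurons and define $\mathcal{I}$, $\mathbf{U}$, $\boldsymbol{\nu}$.} Partition the neurons into groups sharing the same normalized incoming-parameter vector, with oriented representative $\varepsilon\overline{\mathbf{w}}_q$.
\begin{itemize}
  \item Let $\mathcal{I}$ index the groups whose class $q$ lies in $\mathcal{E}$. These correspond exactly to rows of $\mathbf{F}_{\mathcal{E}}$.
  \item Take the remaining neurons, from nonessential classes and constant neurons, and collect their (normalized) feature rows. Merge equal rows so that the rows of $\mathbf{U}$ are pairwise distinct, each with its multiplicity.
  \item Let $\boldsymbol{\nu}$ record the group sizes. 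Its entries are positive integers and sum to $N_h$.
\end{itemize}
Permute the neurons so that members of each group are consecutive, ordering the $\mathcal{I}$-groups first. By \cref{lem:multiplication-by-duplication-matrices}, the normalized matrix is then $\mathbf{D}_{\boldsymbol{\nu}}[\mathbf{F}_{\mathcal{E},\mathcal{I}}^{\top}\;\mathbf{U}^{\top}]^{\top}$, and multiplying by $\diag(\boldsymbol{\alpha})$ recovers $\mathbf{H}$. This holds up to row permutation, which is the convention of the footnote in \cref{sec:preliminaries-and-setting}.

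\textbf{Step 3: one subtlety in the grouping.} Two distinct parameter vectors can produce identical features on $\overline{\mathbf{X}}$. Rows of $\mathbf{F}_{\mathcal{E}}$ are indexed by parameter vectors, so essential groups must be kept separate even when their features coincide. The pairwise-distinct merging applies only to $\mathbf{U}$. A feature of a nonessential neuron might also coincide with a row of $\mathbf{F}_{\mathcal{E}}$; I would still keep it in $\mathbf{U}$, since the statement only requires distinctness within $\mathbf{U}$.

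\textbf{Step 4: every essential class is represented.} Suppose $q \in \mathcal{E}$, so $\boldsymbol{\beta}_q \neq \mathbf{0}$. By \cref{prop:characterization-of-symmetry-equivalence-all-activation-classes}, $\boldsymbol{\beta}_q(\boldsymbol{\theta}) = \boldsymbol{\beta}_q$. Absent classes carry zero coefficient by convention, so $q$ must appear in $\mathcal{Q}(\boldsymbol{\theta})$. Hence some neuron of $\boldsymbol{\theta}$ has an incoming-parameter vector in $q$, and at least one index of $\mathcal{I}$ belongs to $q$.

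I expect the main obstacle to be the bookkeeping in Steps 2 and 3. It requires a consistent convention for orientations: the unit-norm representative in the homogeneous case, and in the constant-odd case the representative with positive first nonzero incoming-weight entry. The other parts are routine.
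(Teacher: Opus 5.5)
Your proposal is correct and matches the paper's proof essentially step for step. Like the paper, you:
\begin{enumerate}
  \item invoke \cref{prop:characterization-of-symmetry-equivalence-all-activation-classes} to show that every essential class is represented;
  \item associate each essential-class neuron with the row of $\mathbf{F}_{\mathcal{E}}$ for its orientation, taking $\alpha_j = \norm{\overline{\mathbf{w}}_j}$ in the homogeneous case and $\alpha_j = 1$ otherwise;
  \item keep essential rows separate even when their features coincide on $\overline{\mathbf{X}}$;
  \item collect the remaining neurons' distinct feature rows in $\mathbf{U}$.
\end{enumerate}
The one cosmetic difference is that you also normalize nonessential neurons, placing unit-representative features in $\mathbf{U}$ with $\alpha_j = \norm{\overline{\mathbf{w}}_j}$, whereas the paper places their raw features in $\mathbf{U}$ with $\alpha_j = 1$. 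Both choices satisfy the statement.
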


\begin{proof}
  Let $\boldsymbol{\theta}$ be any parameterization in the fixed orbit.
  By \cref{prop:characterization-of-symmetry-equivalence-all-activation-classes},
  \begin{equation}
    \boldsymbol{\beta}_{q}(\boldsymbol{\theta})
    =
    \boldsymbol{\beta}_{q}
    \neq
    \mathbf{0},
    \qquad
    q \in \mathcal{E}.
  \end{equation}
  Since the coefficient of an unrepresented parameter class is zero, every essential class must be represented in $\boldsymbol{\theta}$.
  Hence, $\mathcal{E} \subseteq \mathcal{Q}(\boldsymbol{\theta})$.

  Suppose first that $\sigma$ is positively homogeneous of degree~$1$.
  For each neuron $j \in \mathcal{J}_{q}$ with $q \in \mathcal{E}$, positive homogeneity gives
  \begin{equation}
    \mathbf{H}_{j,:} = \sigma(\overline{\mathbf{w}}_{j}^{\top} \overline{\mathbf{X}}) = \alpha_{j} \, \sigma(\pm\overline{\mathbf{w}}_{q}^{\top} \overline{\mathbf{X}}),
    \qquad \alpha_{j} \coloneqq \norm{\overline{\mathbf{w}}_{j}} > 0,
  \end{equation}
  with the sign determined by whether $j \in \mathcal{J}_{q}^{+}$ or $j \in \mathcal{J}_{q}^{-}$.
  Thus, neuron $j$ corresponds to the row of $\mathbf{F}_{\mathcal{E}}$ associated with $\pm\overline{\mathbf{w}}_{q}$, with scaling factor $\alpha_{j}$.
  
  If $\sigma$ is even-linear but not positively homogeneous of degree~$1$, or is constant-odd, each neuron in an essential class $q$ has one of the two incoming-parameter vectors in $q$.
  Its feature therefore coincides exactly with the corresponding row of $\mathbf{F}_{\mathcal{E}}$.
  We associate each such neuron with that row and set $\alpha_j \coloneqq 1$.
  The same argument applies when $\sigma$ is neither even-linear nor constant-odd, except that each essential class contains a single incoming-parameter vector.

  Thus, in every activation case, each neuron of $\boldsymbol{\theta}$ belonging to an essential parameter class has been associated with a row of $\mathbf{F}_{\mathcal{E}}$.
  Let $\mathcal{I}$ index exactly those rows that receive at least one such neuron.
  Since every essential class is represented, $\mathcal{I}$ contains at least one index for every $q \in \mathcal{E}$.
  Ordering the rows of $\mathbf{F}_{\mathcal{E},\mathcal{I}}$ as inherited from $\mathbf{F}_{\mathcal{E}}$, let $\nu_{\ell}$ denote the number of neurons associated with its $\ell$th row, for $\ell = 1, \dots, \abs{\mathcal{I}}$.
  By construction, $\nu_{\ell} > 0$.

  Every neuron not yet associated with a row of $\mathbf{F}_{\mathcal{E}}$ either belongs to a nonessential parameter class or is constant.
  Let $\mathbf{U} \in \mathbb{R}^{K \times P}$ collect each of the $K$ distinct feature rows generated by these neurons exactly once.
  Associate each remaining neuron with the unique row of $\mathbf{U}$ equal to its feature row, and set its scaling factor to $1$.
  For $k = 1, \dots, K$, let $\nu_{\abs{\mathcal{I}}+k}$ denote the number of neurons associated with the $k$th row of $\mathbf{U}$.
  Since each row of $\mathbf{U}$ represents at least one neuron, these multiplicities are strictly positive.

  Ordering the neurons according to these associations, the definition of the duplication matrix yields \cref{eq:hidden-activations-within-a-symmetry-orbit-all-activation-classes}.
  Since every neuron is counted exactly once,
  \begin{equation}
    \sum_{\ell=1}^{\abs{\mathcal{I}}+K}
    \nu_{\ell}
    =
    N_{h}.
  \end{equation}
  All scaling factors are positive and, when $\sigma$ is not positively homogeneous of degree~$1$, equal to $1$.
  Restricting the result to positively homogeneous activations of degree~$1$ proves \cref{prop:hidden-activations-within-a-symmetry-orbit} from the main text.
\end{proof}

Even if two hidden-activation matrices admit the factorization in \cref{eq:hidden-activations-within-a-symmetry-orbit-all-activation-classes} with the same $\mathbf{F}_{\mathcal{E}}$, the underlying parameterizations need not be symmetry-equivalent: their aggregate coefficients and residual functions must also coincide (\cref{prop:characterization-of-symmetry-equivalence-all-activation-classes}).
In particular, neurons whose features appear in $\mathbf{U}$ need not be individually removable, as they may contribute to the conserved residual $\mathbf{r}$.

\subsection{Persistence of irreducible features}
\label[appendix]{app-subsec:persistence-of-irreducible-features}

We next determine when the features of an irreducible parameterization persist throughout its symmetry orbit.
Throughout this subsection, we restrict to activations that are neither even-linear nor constant-odd, or are purely even or purely odd.
Under these assumptions, only constant neurons contribute to the global residual, and feature rows associated with the same parameter class differ only by nonzero scaling.
We first characterize irreducible parameterizations within an orbit, then describe their remaining parameter freedom and establish feature persistence.

\begin{lemma}[Irreducible parameterizations within a symmetry orbit]
  \label[lemma]{lem:irreducible-parameterizations-within-a-symmetry-orbit}
  Suppose that $\sigma$ is neither even-linear nor constant-odd, or is purely even or purely odd.
  Fix a symmetry orbit with essential parameter classes $\mathcal{E}$ and global residual $\mathbf{r}$.
  A parameterization in this orbit is irreducible if and only if it contains exactly one neuron from each essential parameter class, no neurons from nonessential parameter classes, and exactly one constant neuron when $\mathbf{r} \neq \mathbf{0}$ and none when $\mathbf{r} = \mathbf{0}$.
\end{lemma}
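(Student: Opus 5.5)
The plan is to show that every parameterization in the orbit has width at least $\abs{\mathcal{E}} + \mathbb{1}[\mathbf{r} \neq \mathbf{0}]$, that this bound is attained inside the orbit, and that attaining it forces exactly the stated neuron composition. The key preliminary observation concerns the residual under the activation assumptions. If $\sigma$ has neither symmetry property, \cref{eq:global-residual-neither-symmetry-property} already gives $\mathbf{r}_{\boldsymbol{\theta}} = \sum_{j \in \mathcal{J}_{0}} \mathbf{a}_{j}\,\sigma(b_{j})$. If $\sigma$ is purely even, then $m = 0$ in \cref{eq:global-residual-even-linear}. If $\sigma$ is purely odd, then $c = 0$ in \cref{eq:global-residual-constant-odd}. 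In all three cases $\mathbf{r}_{\boldsymbol{\theta}}$ is an input-independent vector, and only constant neurons contribute to it.

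\emph{Lower bound.} Let $\boldsymbol{\theta}$ lie in the orbit. By \cref{prop:characterization-of-symmetry-equivalence-all-activation-classes}, $\boldsymbol{\beta}_{q}(\boldsymbol{\theta}) = \boldsymbol{\beta}_{q} \neq \mathbf{0}$ for every $q \in \mathcal{E}$. Since absent classes have zero coefficient, every essential class is represented by at least one nonconstant neuron, exactly as in the proof of \cref{prop:hidden-activations-within-a-symmetry-orbit-all-activation-classes}. These neurons are distinct across classes, so they contribute at least $\abs{\mathcal{E}}$ to the width. Moreover $\mathbf{r}_{\boldsymbol{\theta}} = \mathbf{r}$, and only constant neurons contribute to the residual. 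Hence $\mathbf{r} \neq \mathbf{0}$ forces $\mathcal{J}_{0} \neq \emptyset$. Together this gives width at least $\abs{\mathcal{E}} + \mathbb{1}[\mathbf{r}\neq\mathbf{0}]$.

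\emph{Attainment.} Next I will construct $\boldsymbol{\theta}^{\star}$ as follows. For each $q \in \mathcal{E}$ it has one neuron with incoming parameters $\overline{\mathbf{w}}_{q}$ and readout $\boldsymbol{\beta}_{q}$. This uses the orientation convention in the odd case; in the even case either orientation works, since $\boldsymbol{\beta}_{q}$ is the plain sum. If $\mathbf{r} \neq \mathbf{0}$, it also has one constant neuron with bias $b$ chosen so that $\sigma(b) \neq 0$, and readout $\mathbf{r}/\sigma(b)$. Such a $b$ exists because a nonlinear $\sigma$ is not identically zero. By construction $\boldsymbol{\beta}_{q}(\boldsymbol{\theta}^{\star}) = \boldsymbol{\beta}_{q}$ for $q \in \mathcal{E}$, all other classes have coefficient zero, and $\mathbf{r}_{\boldsymbol{\theta}^{\star}} = \mathbf{r}$. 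Therefore \cref{prop:characterization-of-symmetry-equivalence-all-activation-classes} places $\boldsymbol{\theta}^{\star}$ in the orbit, and the minimal width in the orbit is exactly $\abs{\mathcal{E}} + \mathbb{1}[\mathbf{r}\neq\mathbf{0}]$.

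\emph{Both directions.} Suppose $\boldsymbol{\theta}$ in the orbit is irreducible. Its width equals the minimum, so every inequality in the counting argument is tight. That means exactly one neuron per essential class, no neuron from any nonessential class, and exactly one constant neuron if $\mathbf{r} \neq \mathbf{0}$. If $\mathbf{r} = \mathbf{0}$, there are no constant neurons. Conversely, a parameterization in the orbit with the stated composition has width $\abs{\mathcal{E}} + \mathbb{1}[\mathbf{r}\neq\mathbf{0}]$, which is the minimum, so it is irreducible.

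The step I expect to need the most care is the residual observation. It is exactly where the activation hypothesis enters: for a general even-linear or constant-odd $\sigma$, nonconstant neurons feed $\mathbf{r}$. In that case a constant neuron need not be necessary, and nonessential classes could be required to realize an affine residual, so the counting argument would fail.
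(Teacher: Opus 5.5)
Your proposal is correct and follows the paper's proof essentially step for step. The steps match: only constant neurons feed the residual under the activation hypothesis, the width lower bound is the same, the construction uses one neuron per essential class plus a single constant neuron with readout $\mathbf{r}(\mathbf{0})/\sigma(b)$ and is certified via the characterization of symmetry-equivalence, and the tightness argument gives both directions.

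One small imprecision: for $\sigma(z) = \delta\abs{z}$, the coefficient $\boldsymbol{\beta}_{q}$ is the norm-weighted sum of readouts rather than the plain sum. Since $\overline{\mathbf{w}}_{q}$ has unit norm, your construction still produces the correct coefficient.
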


\begin{proof}
  We first derive a lower bound on the width of any parameterization in the orbit.
  From the definitions of the global residual $\mathbf{r}$ in
  \cref{eq:global-residual-positively-homogeneous,eq:global-residual-even-linear,eq:global-residual-constant-odd,eq:global-residual-neither-symmetry-property},
  nonconstant neurons contribute nothing to the residual under the assumptions of the lemma:
  if $\sigma$ is neither even-linear nor constant-odd, this holds by definition; for purely even activations, $m=0$ in the even-linear decomposition; and for purely odd activations, $c=0$ in the constant-odd decomposition.
  Hence, every parameterization $\boldsymbol{\theta}$ in the fixed orbit satisfies
  \begin{equation}
    \mathbf{r}(\mathbf{x})
    =
    \sum_{j \in \mathcal{J}_{0}}
    \mathbf{a}_{j} \, \sigma(b_{j}).
  \end{equation}
  Moreover, each essential parameter class $q \in \mathcal{E}$ has nonzero aggregate $\boldsymbol{\beta}_{q}$ and must therefore be represented by at least one nonconstant neuron.
  If $\mathbf{r}\neq\mathbf{0}$, the preceding identity additionally requires at least one constant neuron.
  Hence, the width $N_{h}$ of any parameterization in the orbit obeys
  \begin{equation}
    N_{h}
    \geq
    \begin{cases}
      \abs{\mathcal{E}},     & \mathbf{r} = \mathbf{0}, \\
      \abs{\mathcal{E}} + 1, & \mathbf{r} \neq \mathbf{0}.
    \end{cases}
  \end{equation}

  We next show that this lower bound is attainable.
  For each essential class $q$, construct a single nonconstant neuron with incoming parameters $\overline{\mathbf{w}}_{q}$, whose activation is therefore $\phi_{q}$.
  Assigning this neuron readout $\boldsymbol{\beta}_{q}$ reproduces exactly the aggregate associated with $q$ (\cref{eq:orbit-invariants-positively-homogeneous,eq:orbit-invariants-even-linear,eq:orbit-invariants-constant-odd,eq:orbit-invariants-neither-symmetry-property}).
  If $\mathbf{r} \neq \mathbf{0}$, choose a bias $b$ with $\sigma(b) \neq 0$ and add one constant neuron with readout
  \begin{equation}
    \mathbf{a}
    \coloneqq
    \frac{\mathbf{r}(\mathbf{0})}{\sigma(b)}.
  \end{equation}
  The resulting parameterization has aggregate $\boldsymbol{\beta}_{q}$ for every essential class $q$, zero aggregate for every nonessential class, and global residual $\mathbf{r}$.
  By \cref{prop:characterization-of-symmetry-equivalence-all-activation-classes}, it therefore belongs to the fixed orbit and attains the lower bound.

  Irreducible parameterizations in the orbit are precisely those attaining this minimum width.
  Equality in the bound leaves room for exactly one neuron per essential class and, when $\mathbf{r}\neq\mathbf{0}$, exactly one constant neuron, with no neurons from nonessential classes.
  This is precisely the structure stated in the lemma.
\end{proof}

The preceding lemma determines the number of neurons in an irreducible parameterization and assigns one neuron to each essential parameter class, with a constant neuron present precisely when the residual is nonzero.
We next determine the freedom that remains in each neuron's incoming parameters and readouts when the orbit invariants are held fixed.

\begin{corollary}[Uniqueness up to generic reparameterization]
  \label[corollary]{cor:uniqueness-of-irreducible-parameterizations}
  Suppose that $\sigma$ is neither even-linear nor constant-odd, or is purely even or purely odd.
  Let $\boldsymbol{\theta}^{\star} \sim \boldsymbol{\xi}^{\star}$ be irreducible parameterizations in the same symmetry orbit, with global residual $\mathbf{r}$.
  Up to a permutation of neurons, corresponding nonconstant neurons differ only by the following transformations:
  \begin{enumerate}
    \item If $\sigma$ is purely even and positively $1$-homogeneous,
      \begin{equation}
        (\overline{\mathbf{w}}, \mathbf{a})
        \mapsto
        (t\overline{\mathbf{w}}, \abs{t}^{-1}\mathbf{a}),
        \qquad
        t \in \mathbb{R}_{\neq 0}.
      \end{equation}
    \item If $\sigma$ is purely even but not positively $1$-homogeneous,
      \begin{equation}
        (\overline{\mathbf{w}}, \mathbf{a})
        \mapsto
        (\varepsilon\overline{\mathbf{w}}, \mathbf{a}),
        \qquad
        \varepsilon \in \set{-1,1}.
      \end{equation}
    \item If $\sigma$ is purely odd,
      \begin{equation}
        (\overline{\mathbf{w}}, \mathbf{a})
        \mapsto
        (\varepsilon\overline{\mathbf{w}}, \varepsilon\mathbf{a}),
        \qquad
        \varepsilon \in \set{-1,1}.
      \end{equation}
    \item If $\sigma$ is neither even-linear nor constant-odd, corresponding nonconstant neurons have identical incoming parameters and readouts.
  \end{enumerate}
  If $\mathbf{r}\neq\mathbf{0}$, each parameterization also contains exactly one constant neuron.
  Its bias $b$ may be chosen arbitrarily subject to $\sigma(b) \neq 0$, with readout
  \begin{equation}
    \mathbf{a}
    =
    \frac{\mathbf{r}(\mathbf{0})}{\sigma(b)}.
  \end{equation}
  If $\mathbf{r}=\mathbf{0}$, neither parameterization contains a constant neuron.
  Conversely, the transformations above, together with the stated freedom in the constant neuron, preserve both the symmetry orbit and irreducibility.
\end{corollary}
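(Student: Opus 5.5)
Combining \cref{lem:irreducible-parameterizations-within-a-symmetry-orbit} with the orbit invariants of \cref{prop:characterization-of-symmetry-equivalence-all-activation-classes} pins down each neuron of an irreducible parameterization up to the generic reparameterizations listed. By the lemma, both $\boldsymbol{\theta}^{\star}$ and $\boldsymbol{\xi}^{\star}$ contain exactly one nonconstant neuron per essential class $q \in \mathcal{E}$, no neurons from nonessential classes, and exactly one constant neuron precisely when $\mathbf{r} \neq \mathbf{0}$. Matching neurons by class fixes the permutation. For a class $q$, let $j$ and $j'$ be its unique neurons in $\boldsymbol{\theta}^{\star}$ and $\boldsymbol{\xi}^{\star}$. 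Then $\boldsymbol{\beta}_{q}(\boldsymbol{\theta}^{\star}) = \boldsymbol{\beta}_{q}(\boldsymbol{\xi}^{\star})$ reduces to an equation between single-neuron quantities, which I will solve case by case.

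\textbf{Case (i): even and positively $1$-homogeneous.} Here $\sigma(z) = \delta\abs{z}$. Both incoming vectors lie in $q$, so $\overline{\mathbf{w}}_{j'} = t\,\overline{\mathbf{w}}_{j}$ with $t \neq 0$. The invariant $\norm{\overline{\mathbf{w}}_{j}}\mathbf{a}_{j} = \norm{\overline{\mathbf{w}}_{j'}}\mathbf{a}_{j'}$ then gives $\mathbf{a}_{j'} = \abs{t}^{-1}\mathbf{a}_{j}$.

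\textbf{Case (ii): even but not positively $1$-homogeneous.} The class is $\set{\pm\overline{\mathbf{w}}_{q}}$ and $\boldsymbol{\beta}_{q}$ is the plain readout sum, so $\overline{\mathbf{w}}_{j'} = \varepsilon\overline{\mathbf{w}}_{j}$ and $\mathbf{a}_{j'} = \mathbf{a}_{j}$.

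\textbf{Case (iii): odd.} The invariant is the signed readout relative to the fixed orientation $\overline{\mathbf{w}}_{q}$, so flipping orientation forces $\mathbf{a}_{j'} = \varepsilon\mathbf{a}_{j}$.

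\textbf{Case (iv): neither even-linear nor constant-odd.} Classes are singletons, so $\overline{\mathbf{w}}_{j'} = \overline{\mathbf{w}}_{j}$ and $\mathbf{a}_{j'} = \boldsymbol{\beta}_{q} = \mathbf{a}_{j}$.

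One bookkeeping point: a purely odd positively $1$-homogeneous $\sigma$ is linear, hence affine, and is excluded. So case (iii) never overlaps the scaling case, and the classes are indeed $\set{\pm\overline{\mathbf{w}}_{q}}$ there.

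\textbf{Constant neuron.} As in the proof of the lemma, under these activation assumptions nonconstant neurons contribute nothing to the residual, so $\mathbf{r}(\mathbf{x}) = \mathbf{a}\,\sigma(b)$ is constant. When $\mathbf{r} \neq \mathbf{0}$ this forces $\sigma(b) \neq 0$ and $\mathbf{a} = \mathbf{r}(\mathbf{0})/\sigma(b)$, while $b$ is otherwise free. When $\mathbf{r} = \mathbf{0}$ there is no constant neuron, by the lemma.

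\textbf{Converse.} Each listed transformation leaves the neuron in the same parameter class and preserves its contribution to $\boldsymbol{\beta}_{q}$: the cases above verify this directly, and in case (i) $\norm{t\overline{\mathbf{w}}}\,\abs{t}^{-1} = \norm{\overline{\mathbf{w}}}$. Changing the constant neuron's bias as prescribed preserves $\mathbf{r}$. By \cref{prop:characterization-of-symmetry-equivalence-all-activation-classes} the result stays in the orbit, and since width and structure are unchanged it remains irreducible by the lemma.

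The main obstacle is the orientation bookkeeping in cases (i) and (iii). In case (i), $t<0$ is admissible because $\sigma$ is even. In case (iii), one must track how $\boldsymbol{\beta}_{q}$ depends on the chosen representative $\overline{\mathbf{w}}_{q}$. The remaining steps are direct substitutions.
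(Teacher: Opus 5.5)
Your proposal is correct and follows the paper's proof essentially step for step: invoke the irreducibility lemma to get exactly one neuron per essential class (plus one constant neuron iff $\mathbf{r}\neq\mathbf{0}$), match neurons by class, solve the single-neuron aggregate equation in each of the four activation cases, pin down the constant neuron via $\mathbf{a}\,\sigma(b)=\mathbf{r}(\mathbf{0})$, and obtain the converse from the characterization of symmetry-equivalence together with the lemma's structural description. Your added remark that a purely odd, positively $1$-homogeneous activation is linear (hence excluded) is a small piece of bookkeeping the paper leaves implicit.
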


\begin{proof}
  By \cref{lem:irreducible-parameterizations-within-a-symmetry-orbit}, both parameterizations contain exactly one neuron from each essential parameter class and, when $\mathbf{r} \neq \mathbf{0}$, exactly one constant neuron.
  We may therefore match their nonconstant neurons by parameter class and, when present, their constant neurons, which determines a permutation of neurons.
  Fix an essential class $q$, and let
  \begin{equation}
    (\overline{\mathbf{w}}_{\boldsymbol{\theta}}^{\star}, \mathbf{a}_{\boldsymbol{\theta}}^{\star})
    \qquad \text{and} \qquad
    (\overline{\mathbf{w}}_{\boldsymbol{\xi}}^{\star}, \mathbf{a}_{\boldsymbol{\xi}}^{\star})
  \end{equation}
  denote the unique neurons of $\boldsymbol{\theta}^{\star}$ and $\boldsymbol{\xi}^{\star}$, respectively, belonging to $q$.
  Since both parameterizations lie in the same orbit, these neurons must realize the same aggregate $\boldsymbol{\beta}_{q}$.
  We now determine the resulting freedom in their parameters.

  Suppose first that $\sigma$ is purely even and positively homogeneous of degree~$1$.
  Because the two incoming-parameter vectors belong to the same class, there exists $t \in \mathbb{R}_{\neq0}$ such that
  \begin{equation}
    \overline{\mathbf{w}}_{\boldsymbol{\xi}}^{\star}
    =
    t\overline{\mathbf{w}}_{\boldsymbol{\theta}}^{\star}.
  \end{equation}
  Since a single neuron contributes $\norm{\overline{\mathbf{w}}} \, \mathbf{a}$ to the class aggregate, equality of the aggregates gives
  \begin{equation}
    \norm{\overline{\mathbf{w}}_{\boldsymbol{\theta}}^{\star}} \,
    \mathbf{a}_{\boldsymbol{\theta}}^{\star}
    =
    \norm{\overline{\mathbf{w}}_{\boldsymbol{\xi}}^{\star}} \,
    \mathbf{a}_{\boldsymbol{\xi}}^{\star}
    =
    \abs{t}
    \norm{\overline{\mathbf{w}}_{\boldsymbol{\theta}}^{\star}} \,
    \mathbf{a}_{\boldsymbol{\xi}}^{\star},
  \end{equation}
  and hence
  \begin{equation}
    \mathbf{a}_{\boldsymbol{\xi}}^{\star}
    =
    \abs{t}^{-1}
    \mathbf{a}_{\boldsymbol{\theta}}^{\star}.
  \end{equation}

  If $\sigma$ is purely even but not positively homogeneous of degree~$1$, the two elements of $q$ differ only by sign, so
  \begin{equation}
    \overline{\mathbf{w}}_{\boldsymbol{\xi}}^{\star}
    =
    \varepsilon
    \overline{\mathbf{w}}_{\boldsymbol{\theta}}^{\star},
    \qquad
    \varepsilon\in\set{-1,1}.
  \end{equation}
  The aggregate of a single neuron equals its readout, and therefore
  $\mathbf{a}_{\boldsymbol{\xi}}^{\star}
   =
   \mathbf{a}_{\boldsymbol{\theta}}^{\star}$.

  If $\sigma$ is purely odd, the incoming parameters again differ by a sign,
  \begin{equation}
    \overline{\mathbf{w}}_{\boldsymbol{\xi}}^{\star}
    =
    \varepsilon
    \overline{\mathbf{w}}_{\boldsymbol{\theta}}^{\star},
    \qquad
    \varepsilon\in\set{-1,1},
  \end{equation}
  but reversing the incoming parameters reverses the sign with which the readout enters the aggregate.
  Equality of aggregates therefore requires
  \begin{equation}
    \mathbf{a}_{\boldsymbol{\xi}}^{\star}
    =
    \varepsilon
    \mathbf{a}_{\boldsymbol{\theta}}^{\star}.
  \end{equation}

  Finally, if $\sigma$ is neither even-linear nor constant-odd, every parameter class is a singleton.
  Hence,
  \begin{equation}
    \overline{\mathbf{w}}_{\boldsymbol{\xi}}^{\star}
    =
    \overline{\mathbf{w}}_{\boldsymbol{\theta}}^{\star},
    \qquad
    \mathbf{a}_{\boldsymbol{\xi}}^{\star}
    =
    \mathbf{a}_{\boldsymbol{\theta}}^{\star}.
  \end{equation}

  When $\mathbf{r}\neq\mathbf{0}$, the unique constant neuron is solely responsible for the global residual and therefore satisfies
  \begin{equation}
    \mathbf{a} \, \sigma(b)
    =
    \mathbf{r}(\mathbf{0})
    \neq
    \mathbf{0}.
  \end{equation}
  Consequently, $\sigma(b) \neq 0$ and
  \begin{equation}
    \mathbf{a}
    =
    \frac{\mathbf{r}(\mathbf{0})}{\sigma(b)},
  \end{equation}
  while $b$ may otherwise be chosen arbitrarily.

  Conversely, each listed transformation preserves the corresponding class aggregate, and the stated freedom in the constant neuron preserves the global residual.
  By \cref{prop:characterization-of-symmetry-equivalence-all-activation-classes}, the transformed parameterization therefore remains in the same symmetry orbit.
  The transformations also preserve the structure characterized in \cref{lem:irreducible-parameterizations-within-a-symmetry-orbit}, and hence preserve irreducibility.
\end{proof}

The preceding corollary compares irreducible parameterizations within the same orbit.
To prove \cref{cor:persistence-of-irreducible-features}, we now compare an irreducible parameterization with an arbitrary parameterization in its orbit.
The structural characterization in \cref{lem:irreducible-parameterizations-within-a-symmetry-orbit} allows us to show that every feature row of the irreducible parameterization occurs in the latter up to nonzero scaling.

\begin{proof}[Proof of \cref{cor:persistence-of-irreducible-features}]
  By \cref{lem:irreducible-parameterizations-within-a-symmetry-orbit}, $\boldsymbol{\theta}^{\star}$ contains exactly one neuron from each essential parameter class and, when the global residual $\mathbf{r}$ is nonzero, exactly one constant neuron.

  We first associate the neurons of $\boldsymbol{\theta}$ belonging to essential parameter classes with the corresponding neurons of $\boldsymbol{\theta}^{\star}$.
  Fix an essential class $q$, and let $j$ denote the unique neuron of $\boldsymbol{\theta}^{\star}$ belonging to $q$.
  For every neuron $i$ of $\boldsymbol{\theta}$ in the same class, its feature row is a nonzero scalar multiple of $\mathbf{H}^{\star}_{j,:}$.
  Specifically, for singleton classes and purely even activations that are not positively $1$-homogeneous, we set $\alpha_{i} = 1$.
  For purely odd activations, we set $\alpha_{i} = 1$ or $\alpha_{i} = -1$ according to whether the incoming parameters of neurons $i$ and $j$ agree or are opposite.
  Finally, for positively $1$-homogeneous, purely even activations, we set
  \begin{equation}
    \alpha_{i}
    \coloneqq
    \frac{\norm{\overline{\mathbf{w}}_{i}}}
         {\norm{\overline{\mathbf{w}}_{j}^{\star}}}
    > 0.
  \end{equation}
  In each case,
  \begin{equation}
    \mathbf{H}_{i,:}
    =
    \alpha_{i} \mathbf{H}^{\star}_{j,:}.
  \end{equation}
  Since every essential class is represented in $\boldsymbol{\theta}$, each nonconstant row of $\mathbf{H}^{\star}$ has at least one associated neuron.

  Suppose next that $\boldsymbol{\theta}^{\star}$ contains a constant neuron $j$.
  Then $\mathbf{r} \neq \mathbf{0}$ and $\sigma(b_{j}^{\star}) \neq 0$.
  Because, under the assumptions of the corollary, only constant neurons contribute to the global residual, $\boldsymbol{\theta}$ must contain at least one constant neuron with nonzero activation.
  For every such neuron $i$, associate it with neuron $j$ and set
  \begin{equation}
    \alpha_{i}
    \coloneqq
    \frac{\sigma(b_{i})}{\sigma(b_{j}^{\star})}
    \neq 0.
  \end{equation}
  Its feature row then satisfies
  \begin{equation}
    \mathbf{H}_{i,:}
    =
    \alpha_{i} \mathbf{H}^{\star}_{j,:}.
  \end{equation}
  Consequently, every row of $\mathbf{H}^{\star}$ has at least one associated neuron.
  Let $\nu_{j}>0$ denote the number associated with its $j$th row, for $j = 1, \dots, N_{h}^{\star}$.

  It remains to account for the neurons of $\boldsymbol{\theta}$ not yet associated with a row of $\mathbf{H}^{\star}$.
  Let $\mathbf{U}$ contain each distinct feature row among these remaining neurons exactly once, and let $K$ denote the number of such rows.
  Associate every remaining neuron with its matching row of $\mathbf{U}$ and set its scaling factor to $1$.
  For $k = 1, \dots, K$, let $\nu_{N_{h}^{\star}+k}>0$ denote the number of neurons associated with the $k$th row of $\mathbf{U}$.

  Every neuron of $\boldsymbol{\theta}$ has now been associated exactly once, so
  \begin{equation}
    \boldsymbol{\nu}
    \in
    \mathbb{N}_{>0}^{N_{h}^{\star}+K},
    \qquad
    \sum_{j=1}^{N_{h}^{\star}+K}\nu_{j}
    =
    N_{h}.
  \end{equation}
  After ordering the neurons according to these associations, duplicating the rows of
  $[{\mathbf{H}^{\star}}^{\top},\mathbf{U}^{\top}]^{\top}$
  according to $\boldsymbol{\nu}$ and applying the corresponding nonzero scaling factors yields \cref{eq:persistence-of-irreducible-features}.
  By construction, the rows of $\mathbf{U}$ are pairwise distinct.
\end{proof}
\space

\section{Representational geometry and similarity within symmetry orbits}
\label[appendix]{app-sec:representational-geometry-and-similarity-within-orbits}

This section provides proofs and extensions of the results in \cref{sec:parameter-symmetries-dissociate-function-and-representation}.
We first derive the decomposition of the \ac{rsm} into essential and auxiliary contributions and establish their independent reweighting under positive scaling (\cref{app-subsec:rsm-decomposition-and-positive-reweighting}).
We then prove that representational ambiguity increases strictly with width until it reaches its supremum across widths (\cref{app-subsec:strict-growth-until-saturation}).
Next, we show that the limiting lower and upper similarity bounds are independent of the symmetry orbit and are attained at finite width within every orbit (\cref{app-subsec:function-independent-similarity-limits}).
Finally, we characterize when similarity to a reference geometry can approach one, establish conditions under which arbitrary reference geometries admit such alignment, and exhibit an obstruction showing that alignment need not be possible in general (\cref{app-subsec:alignment-with-reference-geometries}).

\subsection{RSM decomposition and positive reweighting}
\label[appendix]{app-subsec:rsm-decomposition-and-positive-reweighting}

We first extend the decomposition in \cref{prop:rsm-decomposition-and-reweighting} for positively $1$-homogeneous activations to the remaining activation classes.

\begin{proposition}[RSM decomposition]
  \label[proposition]{prop:rsm-decomposition-all-activation-classes}
  Consider the factorization of $\mathbf{H}$ in \cref{prop:hidden-activations-within-a-symmetry-orbit-all-activation-classes}.
  Denote the rows of $\mathbf{F}_{\mathcal{E},\mathcal{I}}$ and $\mathbf{U}$ by $\mathbf{f}_{\ell}^{\top}$ and $\mathbf{u}_{k}^{\top}$, respectively.
  Then
  \begin{equation}
    \mathbf{H}^{\top}\mathbf{H}
    =
    \underbrace{\sum_{\ell=1}^{\abs{\mathcal{I}}}
    \gamma_{\ell} \mathbf{f}_{\ell} \mathbf{f}_{\ell}^{\top}}_{\text{essential}}
    +
    \underbrace{\sum_{k=1}^{K}
    \gamma_{\abs{\mathcal{I}}+k} \mathbf{u}_{k} \mathbf{u}_{k}^{\top}}_{\text{auxiliary}},
    \qquad
    \boldsymbol{\gamma}
    = \mathbf{D}_{\boldsymbol{\nu}}^{\top} \boldsymbol{\alpha}^{2}
    \in \mathbb{R}_{>0}^{\abs{\mathcal{I}}+K},
  \end{equation}
  where $\boldsymbol{\alpha}^{2}$ denotes the Hadamard square.
  If $\sigma$ is \emph{not} positively $1$-homogeneous, the factorization may be chosen with $\boldsymbol{\alpha} = \mathbf{1}$, in which case $\boldsymbol{\gamma} = \boldsymbol{\nu}$.
\end{proposition}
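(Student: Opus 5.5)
Write $\mathbf{G} \coloneqq [\mathbf{F}_{\mathcal{E},\mathcal{I}}^{\top}\ \mathbf{U}^{\top}]^{\top}$, so that the factorization of \cref{prop:hidden-activations-within-a-symmetry-orbit-all-activation-classes} reads $\mathbf{H} = \diag(\boldsymbol{\alpha})\mathbf{D}_{\boldsymbol{\nu}}\mathbf{G}$. The approach is a direct computation of the Gram matrix, reducing the middle factor with \cref{lem:weighted-gram-of-duplication-matrix}. Transposing the factorization gives
\begin{equation*}
  \mathbf{H}^{\top}\mathbf{H}
  =
  \mathbf{G}^{\top}\mathbf{D}_{\boldsymbol{\nu}}^{\top}\diag(\boldsymbol{\alpha})^{2}\mathbf{D}_{\boldsymbol{\nu}}\mathbf{G}
  =
  \mathbf{G}^{\top}\mathbf{D}_{\boldsymbol{\nu}}^{\top}\diag(\boldsymbol{\alpha}^{2})\mathbf{D}_{\boldsymbol{\nu}}\mathbf{G}.
\end{equation*}
Applying \cref{lem:weighted-gram-of-duplication-matrix} with $\boldsymbol{\omega} = \boldsymbol{\alpha}^{2}$ collapses the middle factor to $\diag(\mathbf{D}_{\boldsymbol{\nu}}^{\top}\boldsymbol{\alpha}^{2}) = \diag(\boldsymbol{\gamma})$. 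Hence $\mathbf{H}^{\top}\mathbf{H} = \mathbf{G}^{\top}\diag(\boldsymbol{\gamma})\mathbf{G}$.

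Next I expand $\mathbf{G}^{\top}\diag(\boldsymbol{\gamma})\mathbf{G}$ as a sum of rank-one terms, one per row of $\mathbf{G}$, each weighted by the corresponding entry of $\boldsymbol{\gamma}$. The first $\abs{\mathcal{I}}$ rows of $\mathbf{G}$ are the $\mathbf{f}_{\ell}^{\top}$ and the last $K$ rows are the $\mathbf{u}_{k}^{\top}$. Splitting the sum at index $\abs{\mathcal{I}}$ therefore gives exactly the essential and auxiliary parts of the stated decomposition.

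It remains to show positivity and to handle the $\boldsymbol{\alpha}=\mathbf{1}$ case. By \cref{lem:multiplication-by-duplication-matrices}, each entry $\gamma_{j} = (\mathbf{D}_{\boldsymbol{\nu}}^{\top}\boldsymbol{\alpha}^{2})_{j}$ is the sum of $\alpha_{r}^{2}$ over the $\nu_{j} \geq 1$ rows $r$ in block $j$. Since every $\alpha_{r} > 0$, this gives $\boldsymbol{\gamma} \in \mathbb{R}_{>0}^{\abs{\mathcal{I}}+K}$. When $\sigma$ is not positively $1$-homogeneous, the proposition just cited allows the choice $\boldsymbol{\alpha} = \mathbf{1}$. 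Then $\gamma_{j} = \sum_{r} (\mathbf{D}_{\boldsymbol{\nu}})_{rj} = \nu_{j}$, because each column of $\mathbf{D}_{\boldsymbol{\nu}}$ contains exactly $\nu_{j}$ ones.

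I do not expect a genuine obstacle. The only point needing care is the convention that identities hold up to row permutation. Since $\mathbf{P}^{\top}\mathbf{P} = \mathbf{I}$ for any permutation matrix $\mathbf{P}$, permuting the rows of $\mathbf{H}$ does not change $\mathbf{H}^{\top}\mathbf{H}$, so working with the ordered factorization is harmless.
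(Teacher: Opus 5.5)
Your proposal is correct and follows essentially the same route as the paper. Like the paper, you collapse $\mathbf{D}_{\boldsymbol{\nu}}^{\top}\diag(\boldsymbol{\alpha}^{2})\mathbf{D}_{\boldsymbol{\nu}}$ to $\diag(\boldsymbol{\gamma})$ via \cref{lem:weighted-gram-of-duplication-matrix}, expand into rank-one terms, and use $\mathbf{D}_{\boldsymbol{\nu}}^{\top}\mathbf{1} = \boldsymbol{\nu}$ for the $\boldsymbol{\alpha}=\mathbf{1}$ case. Your explicit positivity argument (each $\gamma_{j}$ sums $\nu_{j}\geq 1$ strictly positive terms $\alpha_{r}^{2}$) fills in a detail the paper leaves implicit; note that it follows from the column structure of $\mathbf{D}_{\boldsymbol{\nu}}$ rather than from \cref{lem:multiplication-by-duplication-matrices}, which only covers $\mathbf{D}_{\boldsymbol{\nu}}\mathbf{p}$, $\mathbf{D}_{\boldsymbol{\nu}}\mathbf{M}$, and $\mathbf{N}\mathbf{D}_{\boldsymbol{\nu}}^{\top}$.
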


\begin{proof}
  By \cref{lem:weighted-gram-of-duplication-matrix},
  \begin{equation}
    \mathbf{D}_{\boldsymbol{\nu}}^{\top}
    \diag(\boldsymbol{\alpha}^{2})
    \mathbf{D}_{\boldsymbol{\nu}}
    = \diag(
      \mathbf{D}_{\boldsymbol{\nu}}^{\top} \boldsymbol{\alpha}^{2}
    )
    = \diag(\boldsymbol{\gamma}).
  \end{equation}
  Hence,
  \begin{equation}
    \mathbf{H}^{\top}\mathbf{H}
    = \begin{bmatrix}
      \mathbf{F}_{\mathcal{E},\mathcal{I}} \\
      \mathbf{U}
    \end{bmatrix}^{\top}
      \diag(\boldsymbol{\gamma})
    \begin{bmatrix}
      \mathbf{F}_{\mathcal{E},\mathcal{I}} \\
      \mathbf{U}
    \end{bmatrix}
    = \sum_{\ell=1}^{\abs{\mathcal{I}}}
    \gamma_{\ell} \mathbf{f}_{\ell} \mathbf{f}_{\ell}^{\top}
    + \sum_{k=1}^{K}
      \gamma_{\abs{\mathcal{I}}+k} \mathbf{u}_{k} \mathbf{u}_{k}^{\top}.
  \end{equation}
  When $\boldsymbol{\alpha} = \mathbf{1}$, the identity $\mathbf{D}_{\boldsymbol{\nu}}^{\top} \mathbf{1} = \boldsymbol{\nu}$ gives the final claim.
\end{proof}

For the activations covered by \cref{cor:persistence-of-irreducible-features}, the same calculation yields a decomposition in terms of the features of any irreducible parameterization.

\begin{corollary}[RSM decomposition relative to an irreducible parameterization]
  \label[corollary]{cor:rsm-decomposition-relative-to-irreducible-parameterization}
  Suppose that $\sigma$ is neither even-linear nor constant-odd, or is purely even or purely odd.
  Let $\mathbf{H}^{\star}$, $\mathbf{U}$, $\boldsymbol{\alpha}$, and $\boldsymbol{\nu}$ be as in \cref{cor:persistence-of-irreducible-features}, and denote the rows of $\mathbf{H}^{\star}$ and $\mathbf{U}$ by ${\mathbf{v}_{j}^{\star}}^{\top}$ and $\mathbf{u}_{k}^{\top}$, respectively.
  Then
  \begin{equation}
    \mathbf{H}^{\top}\mathbf{H}
    = \underbrace{\sum_{j=1}^{N_{h}^{\star}}
    \gamma_{j}\mathbf{v}_{j}^{\star} {\mathbf{v}_{j}^{\star}}^{\top}}_{\text{essential}}
    + \underbrace{\sum_{k=1}^{K}
    \gamma_{N_{h}^{\star}+k} \mathbf{u}_{k} \mathbf{u}_{k}^{\top}}_{\text{auxiliary}},
    \qquad
    \boldsymbol{\gamma}
    = \mathbf{D}_{\boldsymbol{\nu}}^{\top} \boldsymbol{\alpha}^{2}
    \in \mathbb{R}_{>0}^{N_{h}^{\star}+K}.
  \end{equation}
\end{corollary}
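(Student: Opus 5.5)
The plan is to mirror the proof of \cref{prop:rsm-decomposition-all-activation-classes} exactly, replacing the factorization of \cref{prop:hidden-activations-within-a-symmetry-orbit-all-activation-classes} with that of \cref{cor:persistence-of-irreducible-features}.

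First, I invoke \cref{cor:persistence-of-irreducible-features} to write
\begin{equation}
  \mathbf{H} = \diag(\boldsymbol{\alpha}) \mathbf{D}_{\boldsymbol{\nu}} \mathbf{G},
  \qquad
  \mathbf{G} \coloneqq
  \begin{bmatrix}
    \mathbf{H}^{\star} \\
    \mathbf{U}
  \end{bmatrix},
\end{equation}
with $\boldsymbol{\alpha} \in \mathbb{R}_{\neq 0}^{N_{h}}$ and $\boldsymbol{\nu} \in \mathbb{N}_{>0}^{N_{h}^{\star}+K}$. Since $\diag(\boldsymbol{\alpha})^{\top}\diag(\boldsymbol{\alpha}) = \diag(\boldsymbol{\alpha}^{2})$, I obtain
\begin{equation}
  \mathbf{H}^{\top}\mathbf{H} = \mathbf{G}^{\top}\mathbf{D}_{\boldsymbol{\nu}}^{\top}\diag(\boldsymbol{\alpha}^{2})\mathbf{D}_{\boldsymbol{\nu}}\mathbf{G}.
\end{equation}

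Second, I apply \cref{lem:weighted-gram-of-duplication-matrix} with $\boldsymbol{\omega} = \boldsymbol{\alpha}^{2}$. This collapses the middle factor to $\diag(\boldsymbol{\gamma})$ with $\boldsymbol{\gamma} = \mathbf{D}_{\boldsymbol{\nu}}^{\top}\boldsymbol{\alpha}^{2}$.

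Third, I expand $\mathbf{G}^{\top}\diag(\boldsymbol{\gamma})\mathbf{G}$ as a weighted sum of outer products of its rows. The first $N_{h}^{\star}$ rows are the ${\mathbf{v}_{j}^{\star}}^{\top}$ and the remaining $K$ rows are the $\mathbf{u}_{k}^{\top}$, which gives the stated split into essential and auxiliary contributions.

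The only point needing care is strict positivity of $\boldsymbol{\gamma}$, because $\boldsymbol{\alpha}$ may now have negative entries (sign flips for odd activations, ratios $\sigma(b_{i})/\sigma(b_{j}^{\star})$ for constant neurons). This is harmless, since only the squares $\alpha_{i}^{2} > 0$ enter. Each $\gamma_{\ell}$ is the sum of these squares over the $\nu_{\ell} \geq 1$ neurons of group $\ell$, by \cref{lem:multiplication-by-duplication-matrices}, so $\gamma_{\ell} > 0$.
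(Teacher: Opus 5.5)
Your proposal is correct and matches the paper's proof: the paper also obtains the result by applying the weighted-Gram identity (\cref{lem:weighted-gram-of-duplication-matrix}) to the factorization from \cref{cor:persistence-of-irreducible-features}. Your explicit check that $\boldsymbol{\gamma}$ remains strictly positive even though $\boldsymbol{\alpha}$ may now have negative entries is a detail the paper leaves implicit; note that the group-sum description of $\mathbf{D}_{\boldsymbol{\nu}}^{\top}\boldsymbol{\omega}$ comes from the proof of \cref{lem:weighted-gram-of-duplication-matrix} (or directly from the definition of $\mathbf{D}_{\boldsymbol{\nu}}$), not from \cref{lem:multiplication-by-duplication-matrices}.
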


\begin{proof}
  The stated decomposition follows directly by applying the Gram calculation from the proof of \cref{prop:rsm-decomposition-all-activation-classes} to the factorization in \cref{eq:persistence-of-irreducible-features}.
\end{proof}

It remains to establish the independent reweighting claim for positively $1$-homogeneous activations.

\begin{proof}[Proof of \cref{prop:rsm-decomposition-and-reweighting}]
  The decomposition is the positively $1$-homogeneous specialization of \cref{prop:rsm-decomposition-all-activation-classes}.
  Fix a realized factorization with effective weights $\boldsymbol{\gamma}$, and let $\boldsymbol{\omega} \in \mathbb{R}_{>0}^{\abs{\mathcal{I}}+K}$ be any target effective weights.
  For each feature index $\ell$, define
  \begin{equation}
    t_{\ell} \coloneqq
    \sqrt{\frac{\omega_{\ell}}{\gamma_{\ell}}}
    > 0.
  \end{equation}
  For every neuron $j$ assigned to feature $\ell$, i.e.,
  $(\mathbf{D}_{\boldsymbol{\nu}})_{j\ell} = 1$, apply the reciprocal rescaling
  \begin{equation}
    (\mathbf{w}_{j}, b_{j}, \mathbf{a}_{j})
    \mapsto
    (t_{\ell}\mathbf{w}_{j}, t_{\ell}b_{j}, t_{\ell}^{-1}\mathbf{a}_{j}).
  \end{equation}
  Positive $1$-homogeneity preserves each neuron's contribution to the realized function under this transformation, so the resulting parameterization is symmetry-equivalent to the original one.

  Let $\boldsymbol{\eta}$ denote the resulting feature-scaling vector.
  For every neuron $j$ assigned to feature $\ell$, we have $\eta_{j} = t_{\ell}\alpha_{j}$, and therefore
  \begin{equation}
    \bigl(
      \mathbf{D}_{\boldsymbol{\nu}}^{\top} \boldsymbol{\eta}^{2}
    \bigr)_{\ell}
    = t_{\ell}^{2}
    \bigl(
      \mathbf{D}_{\boldsymbol{\nu}}^{\top} \boldsymbol{\alpha}^{2}
    \bigr)_{\ell}
    = t_{\ell}^{2} \gamma_{\ell}
    = \omega_{\ell}.
  \end{equation}
  The feature matrices $\mathbf{F}_{\mathcal{E},\mathcal{I}}$ and $\mathbf{U}$, the duplication pattern $\boldsymbol{\nu}$, and the width are unchanged.
  Hence every strictly positive choice of effective weights is realizable with these quantities fixed.
\end{proof}

\subsection{Strict growth until saturation}
\label[appendix]{app-subsec:strict-growth-until-saturation}

We adopt the notation $\mathcal{R}$, $\mathcal{R}_{N_{h}}$, $\mathcal{S}(\mathbf{N})$, $\mathcal{S}_{N_{h}}(\mathbf{N})$, $\Delta_{N_{h}}(\mathbf{N})$, and $\Delta_{\infty}(\mathbf{N})$ from \cref{subsec:variability-in-representational-similarity}.
Throughout this subsection, we assume that $\sigma$ is positively $1$-homogeneous.

\paragraph{Pearson correlation as cosine similarity.}
Let $\Hol$ denote the space of symmetric matrices with zero diagonal and zero mean across off-diagonal entries:
\begin{equation}
  \label{eq:vec-space-hollow-symmetric-zero-mean}
  \Hol
  \coloneqq
  \set{
    \mathbf{M} \in \mathbb{R}^{P \times P}
    \given
    \mathbf{M} = \mathbf{M}^{\top},
    \;
    \diag(\mathbf{M}) = \mathbf{0},
    \;
    \mathbf{1}^{\top} \mathbf{M} \mathbf{1} = 0
  }.
\end{equation}
The orthogonal projection $\Pi_{\Hol}$ under the Frobenius inner product sets the diagonal to zero and centers the off-diagonal entries by their mean.
For any symmetric matrix $\mathbf{M}$ with nonconstant strict upper-triangular entries, define its centered, Frobenius-normalized form by
\begin{equation}
  \label{eq:centered-normalized-form}
  \mathbf{M}^{\circ}
  \coloneqq
  \frac{\Pi_{\Hol}(\mathbf{M})}
       {\norm{\Pi_{\Hol}(\mathbf{M})}_{F}}.
\end{equation}

\begin{lemma}[Pearson correlation as cosine similarity]
  \label[lemma]{lem:pearson-as-cosine}
  Let $\mathbf{M},\mathbf{N} \in \mathbb{R}^{P \times P}$ be symmetric matrices with nonconstant strict upper-triangular entries.
  Then their Pearson correlation is
  \begin{equation}
    \rho(\mathbf{M},\mathbf{N})
    = \frac{
      \langle
        \Pi_{\Hol}(\mathbf{M}),
        \Pi_{\Hol}(\mathbf{N})
      \rangle_{F}
    }{
      \norm{\Pi_{\Hol}(\mathbf{M})}_{F}
      \norm{\Pi_{\Hol}(\mathbf{N})}_{F}
    }
    = \langle
      \mathbf{M}^{\circ},
      \mathbf{N}^{\circ}
    \rangle_{F}.
  \end{equation}
\end{lemma}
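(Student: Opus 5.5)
The plan is to rewrite the Pearson correlation of the strict upper-triangular entries as a Frobenius inner product of projected matrices. Write $n \coloneqq P(P-1)/2$ and, for a symmetric $\mathbf{M}$, let $m_{\mu\nu}$ ($\mu<\nu$) denote its strict upper-triangular entries with mean $\bar m \coloneqq n^{-1}\sum_{\mu<\nu} m_{\mu\nu}$. By definition,
\begin{equation}
  \rho(\mathbf{M},\mathbf{N})
  =
  \frac{\sum_{\mu<\nu}(m_{\mu\nu}-\bar m)(n_{\mu\nu}-\bar n)}
  {\sqrt{\sum_{\mu<\nu}(m_{\mu\nu}-\bar m)^{2}}\,\sqrt{\sum_{\mu<\nu}(n_{\mu\nu}-\bar n)^{2}}}.
\end{equation}

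The first step is to verify the claimed description of $\Pi_{\Hol}$. Define $T(\mathbf{M})$ as the symmetric matrix with zero diagonal and off-diagonal entries $m_{\mu\nu}-\bar m$. This matrix lies in $\Hol$, since $\mathbf{1}^{\top}T(\mathbf{M})\mathbf{1} = 2\sum_{\mu<\nu}(m_{\mu\nu}-\bar m)=0$. The residual $\mathbf{M}-T(\mathbf{M})$ equals $\diag(\diag(\mathbf{M})) + \bar m(\mathbf{1}\mathbf{1}^{\top}-\mathbf{I})$. We check that this residual is Frobenius-orthogonal to every $\mathbf{K}\in\Hol$:
\begin{itemize}
  \item the diagonal part pairs to zero because $\diag(\mathbf{K})=\mathbf{0}$;
  \item the second part gives $\bar m(\mathbf{1}^{\top}\mathbf{K}\mathbf{1}-\tr\mathbf{K})=0$.
\end{itemize}
Hence $T=\Pi_{\Hol}$ on symmetric matrices, and this is the only step needing care.

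Next, for symmetric hollow matrices the Frobenius inner product counts each off-diagonal pair twice, so
\begin{equation}
  \langle\Pi_{\Hol}\mathbf{M},\Pi_{\Hol}\mathbf{N}\rangle_{F}
  = 2\sum_{\mu<\nu}(m_{\mu\nu}-\bar m)(n_{\mu\nu}-\bar n),
\end{equation}
and likewise $\norm{\Pi_{\Hol}\mathbf{M}}_{F}^{2}=2\sum_{\mu<\nu}(m_{\mu\nu}-\bar m)^{2}$.

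Substituting these into the ratio, the factors of $2$ cancel and we obtain the first equality. Nonconstancy of the entries guarantees that $\norm{\Pi_{\Hol}\mathbf{M}}_{F}$ and $\norm{\Pi_{\Hol}\mathbf{N}}_{F}$ are nonzero, so both the ratio and $\mathbf{M}^{\circ},\mathbf{N}^{\circ}$ are well defined. The second equality then follows from bilinearity together with the definition in \cref{eq:centered-normalized-form}.
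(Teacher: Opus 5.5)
Your proof is correct and follows the paper's argument. You describe $\Pi_{\Hol}$ entrywise, note that the Frobenius inner product of hollow symmetric matrices counts each off-diagonal pair twice, and cancel the factors of $2$ in the cosine ratio. The one addition is that you explicitly check that the diagonal-zeroing, mean-centering map is the orthogonal projection, via membership in $\Hol$ and orthogonality of the residual $\diag(\diag(\mathbf{M})) + \bar m(\mathbf{1}\mathbf{1}^{\top}-\mathbf{I})$; the paper simply asserts this step ``by definition of $\Pi_{\Hol}$''.
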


\begin{proof}[Proof of \cref{lem:pearson-as-cosine}]
  Let
  \begin{equation}
    \hat{m}
    \coloneqq
    \frac{2}{P(P-1)}
    \sum_{\mu<\nu} M_{\mu\nu},
    \qquad
    \hat{n}
    \coloneqq
    \frac{2}{P(P-1)}
    \sum_{\mu<\nu} N_{\mu\nu}
  \end{equation}
  denote the means of the strict upper-triangular entries.
  By definition of $\Pi_{\Hol}$,
  \begin{equation}
    \bigl[\Pi_{\Hol}(\mathbf{M})\bigr]_{\mu\nu}
    = M_{\mu\nu} - \hat{m},
    \qquad
    \bigl[\Pi_{\Hol}(\mathbf{N})\bigr]_{\mu\nu}
    = N_{\mu\nu} - \hat{n}
  \end{equation}
  for $\mu\neq\nu$, with zero diagonal.

  Hence the Pearson correlation between the strict upper-triangular entries is
  \begin{equation}
    \rho(\mathbf{M},\mathbf{N})
    =
    \frac{
      \sum_{\mu<\nu}
      (M_{\mu\nu} - \hat{m})(N_{\mu\nu} - \hat{n})
    }{
      \bigl(
        \sum_{\mu<\nu}(M_{\mu\nu}-\hat{m})^{2}
      \bigr)^{1/2}
      \bigl(
        \sum_{\mu<\nu}(N_{\mu\nu}-\hat{n})^{2}
      \bigr)^{1/2}
    }.
  \end{equation}
  Since the projected matrices are symmetric with zero diagonal,
  \begin{equation}
    \langle
      \Pi_{\Hol}(\mathbf{M}),
      \Pi_{\Hol}(\mathbf{N})
    \rangle_{F}
    = 2 \sum_{\mu<\nu}
    (M_{\mu\nu} - \hat{m})(N_{\mu\nu} - \hat{n}),
  \end{equation}
  and likewise
  \begin{equation}
    \norm{\Pi_{\Hol}(\mathbf{M})}_{F}^{2}
    = 2 \sum_{\mu<\nu}(M_{\mu\nu} - \hat{m})^{2},
    \qquad
    \norm{\Pi_{\Hol}(\mathbf{N})}_{F}^{2}
    = 2 \sum_{\mu<\nu}(N_{\mu\nu}-\hat{n})^{2}.
  \end{equation}
  The factors of $2$ therefore cancel in the cosine ratio, yielding
  \begin{equation}
    \rho(\mathbf{M},\mathbf{N})
    = \frac{
      \left\langle
        \Pi_{\Hol}(\mathbf{M}),
        \Pi_{\Hol}(\mathbf{N})
      \right\rangle_{F}
    }{
      \norm{\Pi_{\Hol}(\mathbf{M})}_{F}
      \norm{\Pi_{\Hol}(\mathbf{N})}_{F}
    }
    = \langle
      \mathbf{M}^{\circ},
      \mathbf{N}^{\circ}
    \rangle_{F},
  \end{equation}
  where the final identity follows from \cref{eq:centered-normalized-form}.
\end{proof}

\paragraph{Scaling and feature addition.}
The following construction shows that any parameterization can be replaced by another in the same symmetry orbit whose \ac{rsm} is a rescaled version of the original, augmented by an independently weighted rank-one contribution from an added zero-readout neuron.

\begin{lemma}[RSM scaling and zero-readout feature addition]
  \label[lemma]{lem:scaling-and-zero-readout-feature-addition}
  Let $\boldsymbol{\theta} \in \mathcal{O}_{N_{h}}$, and let $\mathbf{u}^{\top} = \sigma(\overline{\mathbf{w}}^{\top} \overline{\mathbf{X}})$ be any single-neuron feature row on the inputs $\mathbf{X}$.
  For every $\lambda > 0$ and $t \geq 0$, there exists
  $\boldsymbol{\xi} \in \mathcal{O}_{N_{h}+1}$ such that
  \begin{equation}
    \label{eq:scaling-and-zero-readout-feature-addition}
    \mathbf{M}_{\boldsymbol{\xi}}
    = \lambda\mathbf{M}_{\boldsymbol{\theta}}
    + t \mathbf{u} \mathbf{u}^{\top}.
  \end{equation}
  Moreover, $\lambda \mathbf{M}_{\boldsymbol{\theta}}$ is realizable within $\mathcal{O}_{N_{h}}$.
\end{lemma}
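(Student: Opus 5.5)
We construct $\boldsymbol{\xi}$ in two steps: first rescale every neuron of $\boldsymbol{\theta}$ to multiply the \ac{rsm} by $\lambda$, then adjoin one neuron with zero readout that generates the feature $\mathbf{u}^{\top}$ with weight $t$.

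\emph{Step 1 (uniform rescaling).} Apply the positive scaling symmetry to every neuron of $\boldsymbol{\theta}$:
\begin{equation*}
  (\mathbf{w}_{j}, b_{j}, \mathbf{a}_{j}) \mapsto (\sqrt{\lambda}\,\mathbf{w}_{j}, \sqrt{\lambda}\,b_{j}, \lambda^{-1/2}\mathbf{a}_{j}).
\end{equation*}
By positive $1$-homogeneity, each feature row becomes $\sqrt{\lambda}\,\mathbf{v}_{j}^{\top}$, so the hidden-activation matrix is $\sqrt{\lambda}\,\mathbf{H}$ and the \ac{rsm} is $\lambda\mathbf{M}_{\boldsymbol{\theta}}$. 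Only positive scaling is used, so the function is preserved and the width is still $N_{h}$. This proves the ``moreover'' claim. One can equally invoke the reweighting part of \cref{prop:rsm-decomposition-and-reweighting} with $\boldsymbol{\omega} = \lambda\boldsymbol{\gamma}$. Call the rescaled parameterization $\boldsymbol{\theta}'$.

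\emph{Step 2 (zero-readout addition).} Adjoin one neuron with incoming parameters $\sqrt{t}\,\overline{\mathbf{w}}$ and readout $\mathbf{0}$. Its feature row is $\sigma(\sqrt{t}\,\overline{\mathbf{w}}^{\top}\overline{\mathbf{X}}) = \sqrt{t}\,\mathbf{u}^{\top}$, which also covers $t = 0$, where the row is zero. By the neuron-wise decomposition \cref{eq:rsm-neuron-decomposition}, the new \ac{rsm} is
\begin{equation*}
  \lambda\mathbf{M}_{\boldsymbol{\theta}} + t\,\mathbf{u}\mathbf{u}^{\top},
\end{equation*}
at width $N_{h}+1$.

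\emph{Main obstacle: orbit membership.} It remains to check that the augmented parameterization $\boldsymbol{\xi}$ lies in the orbit of $\boldsymbol{\theta}$. The cleanest route is \cref{prop:characterization-of-symmetry-equivalence}. A zero-readout neuron adds $\norm{\cdot}\,\mathbf{0} = \mathbf{0}$ to every class aggregate $\boldsymbol{\beta}_{q}$. It adds $m\cdot\mathbf{0}\cdot z = 0$ to the residual, or $\mathbf{0}\cdot\sigma(b) = 0$ if it is a constant neuron. Hence $\boldsymbol{\beta}_{q}$ and $\mathbf{r}$ are unchanged and $\boldsymbol{\xi}\sim\boldsymbol{\theta}'\sim\boldsymbol{\theta}$.

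This argument must handle edge cases in the definitions. If $\mathbf{w}\neq\mathbf{0}$ and $t > 0$, the new neuron is itself a singleton zero-neuron group: its readout sum is zero, and subset-nonzero holds vacuously because a singleton has no nonempty proper subsets. If $\mathbf{w} = \mathbf{0}$, or if $t = 0$ makes the incoming vector zero, the neuron is a constant neuron with zero readout. Its contribution vanishes identically, so adjoining it is function-preserving. In every case the invariant criterion gives $\boldsymbol{\xi}\in\mathcal{O}_{N_{h}+1}(\boldsymbol{\theta})$ directly, which completes the proof.
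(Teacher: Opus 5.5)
Your proof is correct and follows the paper's argument. You rescale every neuron uniformly by $\sqrt{\lambda}$ to realize $\lambda\mathbf{M}_{\boldsymbol{\theta}}$ at width $N_h$, then append the neuron $(\sqrt{t}\,\overline{\mathbf{w}}, \mathbf{0})$, which is either a singleton zero-neuron group or a zero-readout constant neuron. Your extra check of orbit membership through the invariants $\boldsymbol{\beta}_q$ and $\mathbf{r}$ is valid but redundant, since that identification of the appended neuron already settles membership.
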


\begin{proof}
  First apply the positive-scaling symmetry to every neuron of $\boldsymbol{\theta}$:
  \begin{equation}
    (\overline{\mathbf{w}}_{j},\mathbf{a}_{j})
    \mapsto
    \bigl(
      \sqrt{\lambda} \, \overline{\mathbf{w}}_{j},
      \lambda^{-1/2} \mathbf{a}_{j}
    \bigr).
  \end{equation}
  This preserves the realized function while multiplying every hidden feature by $\sqrt{\lambda}$.
  Hence, the hidden-activation matrix is multiplied by $\sqrt{\lambda}$ and its \ac{rsm} by $\lambda$.
  In particular, $\lambda\mathbf{M}_{\boldsymbol{\theta}}$ is realizable within $\mathcal{O}_{N_{h}}$.

  Next append a neuron with parameters
  \begin{equation}
    (\sqrt{t} \, \overline{\mathbf{w}}, \mathbf{0}).
  \end{equation}
  If $t>0$, positive $1$-homogeneity gives the hidden feature $\sqrt{t} \, \mathbf{u}^{\top}$.
  If $t=0$, the hidden feature is zero because $\sigma(0) = 0$.
  The appended neuron forms a singleton zero-neuron group when its incoming weights are nonzero and is a zero-readout constant neuron otherwise.
  In either case, its addition preserves the symmetry orbit, while its contribution to the \ac{rsm} is $t\mathbf{u} \mathbf{u}^{\top}$.
  Therefore,
  \begin{equation}
    \mathbf{M}_{\boldsymbol{\xi}}
    = \lambda \mathbf{M}_{\boldsymbol{\theta}}
    + t \mathbf{u} \mathbf{u}^{\top},
  \end{equation}
  as claimed.
\end{proof}

We now prove that each similarity bound improves strictly whenever it has not reached its orbit-wide limit.

\begin{proof}[Proof of \cref{prop:monotonicity-of-representational-ambiguity}]
  We divide the proof into three steps.

  \paragraph{Nesting.}
  Let $\mathbf{M} \in \mathcal{R}_{N_{h}}$.
  Setting $\lambda=1$ and $t=0$ in \cref{lem:scaling-and-zero-readout-feature-addition} yields a parameterization in $\mathcal{O}_{N_{h}+1}$ with the same \ac{rsm} $\mathbf{M}$.
  Hence every \ac{rsm} realizable at width $N_{h}$ is also realizable at width $N_{h}+1$, so
  \begin{equation}
    \mathcal{R}_{N_{h}}
    \subseteq
    \mathcal{R}_{N_{h}+1}.
  \end{equation}
  Applying $\rho(\,\cdot\,, \mathbf{N})$ to these sets then gives
  \begin{equation}
    \mathcal{S}_{N_{h}}(\mathbf{N})
    \subseteq
    \mathcal{S}_{N_{h}+1}(\mathbf{N}).
  \end{equation}

  \paragraph{Strict improvement of the upper bound.}
  Fix a width $N_{h}$, and suppose
  \begin{equation}
    s \coloneqq
    \sup \mathcal{S}_{N_{h}}(\mathbf{N})
    < \sup \mathcal{S}(\mathbf{N}).
  \end{equation}
  Then there exists an \ac{rsm} $\mathbf{B}$ realized by a parameterization in the same symmetry orbit at some finite width $L$ such that
  \begin{equation}
    \rho(\mathbf{B}, \mathbf{N}) > s.
  \end{equation}
  Necessarily $L > N_{h}$, since nesting would otherwise imply $\rho(\mathbf{B}, \mathbf{N}) \in \mathcal{S}_{N_{h}}(\mathbf{N})$.
  Let $\mathbf{v}_{1}^{\top}, \dots, \mathbf{v}_{L}^{\top}$ denote the hidden feature rows of the parameterization giving rise to $\mathbf{B}$, that is, $\mathbf{B} = \sum_{j}\mathbf{v}_{j} \mathbf{v}_{j}^{\top}$.
  By linearity of the projection $\Pi_{\Hol}$, we have
  \begin{equation}
    \mathbf{Y}
    \coloneqq
    \Pi_{\Hol}(\mathbf{B})
    = \sum_{j=1}^{L}\mathbf{G}_{j},
    \qquad
    \mathbf{G}_{j}
    \coloneqq
    \Pi_{\Hol}(\mathbf{v}_{j}\mathbf{v}_{j}^{\top}).
  \end{equation}
  Since $\rho(\mathbf{B},\mathbf{N}) = \langle \mathbf{Y}, \mathbf{N}^{\circ} \rangle_{F} / \norm{\mathbf{Y}}_{F}$ by \cref{lem:pearson-as-cosine},
  the choice of $\mathbf{B}$ implies
  \begin{equation}
    \label{eq:wider-geometry-strict-improvement}
    \langle \mathbf{Y}, \mathbf{N}^{\circ} \rangle_{F}
    > s \norm{\mathbf{Y}}_{F}.
  \end{equation}
  We will show that a single feature from this wider parameterization can already be added to a suitable width-$N_{h}$ parameterization to obtain similarity strictly greater than $s$ at width $N_{h}+1$.

  \emph{Case $s\geq0$.}
  Choose a sequence $\mathbf{M}_{i} \in \mathcal{R}_{N_{h}}$ such that
  \begin{equation}
  \lim_{i\to\infty}
    \rho(\mathbf{M}_{i}, \mathbf{N})
    = s.
  \end{equation}
  By the same-width scaling construction in \cref{lem:scaling-and-zero-readout-feature-addition}, each $\mathbf{M}_{i}$ may be rescaled within $\mathcal{R}_{N_{h}}$, without changing its similarity to $\mathbf{N}$, so that $\norm{\Pi_{\Hol}(\mathbf{M}_{i})}_{F} = 1$, and hence $\Pi_{\Hol}(\mathbf{M}_{i}) = \mathbf{M}_{i}^{\circ}$.
  Consequently, the sequence $\mathbf{M}_{i}^{\circ}$ lies on the unit sphere of the finite-dimensional vector space $\Hol$ defined in \cref{eq:vec-space-hollow-symmetric-zero-mean}.
  By compactness of this sphere, there exists a subsequence $\mathbf{M}_{i_{k}}^{\circ}$ and a matrix $\mathbf{Q} \in \Hol$ such that $\lim_{k\to\infty} \mathbf{M}_{i_{k}}^{\circ} = \mathbf{Q}$.
  By continuity of the Frobenius norm,
  \begin{equation}
    \norm{\mathbf{Q}}_{F}
    = \lim_{k\to\infty}
    \norm{\mathbf{M}_{i_{k}}^{\circ}}_{F}
    = 1.
  \end{equation}
  Likewise, continuity of the Frobenius inner product gives
  \begin{equation}
    \langle
      \mathbf{Q},
      \mathbf{N}^{\circ}
    \rangle_{F}
    = \lim_{k\to\infty}
    \langle
      \mathbf{M}_{i_{k}}^{\circ},
      \mathbf{N}^{\circ}
    \rangle_{F}
    = \lim_{k\to\infty}
    \rho(\mathbf{M}_{i_{k}},\mathbf{N})
    = s,
  \end{equation}
  where the penultimate equality follows from \cref{lem:pearson-as-cosine}.
  Since $s \geq 0$ and $\norm{\mathbf{Q}}_{F} = 1$, the Cauchy--Schwarz inequality gives
  \begin{equation}
    \langle \mathbf{Y}, \mathbf{N}^{\circ}-s\mathbf{Q} \rangle_{F}
    = \langle \mathbf{Y}, \mathbf{N}^{\circ} \rangle_{F}
    - s\langle \mathbf{Y}, \mathbf{Q} \rangle_{F}
    \geq
    \langle \mathbf{Y}, \mathbf{N}^{\circ} \rangle_{F}
    - s\norm{\mathbf{Y}}_{F}
    > 0.
  \end{equation}
  Using $\mathbf{Y} = \sum_{j} \mathbf{G}_{j}$, we therefore have
  \begin{equation}
    \sum_{j=1}^{L}
    \langle
      \mathbf{G}_{j},
      \mathbf{N}^{\circ} - s\mathbf{Q}
    \rangle_{F}
    > 0.
  \end{equation}
  Hence there exists at least one index $j$ such that
  \begin{equation}
    \langle
      \mathbf{G}_{j},
      \mathbf{N}^{\circ} - s\mathbf{Q}
    \rangle_{F}
    > 0.
  \end{equation}
  For this index $j$, define
  \begin{equation}
    \phi(t)
    \coloneqq
    \frac{
      \langle \mathbf{Q} + t\mathbf{G}_{j}, \mathbf{N}^{\circ} \rangle_{F}
    }{
      \norm{\mathbf{Q} + t\mathbf{G}_{j}}_{F}
    }.
  \end{equation}
  Since $\norm{\mathbf{Q}}_{F} = 1$ and $\langle \mathbf{Q}, \mathbf{N}^{\circ} \rangle_{F} = s$, we have $\phi(0) = s$.
  Differentiating at $t=0$ gives
  \begin{equation}
    \phi'(0)
    = \langle
      \mathbf{G}_{j},
      \mathbf{N}^{\circ} - s\mathbf{Q}
    \rangle_{F}
    > 0.
  \end{equation}
  Hence, choose $t>0$ sufficiently small that
  \begin{equation}
    \frac{
      \langle \mathbf{Q}+t\mathbf{G}_{j}, \mathbf{N}^{\circ} \rangle_{F}
    }{
      \norm{\mathbf{Q}+t\mathbf{G}_{j}}_{F}
    } > s.
  \end{equation}
  By continuity and $\mathbf{M}_{i_{k}}^{\circ} \to \mathbf{Q}$, for all sufficiently large $k$,
  \begin{equation}
    \frac{
      \langle
        \mathbf{M}_{i_{k}}^{\circ} + t\mathbf{G}_{j},
        \mathbf{N}^{\circ}
      \rangle_{F}
    }{
      \norm{\mathbf{M}_{i_{k}}^{\circ} + t\mathbf{G}_{j}}_{F}
    } > s.
  \end{equation}
  By linearity of $\Pi_{\Hol}$ and the definition of $\mathbf{G}_{j}$,
  \begin{equation}
    \Pi_{\Hol}
    (
      \mathbf{M}_{i_{k}}
      + t \mathbf{v}_{j} \mathbf{v}_{j}^{\top}
    )
    = \mathbf{M}_{i_{k}}^{\circ}
    + t \mathbf{G}_{j}.
  \end{equation}
  Therefore, \cref{lem:pearson-as-cosine} gives, for all sufficiently large $k$,
  \begin{equation}
    \rho(
      \mathbf{M}_{i_{k}}
      + t \mathbf{v}_{j} \mathbf{v}_{j}^{\top},
      \mathbf{N}
    ) > s.
  \end{equation}
  By \cref{lem:scaling-and-zero-readout-feature-addition}, each such $\mathbf{M}_{i_{k}} + t \mathbf{v}_{j} \mathbf{v}_{j}^{\top}$ is realizable within $\mathcal{O}_{N_{h}+1}$.
  Hence
  \begin{equation}
    \sup\mathcal{S}_{N_{h}+1}(\mathbf{N})
    > s
    = \sup\mathcal{S}_{N_{h}}(\mathbf{N}).
  \end{equation}

  \emph{Case $s<0$.}
  We first show that there exists an index $j$ with $\mathbf{G}_{j} \neq \mathbf{0}$ such that
  \begin{equation}
    \label{eq:improving-feature-direction}
    \frac{
      \langle \mathbf{G}_{j}, \mathbf{N}^{\circ} \rangle_{F}
    }{
      \norm{\mathbf{G}_{j}}_{F}
    } > s.
  \end{equation}
  Suppose, to the contrary, that every nonzero $\mathbf{G}_{j}$ satisfies $\langle \mathbf{G}_{j}, \mathbf{N}^{\circ} \rangle_{F} \leq s \norm{\mathbf{G}_{j}}_{F}$.
  The same inequality holds trivially when $\mathbf{G}_{j} = \mathbf{0}$.
  Summing over $j$ and using $\mathbf{Y} = \sum_{j} \mathbf{G}_{j}$ gives
  \begin{equation}
    \langle \mathbf{Y}, \mathbf{N}^{\circ} \rangle_{F}
    \leq
    s \sum_{j=1}^{L} \norm{\mathbf{G}_{j}}_{F}
    \leq
    s \norm{\mathbf{Y}}_{F},
  \end{equation}
  where the second inequality follows from the triangle inequality and $s<0$.
  This contradicts \cref{eq:wider-geometry-strict-improvement}.
  Fix any $\mathbf{M}_{0} \in \mathcal{R}_{N_{h}}$, and choose an index $j$ satisfying \cref{eq:improving-feature-direction}.
  For all sufficiently large $t>0$, linearity of $\Pi_{\Hol}$ and \cref{lem:pearson-as-cosine} give
  \begin{equation}
    \rho(
      \mathbf{M}_{0}
      + t \mathbf{v}_{j} \mathbf{v}_{j}^{\top},
      \mathbf{N}
    )
    = \frac{
      \langle
        \Pi_{\Hol}(\mathbf{M}_{0}) + t\mathbf{G}_{j},
        \mathbf{N}^{\circ}
      \rangle_{F}
    }{
      \norm{
        \Pi_{\Hol}(\mathbf{M}_{0}) + t\mathbf{G}_{j}
      }_{F}
    }
    =
    \frac{
      \langle
        t^{-1}\Pi_{\Hol}(\mathbf{M}_{0}) + \mathbf{G}_{j},
        \mathbf{N}^{\circ}
      \rangle_{F}
    }{
      \norm{
        t^{-1} \Pi_{\Hol}(\mathbf{M}_{0})+\mathbf{G}_{j}
      }_{F}
    }.
  \end{equation}
  Clearly, $\lim_{t\to\infty} t^{-1}\Pi_{\Hol}(\mathbf{M}_{0}) = \mathbf{0}$.
  Since $\mathbf{G}_{j} \neq \mathbf{0}$, continuity of the Frobenius inner product and norm therefore gives
  \begin{equation}
    \lim_{t\to\infty}
    \rho(
      \mathbf{M}_{0}
      + t \mathbf{v}_{j} \mathbf{v}_{j}^{\top},
      \mathbf{N}
    )
    = \frac{
      \langle \mathbf{G}_{j}, \mathbf{N}^{\circ} \rangle_{F}
    }{
      \norm{\mathbf{G}_{j}}_{F}
    } > s,
  \end{equation}
  where the final inequality follows from \cref{eq:improving-feature-direction}.
  Hence, for some sufficiently large finite $t>0$, we have $\rho(\mathbf{M}_{0} + t \mathbf{v}_{j} \mathbf{v}_{j}^{\top}, \mathbf{N}) > s$.
  By \cref{lem:scaling-and-zero-readout-feature-addition}, this \ac{rsm} is realizable within $\mathcal{O}_{N_{h}+1}$.
  Therefore,
  \begin{equation}
    \sup\mathcal{S}_{N_{h}+1}(\mathbf{N})
    > s
    = \sup\mathcal{S}_{N_{h}}(\mathbf{N}).
  \end{equation}

  In both cases, we have shown that $\sup\mathcal{S}_{N_{h}}(\mathbf{N}) < \sup\mathcal{S}(\mathbf{N})$ implies $\sup\mathcal{S}_{N_{h}}(\mathbf{N}) < \sup\mathcal{S}_{N_{h}+1}(\mathbf{N})$.
  Conversely, if $\sup\mathcal{S}_{N_{h}}(\mathbf{N}) = \sup\mathcal{S}(\mathbf{N})$,
  then
  \begin{equation}
    \mathcal{S}_{N_{h}}(\mathbf{N})
    \subseteq
    \mathcal{S}_{N_{h}+1}(\mathbf{N})
    \subseteq
    \mathcal{S}(\mathbf{N})
  \end{equation}
  implies
  \begin{equation}
    \sup\mathcal{S}_{N_{h}+1}(\mathbf{N})
    = \sup\mathcal{S}_{N_{h}}(\mathbf{N})
    = \sup\mathcal{S}(\mathbf{N}).
  \end{equation}
  Therefore,
  \begin{equation}
    \label{eq:strict-growth-upper-similarity-bound}
    \sup\mathcal{S}_{N_{h}+1}(\mathbf{N})
    >
    \sup\mathcal{S}_{N_{h}}(\mathbf{N})
    \quad\Longleftrightarrow\quad
    \sup\mathcal{S}_{N_{h}}(\mathbf{N})
    <
    \sup\mathcal{S}(\mathbf{N}).
  \end{equation}

  \paragraph{Lower bound and spread.}
  The argument establishing \cref{eq:strict-growth-upper-similarity-bound} depends on the reference geometry $\mathbf{N}$ only through $\mathbf{N}^{\circ}$.
  Replacing $\mathbf{N}^{\circ}$ by $-\mathbf{N}^{\circ}$ negates every similarity score, since
  \begin{equation}
    \langle
      \mathbf{M}^{\circ},
      -\mathbf{N}^{\circ}
    \rangle_{F}
    = -\rho(\mathbf{M}, \mathbf{N}).
  \end{equation}
  Repeating the argument leading to \cref{eq:strict-growth-upper-similarity-bound} with $-\mathbf{N}^{\circ}$ in place of $\mathbf{N}^{\circ}$ therefore gives
  \begin{equation}
    \label{eq:strict-growth-lower-similarity-bound}
    \inf\mathcal{S}_{N_{h}+1}(\mathbf{N})
    <
    \inf\mathcal{S}_{N_{h}}(\mathbf{N})
    \quad\Longleftrightarrow\quad
    \inf\mathcal{S}_{N_{h}}(\mathbf{N})
    >
    \inf\mathcal{S}(\mathbf{N}).
  \end{equation}
  Finally, by definition,
  \begin{equation}
    \Delta_{\infty}(\mathbf{N})
    - \Delta_{N_{h}}(\mathbf{N})
    = (
      \sup \mathcal{S}(\mathbf{N})
      - \sup \mathcal{S}_{N_{h}}(\mathbf{N})
    )
    + (
      \inf \mathcal{S}_{N_{h}}(\mathbf{N})
      - \inf \mathcal{S}(\mathbf{N})
    ).
  \end{equation}
  Both terms are nonnegative because $\mathcal{S}_{N_{h}}(\mathbf{N}) \subseteq \mathcal{S}(\mathbf{N})$.
  Hence,
  \begin{equation}
    \Delta_{N_{h}}(\mathbf{N})
    < \Delta_{\infty}(\mathbf{N})
  \end{equation}
  if and only if at least one of the upper or lower similarity bounds has not reached its orbit-wide limit.
  Likewise,
  \begin{equation}
    \begin{aligned}
      \Delta_{N_{h}+1}(\mathbf{N})
      - \Delta_{N_{h}}(\mathbf{N})
      &=
      (
        \sup \mathcal{S}_{N_{h}+1}(\mathbf{N})
        - \sup \mathcal{S}_{N_{h}}(\mathbf{N})
      )
      \\
      &\quad+
      (
        \inf \mathcal{S}_{N_{h}}(\mathbf{N})
        - \inf \mathcal{S}_{N_{h}+1}(\mathbf{N})
      ).
    \end{aligned}
  \end{equation}
  Again, by nesting, both terms are nonnegative.
  By \cref{eq:strict-growth-upper-similarity-bound,eq:strict-growth-lower-similarity-bound}, at least one is strictly positive if and only if at least one of the corresponding orbit-wide bounds has not yet been reached.
  Therefore,
  \begin{equation}
    \Delta_{N_{h}+1}(\mathbf{N})
    >
    \Delta_{N_{h}}(\mathbf{N})
    \quad\Longleftrightarrow\quad
    \Delta_{N_{h}}(\mathbf{N})
    <
    \Delta_{\infty}(\mathbf{N}),
  \end{equation}
  completing the proof.
\end{proof}

\subsection{Function-independent similarity limits}
\label[appendix]{app-subsec:function-independent-similarity-limits}

We now prove \cref{prop:function-independent-similarity-limits} by characterizing the closure of the centered, Frobenius-normalized \acp{rsm} realizable within a fixed symmetry orbit.
We write $\cl$ for closure with respect to the Frobenius norm and $\overline{\cone}(\mathcal{A})$ for the closed conic hull of a set $\mathcal{A}$.

\paragraph{The closed projected feature cone.}
Define the set of projected single-neuron contributions
\begin{equation}
  \label{eq:projected-feature-generators}
  \mathcal{G}_{\sigma,\mathbf{X}}
  \coloneqq
  \set{
    \Pi_{\Hol}(\mathbf{u}\mathbf{u}^{\top})
    \given
    \mathbf{u}^{\top}
    = \sigma(\overline{\mathbf{w}}^{\top} \overline{\mathbf{X}}),
    \;
    \overline{\mathbf{w}} \in \mathbb{R}^{N_{i}+1}
  }.
\end{equation}
The closed projected feature cone is then
\begin{equation}
  \label{eq:closed-projected-feature-cone}
  \mathcal{C}_{\sigma,\mathbf{X}}
  \coloneqq
  \overline{\cone}\bigl(
    \mathcal{G}_{\sigma,\mathbf{X}}
  \bigr)
  \subseteq \Hol.
\end{equation}
By definition, both $\mathcal{G}_{\sigma,\mathbf{X}}$ and $\mathcal{C}_{\sigma,\mathbf{X}}$ depend only on $\sigma$ and $\mathbf{X}$, and not on any particular symmetry orbit.
Let
\begin{equation}
  \label{eq:projected-feature-cone-dimension}
  d
  \coloneqq
  \dim\spanop\mathcal{C}_{\sigma,\mathbf{X}}
  \leq \dim\Hol
  = \frac{P(P-1)}{2}-1.
\end{equation}
Since $\mathcal{S}(\mathbf{N})$ is nonempty, $\mathcal{C}_{\sigma,\mathbf{X}} \neq \set{\mathbf{0}}$, so $d \geq 1$.

The following lemma characterizes the closure of the centered, Frobenius-normalized \acp{rsm} attainable within any symmetry orbit.
In particular, every unit-norm element of $\mathcal{C}_{\sigma,\mathbf{X}}$ can be approached at width $N_{h}^{\star}+d$, and hence at every larger width.

\begin{lemma}[Closure of normalized geometries within a symmetry orbit]
  \label[lemma]{lem:limiting-normalized-geometries-within-a-symmetry-orbit}
  Fix a symmetry orbit $\mathcal{O}$, and let $N_{h}^{\star}$ denote its minimum width.
  Then, for every $N_{h} \geq N_{h}^{\star}+d$,
  \begin{equation}
    \label{eq:limiting-normalized-geometries}
    \cl\set{
      \mathbf{M}^{\circ}
      \given
      \mathbf{M} \in \mathcal{R}_{N_{h}}
    }
    = \set{
      \mathbf{Y} \in \mathcal{C}_{\sigma,\mathbf{X}}
      \given
      \norm{\mathbf{Y}}_{F} = 1
    }.
  \end{equation}
\end{lemma}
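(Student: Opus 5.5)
We establish the two inclusions separately. For ``$\subseteq$'', take any $\mathbf{M} \in \mathcal{R}_{N_{h}}$ realized by $\boldsymbol{\theta}$ with feature rows $\mathbf{v}_{j}^{\top} = \sigma(\overline{\mathbf{w}}_{j}^{\top}\overline{\mathbf{X}})$. By \cref{eq:rsm-neuron-decomposition} and linearity of $\Pi_{\Hol}$, $\Pi_{\Hol}(\mathbf{M}) = \sum_{j} \Pi_{\Hol}(\mathbf{v}_{j}\mathbf{v}_{j}^{\top})$ is a nonnegative combination of elements of $\mathcal{G}_{\sigma,\mathbf{X}}$. Constant neurons are covered by taking $\overline{\mathbf{w}} = (\mathbf{0}, b)$, and they give $\Pi_{\Hol}$ of a constant matrix, which is $\mathbf{0}$. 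Hence $\Pi_{\Hol}(\mathbf{M}) \in \mathcal{C}_{\sigma,\mathbf{X}}$, and after normalization $\mathbf{M}^{\circ}$ is a unit-norm element of the cone. This set is closed because it is the intersection of a closed cone with the unit sphere, so it contains the closure on the left.

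For ``$\supseteq$'', fix a unit-norm $\mathbf{Y} \in \mathcal{C}_{\sigma,\mathbf{X}}$ and $\varepsilon > 0$.

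\emph{Step 1: approximate $\mathbf{Y}$ by a short conic combination.} Since $\mathcal{C}_{\sigma,\mathbf{X}}$ is the closure of $\cone(\mathcal{G}_{\sigma,\mathbf{X}})$, there is $\mathbf{Y}' \in \cone(\mathcal{G}_{\sigma,\mathbf{X}})$ with $\norm{\mathbf{Y}'-\mathbf{Y}}_{F} < \varepsilon$. This $\mathbf{Y}'$ lies in the $d$-dimensional space $\spanop \mathcal{C}_{\sigma,\mathbf{X}}$. By Carathéodory's theorem for cones, $\mathbf{Y}' = \sum_{i=1}^{d} t_{i}\mathbf{G}_{i}$ with $t_{i} \geq 0$ and $\mathbf{G}_{i} = \Pi_{\Hol}(\mathbf{u}_{i}\mathbf{u}_{i}^{\top}) \in \mathcal{G}_{\sigma,\mathbf{X}}$.

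\emph{Step 2: build an orbit element at width $N_{h}^{\star}+d$.} Start from an irreducible $\boldsymbol{\theta}^{\star} \in \mathcal{O}_{N_{h}^{\star}}$. Apply \cref{lem:scaling-and-zero-readout-feature-addition} $d$ times: shrink the existing RSM by a factor $\lambda > 0$ and append the $d$ zero-readout neurons with weights $t_{i}$. This yields $\mathbf{M}_{\lambda} = \lambda \mathbf{M}_{\boldsymbol{\theta}^{\star}} + \sum_{i} t_{i}\mathbf{u}_{i}\mathbf{u}_{i}^{\top} \in \mathcal{R}_{N_{h}^{\star}+d}$. Projecting gives $\Pi_{\Hol}(\mathbf{M}_{\lambda}) = \lambda\Pi_{\Hol}(\mathbf{M}_{\boldsymbol{\theta}^{\star}}) + \mathbf{Y}'$. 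As $\lambda \to 0$, this tends to $\mathbf{Y}'$, which is nonzero for $\varepsilon < 1$. Hence the strict upper-triangular entries are nonconstant for small $\lambda$, so $\mathbf{M}_{\lambda}$ indeed lies in $\mathcal{R}$. By continuity of normalization away from $\mathbf{0}$, $\mathbf{M}_{\lambda}^{\circ} \to \mathbf{Y}'/\norm{\mathbf{Y}'}_{F}$, which lies within $O(\varepsilon)$ of $\mathbf{Y}$. Letting $\varepsilon \to 0$ puts $\mathbf{Y}$ in the closure at width $N_{h}^{\star}+d$.

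\emph{Step 3: extend to larger widths.} The nesting $\mathcal{R}_{N_{h}} \subseteq \mathcal{R}_{N_{h}+1}$, taking $\lambda = 1$ and $t = 0$ in \cref{lem:scaling-and-zero-readout-feature-addition}, carries the result to every $N_{h} \geq N_{h}^{\star}+d$.

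The main delicate point is Step 1. We must make sure the Carathéodory bound uses the dimension $d$ of the span of the cone rather than $\dim\Hol$, and that approximating from the open cone (not its closure) suffices. Both hold because every element of $\cone(\mathcal{G}_{\sigma,\mathbf{X}})$ lies in $\spanop\mathcal{C}_{\sigma,\mathbf{X}}$. A minor check is that $\mathcal{R}_{N_{h}^{\star}}$ may be empty of nonconstant RSMs; this is harmless, since $\lambda \to 0$ makes the irreducible part negligible regardless of what it is.
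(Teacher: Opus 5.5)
Your proposal is correct and follows the paper's proof essentially step for step. For $\subseteq$ you use cone membership together with closedness of the unit-norm slice of $\mathcal{C}_{\sigma,\mathbf{X}}$. For $\supseteq$ you combine conic Carath\'eodory in the $d$-dimensional span, iterated zero-readout feature addition on an irreducible parameterization shrunk by $\lambda \to 0$, and nesting for larger widths. The only cosmetic difference is that you take limits in two stages ($\lambda \to 0$ for fixed $\varepsilon$, then $\varepsilon \to 0$, using that the closure is closed), whereas the paper uses a single sequence with $\lambda_i \to 0$ and $\mathbf{Y}_i \to \mathbf{Y}$ simultaneously.
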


\begin{proof}
  We prove the two inclusions separately.

  \paragraph{Cone membership.}
  Fix $N_{h}$ and let $\mathbf{M} \in \mathcal{R}_{N_{h}}$.
  Let $\mathbf{v}_{1}^{\top}, \dots, \mathbf{v}_{N_{h}}^{\top}$ denote the hidden feature rows of a parameterization realizing $\mathbf{M}$.
  Since $\mathbf{M} = \sum_{j} \mathbf{v}_{j}\mathbf{v}_{j}^{\top}$,  linearity of $\Pi_{\Hol}$ gives
  \begin{equation}
    \Pi_{\Hol}(\mathbf{M})
    = \sum_{j=1}^{N_{h}}
    \Pi_{\Hol}(
      \mathbf{v}_{j} \mathbf{v}_{j}^{\top}
    )
    \in
    \cone(
      \mathcal{G}_{\sigma,\mathbf{X}}
    )
    \subseteq
    \mathcal{C}_{\sigma,\mathbf{X}}.
  \end{equation}
  Because $\mathbf{M} \in \mathcal{R}_{N_{h}}$, its strict upper-triangular entries are nonconstant, so $\norm{\Pi_{\Hol}(\mathbf{M})}_{F} > 0$.
  Since $\mathcal{C}_{\sigma,\mathbf{X}}$ is a cone,
  \begin{equation}
    \mathbf{M}^{\circ}
    =
    \frac{
      \Pi_{\Hol}(\mathbf{M})
    }{
      \norm{\Pi_{\Hol}(\mathbf{M})}_{F}
    }
    \in
    \mathcal{C}_{\sigma,\mathbf{X}},
    \qquad
    \norm{\mathbf{M}^{\circ}}_{F} = 1.
  \end{equation}
  The set
  \begin{equation}
    \set{
      \mathbf{Y}\in\mathcal{C}_{\sigma,\mathbf{X}}
      \given
      \norm{\mathbf{Y}}_{F}=1
    }
  \end{equation}
  is closed because it is the intersection of $\mathcal{C}_{\sigma,\mathbf{X}}$ and the unit sphere in $\Hol$, both of which are closed.
  Therefore,
  \begin{equation}
    \cl\set{
      \mathbf{M}^{\circ}
      \given
      \mathbf{M}\in\mathcal{R}_{N_{h}}
    }
    \subseteq
    \set{
      \mathbf{Y}\in\mathcal{C}_{\sigma,\mathbf{X}}
      \given
      \norm{\mathbf{Y}}_{F} = 1
    }.
  \end{equation}

  \paragraph{Approximation by at most $d$ feature contributions.}
  Fix $\mathbf{Y} \in \mathcal{C}_{\sigma,\mathbf{X}}$ with $\norm{\mathbf{Y}}_{F}=1$.
  Since $\mathcal{C}_{\sigma,\mathbf{X}} = \overline{\cone}(\mathcal{G}_{\sigma,\mathbf{X}})$, there exists
  \begin{equation}
    \set{\mathbf{Y}_{i}}_{i=1}^{\infty}
    \subseteq \cone(\mathcal{G}_{\sigma,\mathbf{X}})
    \qquad\text{such that}\qquad
    \lim_{i\to\infty} \mathbf{Y}_{i} = \mathbf{Y}.
  \end{equation}
  Moreover, $\spanop\mathcal{G}_{\sigma,\mathbf{X}} = \spanop\mathcal{C}_{\sigma,\mathbf{X}}$, because the finite-dimensional space $\spanop\mathcal{G}_{\sigma,\mathbf{X}}$ is closed and therefore contains the closure of $\cone(\mathcal{G}_{\sigma,\mathbf{X}})$.
  Hence the generators lie in a $d$-dimensional vector space.
  By the conic version of Carathéodory's theorem, each $\mathbf{Y}_{i}$ can therefore be expressed using at most $d$ generators:
  \begin{equation}
    \mathbf{Y}_{i}
    = \sum_{j=1}^{d}
    t_{ij} \mathbf{G}_{ij},
    \qquad
    t_{ij} \geq 0,
  \end{equation}
  where $\mathbf{G}_{ij} \in \mathcal{G}_{\sigma,\mathbf{X}}$.
  If fewer than $d$ generators are required, we set the remaining coefficients to zero.
  Since each $\mathbf{G}_{ij} \in \mathcal{G}_{\sigma,\mathbf{X}}$, for every $i$ and $j$ there exists $\overline{\mathbf{w}}_{ij} \in \mathbb{R}^{N_{i}+1}$ such that
  \begin{equation}
    \mathbf{G}_{ij}
    = \Pi_{\Hol}(
      \mathbf{u}_{ij}\mathbf{u}_{ij}^{\top}
    ),
    \qquad
    \mathbf{u}_{ij}^{\top}
    = \sigma(
      \overline{\mathbf{w}}_{ij}^{\top}
      \overline{\mathbf{X}}
    ).
  \end{equation}

  \paragraph{Realization within the orbit.}
  Choose an irreducible parameterization $\boldsymbol{\theta}^{\star} \in \mathcal{O}_{N_{h}^{\star}}$ with \ac{rsm} $\mathbf{M}_{\boldsymbol{\theta}^{\star}}$, and let $\set{\lambda_{i}}_{i=1}^{\infty}$ be any positive sequence with $\lambda_{i} \to 0$.
  Iterating \cref{lem:scaling-and-zero-readout-feature-addition} yields, for every $i$, a parameterization in $\mathcal{O}_{N_{h}^{\star}+d}$ with \ac{rsm}
  \begin{equation}
    \mathbf{M}_{i}
    \coloneqq
    \lambda_{i}\mathbf{M}_{\boldsymbol{\theta}^{\star}}
    + \sum_{j=1}^{d}
    t_{ij} \mathbf{u}_{ij} \mathbf{u}_{ij}^{\top}.
  \end{equation}
  The first application uses scaling factor $\lambda_{i}$ and adds the contribution $t_{i1} \mathbf{u}_{i1} \mathbf{u}_{i1}^{\top}$; each subsequent application uses scaling factor $1$ and adds the next feature contribution.
  If $t_{ij}=0$, the corresponding application simply appends a zero-feature neuron, so the resulting width is exactly $N_{h}^{\star}+d$.
  By linearity of $\Pi_{\Hol}$ and the definition of $\mathbf{Y}_{i}$,
  \begin{equation}
    \Pi_{\Hol}(\mathbf{M}_{i})
    = \lambda_{i}\Pi_{\Hol}(\mathbf{M}_{\boldsymbol{\theta}^{\star}})
    + \sum_{j=1}^{d}
    t_{ij}
    \Pi_{\Hol}(
      \mathbf{u}_{ij}\mathbf{u}_{ij}^{\top}
    )
    = \lambda_{i}\Pi_{\Hol}(\mathbf{M}_{\boldsymbol{\theta}^{\star}})
    + \mathbf{Y}_{i}.
  \end{equation}
  Since $\lambda_{i} \to 0$ and $\mathbf{Y}_{i} \to \mathbf{Y}$, we have $\Pi_{\Hol}(\mathbf{M}_{i}) \to \mathbf{Y}$.
  By continuity of the Frobenius norm,
  \begin{equation}
    \lim_{i\to\infty}
    \norm{\Pi_{\Hol}(\mathbf{M}_{i})}_{F}
    = \norm{\mathbf{Y}}_{F}
    = 1.
  \end{equation}
  Hence $\Pi_{\Hol}(\mathbf{M}_{i}) \neq \mathbf{0}$ for all sufficiently large $i$, so $\mathbf{M}_{i} \in \mathcal{R}_{N_{h}^{\star}+d}$ for all such $i$.
  Moreover,
  \begin{equation}
    \lim_{i\to\infty}
    \mathbf{M}_{i}^{\circ}
    = \lim_{i\to\infty}
    \frac{
      \Pi_{\Hol}(\mathbf{M}_{i})
    }{
      \norm{\Pi_{\Hol}(\mathbf{M}_{i})}_{F}
    }
    =
    \mathbf{Y},
  \end{equation}
  again by continuity.
  Therefore,
  \begin{equation}
    \mathbf{Y}
    \in
    \cl\set{
      \mathbf{M}^{\circ}
      \given
      \mathbf{M}\in\mathcal{R}_{N_{h}^{\star}+d}
    }.
  \end{equation}
  This proves the reverse inclusion at width $N_{h}^{\star}+d$.
  By nesting, $\mathcal{R}_{N_{h}^{\star}+d} \subseteq \mathcal{R}_{N_{h}}$ for every $N_{h} \geq N_{h}^{\star}+d$, so the same inclusion holds at every such width.
  Together with the first inclusion, this proves \cref{eq:limiting-normalized-geometries}.
\end{proof}

The characterization in \cref{lem:limiting-normalized-geometries-within-a-symmetry-orbit} now yields both claims of \cref{prop:function-independent-similarity-limits}.

\begin{proof}[Proof of \cref{prop:function-independent-similarity-limits}]
  Fix a symmetry orbit $\mathcal{O}$ with minimum width $N_{h}^{\star}$.
  The set
  \begin{equation}
    \set{
      \mathbf{Y}\in\mathcal{C}_{\sigma,\mathbf{X}}
      \given
      \norm{\mathbf{Y}}_{F} = 1
    }
  \end{equation}
  is the intersection of the closed set $\mathcal{C}_{\sigma,\mathbf{X}}$ with the unit sphere in the finite-dimensional space $\Hol$, and is therefore compact.
  By \cref{lem:limiting-normalized-geometries-within-a-symmetry-orbit} and the assumed nonemptiness of the similarity sets, it is also nonempty.
  
  The functional $\mathbf{Y} \mapsto \langle \mathbf{Y}, \mathbf{N}^{\circ} \rangle_{F}$ is continuous and therefore attains its minimum and maximum on this set.
  Fix $N_{h} \geq N_{h}^{\star}+d$.
  By \cref{lem:limiting-normalized-geometries-within-a-symmetry-orbit},
  \begin{equation}
    \cl\set{
      \mathbf{M}^{\circ}
      \given
      \mathbf{M}\in\mathcal{R}_{N_{h}}
    }
    =
    \set{
      \mathbf{Y} \in \mathcal{C}_{\sigma,\mathbf{X}}
      \given
      \norm{\mathbf{Y}}_{F} = 1
    }.
  \end{equation}
  By \cref{lem:pearson-as-cosine}, $\rho(\mathbf{M},\mathbf{N}) = \langle \mathbf{M}^{\circ}, \mathbf{N}^{\circ} \rangle_{F}$.
  Since the Frobenius inner product is continuous, taking the closure of the centered, Frobenius-normalized \acp{rsm} does not change the infimum or supremum of this functional.
  Therefore,
  \begin{equation}
    \inf\mathcal{S}_{N_{h}}(\mathbf{N})
    =
    \min\set{
      \langle \mathbf{Y}, \mathbf{N}^{\circ} \rangle_{F}
      \given
      \mathbf{Y}\in\mathcal{C}_{\sigma,\mathbf{X}},
      \;
      \norm{\mathbf{Y}}_{F}=1
    },
  \end{equation}
  and
  \begin{equation}
    \sup\mathcal{S}_{N_{h}}(\mathbf{N})
    =
    \max\set{
      \langle \mathbf{Y}, \mathbf{N}^{\circ} \rangle_{F}
      \given
      \mathbf{Y}\in\mathcal{C}_{\sigma,\mathbf{X}},
      \;
      \norm{\mathbf{Y}}_{F}=1
    }.
  \end{equation}
  From the cone-membership part of the proof of \cref{lem:limiting-normalized-geometries-within-a-symmetry-orbit}, every $\mathbf{M} \in \mathcal{R}$ satisfies
  \begin{equation}
    \mathbf{M}^{\circ}
    \in
    \set{
      \mathbf{Y}\in\mathcal{C}_{\sigma,\mathbf{X}}
      \given
      \norm{\mathbf{Y}}_{F} = 1
    }.
  \end{equation}
  Hence,
  \begin{equation}
    \inf\mathcal{S}(\mathbf{N})
    \geq
    \min\set{
      \langle \mathbf{Y}, \mathbf{N}^{\circ} \rangle_{F}
      \given
      \mathbf{Y}\in\mathcal{C}_{\sigma,\mathbf{X}},
      \;
      \norm{\mathbf{Y}}_{F}=1
    },
  \end{equation}
  while $\mathcal{S}_{N_{h}}(\mathbf{N})\subseteq\mathcal{S}(\mathbf{N})$ gives
  \begin{equation}
    \inf\mathcal{S}(\mathbf{N})
    \leq
    \inf\mathcal{S}_{N_{h}}(\mathbf{N}).
  \end{equation}
  Combining these inequalities with the expression for $\inf\mathcal{S}_{N_{h}}(\mathbf{N})$ above yields
  \begin{equation}
    \inf\mathcal{S}(\mathbf{N})
    = \inf\mathcal{S}_{N_{h}}(\mathbf{N})
    = \min\set{
      \langle \mathbf{Y}, \mathbf{N}^{\circ} \rangle_{F}
      \given
      \mathbf{Y}\in\mathcal{C}_{\sigma,\mathbf{X}},
      \;
      \norm{\mathbf{Y}}_{F} = 1
    }.
  \end{equation}
  The same argument for the supremum gives
  \begin{equation}
    \sup\mathcal{S}(\mathbf{N})
    = \sup\mathcal{S}_{N_{h}}(\mathbf{N})
    = \max\set{
      \langle \mathbf{Y}, \mathbf{N}^{\circ} \rangle_{F}
      \given
      \mathbf{Y}\in\mathcal{C}_{\sigma,\mathbf{X}},
      \;
      \norm{\mathbf{Y}}_{F}=1
    }.
  \end{equation}
  The right-hand sides depend only on $\sigma$, $\mathbf{X}$, and $\mathbf{N}$, and are therefore independent of the symmetry orbit.
  Hence $\inf\mathcal{S}(\mathbf{N})$ and $\sup\mathcal{S}(\mathbf{N})$ are identical across all symmetry orbits.

  Moreover, for every $N_{h} \geq N_{h}^{\star}+d$,
  \begin{equation}
    \inf\mathcal{S}_{N_{h}}(\mathbf{N})
    = \inf\mathcal{S}(\mathbf{N}),
    \qquad
    \sup\mathcal{S}_{N_{h}}(\mathbf{N})
    = \sup\mathcal{S}(\mathbf{N}).
  \end{equation}
  Consequently,
  \begin{equation}
    \Delta_{N_{h}}(\mathbf{N})
    = \Delta_{\infty}(\mathbf{N}),
    \qquad
    N_{h} \geq N_{h}^{\star}+d.
  \end{equation}
  Since
  \begin{equation}
    d \leq
    \frac{P(P-1)}{2}-1,
  \end{equation}
  every symmetry orbit therefore reaches the common limiting similarity bounds, and hence the common limiting spread, at a finite width.
\end{proof}

\subsection{Alignment with reference geometries}
\label[appendix]{app-subsec:alignment-with-reference-geometries}

The characterization of limiting normalized geometries now yields a necessary and sufficient condition for similarity to a fixed reference geometry to approach one.

\begin{corollary}[Alignment with a reference geometry]
  \label[corollary]{cor:alignment-with-reference-geometry}
  Fix a symmetry orbit $\mathcal{O}$ with activation $\sigma$ and minimum width $N_{h}^{\star}$.
  Then $\sup\mathcal{S}(\mathbf{N}) = 1$ if and only if $\mathbf{N}^{\circ} \in \mathcal{C}_{\sigma,\mathbf{X}}$.
  Moreover, for every $N_{h} \geq N_{h}^{\star} + d$,
  $\sup\mathcal{S}_{N_{h}}(\mathbf{N}) = 1$ if and only if
  $\mathbf{N}^{\circ} \in \mathcal{C}_{\sigma,\mathbf{X}}$.
\end{corollary}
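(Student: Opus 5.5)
The plan is to read the corollary off the explicit formula for the supremum obtained in the proof of \cref{prop:function-independent-similarity-limits}. That proof shows that, for every $N_{h} \geq N_{h}^{\star}+d$,
\begin{equation}
  \sup\mathcal{S}(\mathbf{N})
  = \sup\mathcal{S}_{N_{h}}(\mathbf{N})
  = \max\set{
    \langle \mathbf{Y}, \mathbf{N}^{\circ} \rangle_{F}
    \given
    \mathbf{Y}\in\mathcal{C}_{\sigma,\mathbf{X}},
    \;
    \norm{\mathbf{Y}}_{F}=1
  }.
\end{equation}
The maximum is attained because the set $K \coloneqq \set{\mathbf{Y}\in\mathcal{C}_{\sigma,\mathbf{X}} \given \norm{\mathbf{Y}}_{F}=1}$ is compact and nonempty. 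Both claims of the corollary therefore reduce to one statement about $K$: the maximum above equals $1$ if and only if $\mathbf{N}^{\circ} \in \mathcal{C}_{\sigma,\mathbf{X}}$. Since the width-$N_{h}$ supremum coincides with the orbit-wide supremum for $N_{h} \geq N_{h}^{\star}+d$, the second claim follows immediately from the first.

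For sufficiency, suppose $\mathbf{N}^{\circ} \in \mathcal{C}_{\sigma,\mathbf{X}}$. By definition $\norm{\mathbf{N}^{\circ}}_{F}=1$, and $\mathbf{N}^{\circ} \in \Hol$, so $\mathbf{N}^{\circ} \in K$. Taking $\mathbf{Y}=\mathbf{N}^{\circ}$ gives the value $\langle \mathbf{N}^{\circ},\mathbf{N}^{\circ}\rangle_{F}=1$. On the other hand, Cauchy--Schwarz bounds every value on $K$ by $1$, since both factors are unit vectors. Hence the maximum is exactly $1$.

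For necessity, suppose the maximum equals $1$. Because $K$ is compact, some $\mathbf{Y}\in K$ attains it, so $\langle \mathbf{Y},\mathbf{N}^{\circ}\rangle_{F}=1$ with $\norm{\mathbf{Y}}_{F}=\norm{\mathbf{N}^{\circ}}_{F}=1$. The equality case of Cauchy--Schwarz, equivalently $\norm{\mathbf{Y}-\mathbf{N}^{\circ}}_{F}^{2}=2-2\langle\mathbf{Y},\mathbf{N}^{\circ}\rangle_{F}=0$, forces $\mathbf{Y}=\mathbf{N}^{\circ}$, so $\mathbf{N}^{\circ}\in\mathcal{C}_{\sigma,\mathbf{X}}$.

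The only point requiring care is that the supremum is actually attained, so that a value of $1$ is not merely approached. This is exactly where closedness of the conic hull $\mathcal{C}_{\sigma,\mathbf{X}}$ matters. Without it, a supremum of $1$ would only place $\mathbf{N}^{\circ}$ in the closure of the realizable normalized geometries, and the closure characterization in \cref{lem:limiting-normalized-geometries-within-a-symmetry-orbit} would still be needed to identify that closure with $K$. I would therefore cite the compactness argument and the max formula from the proof of \cref{prop:function-independent-similarity-limits} explicitly, rather than rederive them.
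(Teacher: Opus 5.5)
Your proposal is correct and follows the paper's proof: both take the maximum formula for $\sup\mathcal{S}(\mathbf{N})$ and $\sup\mathcal{S}_{N_{h}}(\mathbf{N})$ from the proof of \cref{prop:function-independent-similarity-limits} and conclude via Cauchy--Schwarz and its equality case. Your separate sufficiency and necessity steps, with attainment of the maximum coming from compactness, simply spell out what the paper states in one line.
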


\begin{proof}
  By the proof of \cref{prop:function-independent-similarity-limits}, $\sup\mathcal{S}(\mathbf{N})$ and, for every $N_{h} \geq N_{h}^{\star} + d$, $\sup\mathcal{S}_{N_{h}}(\mathbf{N})$ both equal
  \begin{equation}
    \max\set{
      \langle \mathbf{Y}, \mathbf{N}^{\circ} \rangle_{F}
      \given
      \mathbf{Y}\in\mathcal{C}_{\sigma,\mathbf{X}},
      \;
      \norm{\mathbf{Y}}_{F} = 1
    }.
  \end{equation}
  Since $\norm{\mathbf{Y}}_{F} = \norm{\mathbf{N}^{\circ}}_{F} = 1$, the Cauchy--Schwarz inequality gives $\langle \mathbf{Y}, \mathbf{N}^{\circ} \rangle_{F} \leq 1$, with equality if and only if $\mathbf{Y} = \mathbf{N}^{\circ}$.
  Hence the maximum equals one if and only if $\mathbf{N}^{\circ} \in \mathcal{C}_{\sigma,\mathbf{X}}$, proving both claims.
\end{proof}

This criterion places no restriction on how the reference geometry $\mathbf{N}$ is generated.
It depends only on its centered, Frobenius-normalized form and the cone determined by $\sigma$ and $\mathbf{X}$.

\paragraph{References generated with the same activation.}
When the reference geometry is generated by a network with the same activation $\sigma$, the cone-membership condition follows directly from its neuron-wise \ac{rsm} decomposition.

\begin{corollary}[Same-activation reference geometries]
  \label[corollary]{cor:alignment-with-same-activation-reference-geometries}
  Suppose that $\mathbf{N}$ is the \ac{rsm} of a width-$L$ network with activation $\sigma$, evaluated on $\mathbf{X}$.
  Then every symmetry orbit with activation $\sigma$ satisfies $\sup\mathcal{S}(\mathbf{N}) = 1$.
  For each such orbit, let $N_{h}^{\star}$ denote its minimum width.
  Moreover, $\sup\mathcal{S}_{N_{h}}(\mathbf{N}) = 1$ for every $N_{h} \geq N_{h}^{\star} + \min\set{d,L}$.
\end{corollary}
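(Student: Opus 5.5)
The plan is to show that $\mathbf{N}^{\circ}$ lies in the cone generated by at most $\min\set{d,L}$ elements of $\mathcal{G}_{\sigma,\mathbf{X}}$. Once that is established, \cref{cor:alignment-with-reference-geometry} gives the first claim. For the width claim, I will redo the realization step of \cref{lem:limiting-normalized-geometries-within-a-symmetry-orbit} using only that many added features.

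First, write the reference network's hidden rows as $\mathbf{n}_{1}^{\top},\dots,\mathbf{n}_{L}^{\top}$. Each row has the form $\sigma(\overline{\mathbf{w}}_{\ell}^{\top}\overline{\mathbf{X}})$, and this covers constant neurons as well, since then $\mathbf{w}_{\ell}=\mathbf{0}$. Then $\mathbf{N}=\sum_{\ell}\mathbf{n}_{\ell}\mathbf{n}_{\ell}^{\top}$. By linearity of $\Pi_{\Hol}$,
\begin{equation}
\Pi_{\Hol}(\mathbf{N})=\sum_{\ell=1}^{L}\Pi_{\Hol}(\mathbf{n}_{\ell}\mathbf{n}_{\ell}^{\top})\in\cone(\mathcal{G}_{\sigma,\mathbf{X}}).
\end{equation}
By the standing assumption, $\mathbf{N}$ has nonconstant strict upper-triangular entries, so $\Pi_{\Hol}(\mathbf{N})\neq\mathbf{0}$. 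Dividing by its norm gives $\mathbf{N}^{\circ}\in\cone(\mathcal{G}_{\sigma,\mathbf{X}})\subseteq\mathcal{C}_{\sigma,\mathbf{X}}$. \Cref{cor:alignment-with-reference-geometry} then gives $\sup\mathcal{S}(\mathbf{N})=1$ for every orbit, because the cone does not depend on the orbit.

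For the width bound, the representation above uses at most $L$ generators. The generators also lie in the $d$-dimensional space $\spanop\mathcal{C}_{\sigma,\mathbf{X}}$, so the conic Carathéodory theorem reduces this to at most $d$ generators. Together these give $\mathbf{N}^{\circ}=\sum_{j=1}^{k}t_{j}\Pi_{\Hol}(\mathbf{u}_{j}\mathbf{u}_{j}^{\top})$ with $k=\min\set{d,L}$, $t_{j}\ge0$, and single-neuron features $\mathbf{u}_{j}$.

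Next, take an irreducible $\boldsymbol{\theta}^{\star}$ and apply \cref{lem:scaling-and-zero-readout-feature-addition} $k$ times. The first application scales by $\lambda_{i}\to0$ and each later one by $1$, with the same fixed coefficients $t_{j}$ throughout. This produces \acp{rsm} $\mathbf{M}_{i}=\lambda_{i}\mathbf{M}_{\boldsymbol{\theta}^{\star}}+\sum_{j}t_{j}\mathbf{u}_{j}\mathbf{u}_{j}^{\top}$ at width $N_{h}^{\star}+k$. Then $\Pi_{\Hol}(\mathbf{M}_{i})\to\mathbf{N}^{\circ}$, which has norm one. Hence $\mathbf{M}_{i}\in\mathcal{R}_{N_{h}^{\star}+k}$ for large $i$, and $\rho(\mathbf{M}_{i},\mathbf{N})=\langle\mathbf{M}_{i}^{\circ},\mathbf{N}^{\circ}\rangle_{F}\to1$ by \cref{lem:pearson-as-cosine}. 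This gives $\sup\mathcal{S}_{N_{h}^{\star}+k}(\mathbf{N})=1$. Nesting from \cref{prop:monotonicity-of-representational-ambiguity} extends the equality to every $N_{h}\ge N_{h}^{\star}+k$.

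The only delicate point is that the sup equals $1$ rather than being attained. The limit is needed because $\lambda_{i}>0$ is required. If $\lambda_{i}=0$ were allowed, it would collapse $\boldsymbol{\theta}^{\star}$'s features and leave the orbit, since positive scaling needs $\lambda>0$. The limiting argument above handles this cleanly.
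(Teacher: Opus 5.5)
Your proof is correct and uses the same ingredients as the paper's: cone membership of $\mathbf{N}^{\circ}$ via linearity of $\Pi_{\Hol}$, zero-readout feature addition through \cref{lem:scaling-and-zero-readout-feature-addition} with the scale $\lambda\to0$, and nesting. The paper proves the widths $N_{h}^{\star}+d$ (via the closure lemma) and $N_{h}^{\star}+L$ (by appending all $L$ reference features) separately and then combines them by nesting; you instead apply conic Carath\'eodory directly to the finite representation of $\mathbf{N}^{\circ}$ by the reference network's own generators, which gives $\min\{d,L\}$ in a single construction without an approximating sequence.
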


\begin{proof}
  Let $\mathbf{v}_{1}^{\top}, \dots, \mathbf{v}_{L}^{\top}$ denote the hidden feature rows of the reference network, so $\mathbf{N} = \sum_{j} \mathbf{v}_{j}\mathbf{v}_{j}^{\top}$.
  By linearity of $\Pi_{\Hol}$,
  \begin{equation}
    \Pi_{\Hol}(\mathbf{N})
    = \sum_{j=1}^{L}
    \Pi_{\Hol}(
      \mathbf{v}_{j}\mathbf{v}_{j}^{\top}
    )
    \in \cone(
      \mathcal{G}_{\sigma,\mathbf{X}}
    )
    \subseteq \mathcal{C}_{\sigma,\mathbf{X}}.
  \end{equation}
  Since the similarity to $\mathbf{N}$ is defined, $\Pi_{\Hol}(\mathbf{N}) \neq \mathbf{0}$.
  Because $\mathcal{C}_{\sigma,\mathbf{X}}$ is a cone,
  \begin{equation}
    \mathbf{N}^{\circ}
    = \frac{
      \Pi_{\Hol}(\mathbf{N})
    }{
      \norm{\Pi_{\Hol}(\mathbf{N})}_{F}
    }
    \in \mathcal{C}_{\sigma,\mathbf{X}}.
  \end{equation}
  Hence, by \cref{cor:alignment-with-reference-geometry}, $\sup\mathcal{S}(\mathbf{N}) = 1$ and $\sup\mathcal{S}_{N_{h}}(\mathbf{N}) = 1$ for every $N_{h} \geq N_{h}^{\star} + d$.

  We now give a second construction that yields the sufficient width $N_{h}^{\star} + L$.
  Choose an irreducible parameterization in the fixed orbit with \ac{rsm} $\mathbf{M}_{0}$.
  Iterating \cref{lem:scaling-and-zero-readout-feature-addition} adds the $L$ reference features with zero readouts and, for every $\lambda>0$, realizes
  \begin{equation}
    \mathbf{M}_{\lambda}
    = \lambda\mathbf{M}_{0}
    + \sum_{j=1}^{L}
    \mathbf{v}_{j} \mathbf{v}_{j}^{\top}
    = \lambda\mathbf{M}_{0}
    + \mathbf{N}
  \end{equation}
  at width $N_{h}^{\star}+L$ within the same symmetry orbit.
  By linearity of $\Pi_{\Hol}$,
  \begin{equation}
    \lim_{\lambda\downarrow0}
    \Pi_{\Hol}(\mathbf{M}_{\lambda})
    = \lim_{\lambda\downarrow0} \bigl(
      \lambda\Pi_{\Hol}(\mathbf{M}_{0})
      + \Pi_{\Hol}(\mathbf{N})
    \bigr)
    =
    \Pi_{\Hol}(\mathbf{N}).
  \end{equation}
  Since $\Pi_{\Hol}(\mathbf{N}) \neq \mathbf{0}$, the convergence above implies that $\Pi_{\Hol}(\mathbf{M}_{\lambda}) \neq \mathbf{0}$ for all sufficiently small $\lambda>0$.
  Hence $\mathbf{M}_{\lambda} \in \mathcal{R}_{N_{h}^{\star}+L}$ for such $\lambda$.
  By continuity of normalization,
  \begin{equation}
    \lim_{\lambda\downarrow0}
    \mathbf{M}_{\lambda}^{\circ}
    = \frac{
      \Pi_{\Hol}(\mathbf{N})
    }{
      \norm{\Pi_{\Hol}(\mathbf{N})}_{F}
    }
    = \mathbf{N}^{\circ}.
  \end{equation}
  Therefore, by \cref{lem:pearson-as-cosine} and continuity of the Frobenius inner product,
  \begin{equation}
    \lim_{\lambda\downarrow0}
    \rho(\mathbf{M}_{\lambda},\mathbf{N})
    = \langle
      \mathbf{N}^{\circ},
      \mathbf{N}^{\circ}
    \rangle_{F}
    = 1.
  \end{equation}
  Since Pearson correlation is bounded above by $1$, $\sup\mathcal{S}_{N_{h}^{\star}+L}(\mathbf{N}) = 1$.
  Combining this with the sufficient width $N_{h}^{\star}+d$ established above and using nesting gives $\sup\mathcal{S}_{N_{h}}(\mathbf{N}) = 1$ for every $N_{h} \geq N_{h}^{\star} + \min\set{d,L}$, completing the proof.
\end{proof}

Note that the reference network's realized function plays no role in this construction.
Recall that \cref{fig:primitives} illustrates this mechanism by adding, with zero readout, a feature taken from an opposite-family solution (i.e., from the reference network giving rise to $\mathbf{N}$) in \cref{fig:dissociation}.
Although this feature leaves the realized function unchanged, its rank-one \ac{rsm} contribution is nearly perfectly correlated with the reference geometry $\mathbf{N}$; increasing its relative weight through duplication and scaling therefore makes this auxiliary contribution increasingly dominate the full \ac{rsm}.
This is precisely the mechanism formalized by \cref{cor:alignment-with-same-activation-reference-geometries}: reference features can be added without altering the computation and then amplified until their geometry dominates the representation.

\paragraph{A sufficient condition for arbitrary-reference alignment.}
The preceding result guarantees alignment when the reference geometry is itself generated by a network with activation $\sigma$.
For an arbitrary reference geometry, however, membership of $\mathbf{N}^{\circ}$ in $\mathcal{C}_{\sigma,\mathbf{X}}$ is not automatic.
We therefore ask when the activation and probe inputs make the closed projected feature cone large enough to contain every possible centered, Frobenius-normalized reference geometry.
Affine independence of the probe inputs provides a sufficient condition: in this case, $\mathcal{C}_{\sigma,\mathbf{X}}$ coincides with $\Hol$.

\begin{corollary}[Affinely independent probe inputs]
  \label[corollary]{cor:alignment-with-affinely-independent-inputs}
  Suppose that $\operatorname{rank}(\overline{\mathbf{X}}) = P$.
  Then $\mathcal{C}_{\sigma,\mathbf{X}} = \Hol$.
  Consequently, for every symmetry orbit and every reference geometry $\mathbf{N}$ with nonconstant strict upper-triangular entries,
  \begin{equation}
    \inf\mathcal{S}(\mathbf{N}) = -1,
    \qquad
    \sup\mathcal{S}(\mathbf{N}) = 1.
  \end{equation}
  For each such orbit, let $N_{h}^{\star}$ denote its minimum width.
  Then,
  \begin{equation}
    \inf\mathcal{S}_{N_{h}}(\mathbf{N}) = -1,
    \qquad
    \sup\mathcal{S}_{N_{h}}(\mathbf{N}) = 1,
  \end{equation}
  for every $N_{h} \geq N_{h}^{\star} + \nicefrac{P(P-1)}{2} - 1$.
\end{corollary}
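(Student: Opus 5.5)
The plan is to show $\mathcal{C}_{\sigma,\mathbf{X}} = \Hol$ and then read off the similarity bounds from \cref{lem:limiting-normalized-geometries-within-a-symmetry-orbit} and the proof of \cref{prop:function-independent-similarity-limits}. Throughout, $\sigma$ is positively $1$-homogeneous and nonlinear, so $\sigma(z) = \delta\abs{z} + mz$ with $\delta \neq 0$, and hence $\lambda_{+} \neq \lambda_{-}$.

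\textbf{Realizable feature vectors.} Because $\operatorname{rank}(\overline{\mathbf{X}}) = P$, the map $\overline{\mathbf{w}} \mapsto \overline{\mathbf{w}}^{\top}\overline{\mathbf{X}}$ from $\mathbb{R}^{N_{i}+1}$ onto $\mathbb{R}^{1\times P}$ is surjective. Every preactivation vector $\mathbf{z}\in\mathbb{R}^{P}$ is therefore realizable, and the realizable features are exactly the vectors $\sigma(\mathbf{z})$ with $\sigma$ applied entrywise. I would use features supported on at most two inputs. Let $\mathbf{z} = s\mathbf{e}_{\mu}$. If $\lambda_{+}\neq 0$, take $s=1$; otherwise $\lambda_{-} \neq 0$ and take $s=-1$. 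This gives a feature $c\,\mathbf{e}_{\mu}$ with $c\neq 0$. Applying the same sign choice to $\mathbf{z} = s(\mathbf{e}_{\mu}+\mathbf{e}_{\nu})$ gives $c(\mathbf{e}_{\mu}+\mathbf{e}_{\nu})$.

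\textbf{Generators and the cone.} The corresponding rank-one matrices are $c^{2}\mathbf{e}_{\mu}\mathbf{e}_{\mu}^{\top}$ and $c^{2}(\mathbf{e}_{\mu}+\mathbf{e}_{\nu})(\mathbf{e}_{\mu}+\mathbf{e}_{\nu})^{\top}$. The first is diagonal, so its projection $\Pi_{\Hol}$ vanishes. The second has off-diagonal part $c^{2}\mathbf{E}_{\mu\nu}$, where $\mathbf{E}_{\mu\nu}\coloneqq\mathbf{e}_{\mu}\mathbf{e}_{\nu}^{\top}+\mathbf{e}_{\nu}\mathbf{e}_{\mu}^{\top}$. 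Its projection is therefore $c^{2}\Pi_{\Hol}(\mathbf{E}_{\mu\nu})$, which is $\mathbf{E}_{\mu\nu}$ minus its off-diagonal mean. Hence $\mathcal{G}_{\sigma,\mathbf{X}}$ contains positive multiples of $\mathbf{G}_{\mu\nu}\coloneqq\Pi_{\Hol}(\mathbf{E}_{\mu\nu})$ for all $\mu<\nu$. These satisfy $\sum_{\mu<\nu}\mathbf{G}_{\mu\nu} = \Pi_{\Hol}(\mathbf{1}\mathbf{1}^{\top}-\mathbf{I}) = \mathbf{0}$, and they span $\Hol$, since $\Pi_{\Hol}$ maps the span of the $\mathbf{E}_{\mu\nu}$, i.e.\ all hollow symmetric matrices, onto $\Hol$. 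A spanning set whose sum is zero with all coefficients positive has conic hull equal to its span. To see this, take any $\mathbf{Y}=\sum a_{\mu\nu}\mathbf{G}_{\mu\nu}$ and add $t\sum\mathbf{G}_{\mu\nu}=\mathbf{0}$ with $t$ large enough that every coefficient becomes nonnegative. Thus $\cone(\mathcal{G}_{\sigma,\mathbf{X}})\supseteq\Hol$. Since $\mathcal{C}_{\sigma,\mathbf{X}}\subseteq\Hol$ by definition, $\mathcal{C}_{\sigma,\mathbf{X}}=\Hol$, and in particular $d=\dim\Hol = \nicefrac{P(P-1)}{2}-1$.

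\textbf{Consequences.} Let $\mathbf{N}$ have nonconstant strict upper-triangular entries, so that $\mathbf{N}^{\circ}$ is defined. Both $\pm\mathbf{N}^{\circ}$ are unit vectors of $\Hol=\mathcal{C}_{\sigma,\mathbf{X}}$. By the proof of \cref{prop:function-independent-similarity-limits}, $\sup\mathcal{S}(\mathbf{N})$ is the maximum of $\langle\mathbf{Y},\mathbf{N}^{\circ}\rangle_{F}$ over unit $\mathbf{Y}\in\mathcal{C}_{\sigma,\mathbf{X}}$, and this maximum is $1$. The infimum is the corresponding minimum, which is $-1$. The same equalities hold for $\mathcal{S}_{N_{h}}$ whenever $N_{h}\geq N_{h}^{\star}+d = N_{h}^{\star}+\nicefrac{P(P-1)}{2}-1$. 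For the upper bound one can alternatively invoke \cref{cor:alignment-with-reference-geometry}, and for the lower bound the same corollary applied with $-\mathbf{N}^{\circ}$.

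\textbf{Main obstacle.} The only delicate point is the conic-hull step. Spanning alone does not give a cone equal to $\Hol$, so the argument rests on the zero-sum relation among the $\mathbf{G}_{\mu\nu}$ combined with all of them being available with positive weights. The edge case $P=2$ needs checking separately. There $\Hol=\{\mathbf{0}\}$, so no matrix has nonconstant strict upper-triangular entries, the hypothesis on $\mathbf{N}$ is vacuous, and the statement holds trivially. I would also confirm that the nonlinearity assumption ($\delta\neq0$) is used only to guarantee a nonzero slope $c$ on some half-line.
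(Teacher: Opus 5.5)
Your proposal is correct and follows essentially the same route as the paper. The paper also uses surjectivity of $\overline{\mathbf{w}} \mapsto \overline{\mathbf{w}}^{\top}\overline{\mathbf{X}}$ to realize nonnegative features up to sign, and writes each $\mathbf{Y} \in \Hol$ as $\Pi_{\Hol}$ of $\sum_{\mu<\nu}(Y_{\mu\nu}+c)(\mathbf{e}_{\mu}+\mathbf{e}_{\nu})(\mathbf{e}_{\mu}+\mathbf{e}_{\nu})^{\top}$, where the constant shift $c$ makes every coefficient nonnegative and is exactly your zero-sum relation $\sum_{\mu<\nu}\mathbf{G}_{\mu\nu}=\mathbf{0}$. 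It then reads off the bounds from \cref{lem:limiting-normalized-geometries-within-a-symmetry-orbit} with $d=\dim\Hol$, as you do; the only difference is that you realize just the two-point indicator features via a sign choice on $\lambda_{\pm}$, rather than all nonnegative vectors via some $z_{0}$ with $\sigma(z_{0})\neq 0$, which is an inessential simplification.
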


\begin{proof}
  We prove the result in two steps.
  First, the rank condition allows every nonnegative vector in $\mathbb{R}^{P}$ to be realized, up to sign, as a single-neuron feature.
  We then use the corresponding projected rank-one contributions to show that $\mathcal{C}_{\sigma,\mathbf{X}} = \Hol$, from which the stated similarity bounds follow.

  We start by showing that every nonnegative vector in $\mathbb{R}^{P}$ can be realized, up to a common sign, as a single-neuron feature.
  Choose $z_{0} \in \mathbb{R}$ such that $\sigma(z_{0}) \neq 0$.
  For any $\mathbf{v} \in \mathbb{R}_{\geq0}^{P}$, the rank assumption $\operatorname{rank}(\overline{\mathbf{X}}) = P$ guarantees the existence of incoming parameters $\overline{\mathbf{w}} \in \mathbb{R}^{N_{i}+1}$
  satisfying
  \begin{equation}
    \overline{\mathbf{w}}^{\top} \overline{\mathbf{X}} 
    = (
      \overline{\mathbf{X}}^{\top} \overline{\mathbf{w}}
    )^{\top}
    = \frac{z_{0}}{\abs{\sigma(z_{0})}}
    \mathbf{v}^{\top}.
  \end{equation}
  Since $\mathbf{v}$ is nonnegative, positive $1$-homogeneity gives
  \begin{equation}
    \sigma(
      \overline{\mathbf{w}}^{\top}\overline{\mathbf{X}}
    )
    = \frac{\sigma(z_{0})}{\abs{\sigma(z_{0})}}
    \mathbf{v}^{\top}
    = \pm \mathbf{v}^{\top}.
  \end{equation}
  Hence
  \begin{equation}
    \Pi_{\Hol}(
      \mathbf{v} \mathbf{v}^{\top}
    )
    \in \mathcal{G}_{\sigma,\mathbf{X}}
    \subseteq \mathcal{C}_{\sigma,\mathbf{X}}.
  \end{equation}

  We now turn to the second step and use these projected rank-one contributions to show that $\mathcal{C}_{\sigma,\mathbf{X}} = \Hol$.
  Fix any $\mathbf{Y}\in\Hol$, and let $\mathbf{e}_{\mu} \in \mathbb{R}^{P}$ denote the $\mu$th standard basis vector.
  Choose $c \in \mathbb{R}$ such that $Y_{\mu\nu} + c \geq 0$ for every $\mu<\nu$, and define
  \begin{equation}
    \mathbf{B}
    \coloneqq
    \sum_{\mu<\nu}
    (
      Y_{\mu\nu} + c
    )
    (
      \mathbf{e}_{\mu} + \mathbf{e}_{\nu}
    )
    (
      \mathbf{e}_{\mu} + \mathbf{e}_{\nu}
    )^{\top}.
  \end{equation}
  Each coefficient $Y_{\mu\nu} + c$ is nonnegative, and each $\mathbf{e}_{\mu} + \mathbf{e}_{\nu}$ is a nonnegative vector.
  Hence, by the first step and closure of $\mathcal{C}_{\sigma,\mathbf{X}}$ under nonnegative linear combinations,
  \begin{equation}
    \Pi_{\Hol}(\mathbf{B})
    \in \mathcal{C}_{\sigma,\mathbf{X}}.
  \end{equation}
  For every $\mu\neq\nu$, the off-diagonal entries of $\mathbf{B}$ satisfy
  \begin{equation}
    B_{\mu\nu}
    = Y_{\mu\nu}+c.
  \end{equation}
  Since $\mathbf{Y}\in\Hol$, its off-diagonal entries have mean zero.
  Hence the off-diagonal entries of $\mathbf{B}$ have mean $c$, so $\Pi_{\Hol}$ subtracts $c$ from each of them and sets the diagonal to zero.
  Therefore,
  \begin{equation}
    \Pi_{\Hol}(\mathbf{B})
    = \mathbf{Y}.
  \end{equation}
  Since $\Pi_{\Hol}(\mathbf{B}) \in \mathcal{C}_{\sigma,\mathbf{X}}$, it follows that $\mathbf{Y} \in \mathcal{C}_{\sigma,\mathbf{X}}$.
  As $\mathbf{Y} \in \Hol$ was arbitrary, $\Hol \subseteq \mathcal{C}_{\sigma,\mathbf{X}}$.
  The reverse inclusion holds by definition, and hence $\mathcal{C}_{\sigma,\mathbf{X}} = \Hol$.
  Since $\mathcal{C}_{\sigma,\mathbf{X}} = \Hol$, both $\mathbf{N}^{\circ}$ and $-\mathbf{N}^{\circ}$ belong to $\mathcal{C}_{\sigma,\mathbf{X}}$ and have unit Frobenius norm.
  By \cref{lem:limiting-normalized-geometries-within-a-symmetry-orbit}, both can therefore be approached by centered, Frobenius-normalized \acp{rsm} in every symmetry orbit for every $N_{h} \geq N_{h}^{\star}+d$.
  Their similarities to $\mathbf{N}$ are, respectively,
  \begin{equation}
    \langle
      \mathbf{N}^{\circ}, \mathbf{N}^{\circ}
    \rangle_{F}
    = 1,
    \qquad
    \langle
      -\mathbf{N}^{\circ}, \mathbf{N}^{\circ}
    \rangle_{F}
    = -1.
  \end{equation}
  Since Pearson correlation takes values in $[-1,1]$, it follows that
  \begin{equation}
    \inf\mathcal{S}(\mathbf{N}) = -1,
    \qquad
    \sup\mathcal{S}(\mathbf{N}) = 1,
  \end{equation}
  and the same bounds hold for every $N_{h} \geq N_{h}^{\star}+d$.
  Finally, $\mathcal{C}_{\sigma,\mathbf{X}} = \Hol$ implies
  \begin{equation}
    d
    = \dim\Hol
    = \frac{P(P-1)}{2} - 1,
  \end{equation}
  which gives the stated finite-width bound and completes the proof.
\end{proof}

\paragraph{An obstruction to arbitrary-reference alignment.}
The preceding corollary gives a sufficient condition under which every reference geometry admits similarities arbitrarily close to both $-1$ and $1$.
Without such a condition, however, the activation and probe inputs can restrict the projected feature cone and thereby prevent alignment with particular reference geometries.
The following simple example shows that this obstruction can persist regardless of width or realized function.

\begin{example}[A reference geometry with nonpositive similarity for ReLU networks]
  \label[example]{ex:reference-geometry-with-nonpositive-similarity-for-relu-networks}
  Consider ReLU networks with one-dimensional inputs
  \begin{equation}
    \mathbf{X} = (-1, 0, 1).
  \end{equation}
  Let the reference geometry be $\mathbf{N}=\mathbf{v}\mathbf{v}^{\top}$, where $\mathbf{v} = (1,0,1)^{\top}$.
  We show that every realizable ReLU \ac{rsm} $\mathbf{M}$ on these inputs satisfies $\rho(\mathbf{M}, \mathbf{N}) \leq 0$ whenever the correlation is defined.

  Write $\psi(z) = \max(0,z)$.
  A single-neuron feature on the three inputs has entries
  \begin{equation}
    u_{1} = \psi(b-w),
    \qquad
    u_{2} = \psi(b),
    \qquad
    u_{3} = \psi(b+w).
  \end{equation}
  We first establish the inequality
  \begin{equation}
    u_{1}u_{2} + u_{2}u_{3} - 2u_{1}u_{3} \geq 0.
  \end{equation}
  If $u_{1}u_{3} = 0$, the inequality follows immediately from $u_{1}, u_{2}, u_{3} \geq 0$.
  Otherwise, $u_{1}>0$ and $u_{3}>0$, so
  \begin{equation}
    b-w>0,
    \qquad
    b+w>0.
  \end{equation}
  Adding these inequalities gives $b>0$.
  Hence all three preactivations are positive, and therefore
  \begin{equation}
    u_{1} = b-w,
    \qquad
    u_{2} = b,
    \qquad
    u_{3} = b+w.
  \end{equation}
  Substituting these expressions gives
  \begin{equation}
    u_{1}u_{2} + u_{2}u_{3} - 2u_{1}u_{3}
    = (b-w)b + b(b+w) - 2(b-w)(b+w)
    = 2w^{2}
    \geq 0.
  \end{equation}
  Thus, for every hidden neuron $j$ with feature entries $u_{j1}, u_{j2}, u_{j3}$,
  \begin{equation}
    u_{j1}u_{j2} + u_{j2}u_{j3} - 2u_{j1}u_{j3}
    \geq 0.
  \end{equation}
  Since $M_{\mu\nu} = \sum_{j} u_{j\mu} u_{j\nu}$, summing over neurons yields
  \begin{equation}
    M_{12} + M_{23} - 2M_{13}
    \geq 0.
  \end{equation}
  The strict upper-triangular entries of $\mathbf{N}$ are $(0, 1, 0)$, with mean $\nicefrac{1}{3}$.
  Therefore,
  \begin{equation}
    \langle
      \Pi_{\Hol}(\mathbf{M}),
      \Pi_{\Hol}(\mathbf{N})
    \rangle_{F}
    = \frac{2}{3}
    (
      2M_{13} - M_{12} - M_{23}
    )
    \leq 0,
  \end{equation}
  where the inequality follows from $M_{12} + M_{23} - 2M_{13} \geq 0$.
  Whenever $\rho(\mathbf{M},\mathbf{N})$ is defined, the denominator in \cref{lem:pearson-as-cosine} is strictly positive.
  Hence,
  \begin{equation}
    \rho(\mathbf{M},\mathbf{N})
    \leq 0.
  \end{equation}
  Thus no ReLU network on these probe inputs can achieve positive similarity to $\mathbf{N}$, regardless of width or realized function.
\end{example}

This example shows that the alignment criterion in \cref{cor:alignment-with-reference-geometry} imposes a genuine restriction: increasing width cannot overcome constraints imposed by the activation and probe inputs.
\space

\section{Identifiability through minimum-norm selection}
\label[appendix]{app-sec:identifiability-through-minimum-norm-selection}

This section provides proofs and extensions of the identifiability results in \cref{sec:identifiability-through-minimum-norm-selection}.
We first characterize the representational geometries selected by minimum-weight-norm selection for positively $1$-homogeneous activations, prove a sufficient condition for representational identifiability, and examine its implications for the analytical ReLU solutions (\cref{app-subsec:mwnp-representational-geometry}).
We then develop the corresponding minimum-representation-norm analysis, making explicit how the finite input set affects both representational identifiability and attainment of the minimum (\cref{app-subsec:mrnp-representational-geometry}).

\subsection{Minimum weight norm for positively \texorpdfstring{$1$-homogeneous}{1-homogeneous} activations}
\label[appendix]{app-subsec:mwnp-representational-geometry}

To establish \cref{prop:mwnp-representational-geometry,cor:mwnp-representational-identifiability} we derive a sharp orbit-wide lower bound on the weight-norm objective and characterize when equality is attained.
Throughout this subsection, fix a symmetry orbit $\mathcal{O}$ with nonlinear, positively $1$-homogeneous activation $\sigma(z) = \delta\abs{z} + mz$, essential parameter classes $\mathcal{E}$, and residual $\mathbf{r}$.
We adopt the unit-norm representatives $\overline{\mathbf{w}}_{q}$, feature vectors $\mathbf{f}_{q}^{\pm}$, and split sets $\mathcal{T}$ and $\mathcal{T}_{N_{h}}$ from \cref{sec:identifiability-through-minimum-norm-selection}.

Fix $\boldsymbol{\theta} \in \mathcal{O}_{N_{h}}$ and an essential parameter class $q \in \mathcal{E}$, and write $\alpha_{j} \coloneqq \norm{\overline{\mathbf{w}}_{j}} > 0$.
Relative to the unit-norm representative $\overline{\mathbf{w}}_{q}$, every neuron $j \in \mathcal{J}_{q}$ then satisfies $\overline{\mathbf{w}}_{j} = \alpha_{j} \overline{\mathbf{w}}_{q}$ if $j \in \mathcal{J}_q^{+}$ and $\overline{\mathbf{w}}_{j} = -\alpha_{j} \overline{\mathbf{w}}_{q}$ if $j \in \mathcal{J}_{q}^{-}$.
By positive $1$-homogeneity, $\alpha_{j}\mathbf{a}_{j}$ is the effective readout of neuron $j$ on its unit-norm feature orientation.
The class invariant therefore gives
\begin{equation}
  \label{eq:mwnp-class-readout-constraint}
  \sum_{j\in\mathcal{J}_{q}}
  \alpha_{j}\mathbf{a}_{j}
  = \boldsymbol{\beta}_{q}.
\end{equation}

\begin{lemma}[Weight-norm lower bound and equality conditions]
  \label[lemma]{lem:mwnp-lower-bound-and-equality-conditions}
  Every $\boldsymbol{\theta} \in \mathcal{O}_{N_{h}}$ satisfies
  \begin{equation}
    \label{eq:mwnp-orbit-lower-bound}
    \Omega_{W}(\boldsymbol{\theta})
    = \norm{\mathbf{W}}_{F}^{2}
    + \norm{\mathbf{b}}^{2}
    + \norm{\mathbf{A}}_{F}^{2}
    \geq 2 \sum_{q \in \mathcal{E}} \norm{\boldsymbol{\beta}_{q}}.
\end{equation}
  Equality holds if and only if the following three conditions are satisfied:
  \begin{enumerate}
    \item Every neuron outside the essential parameter classes has $\overline{\mathbf{w}}_{j} = \mathbf{0}$ and $\mathbf{a}_{j} = \mathbf{0}$.
    \item Every neuron $j \in \mathcal{J}_{q}$, $q \in \mathcal{E}$, is norm-balanced:
    \begin{equation}
      \alpha_{j} = \norm{\mathbf{a}_{j}}.
    \end{equation}
    \item For every $q \in \mathcal{E}$, there exist coefficients $c_{j} \geq 0$, $j \in \mathcal{J}_{q}$, such that
    \begin{equation}
      \alpha_{j}\mathbf{a}_{j}
      = c_{j}\boldsymbol{\beta}_{q},
      \qquad
      \sum_{j \in \mathcal{J}_{q}} c_{j} = 1.
    \end{equation}
  \end{enumerate}
\end{lemma}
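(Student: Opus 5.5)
The bound follows from an AM--GM inequality applied neuron by neuron, chained with the triangle inequality inside each essential class. First, split $\Omega_{W}$ into neuron-wise terms: $\Omega_{W}(\boldsymbol{\theta}) = \sum_{j} \bigl( \norm{\overline{\mathbf{w}}_{j}}^{2} + \norm{\mathbf{a}_{j}}^{2} \bigr)$. Group the neurons into three sets: those in essential classes $\mathcal{J}_{q}$, $q \in \mathcal{E}$; those in nonessential classes; and the constant neurons $\mathcal{J}_{0}$. Neurons outside the essential classes contribute nonnegative terms, and we simply drop them, noting that each dropped term vanishes iff $\overline{\mathbf{w}}_{j} = \mathbf{0}$ and $\mathbf{a}_{j} = \mathbf{0}$. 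This is exactly condition (i).

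For $j \in \mathcal{J}_{q}$, write $\alpha_{j} = \norm{\overline{\mathbf{w}}_{j}}$ and apply AM--GM:
\begin{equation}
  \alpha_{j}^{2} + \norm{\mathbf{a}_{j}}^{2} \geq 2\alpha_{j}\norm{\mathbf{a}_{j}} = 2\norm{\alpha_{j}\mathbf{a}_{j}},
\end{equation}
with equality iff $\alpha_{j} = \norm{\mathbf{a}_{j}}$. This is the balance condition (ii). Summing over $\mathcal{J}_{q}$ and using the triangle inequality together with the class readout constraint \cref{eq:mwnp-class-readout-constraint} gives
\begin{equation}
  \sum_{j \in \mathcal{J}_{q}} 2\norm{\alpha_{j}\mathbf{a}_{j}} \geq 2\norm{\boldsymbol{\beta}_{q}}.
\end{equation}
Summing over $q \in \mathcal{E}$ then yields \cref{eq:mwnp-orbit-lower-bound}.

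For the equality characterization, note that the total bound is a sum of three families of nonnegative slacks: the dropped terms, the AM--GM gaps, and the triangle-inequality gaps. Equality therefore holds iff each slack vanishes individually. It remains to characterize equality in the triangle inequality $\norm{\sum_{j} \mathbf{x}_{j}} = \sum_{j} \norm{\mathbf{x}_{j}}$ for $\mathbf{x}_{j} = \alpha_{j}\mathbf{a}_{j}$. In a Euclidean space, equality holds iff all nonzero $\mathbf{x}_{j}$ are nonnegative multiples of a common unit vector. Since $\sum_{j} \mathbf{x}_{j} = \boldsymbol{\beta}_{q} \neq \mathbf{0}$ because $q$ is essential, that common direction must be $\boldsymbol{\beta}_{q}/\norm{\boldsymbol{\beta}_{q}}$. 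Hence $\mathbf{x}_{j} = c_{j}\boldsymbol{\beta}_{q}$ with $c_{j} \geq 0$, and summing gives $\sum_{j} c_{j} = 1$. Conversely, these conditions clearly force equality in the triangle inequality. Together with (i) and (ii), this gives the stated three conditions. I would prove the Euclidean equality case briefly via strict convexity, i.e., the Cauchy--Schwarz equality case applied pairwise, since this is the only step requiring care.

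The main subtlety, though minor, is bookkeeping. Every nonconstant neuron lies in exactly one class, so the partition is clean. Essential-class neurons have $\alpha_{j} > 0$ automatically. Constant neurons have $\mathbf{w}_{j} = \mathbf{0}$ but possibly $b_{j} \neq 0$, so their term is $b_{j}^{2} + \norm{\mathbf{a}_{j}}^{2}$, which vanishes iff $\overline{\mathbf{w}}_{j} = \mathbf{0}$ and $\mathbf{a}_{j} = \mathbf{0}$, matching (i). No use of the residual $\mathbf{r}$ is needed for this lemma; attainability is deferred to \cref{prop:mwnp-representational-geometry}.
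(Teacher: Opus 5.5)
Your proposal is correct and is essentially the paper's proof. Both apply AM--GM neuron by neuron, then the triangle inequality within each essential class via the class readout constraint. Equality is characterized by the vanishing of each nonnegative slack, with the common direction of the effective readouts fixed by $\boldsymbol{\beta}_{q} \neq \mathbf{0}$. Your explicit bookkeeping for constant neurons (cost $b_{j}^{2} + \norm{\mathbf{a}_{j}}^{2}$) only spells out what the paper summarizes as adding the nonnegative contributions of all remaining neurons.
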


\begin{proof}
  For each essential class $q \in \mathcal{E}$, applying the arithmetic--geometric mean inequality followed by the triangle inequality gives
  \begin{equation}
    \label{eq:mwnp-class-lower-bound}
    \sum_{j \in \mathcal{J}_{q}} \bigl(\alpha_{j}^{2} + \norm{\mathbf{a}_{j}}^{2}\bigr)
    \geq 2 \sum_{j \in \mathcal{J}_{q}} \alpha_{j} \norm{\mathbf{a}_{j}}
    \geq 2 \norm{\sum_{j \in \mathcal{J}_{q}} \alpha_{j}\mathbf{a}_{j}}
    = 2 \norm{\boldsymbol{\beta}_{q}}.
  \end{equation}
  Summing over $q \in \mathcal{E}$ and adding the nonnegative contributions of all remaining neurons proves \cref{eq:mwnp-orbit-lower-bound}.

  Equality requires every neuron outside the essential parameter classes to have zero cost, giving the first condition.
  The first inequality in \cref{eq:mwnp-class-lower-bound} is an equality precisely when $\alpha_{j} = \norm{\mathbf{a}_{j}}$ for every $j \in \mathcal{J}_{q}$, since
  \begin{equation}
    \alpha_{j}^{2}
    + \norm{\mathbf{a}_{j}}^{2}
    - 2 \alpha_{j} \norm{\mathbf{a}_{j}}
    =
    \bigl(
      \alpha_{j}
      - \norm{\mathbf{a}_{j}}
    \bigr)^{2}.
  \end{equation}
  Because $\boldsymbol{\beta}_{q} \neq \mathbf{0}$, the second inequality in \cref{eq:mwnp-class-lower-bound} is an equality precisely when all effective readouts $\alpha_{j}\mathbf{a}_{j}$ are nonnegative multiples of $\boldsymbol{\beta}_{q}$.
  Together with \cref{eq:mwnp-class-readout-constraint}, this is equivalent to the third condition.
  Conversely, the three conditions turn every inequality above into an equality.
\end{proof}

Under the equality conditions of \cref{lem:mwnp-lower-bound-and-equality-conditions}, multiplying the norm-balance relation $\alpha_{j} = \norm{\mathbf{a}_{j}}$ by $\alpha_{j}$ and then using $\alpha_{j}\mathbf{a}_{j} = c_{j}\boldsymbol{\beta}_{q}$
gives
\begin{equation}
  \label{eq:mwnp-balanced-feature-weight}
  \alpha_{j}^{2}
  = \alpha_{j} \norm{\mathbf{a}_{j}}
  = \norm{\alpha_{j}\mathbf{a}_{j}}
  = c_{j} \norm{\boldsymbol{\beta}_{q}}.
\end{equation}
Thus, the squared scale of each essential feature is fixed by its share $c_{j}$ of the total class weight $\norm{\boldsymbol{\beta}_{q}}$, so the equality conditions directly determine the weights of the rank-one contributions to $\mathbf{H}^{\top}\mathbf{H}$.

\begin{proof}[Proof of \cref{prop:mwnp-representational-geometry}]
  We first characterize the parameterizations attaining the lower bound in \cref{lem:mwnp-lower-bound-and-equality-conditions}.
  For any such parameterization, define
  \begin{equation}
    t_{q}
    \coloneqq
    \sum_{j \in \mathcal{J}_{q}^{+}} c_{j}
    \in [0,1].
  \end{equation}
  The positive and negative orientations of class $q$ then carry effective readouts $t_{q}\boldsymbol{\beta}_{q}$ and $(1-t_{q})\boldsymbol{\beta}_{q}$, respectively.
  Since all neurons outside the essential parameter classes have zero parameters, the residual is
  \begin{equation}
    \mathbf{r}(\mathbf{x})
    = m \sum_{q \in \mathcal{E}}
    (2t_{q}-1)
    \boldsymbol{\beta}_{q}
    \overline{\mathbf{w}}_{q}^{\top}
    \overline{\mathbf{x}}.
  \end{equation}
  Hence $\mathbf{t} \in \mathcal{T}$.
  Each class requires at least one neuron, and every interior split $0 < t_{q} < 1$ requires a second neuron of the opposite orientation.
  Thus $\kappa(\mathbf{t}) \leq N_{h}$, and therefore $\mathbf{t} \in \mathcal{T}_{N_{h}}$.
  By positive $1$-homogeneity, each neuron $j \in \mathcal{J}_{q}^{\pm}$ contributes the feature row $\alpha_{j} (\mathbf{f}_{q}^{\pm})^{\top}$, and hence the rank-one term $\alpha_{j}^{2} \mathbf{f}_{q}^{\pm} (\mathbf{f}_{q}^{\pm})^{\top}$ to $\mathbf{H}^{\top}\mathbf{H}$.
  Summing these contributions within each orientation and using \cref{eq:mwnp-balanced-feature-weight} gives
  \begin{equation}
    \mathbf{H}^{\top}\mathbf{H}
    = \sum_{q \in \mathcal{E}} \norm{\boldsymbol{\beta}_{q}}
    \bigl(
      t_{q}\mathbf{f}_{q}^{+}(\mathbf{f}_{q}^{+})^{\top}
      + (1-t_{q})\mathbf{f}_{q}^{-}(\mathbf{f}_{q}^{-})^{\top}
    \bigr).
  \end{equation}
  By the first equality condition in \cref{lem:mwnp-lower-bound-and-equality-conditions}, all remaining neurons have zero incoming parameters and therefore contribute nothing because $\sigma(0)=0$.

  Conversely, fix any $\mathbf{t} \in \mathcal{T}_{N_{h}}$.
  For each $q \in \mathcal{E}$, introduce neurons with parameters
  \begin{equation}
    \label{eq:mwnp-balanced-construction}
    \begin{aligned}
      \overline{\mathbf{w}}_{q,+}
      &= \sqrt{t_{q} \norm{\boldsymbol{\beta}_{q}}}\,
         \overline{\mathbf{w}}_{q},
      &
      \mathbf{a}_{q,+}
      &= \sqrt{\frac{t_{q}}{\norm{\boldsymbol{\beta}_{q}}}}\,
         \boldsymbol{\beta}_{q}, \\
      \overline{\mathbf{w}}_{q,-}
      &= -\sqrt{(1-t_{q}) \norm{\boldsymbol{\beta}_{q}}}\,
         \overline{\mathbf{w}}_{q},
      &
      \mathbf{a}_{q,-}
      &= \sqrt{\frac{1-t_{q}}{\norm{\boldsymbol{\beta}_{q}}}}\,
         \boldsymbol{\beta}_{q},
    \end{aligned}
  \end{equation}
  omitting the positively oriented neuron when $t_{q}=0$ and the negatively oriented neuron when $t_{q}=1$.
  This uses exactly $\kappa(\mathbf{t})$ neurons.
  Adding $N_{h} - \kappa(\mathbf{t})$ neurons with all parameters set to zero yields width $N_{h}$.
  The two orientations have effective readouts $t_{q} \boldsymbol{\beta}_{q}$ and $(1-t_{q}) \boldsymbol{\beta}_{q}$, whose sum is $\boldsymbol{\beta}_{q}$.
  Moreover, $\mathbf{t} \in \mathcal{T}$ ensures that their residual contribution equals $\mathbf{r}$ for every input.
  Since no nonessential parameter class is introduced, \cref{prop:characterization-of-symmetry-equivalence} therefore places the constructed parameterization in $\mathcal{O}_{N_{h}}$.
  Every nonzero neuron in \cref{eq:mwnp-balanced-construction} is norm-balanced, and its effective readout is a nonnegative multiple of $\boldsymbol{\beta}_{q}$.
  Together with zero padding, the construction therefore satisfies all equality conditions of \cref{lem:mwnp-lower-bound-and-equality-conditions}, attains the lower bound, and realizes the stated \ac{rsm}.
  Since $\mathcal{T}_{N_{h}} \neq \emptyset$, the construction above shows that the lower bound is attained and therefore equals the minimum of $\Omega_{W}$ over $\mathcal{O}_{N_{h}}$.
  Every minimizer must satisfy the equality conditions of \cref{lem:mwnp-lower-bound-and-equality-conditions}, and every split in $\mathcal{T}_{N_{h}}$ is realized by the construction above.
  This proves both the minimum value and the exhaustive characterization of the corresponding \acp{rsm}.
\end{proof}

The proof also shows that $\mathcal{T}_{N_{h}} \neq \emptyset$ is necessary and sufficient for the lower bound in \cref{eq:mwnp-orbit-lower-bound} to be attained.
At equality, duplication can redistribute the coefficients $c_{j}$ within an orientation without changing their sum or the corresponding \ac{rsm} contribution.
Thus, uniqueness of the \ac{rsm} need not imply uniqueness of the parameterization.

Under the linear-independence condition of \cref{cor:mwnp-representational-identifiability}, the residual constraint in \cref{eq:residual-compatible-readout-splits} admits at most one split, eliminating the remaining freedom in \cref{eq:mwnp-representational-geometry}.

\begin{proof}[Proof of \cref{cor:mwnp-representational-identifiability}]
  Let $\mathbf{t}, \mathbf{s} \in \mathcal{T}$ be two residual-compatible splits.
  Subtracting their residual constraints in \cref{eq:residual-compatible-readout-splits} and using $m \neq 0$ gives
  \begin{equation}
    \left[
      \sum_{q \in \mathcal{E}}
      (t_{q}-s_{q})
      \boldsymbol{\beta}_{q}
      \overline{\mathbf{w}}_{q}^{\top}
    \right]
    \overline{\mathbf{x}}
    = \mathbf{0}
    \qquad
    \text{for every } \mathbf{x} \in \mathbb{R}^{N_{i}}.
  \end{equation}
  Since this affine map vanishes for every input, its coefficient matrix must vanish:
  \begin{equation}
    \sum_{q \in \mathcal{E}}
    (t_{q}-s_{q})
    \boldsymbol{\beta}_{q}
    \overline{\mathbf{w}}_{q}^{\top}
    = \mathbf{0}.
  \end{equation}
  Linear independence of the matrices $\boldsymbol{\beta}_{q}\overline{\mathbf{w}}_{q}^{\top}$, $q \in \mathcal{E}$, therefore implies $\mathbf{t} = \mathbf{s}$.
  Hence, $\mathcal{T}$ contains at most one split.
  Since $\mathcal{T}_{N_{h}} \neq \emptyset$, this split exists and is feasible at width $N_{h}$, so \cref{prop:mwnp-representational-geometry} gives a unique minimum-weight-norm \ac{rsm}.

  For every $L \geq N_{h}$, the same split remains feasible because $\kappa(\mathbf{t}) \leq N_{h} \leq L$.
  Since it is the unique element of $\mathcal{T}$, it is also the unique element of $\mathcal{T}_{L}$.
  Applying \cref{prop:mwnp-representational-geometry} again yields the same \ac{rsm}.
\end{proof}

For purely even activations, however, uniqueness does not require the residual constraint to determine the split, since the two orientations generate identical features.

\begin{corollary}[Identifiability for even positively homogeneous activations]
  \label[corollary]{cor:mwnp-identifiability-even-homogeneous-activations}
  Suppose $\sigma(z) = \delta\abs{z}$ with $\delta \neq 0$, and let $\mathcal{O}$ be a symmetry orbit with residual $\mathbf{r} = \mathbf{0}$.
  At every width $N_{h} \geq \abs{\mathcal{E}}$, the minimum of $\Omega_{W}$ is attained, and all \acp{mwnp} have the same \ac{rsm},
  \begin{equation}
    \mathbf{H}^{\top}\mathbf{H}
    = \sum_{q \in \mathcal{E}}
    \norm{\boldsymbol{\beta}_{q}}\,
    \mathbf{f}_{q}^{+} (\mathbf{f}_{q}^{+})^{\top}.
  \end{equation}
\end{corollary}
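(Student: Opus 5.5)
The plan is to reduce to \cref{prop:mwnp-representational-geometry}. For $\sigma(z) = \delta\abs{z}$ we have $m = 0$, so the constraint in \cref{eq:residual-compatible-readout-splits} reads $\mathbf{r}(\mathbf{x}) = \mathbf{0}$ for every $\mathbf{x}$. Since the orbit residual satisfies $\mathbf{r} = \mathbf{0}$ by assumption, every $\mathbf{t} \in [0,1]^{\abs{\mathcal{E}}}$ is residual-compatible, so $\mathcal{T} = [0,1]^{\abs{\mathcal{E}}}$.

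Next I would check feasibility. The vertex $\mathbf{t} = \mathbf{1}$ (or any vector in $\set{0,1}^{\abs{\mathcal{E}}}$) has $\kappa(\mathbf{t}) = \abs{\mathcal{E}}$. Hence $\mathcal{T}_{N_{h}} \neq \emptyset$ for every $N_{h} \geq \abs{\mathcal{E}}$. \Cref{prop:mwnp-representational-geometry} then gives attainment of the minimum $2\sum_{q}\norm{\boldsymbol{\beta}_{q}}$. It also shows that the \acp{rsm} of all \acp{mwnp} are exactly the matrices in \cref{eq:mwnp-representational-geometry} as $\mathbf{t}$ ranges over $\mathcal{T}_{N_{h}}$.

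The remaining step is to show that this expression does not depend on $\mathbf{t}$. Because $\sigma$ is even, $\sigma(-\overline{\mathbf{w}}_{q}^{\top}\overline{\mathbf{X}}) = \sigma(\overline{\mathbf{w}}_{q}^{\top}\overline{\mathbf{X}})$, so $\mathbf{f}_{q}^{-} = \mathbf{f}_{q}^{+}$. Each summand therefore collapses:
\[
t_{q}\mathbf{f}_{q}^{+}(\mathbf{f}_{q}^{+})^{\top} + (1-t_{q})\mathbf{f}_{q}^{-}(\mathbf{f}_{q}^{-})^{\top} = \mathbf{f}_{q}^{+}(\mathbf{f}_{q}^{+})^{\top}.
\]
This yields the stated unique \ac{rsm}.

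I do not expect a real obstacle here. The only point needing care is confirming that \cref{prop:mwnp-representational-geometry} applies with $m = 0$: the activation is still nonlinear positively $1$-homogeneous, with $\lambda_{\pm} = \pm\delta$, so $\delta \neq 0$ and $m = 0$. It is also worth noting that the conclusion holds for all $N_{h} \geq \abs{\mathcal{E}}$, and not merely beyond some threshold, because the feasible splits include the vertices.
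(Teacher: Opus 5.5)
Your proposal is correct and follows the paper's proof essentially step for step. Both arguments note that $m = 0$ and $\mathbf{r} = \mathbf{0}$ make every split residual-compatible, use a vertex split with $\kappa(\mathbf{t}) = \abs{\mathcal{E}}$ to get $\mathcal{T}_{N_{h}} \neq \emptyset$, and invoke $\mathbf{f}_{q}^{+} = \mathbf{f}_{q}^{-}$ to make \cref{eq:mwnp-representational-geometry} independent of $\mathbf{t}$ before applying \cref{prop:mwnp-representational-geometry}.
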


\begin{proof}
  Since $m = 0$ and $\mathbf{r} = \mathbf{0}$, every $\mathbf{t} \in [0,1]^{\abs{\mathcal{E}}}$ satisfies the residual constraint in \cref{eq:residual-compatible-readout-splits}.
  Choosing $t_{q} \in \set{0,1}$ for every $q \in \mathcal{E}$ gives $\kappa(\mathbf{t}) = \abs{\mathcal{E}}$, so $\mathcal{T}_{N_{h}} \neq \emptyset$ at every stated width.
  Moreover, since $\sigma$ is even, $\mathbf{f}_{q}^{+} = \mathbf{f}_{q}^{-}$, making \cref{eq:mwnp-representational-geometry} independent of $\mathbf{t}$.
  The result therefore follows from \cref{prop:mwnp-representational-geometry}.
\end{proof}

For activations with $m \neq 0$, by contrast, distinct residual-compatible splits can yield genuinely different minimum-weight-norm geometries.

\begin{example}[Nonunique minimum-weight-norm geometry]
  \label[example]{ex:nonunique-mwnp-tent-geometry}
  Consider the ReLU tent function from \cref{ex:opposite-irreducible-relu-parameterizations}.
  Its three essential parameter classes admit unit-norm representatives
  \begin{equation}
    \overline{\mathbf{w}}_{1}
    = \frac{1}{\sqrt{2}}
    \begin{bmatrix}
      1\\
      1
    \end{bmatrix},
    \qquad
    \overline{\mathbf{w}}_{2}
    =
    \begin{bmatrix}
      1\\
      0
    \end{bmatrix},
    \qquad
    \overline{\mathbf{w}}_{3}
    = \frac{1}{\sqrt{2}}
    \begin{bmatrix}
      1\\
      -1
    \end{bmatrix},
  \end{equation}
  with scalar invariant coefficients
  \begin{equation}
    (\beta_{1},\beta_{2},\beta_{3})
    = (\sqrt{2},-2,\sqrt{2}).
  \end{equation}
  Since the residual vanishes, the residual constraint in \cref{eq:residual-compatible-readout-splits} becomes
  \begin{equation}
    (2t_{1}-1)(1,1)
    - 2(2t_{2}-1)(1,0)
    + (2t_{3}-1)(1,-1)
    = (0,0).
  \end{equation}
  This holds if and only if $t_{1} = t_{2} = t_{3}$, so
  \begin{equation}
    \mathcal{T}
    =
    \set{
      (t,t,t)
      \given
      t \in [0,1]
    }.
  \end{equation}
  For the endpoint splits $t \in \set{0,1}$, we have $\kappa(t,t,t) = 3$, whereas every interior split $0 < t < 1$ has $\kappa(t,t,t) = 6$.
  Hence $\mathcal{T}_{3}$ contains only $\mathbf{t} = (1,1,1)$ and $\mathbf{t} = (0,0,0)$, while every split in $\mathcal{T}$ is feasible once $N_{h} \geq 6$.

  Applying \cref{eq:mwnp-balanced-construction} to the two endpoint splits yields two width-$3$ \acp{mwnp}, each with objective value $4 + 4\sqrt{2}$.
  These are norm-balanced versions of the oppositely oriented parameterizations in \cref{ex:opposite-irreducible-relu-parameterizations}.
  On any input set containing $x=2$, their \acp{rsm} differ: every negatively oriented feature vanishes at this input, whereas the positively oriented features are nonzero.
  Since \cref{eq:mwnp-representational-geometry} depends affinely on the common split $t$, every interior split yields a distinct minimum-weight-norm geometry.
  Thus, once $N_{h} \geq 6$, the two endpoint geometries expand to a one-parameter family, showing that overparameterization can increase representational ambiguity even after minimum-weight-norm selection.
\end{example}

Finally, we verify the application of \cref{cor:mwnp-representational-identifiability} in \cref{sec:identifiability-through-minimum-norm-selection} to the six analytical ReLU solutions underlying \cref{fig:dissociation}.

\begin{example}[Identifiability of the analytical ReLU solutions]
  \label[example]{ex:mwnp-identifiability-of-analytical-relu-solutions}
  Consider the six analytical ReLU solutions illustrated in \cref{fig:dissociation} and detailed in \cref{tab:xor-relu-solutions}.
  Following the convention described in \cref{app-subsec:xor-setup}, we hold the external output bias $b$ fixed and apply \cref{cor:mwnp-representational-identifiability} to $f_{\boldsymbol{\theta}} - b$.

  For solutions~1, 3, 4, and~6, the two incoming-parameter vectors have the form
  \begin{equation}
    \begin{bmatrix}
      \mathbf{v}\\
      0
    \end{bmatrix}
    \qquad\text{and}\qquad
    \begin{bmatrix}
      \mathbf{v}\\
      \sqrt{2}
    \end{bmatrix},
    \qquad
    \norm{\mathbf{v}} = 1,
  \end{equation}
  possibly in reversed order.
  These vectors are linearly independent and therefore belong to distinct parameter classes, both of which are essential because their corresponding readouts are nonzero.
  Choosing the unit representative of each class in the orientation of its original neuron gives
  \begin{equation}
    \beta_{q}\overline{\mathbf{w}}_{q}^{\top}
    = a_{j}\overline{\mathbf{w}}_{j}^{\top},
    \qquad
    q = [\overline{\mathbf{w}}_{j}].
  \end{equation}
  The two class-specific residual contributions are therefore linearly independent.
  Moreover, the original neurons realize the endpoint split $\mathbf{t} = (1,1)$ without any auxiliary contribution, so $\mathcal{T}_{2} \neq \emptyset$.
  For solutions~2 and~5, the two incoming-parameter vectors are opposite unit vectors with zero bias and therefore belong to a single essential parameter class.
  Their two readout coefficients are $(1,1)$ and $(-1,-1)$, respectively, giving the scalar class invariant $\beta_{q} = 2$ for solution~2 and $\beta_{q} = -2$ for solution~5.
  In both cases, the residual vanishes, so the residual constraint in \cref{eq:residual-compatible-readout-splits} uniquely fixes $t_{q} = \nicefrac{1}{2}$.
  This interior split requires two neurons, and hence $\mathcal{T}_{2} \neq \emptyset$.
  Moreover, the sole class-specific residual contribution $\beta_{q} \overline{\mathbf{w}}_{q}^{\top}$ is nonzero and therefore forms a linearly independent singleton family.

  Since ReLU has $m = \nicefrac{1}{2} \neq 0$, all six solutions satisfy the assumptions of \cref{cor:mwnp-representational-identifiability}.
  Each corresponding symmetry orbit therefore has a unique minimum-weight-norm \ac{rsm}, unchanged at every width $N_{h} \geq 2$.
\end{example}

\subsection{Minimum representation norm for positively \texorpdfstring{$1$-homogeneous}{1-homogeneous} activations}
\label[appendix]{app-subsec:mrnp-representational-geometry}

We now turn to minimizing the representation-norm objective $\Omega_{H}$.
Unlike $\Omega_{W}$, this objective depends on the fixed input matrix $\mathbf{X}$ and does not directly penalize incoming parameters.
We begin by deriving a neuron-wise balance relation that reveals how representation-norm minimization trades off feature magnitude against readout magnitude.

\begin{lemma}[Activity--readout balance]
  \label[lemma]{lem:mrnp-activity-readout-balance}
  Every \ac{mrnp} with positively $1$-homogeneous activation satisfies
  \begin{equation}
    \norm{\mathbf{v}_{j}}
    = \norm{\mathbf{a}_{j}}
    \qquad
    \text{for every neuron } j,
  \end{equation}
  where $\mathbf{v}_{j}^{\top}$ is the $j$th row of $\mathbf{H}$.
  Consequently,
  \begin{equation}
    \norm{\mathbf{v}_{j} \mathbf{v}_{j}^{\top}}_{F}
    = \norm{\mathbf{a}_{j} \mathbf{v}_{j}^{\top}}_{F}.
  \end{equation}
\end{lemma}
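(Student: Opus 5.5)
The plan is a first-order optimality argument along the positive-scaling symmetry of each individual neuron. Fix an \ac{mrnp} $\boldsymbol{\theta}$ in $\mathcal{O}_{N_{h}}$ and a neuron $j$. For $s>0$, apply the positive scaling symmetry (\cref{app-subsubsec:positive-scaling-symmetry}) to neuron $j$ alone, $(\overline{\mathbf{w}}_{j}, \mathbf{a}_{j}) \mapsto (s\overline{\mathbf{w}}_{j}, s^{-1}\mathbf{a}_{j})$. This is a symmetry, so it stays in $\mathcal{O}_{N_{h}}$. By positive $1$-homogeneity the $j$th feature row becomes $s\mathbf{v}_{j}^{\top}$, and every other row and readout is unchanged. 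The objective restricted to this one-parameter family is therefore
\begin{equation}
  g(s) = C + s^{2}\norm{\mathbf{v}_{j}}^{2} + s^{-2}\norm{\mathbf{a}_{j}}^{2},
\end{equation}
with $C$ independent of $s$. Since $\boldsymbol{\theta}$ is a minimizer over the orbit, $s=1$ must minimize $g$ over $s>0$.

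Next, I split into cases according to whether $\norm{\mathbf{v}_{j}}$ and $\norm{\mathbf{a}_{j}}$ vanish.
\begin{itemize}
  \item If both are positive, AM--GM gives $\min_{s>0} g = C + 2\norm{\mathbf{v}_{j}}\norm{\mathbf{a}_{j}}$. This minimum is attained uniquely at $s^{2} = \norm{\mathbf{a}_{j}}/\norm{\mathbf{v}_{j}}$. Equivalently, $g'(1)=0$ gives $\norm{\mathbf{v}_{j}}^{2} = \norm{\mathbf{a}_{j}}^{2}$. Either route forces the balance.
  \item If exactly one is zero, say $\norm{\mathbf{a}_{j}}=0<\norm{\mathbf{v}_{j}}$, then $g(s)\to C$ as $s\downarrow 0$ while $g(1)>C$. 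This contradicts minimality, and the symmetric case is handled by $s\to\infty$.
  \item If both are zero, the identity holds trivially.
\end{itemize}
In every case $\norm{\mathbf{v}_{j}} = \norm{\mathbf{a}_{j}}$.

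The main subtlety is the degenerate case, where the infimum along the scaling ray is not attained. There it is essential that we argue \emph{given} a minimizer, so the strict decrease along the ray yields a contradiction rather than a non-attainment statement. A constant neuron ($\mathbf{w}_{j}=\mathbf{0}$, $b_{j}\neq 0$) is still covered by the same scaling, since positive scaling of $b_{j}$ scales $\sigma(b_{j})$.

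The consequence is immediate from $\norm{\mathbf{x}\mathbf{y}^{\top}}_{F} = \norm{\mathbf{x}}\norm{\mathbf{y}}$. This gives $\norm{\mathbf{v}_{j}\mathbf{v}_{j}^{\top}}_{F} = \norm{\mathbf{v}_{j}}^{2} = \norm{\mathbf{a}_{j}}\norm{\mathbf{v}_{j}} = \norm{\mathbf{a}_{j}\mathbf{v}_{j}^{\top}}_{F}$.
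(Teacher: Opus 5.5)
Your proof is correct and is essentially the paper's argument: rescale neuron $j$ alone along its positive-scaling symmetry, and use that $s=1$ must minimize $s^{2}\norm{\mathbf{v}_{j}}^{2}+s^{-2}\norm{\mathbf{a}_{j}}^{2}$. The paper simply sets $g'(1)=2\norm{\mathbf{v}_{j}}^{2}-2\norm{\mathbf{a}_{j}}^{2}=0$. Since $s=1$ is an interior point of $(0,\infty)$, this already covers your degenerate cases, so the case split is not needed.
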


\begin{proof}
  For any $s > 0$, positive-scaling symmetry replaces $(\overline{\mathbf{w}}_{j}, \mathbf{a}_{j})$ by $(s\overline{\mathbf{w}}_{j}, s^{-1}\mathbf{a}_{j})$ while preserving the symmetry orbit.
  By positive $1$-homogeneity, the corresponding feature row scales by $s$, so the neuron's contribution to $\Omega_{H}$ becomes
  \begin{equation}
    s^{2} \norm{\mathbf{v}_{j}}^{2}
    + s^{-2} \norm{\mathbf{a}_{j}}^{2}.
  \end{equation}
  At a minimizer, its derivative with respect to $s$ vanishes at $s=1$, yielding $\norm{\mathbf{v}_{j}}^{2} = \norm{\mathbf{a}_{j}}^{2}$ and hence $\norm{\mathbf{v}_{j}} = \norm{\mathbf{a}_{j}}$.
  Thus,
  \begin{equation}
    \norm{\mathbf{v}_{j} \mathbf{v}_{j}^{\top}}_{F}
    = \norm{\mathbf{v}_{j}}^{2}
    = \norm{\mathbf{a}_{j}} \norm{\mathbf{v}_{j}}
    = \norm{\mathbf{a}_{j} \mathbf{v}_{j}^{\top}}_{F},
  \end{equation}
  completing the proof.
\end{proof}

Therefore, at an attained minimum, the magnitude of each neuron's rank-one contribution to the \ac{rsm} equals the magnitude of its contribution to the sampled network output.
Note that this is a neuron-wise statement: it does not preclude cancellation between neurons or imply uniqueness of the overall geometry.

To characterize all norm-minimizing geometries, recall that $\mathbf{f}_{q}^{+}$ and $\mathbf{f}_{q}^{-}$ are the features generated on $\mathbf{X}$ by the two unit-norm orientations $\pm\overline{\mathbf{w}}_{q}$ of essential parameter class $q$, and define their norms
\begin{equation}
  \label{eq:mrnp-orientation-feature-norms}
  g_{q}^{+}
  \coloneqq \norm{\mathbf{f}_{q}^{+}},
  \qquad
  g_{q}^{-}
  \coloneqq \norm{\mathbf{f}_{q}^{-}},
  \qquad
  g_{q}
  \coloneqq \min\set{g_{q}^{+}, g_{q}^{-}}.
\end{equation}
Under the activity--readout balance of \cref{lem:mrnp-activity-readout-balance}, a given effective readout is cheaper to realize through the orientation with smaller feature norm.
Accordingly, let $\mathcal{T}^{H} \subseteq \mathcal{T}$ denote the residual-compatible splits satisfying, for every $q \in \mathcal{E}$,
\begin{equation}
  \label{eq:mrnp-orientation-restrictions}
  t_{q}
  =
  \begin{cases}
    1, & \text{if } g_{q}^{+} < g_{q}^{-} \\
    0, & \text{if } g_{q}^{-} < g_{q}^{+}
  \end{cases}
\end{equation}
with no additional restriction when $g_{q}^{+} = g_{q}^{-}$.
We write
\begin{equation}
  \label{eq:mrnp-width-feasible-readout-splits}
  \mathcal{T}_{N_{h}}^{H}
  \coloneqq
  \set{
    \mathbf{t} \in \mathcal{T}^{H}
    \given
    \kappa(\mathbf{t}) \leq N_{h}
  }
  = \mathcal{T}^{H} \cap \mathcal{T}_{N_{h}}
\end{equation}
for the subset of splits feasible at width $N_{h}$.
If $\mathcal{T}_{N_{h}}^{H} \neq \emptyset$, the orbit residual $\mathbf{r}$ can be realized at width $N_{h}$ using only essential parameter classes and, within each class, only the orientation(s) with minimal feature norm $g_{q}$.
The next result shows that these orientation restrictions completely characterize the minimum-representation-norm geometries.

\begin{proposition}[Minimum-representation-norm representational geometry]
  \label[proposition]{prop:mrnp-representational-geometry}
  Fix a symmetry orbit $\mathcal{O}$ with nonlinear, positively $1$-homogeneous activation and essential parameter classes $\mathcal{E}$.
  Suppose $g_{q} > 0$ for every $q \in \mathcal{E}$.
  If $\mathcal{T}_{N_{h}}^{H} \neq \emptyset$, the minimum of $\Omega_{H}$ over $\mathcal{O}_{N_{h}}$ is attained and equals
  \begin{equation}
    2 \sum_{q \in \mathcal{E}}
    g_{q} \norm{\boldsymbol{\beta}_{q}}.
  \end{equation}
  The \acp{rsm} of all \acp{mrnp} are precisely
  \begin{equation}
    \label{eq:mrnp-representational-geometry}
    \sum_{q \in \mathcal{E}}
    \frac{\norm{\boldsymbol{\beta}_{q}}}{g_{q}}
    \bigl(
      t_{q}\mathbf{f}_{q}^{+}(\mathbf{f}_{q}^{+})^{\top}
      + (1-t_{q})\mathbf{f}_{q}^{-}(\mathbf{f}_{q}^{-})^{\top}
    \bigr),
  \end{equation}
  as $\mathbf{t}$ ranges over $\mathcal{T}_{N_{h}}^{H}$.
  Every neuron outside the essential parameter classes has zero readout and zero activation on $\mathbf{X}$.
\end{proposition}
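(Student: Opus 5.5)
I would mirror the proof of \cref{prop:mwnp-representational-geometry}: first derive an orbit-wide lower bound on $\Omega_{H}$ with explicit equality conditions, then show the bound is attained exactly by parameterizations built from splits in $\mathcal{T}^{H}_{N_{h}}$. Fix $\boldsymbol{\theta}\in\mathcal{O}_{N_{h}}$ and, for each neuron $j\in\mathcal{J}_{q}^{\pm}$ with $q\in\mathcal{E}$, write $\alpha_{j}=\norm{\overline{\mathbf{w}}_{j}}$. Positive $1$-homogeneity gives $\mathbf{v}_{j}^{\top}=\alpha_{j}(\mathbf{f}_{q}^{\pm})^{\top}$, hence $\norm{\mathbf{v}_{j}}=\alpha_{j}g_{q}^{\pm}\geq\alpha_{j}g_{q}$. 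The class constraint \cref{eq:mwnp-class-readout-constraint} still reads $\sum_{j\in\mathcal{J}_{q}}\alpha_{j}\mathbf{a}_{j}=\boldsymbol{\beta}_{q}$. The per-class bound is
\begin{equation}
  \sum_{j\in\mathcal{J}_{q}}\bigl(\norm{\mathbf{v}_{j}}^{2}+\norm{\mathbf{a}_{j}}^{2}\bigr)
  \geq 2\sum_{j}\norm{\mathbf{v}_{j}}\norm{\mathbf{a}_{j}}
  \geq 2g_{q}\sum_{j}\alpha_{j}\norm{\mathbf{a}_{j}}
  \geq 2g_{q}\norm{\boldsymbol{\beta}_{q}},
\end{equation}
obtained from AM--GM, then $g_{q}^{\pm}\geq g_{q}$, then the triangle inequality. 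Summing over $q\in\mathcal{E}$ and discarding the nonnegative cost of all other neurons gives $\Omega_{H}\geq 2\sum_{q}g_{q}\norm{\boldsymbol{\beta}_{q}}$.

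Next I would read off the equality conditions, in analogy with \cref{lem:mwnp-lower-bound-and-equality-conditions}.
\begin{enumerate}
  \item Every neuron outside $\mathcal{E}$ has $\mathbf{v}_{j}=\mathbf{0}$ and $\mathbf{a}_{j}=\mathbf{0}$; this gives the final sentence of the statement.
  \item Each essential neuron is balanced, $\norm{\mathbf{v}_{j}}=\norm{\mathbf{a}_{j}}$, consistent with \cref{lem:mrnp-activity-readout-balance}.
  \item Any neuron with $\alpha_{j}\norm{\mathbf{a}_{j}}>0$ lies in an orientation with $g_{q}^{\pm}=g_{q}$. If it lay in the strictly costlier orientation, the middle inequality would be strict.
  \item $\alpha_{j}\mathbf{a}_{j}=c_{j}\boldsymbol{\beta}_{q}$ with $c_{j}\geq0$ and $\sum_{j}c_{j}=1$.
\end{enumerate}
Condition (iii) needs care: a neuron with $\mathbf{a}_{j}=\mathbf{0}$ must have $\mathbf{v}_{j}=\mathbf{0}$ by balance. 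Such a neuron still has $\alpha_{j}>0$, so it can sit in a costly orientation only if that feature vanishes on $\mathbf{X}$. This is excluded by $g_{q}>0$, which forces $g_{q}^{\pm}>0$, so such neurons cannot exist.

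Defining $t_{q}=\sum_{j\in\mathcal{J}_{q}^{+}}c_{j}$ then gives the orientation restrictions \cref{eq:mrnp-orientation-restrictions}. The residual computation from the MWNP proof, which uses only the vanishing of nonessential $\mathbf{a}_{j}$, shows $\mathbf{t}\in\mathcal{T}$. Feasibility at width $N_{h}$ gives $\kappa(\mathbf{t})\leq N_{h}$, so $\mathbf{t}\in\mathcal{T}^{H}_{N_{h}}$. For the RSM, balance gives $\alpha_{j}^{2}g_{q}^{2}=\norm{\mathbf{a}_{j}}^{2}$ and hence $\alpha_{j}^{2}g_{q}=\alpha_{j}\norm{\mathbf{a}_{j}}=c_{j}\norm{\boldsymbol{\beta}_{q}}$. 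Each rank-one weight is therefore $\alpha_{j}^{2}=c_{j}\norm{\boldsymbol{\beta}_{q}}/g_{q}$, and summing within orientations yields \cref{eq:mrnp-representational-geometry}.

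For attainment, I would take any $\mathbf{t}\in\mathcal{T}^{H}_{N_{h}}$ and set
\begin{equation}
  \overline{\mathbf{w}}_{q,\pm}=\pm\sqrt{c\norm{\boldsymbol{\beta}_{q}}}\,g_{q}^{-1}\,\overline{\mathbf{w}}_{q},
  \qquad
  \mathbf{a}_{q,\pm}=\sqrt{c/\norm{\boldsymbol{\beta}_{q}}}\,\boldsymbol{\beta}_{q},
\end{equation}
with $c=t_{q}$ or $c=1-t_{q}$ as appropriate. I would check that each such neuron is balanced and that the effective readouts sum to $\boldsymbol{\beta}_{q}$. I would then pad to width $N_{h}$ with zero neurons and invoke \cref{prop:characterization-of-symmetry-equivalence} to confirm orbit membership, exactly as in the MWNP construction.

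The main obstacle I expect is the equality analysis in (iii) and the treatment of nonessential neurons. $\Omega_{H}$ only sees activations on $\mathbf{X}$, so a nonessential neuron may have nonzero incoming weights and yet zero activation on $\mathbf{X}$. The condition must therefore be stated as zero readout and zero activation, not zero parameters. Such a neuron contributes nothing to the residual because its readout is zero, so the residual argument survives, and this is precisely where $g_{q}>0$ must be used.
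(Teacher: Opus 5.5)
\textbf{Assessment: same approach as the paper, but the attainment construction as written fails.}

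Your lower bound, your equality analysis, your definition of $t_q$, and the resulting weights $\alpha_j^2 = c_j\norm{\boldsymbol{\beta}_q}/g_q$ follow the paper's proof essentially line by line. This includes the use of $g_q>0$ to exclude zero-readout essential neurons.

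The failing step is the attainment construction. You set $\alpha = \sqrt{c\norm{\boldsymbol{\beta}_q}}\,g_q^{-1}$ and $\mathbf{a} = \sqrt{c/\norm{\boldsymbol{\beta}_q}}\,\boldsymbol{\beta}_q$. Each neuron is then indeed balanced, since $\alpha g_q = \norm{\mathbf{a}} = \sqrt{c\norm{\boldsymbol{\beta}_q}}$. However, its effective readout is $\alpha\mathbf{a} = (c/g_q)\,\boldsymbol{\beta}_q$. The class aggregate therefore becomes $\boldsymbol{\beta}_q/g_q$, and the residual contribution of class $q$ is likewise rescaled by $1/g_q$.

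Unless $g_q = 1$ for every $q$, the constructed network lies outside $\mathcal{O}$, so \cref{prop:characterization-of-symmetry-equivalence} cannot be invoked. Two consistency checks confirm this:
\begin{itemize}
  \item Its cost is $2\sum_q\norm{\boldsymbol{\beta}_q}$, which would undercut your own lower bound $2\sum_q g_q\norm{\boldsymbol{\beta}_q}$ whenever some $g_q>1$.
  \item Its \ac{rsm} weights are $c\norm{\boldsymbol{\beta}_q}/g_q^{2}$, which do not match \cref{eq:mrnp-representational-geometry}.
\end{itemize}
So the check you announce, that the effective readouts sum to $\boldsymbol{\beta}_q$, would fail.

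The fix is dictated by your own equality conditions (ii) and (iv). Solving $\alpha g_q = \norm{\mathbf{a}}$ together with $\alpha\mathbf{a} = c\,\boldsymbol{\beta}_q$ gives $\alpha = \sqrt{c\norm{\boldsymbol{\beta}_q}/g_q}$ and $\mathbf{a} = \sqrt{c\,g_q/\norm{\boldsymbol{\beta}_q}}\,\boldsymbol{\beta}_q$. This is exactly the paper's construction \cref{eq:mrnp-balanced-construction}. When you verify balance, state explicitly that $\mathbf{t}\in\mathcal{T}^{H}$ guarantees every orientation actually used has feature norm $g_q$. With these corrections, the remainder of your argument goes through as in the paper: zero-padding, orbit membership via the class invariants and residual, and the conclusion that every minimizer satisfies the equality conditions.
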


\begin{proof}
  Fix $\boldsymbol{\theta} \in \mathcal{O}_{N_{h}}$.
  We first derive an orbit-wide lower bound on $\Omega_{H}$.
  For $j \in \mathcal{J}_{q}^{\pm}$, let $\mathbf{v}_{j}^{\top}$ denote the corresponding feature row of $\mathbf{H}$.
  Positive $1$-homogeneity gives $\norm{\mathbf{v}_{j}} = \alpha_{j} g_{q}^{\pm}$, where $\alpha_{j} = \norm{\overline{\mathbf{w}}_{j}} > 0$.
  Hence, the contribution of essential parameter class $q$ to $\Omega_{H}$ satisfies
  \begin{equation}
    \label{eq:mrnp-class-lower-bound}
    \begin{aligned}
      \sum_{\varepsilon \in \set{+,-}}
      \sum_{j \in \mathcal{J}_{q}^{\varepsilon}}
      \bigl(\alpha_{j}^{2} (g_{q}^{\varepsilon})^{2} + \norm{\mathbf{a}_{j}}^{2}\bigr)
      &\geq
      2 g_{q}^{+} \sum_{j \in \mathcal{J}_{q}^{+}} \alpha_{j} \norm{\mathbf{a}_{j}}
      + 2 g_{q}^{-} \sum_{j \in \mathcal{J}_{q}^{-}} \alpha_{j} \norm{\mathbf{a}_{j}} \\
      &\geq
      2 g_{q} \sum_{j \in \mathcal{J}_{q}} \alpha_{j} \norm{\mathbf{a}_{j}} \\
      &\geq
      2 g_{q} \norm{\sum_{j \in \mathcal{J}_{q}} \alpha_{j}\mathbf{a}_{j}}
      = 2 g_{q} \norm{\boldsymbol{\beta}_{q}}.
    \end{aligned}
  \end{equation}
  Summing over $q \in \mathcal{E}$ and adding the nonnegative contributions of all remaining neurons gives
  \begin{equation}
    \label{eq:mrnp-orbit-lower-bound}
    \Omega_{H}(\boldsymbol{\theta})
    \geq
    2 \sum_{q \in \mathcal{E}}
    g_{q}\norm{\boldsymbol{\beta}_{q}}.
  \end{equation}
  We next characterize when the lower bound in \cref{eq:mrnp-orbit-lower-bound} is attained.
  Every neuron not part of an essential parameter class must have zero cost, so both its feature row and its readout vanish.
  Within an essential parameter class, the first inequality in \cref{eq:mrnp-class-lower-bound} is an equality precisely when
  \begin{equation}
    \alpha_{j} g_{q}^{\pm}
    = \norm{\mathbf{a}_{j}},
    \qquad
    j \in \mathcal{J}_{q}^{\pm}.
  \end{equation}
  Since $g_{q}^{\pm} \geq g_{q} > 0$ and $\alpha_{j} > 0$, these readouts are nonzero.
  Equality in the second inequality therefore requires every represented orientation to satisfy $g_{q}^{\pm} = g_{q}$.
  Finally, because $g_{q} > 0$ and $\boldsymbol{\beta}_{q} \neq \mathbf{0}$, equality in the third inequality holds precisely when
  \begin{equation}
    \alpha_{j}\mathbf{a}_{j}
    = c_{j}\boldsymbol{\beta}_{q},
    \qquad
    c_{j} \geq 0,
    \qquad
    \sum_{j \in \mathcal{J}_{q}} c_{j} = 1.
  \end{equation}
  Combining these conditions gives
  \begin{equation}
    \label{eq:mrnp-balanced-feature-weight}
    \alpha_{j}^{2} g_{q}
    = \alpha_{j} \norm{\mathbf{a}_{j}}
    = c_{j} \norm{\boldsymbol{\beta}_{q}},
    \qquad
    \alpha_{j}^{2}
    = \frac{c_{j} \norm{\boldsymbol{\beta}_{q}}}{g_{q}}.
  \end{equation}
  For each $q \in \mathcal{E}$, define
  \begin{equation}
    t_{q}
    \coloneqq
    \sum_{j \in \mathcal{J}_{q}^{+}} c_{j}.
  \end{equation}
  Since equality requires every represented orientation to have feature norm $g_{q}$, the resulting split satisfies the restrictions in \cref{eq:mrnp-orientation-restrictions}.
  Moreover, all neurons outside the essential parameter classes have zero readout and therefore do not contribute to the realized function.
  The residual constraint thus gives $\mathbf{t} \in \mathcal{T}$, while the number of represented orientations gives $\kappa(\mathbf{t}) \leq N_{h}$.
  Hence $\mathbf{t} \in \mathcal{T}_{N_{h}}^{H}$.

  By \cref{eq:mrnp-balanced-feature-weight}, the coefficients $\alpha_{j}^{2}$ of the rank-one contributions from the positive and negative orientations of class $q$ sum to
  \begin{equation}
    t_{q} \frac{\norm{\boldsymbol{\beta}_{q}}}{g_{q}}
    \qquad\text{and}\qquad
    (1-t_{q}) \frac{\norm{\boldsymbol{\beta}_{q}}}{g_{q}},
  \end{equation}
  respectively.
  Summing the corresponding rank-one contributions therefore yields \cref{eq:mrnp-representational-geometry}.

  Conversely, fix any $\mathbf{t} \in \mathcal{T}_{N_{h}}^{H}$.
  For each $q \in \mathcal{E}$, introduce neurons with parameters
  \begin{equation}
    \label{eq:mrnp-balanced-construction}
    \begin{aligned}
      \overline{\mathbf{w}}_{q,+}
      &= \sqrt{\frac{t_{q} \norm{\boldsymbol{\beta}_{q}}}{g_{q}}}\,
         \overline{\mathbf{w}}_{q},
      &
      \mathbf{a}_{q,+}
      &= \sqrt{\frac{t_{q} g_{q}}{\norm{\boldsymbol{\beta}_{q}}}}\,
         \boldsymbol{\beta}_{q}, \\
      \overline{\mathbf{w}}_{q,-}
      &= -\sqrt{\frac{(1-t_{q}) \norm{\boldsymbol{\beta}_{q}}}{g_{q}}}\,
         \overline{\mathbf{w}}_{q},
      &
      \mathbf{a}_{q,-}
      &= \sqrt{\frac{(1-t_{q}) g_{q}}{\norm{\boldsymbol{\beta}_{q}}}}\,
         \boldsymbol{\beta}_{q},
    \end{aligned}
  \end{equation}
  omitting the positively oriented neuron when $t_{q} = 0$ and the negatively oriented neuron when $t_{q} = 1$.
  This requires exactly $\kappa(\mathbf{t})$ neurons.
  Adding $N_{h} - \kappa(\mathbf{t})$ neurons with all parameters set to zero yields a parameterization of width $N_{h}$.
  
  The two orientations have effective readouts $t_{q} \boldsymbol{\beta}_{q}$ and $(1-t_{q}) \boldsymbol{\beta}_{q}$, whose sum is $\boldsymbol{\beta}_{q}$.
  Moreover, $\mathbf{t} \in \mathcal{T}$ ensures that the residual equals $\mathbf{r}$ for every input.
  Since no nonessential parameter class is introduced, \cref{prop:characterization-of-symmetry-equivalence} guarantees that the constructed parameterization lies in $\mathcal{O}_{N_{h}}$.

  Because $\mathbf{t} \in \mathcal{T}^{H}$, every represented orientation has unit-representative feature norm $g_{q}$.
  The construction also satisfies $\alpha_{j} g_{q} = \norm{\mathbf{a}_{j}}$ and $\alpha_{j}\mathbf{a}_{j} = c_{j}\boldsymbol{\beta}_{q}$ with $c_{j} = t_{q}$ or $c_{j} = 1-t_{q}$.
  Together with zero padding, it therefore satisfies all equality conditions above, attains \cref{eq:mrnp-orbit-lower-bound}, and realizes the \ac{rsm} in \cref{eq:mrnp-representational-geometry}.

  Since $\mathcal{T}_{N_{h}}^{H} \neq \emptyset$, the lower bound is attained and therefore equals the minimum of $\Omega_{H}$ over $\mathcal{O}_{N_{h}}$.
  Every minimizer must satisfy the equality conditions already characterized, and every split in $\mathcal{T}_{N_{h}}^{H}$ is realized by the construction above.
  This proves both the minimum value and the exhaustive characterization of the corresponding \acp{rsm}.
\end{proof}

Under the positivity assumption $g_{q} > 0$, the preceding proof demonstrates that $\mathcal{T}_{N_{h}}^{H} \neq \emptyset$ is necessary and sufficient for the lower bound in \cref{eq:mrnp-orbit-lower-bound} to be attained.
The resulting \acp{rsm} contain no auxiliary contributions, but incoming parameters of neurons from nonessential classes need not vanish: a neuron with zero readout and zero activation on $\mathbf{X}$ has zero representation-norm cost even when its incoming parameters are nonzero.

Minimum-representation-norm selection therefore fixes the weights of essential feature contributions jointly through the orbit invariants $\boldsymbol{\beta}_{q}$ and the sampled feature norms $g_{q}$.
The only remaining freedom is the residual-compatible allocation between orientations of equal feature norm.
This remaining freedom disappears either when each essential parameter class has a uniquely preferred orientation or when the residual constraint uniquely determines the split.

\begin{corollary}[Minimum-representation-norm identifiability]
  \label[corollary]{cor:mrnp-representational-identifiability}
  Under the assumptions of \cref{prop:mrnp-representational-geometry}, suppose additionally that at least one of the following conditions holds:
  \begin{enumerate}
    \item $g_{q}^{+} \neq g_{q}^{-}$ for every $q \in \mathcal{E}$.
    \item $m \neq 0$ and the matrices $\boldsymbol{\beta}_{q}\overline{\mathbf{w}}_{q}^{\top}$, $q \in \mathcal{E}$, are linearly independent.
  \end{enumerate}
  Then all \acp{mrnp} in $\mathcal{O}_{N_{h}}$ have the same \ac{rsm}, and this \ac{rsm} is unchanged at every larger width.
  Equivalently, identifiability holds from the first width at which $\mathcal{T}_{N_{h}}^{H} \neq \emptyset$ onward.
\end{corollary}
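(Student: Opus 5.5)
The plan is to reduce everything to \cref{prop:mrnp-representational-geometry}. Under its assumptions, and at any width $N_h$ with $\mathcal{T}^{H}_{N_h}\neq\emptyset$, the \acp{rsm} of all \acp{mrnp} are exactly the matrices in \cref{eq:mrnp-representational-geometry}, as $\mathbf{t}$ ranges over $\mathcal{T}^{H}_{N_h}$. Identifiability at width $N_h$ therefore follows once we show that this expression takes a single value on $\mathcal{T}^{H}_{N_h}$. It suffices to show that $\mathcal{T}^{H}$ itself contains at most one element. Since $\mathcal{T}^{H}_{N_h}\subseteq\mathcal{T}^{H}$ and the sets $\mathcal{T}^{H}_{N_h}$ are nested increasing in $N_h$ (because $\kappa(\mathbf{t})\le N_h\le L$), this gives two things at once. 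First, once $\mathcal{T}^{H}_{N_h}$ is nonempty, it equals the singleton $\mathcal{T}^{H}=\{\mathbf{t}^\ast\}$. Second, $\mathcal{T}^{H}_{L}=\{\mathbf{t}^\ast\}$ for every $L\ge N_h$. Applying \cref{prop:mrnp-representational-geometry} at width $L$ then returns the same \ac{rsm}, which proves the width-invariance claim and the "equivalently" formulation.

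Under condition (i), every $q\in\mathcal{E}$ has $g_q^+\neq g_q^-$. The restriction \cref{eq:mrnp-orientation-restrictions} then forces $t_q\in\{0,1\}$ with a prescribed value, so each coordinate of any $\mathbf{t}\in\mathcal{T}^{H}$ is fixed. Hence $\lvert\mathcal{T}^{H}\rvert\le 1$. Note that the residual constraint may still rule that vector out, but that case is covered by the hypothesis $\mathcal{T}^{H}_{N_h}\ne\emptyset$.

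Under condition (ii), I would repeat the argument from the proof of \cref{cor:mwnp-representational-identifiability}. Take $\mathbf{t},\mathbf{s}\in\mathcal{T}^{H}\subseteq\mathcal{T}$ and subtract their residual constraints \cref{eq:residual-compatible-readout-splits}. Dividing by $m\neq0$ gives $\bigl[\sum_q (t_q-s_q)\boldsymbol{\beta}_q\overline{\mathbf{w}}_q^\top\bigr]\overline{\mathbf{x}}=\mathbf{0}$ for all $\mathbf{x}$. Because the vectors $\overline{\mathbf{x}}=(\mathbf{x}^\top,1)^\top$ span $\mathbb{R}^{N_i+1}$, the coefficient matrix must vanish. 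Linear independence of the matrices $\boldsymbol{\beta}_q\overline{\mathbf{w}}_q^\top$ then yields $\mathbf{t}=\mathbf{s}$.

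The only delicate point is bookkeeping rather than analysis. One must check that the standing hypotheses of \cref{prop:mrnp-representational-geometry}, namely $g_q>0$ and $\mathcal{T}^{H}_{N_h}\neq\emptyset$, persist at larger widths so that it can be reapplied. The first holds because $g_q$ does not depend on width; the second follows from the nesting noted above. One must also phrase the "equivalently" clause as: from the first width at which $\mathcal{T}^{H}_{N_h}\neq\emptyset$, the unique split is feasible, and it remains so at all larger widths.
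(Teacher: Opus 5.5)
Your proposal is correct and follows the paper's proof essentially step for step. The paper likewise shows that $\mathcal{T}^{H}$ contains at most one split: under condition (i) via the orientation restrictions, and under condition (ii) via the residual-subtraction argument from the minimum-weight-norm corollary. It then uses $\mathcal{T}^{H}_{N_h}\neq\emptyset$ together with $\kappa(\mathbf{t})\le N_h\le L$ to reapply the minimum-representation-norm proposition at every larger width $L$, exactly as you do.
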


\begin{proof}
  Under the first condition, \cref{eq:mrnp-orientation-restrictions} uniquely fixes every coordinate $t_{q}$, so $\mathcal{T}^{H}$ contains at most one split.
  Under the second condition, the argument in the proof of \cref{cor:mwnp-representational-identifiability} shows that $\mathcal{T}$, and hence $\mathcal{T}^{H} \subseteq \mathcal{T}$, contains at most one split.
  In either case, $\mathcal{T}_{N_{h}}^{H} \neq \emptyset$ guarantees that this unique split exists and is feasible at width $N_{h}$.
  \Cref{prop:mrnp-representational-geometry} therefore gives a unique minimum-representation-norm \ac{rsm}.

  For every $L \geq N_{h}$, the same split remains feasible and is still the unique element of $\mathcal{T}_{L}^{H}$.
  Applying \cref{prop:mrnp-representational-geometry} at width $L$ yields the same \ac{rsm}.
\end{proof}

When $g_{q}^{+} = g_{q}^{-} > 0$ for every $q \in \mathcal{E}$, the orientation restrictions impose no additional constraint, so
$\mathcal{T}^{H} = \mathcal{T}$ and
$\mathcal{T}_{N_{h}}^{H} = \mathcal{T}_{N_{h}}$.
In this case, the minimum-representation-norm characterization differs from \cref{prop:mwnp-representational-geometry} only in replacing the class weight $\norm{\boldsymbol{\beta}_{q}}$ by $\norm{\boldsymbol{\beta}_{q}}/g_{q}$.
Equal orientation norms $g_{q}^{+} = g_{q}^{-}$ arise, for example, when every essential parameter class has zero bias and the input set contains each $\mathbf{x}$ together with $-\mathbf{x}$ with equal multiplicity.

A particularly simple instance arises for even positively $1$-homogeneous activations, where the two orientations generate identical features.

\begin{corollary}[Identifiability for even positively homogeneous activations]
  \label[corollary]{cor:mrnp-identifiability-even-homogeneous-activations}
  Suppose $\sigma(z) = \delta\abs{z}$ with $\delta \neq 0$, and let $\mathcal{O}$ be a symmetry orbit with residual $\mathbf{r} = \mathbf{0}$.
  If $g_{q} > 0$ for every $q \in \mathcal{E}$, then at every width $N_{h} \geq \abs{\mathcal{E}}$, the minimum of $\Omega_{H}$ is attained, and all \acp{mrnp} have the same \ac{rsm},
  \begin{equation}
    \mathbf{H}^{\top}\mathbf{H}
    = \sum_{q \in \mathcal{E}}
    \frac{\norm{\boldsymbol{\beta}_{q}}}{g_{q}}\,
    \mathbf{f}_{q}^{+}(\mathbf{f}_{q}^{+})^{\top}.
  \end{equation}
\end{corollary}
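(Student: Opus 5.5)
The plan is to deduce this from \cref{prop:mrnp-representational-geometry}, in the same way \cref{cor:mwnp-identifiability-even-homogeneous-activations} follows from \cref{prop:mwnp-representational-geometry}. The argument has three steps.

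\emph{Step 1: the hypotheses of \cref{prop:mrnp-representational-geometry} hold.} The activation $\sigma(z)=\delta\abs{z}$ is nonlinear and positively $1$-homogeneous. The assumption $g_{q}>0$ is given. It remains to show $\mathcal{T}_{N_{h}}^{H}\neq\emptyset$ for $N_{h}\geq\abs{\mathcal{E}}$. Since $\sigma$ is even, $\mathbf{f}_{q}^{+}=\sigma(\overline{\mathbf{w}}_{q}^{\top}\overline{\mathbf{X}})^{\top}=\sigma(-\overline{\mathbf{w}}_{q}^{\top}\overline{\mathbf{X}})^{\top}=\mathbf{f}_{q}^{-}$, so $g_{q}^{+}=g_{q}^{-}=g_{q}$. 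The orientation restrictions \cref{eq:mrnp-orientation-restrictions} therefore impose nothing, and $\mathcal{T}^{H}=\mathcal{T}$. Moreover $m=0$, so the residual constraint in \cref{eq:residual-compatible-readout-splits} reads $\mathbf{r}=\mathbf{0}$. This holds by assumption, hence $\mathcal{T}=[0,1]^{\abs{\mathcal{E}}}$.

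\emph{Step 2: a split feasible at the minimal width.} Take the endpoint split $t_{q}=1$ for every $q\in\mathcal{E}$. It has $\kappa(\mathbf{t})=\abs{\mathcal{E}}\leq N_{h}$, so it lies in $\mathcal{T}^{H}_{N_{h}}$. \Cref{prop:mrnp-representational-geometry} then applies: the minimum of $\Omega_{H}$ over $\mathcal{O}_{N_{h}}$ is attained, and every \ac{mrnp} has an \ac{rsm} of the form \cref{eq:mrnp-representational-geometry} for some $\mathbf{t}\in\mathcal{T}^{H}_{N_{h}}$.

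\emph{Step 3: the \ac{rsm} does not depend on $\mathbf{t}$.} Substitute $\mathbf{f}_{q}^{-}=\mathbf{f}_{q}^{+}$ into \cref{eq:mrnp-representational-geometry}. The bracket becomes $(t_{q}+1-t_{q})\,\mathbf{f}_{q}^{+}(\mathbf{f}_{q}^{+})^{\top}$, which is independent of $t_{q}$. So every \ac{mrnp} has the displayed \ac{rsm}
\[
\sum_{q\in\mathcal{E}}\frac{\norm{\boldsymbol{\beta}_{q}}}{g_{q}}\,\mathbf{f}_{q}^{+}(\mathbf{f}_{q}^{+})^{\top}.
\]

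There is no real obstacle here. The only point needing care is that $\mathcal{T}^{H}_{N_{h}}$ is nonempty at every width from $\abs{\mathcal{E}}$ upward, and Step 2 settles this with the endpoint split, which uses one neuron per essential class.
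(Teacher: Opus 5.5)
Your proposal is correct and matches the paper's proof essentially step for step. Both arguments note that $m=0$ and $\mathbf{r}=\mathbf{0}$ make every split residual-compatible, and that $\mathbf{f}_{q}^{+}=\mathbf{f}_{q}^{-}$ removes the orientation restrictions. Both then use an endpoint split with $\kappa(\mathbf{t})=\abs{\mathcal{E}}$ to get $\mathcal{T}^{H}_{N_{h}}\neq\emptyset$, and apply \cref{prop:mrnp-representational-geometry}, whose \ac{rsm} becomes independent of $\mathbf{t}$.
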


\begin{proof}
  Since $m = 0$ and $\mathbf{r} = \mathbf{0}$, every $\mathbf{t} \in [0,1]^{\abs{\mathcal{E}}}$ satisfies the residual constraint in \cref{eq:residual-compatible-readout-splits}.
  Since $\sigma$ is even, $\mathbf{f}_{q}^{+} = \mathbf{f}_{q}^{-}$ and hence $g_{q}^{+} = g_{q}^{-} = g_{q}$, so the orientation restrictions in \cref{eq:mrnp-orientation-restrictions} impose no additional constraint.
  Choosing $t_{q} \in \set{0,1}$ for every $q \in \mathcal{E}$ gives $\kappa(\mathbf{t}) = \abs{\mathcal{E}}$, so $\mathcal{T}_{N_{h}}^{H} \neq \emptyset$ at every stated width.
  Finally, $\mathbf{f}_{q}^{+} = \mathbf{f}_{q}^{-}$ makes \cref{eq:mrnp-representational-geometry} independent of $\mathbf{t}$, and the result follows from \cref{prop:mrnp-representational-geometry}.
\end{proof}

The positivity assumption $g_{q} > 0$ in \cref{prop:mrnp-representational-geometry} excludes essential parameter classes for which at least one orientation has zero activation on the finite input set, i.e., $\mathbf{f}_{q}^{+} = \mathbf{0}$ or $\mathbf{f}_{q}^{-} = \mathbf{0}$.
Without this assumption, the representation-norm objective $\Omega_{H}$ need not attain its infimum.

\begin{example}[Nonattainment of the representation-norm infimum]
  \label[example]{ex:mrnp-nonattainment-invisible-feature}
  Let $\psi(z) \coloneqq \max(0,z)$ and consider the width-$1$ parameterization
  \begin{equation}
    \boldsymbol{\theta}
    = (\psi; w, b, a)
    = (\psi; 1, -2, 1),
  \end{equation}
  which realizes
  \begin{equation}
    f_{\boldsymbol{\theta}}(x) = \psi(x-2).
  \end{equation}
  Its single neuron represents the unique essential parameter class $q$ of the corresponding symmetry orbit.
  On the input set $\mathbf{X} = (-1,0,1)$, consider the positively scaled parameterizations
  \begin{equation}
    \boldsymbol{\theta}_{\alpha}
    = (\psi; \alpha, -2\alpha, \alpha^{-1}),
    \qquad
    \alpha > 0.
  \end{equation}
  Each parameterization $\boldsymbol{\theta}_{\alpha}$ realizes the same function $f_{\boldsymbol{\theta}_{\alpha}} = f_{\boldsymbol{\theta}}$ and is symmetry-equivalent to $\boldsymbol{\theta}$.
  For every $\alpha > 0$, the hidden activations on $\mathbf{X}$ vanish, so
  \begin{equation}
    \Omega_{H}
    = \norm{\mathbf{H}}_{F}^{2}
    + \norm{\mathbf{A}}_{F}^{2}
    = \alpha^{-2}
    \longrightarrow 0
    \qquad
    \text{as } \alpha \to \infty.
  \end{equation}
  Hence the infimum of $\Omega_{H}$ over the width-$1$ orbit is zero.
  Yet no finite parameterization in this orbit attains the infimum, since $\Omega_{H} = 0$ would require $\mathbf{A} = \mathbf{0}$ and hence force the realized function to vanish identically.

  To see why the positivity assumption in \cref{prop:mrnp-representational-geometry} fails, consider the two feature orientations of $q$ on $\mathbf{X}$.
  With unit representative $\overline{\mathbf{w}}_{q} = (1,-2)^{\top}/\sqrt{5}$, they are
  \begin{equation}
    \mathbf{f}_{q}^{+}
    =
    \begin{bmatrix}
      0\\
      0\\
      0
    \end{bmatrix},
    \qquad
    \mathbf{f}_{q}^{-}
    =
    \frac{1}{\sqrt{5}}
    \begin{bmatrix}
      3\\
      2\\
      1
    \end{bmatrix}.
  \end{equation}
  Hence $g_{q} = 0$, violating the positivity assumption in \cref{prop:mrnp-representational-geometry}.
  Moreover, the residual constraint uniquely fixes $t_{q} = 1$, which also satisfies the orientation restriction in \cref{eq:mrnp-orientation-restrictions}.
  Thus, feasibility of the residual-compatible split does not by itself guarantee attainment of the representation-norm infimum.
\end{example}

In this example, the essential parameter class is required to realize the function globally, while one of its feature orientations vanishes on the sampled input set $\mathbf{X}$.
Because this orientation vanishes on $\mathbf{X}$, increasing its incoming scale leaves its contribution to $\norm{\mathbf{H}}_{F}^{2}$ equal to zero, while the compensating decrease in its readout scale drives its contribution to $\norm{\mathbf{A}}_{F}^{2}$ toward zero without changing the realized function.
The condition $g_{q} > 0$ rules out this nonattainment mechanism in \cref{prop:mrnp-representational-geometry} and is sufficient for the stated characterization, but it is not necessary for an \ac{mrnp} to exist in general.

\space

\section{Numerical and analytical characterization of ReLU networks solving the XOR task}
\label[appendix]{app-sec:characterization-of-xor-relu-solutions}

This appendix provides mathematical details supporting the ReLU networks illustrated in \cref{fig:dissociation}.
We first specify the \ac{xor} dataset, the two-hidden-unit ReLU architecture, and the binary cross-entropy training objective used throughout the analysis (\cref{app-subsec:xor-setup}).
We then describe how the six qualitatively distinct ReLU solutions to the \ac{xor} task shown in Panel~B of \cref{fig:dissociation} were identified through a gradient-descent sweep (\cref{app-subsec:xor-gradient-descent-sweep-and-clustering}), derive closed-form expressions for these solutions in a geometric parameterization (\cref{app-subsec:xor-closed-form-characterization}), and prove that each can approach zero \acf{bce} loss in the limit of parameter rescaling (\cref{app-subsec:xor-approaching-zero-loss-by-scaling}).

\subsection{Setup}
\label[appendix]{app-subsec:xor-setup}

The \ac{xor} dataset consists of the $P = 4$ points $\set{-1, +1}^{2}$, where same-sign inputs receive label $0$ and opposite-sign inputs receive label $1$:
\begin{equation}
  \label{eq:xor-dataset}
  \mathbf{X} =
  \begin{bmatrix}
    -1 & +1 & -1 & +1 \\
    -1 & -1 & +1 & +1
  \end{bmatrix},
  \qquad
  \mathbf{y} =
  \begin{bmatrix}
    0 & 1 & 1 & 0
  \end{bmatrix}.
\end{equation}
We consider two-hidden-unit ReLU networks with scalar output,
\begin{equation}
  \label{eq:xor-network}
  f_{\boldsymbol{\theta}}(\mathbf{x})
  \coloneqq a_{1} \, \psi(\mathbf{w}_{1}^{\top} \mathbf{x} + b_{1})
  + a_{2} \, \psi(\mathbf{w}_{2}^{\top} \mathbf{x} + b_{2})
  + b,
  \qquad
  \psi(z) = \max(0, z),
\end{equation}
where $\mathbf{w}_{j} \in \mathbb{R}^{2}$, $b_{j} \in \mathbb{R}$ are the incoming weights and biases, $a_{j} \in \mathbb{R}$ are the readout weights, $b \in \mathbb{R}$ is the output bias, and $\boldsymbol{\theta} = (\psi; \mathbf{w}_{1}, \mathbf{w}_{2}, b_{1}, b_{2}, a_{1}, a_{2}, b)$ denotes the network's parameterization.
The output bias $b$ lies outside the model class introduced in \cref{sec:preliminaries-and-setting}.
It does not enter the hidden-activation matrix and remains unchanged under the parameter symmetries applied to these networks in \cref{fig:dissociation,fig:primitives}.
The results of \cref{sec:functional-parameter-symmetries,sec:parameter-symmetries-act-through-three-feature-primitives,sec:parameter-symmetries-dissociate-function-and-representation} therefore apply directly to the hidden layer, whose contribution to the network output is $f_{\boldsymbol{\theta}} - b$.
Training minimizes the mean \acf{bce} loss
\begin{equation}
  \label{eq:xor-bce-loss}
  \mathcal{L}(\boldsymbol{\theta})
  \coloneqq -\frac{1}{4} \sum_{\mu=1}^{4}
  \bigl[
    y^\mu \log p^\mu + (1 - y^\mu) \log(1 - p^\mu)
  \bigr],
  \qquad
  p^\mu = \varsigma(f_{\boldsymbol{\theta}}(\mathbf{x}^\mu)),
\end{equation}
where $\varsigma(z) \coloneqq (1 + \mathrm{e}^{-z})^{-1}$ denotes the logistic sigmoid.

\paragraph{Geometric parameterization.}
Each hidden neuron with nonzero incoming weights $\mathbf{w}_{j} \neq \mathbf{0}$ admits a geometric parameterization in terms of the angle $\phi_{j}$ and signed distance $d_{j}$ of its activation boundary, together with a gain factor $g_{j} > 0$.
Concretely, the pre-activation of the $j$th neuron is
\begin{equation}
  \label{eq:xor-geometric-pre-activation}
  z_{j}(\mathbf{x})
  \coloneqq g_{j}
  \bigl(
  \mathbf{n}(\phi_{j})^{\top} \mathbf{x} - d_{j}
  \bigr),
  \qquad
  \mathbf{n}(\phi) \coloneqq (\cos \phi, \sin \phi)^{\top},
\end{equation}
so that the zero level set $z_{j}(\mathbf{x}) = 0$ is the line with unit normal $\mathbf{n}(\phi_{j})$ at signed distance $d_{j}$ from the origin.
This corresponds to the usual affine parameterization via $\mathbf{w}_{j} = g_{j} \, \mathbf{n}(\phi_{j})$ and $b_{j} = -g_{j} d_{j}$.

\subsection{Gradient-descent sweep and solution clustering}
\label[appendix]{app-subsec:xor-gradient-descent-sweep-and-clustering}

\begin{figure}[t]
  \centering
  \includegraphics[width=0.75\textwidth]{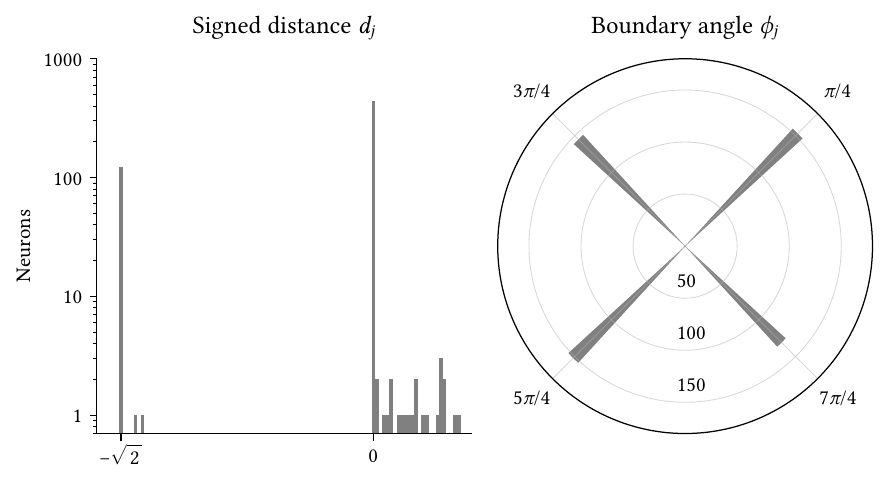}
  \caption{\textbf{Boundary geometry of the $592$ hidden neurons across $296$ converged networks.}
    Ticks indicate the canonical values of the six solution types listed in \cref{tab:xor-relu-solutions}.
    Histogram counts are shown on a logarithmic scale for the signed distances $d_{j}$ (left) and a linear scale for the angles $\phi_{j}$ (right).
    Of the $592$ neurons, $19$ have signed distances distinct from both dominant values, while two have angles distinct from all four dominant values.
  }
  \label{fig:xor-sweep-geometry}
\end{figure}
\space

We trained \texttt{N\_SEEDS}~$= 1000$ two-hidden-unit ReLU networks on the \ac{xor} dataset from independent random initializations, drawing each weight and bias i.i.d.\ from $\mathcal{U}(-1/\sqrt{2},\, +1/\sqrt{2})$, corresponding to the default initialization in Equinox's \texttt{nn.Linear} module.
Networks were optimized with full-batch Adam at learning rate $\eta = 0.1$ for \texttt{N\_STEPS}~$= 10^{7}$ steps, minimizing the \ac{bce} loss detailed in \cref{eq:xor-bce-loss}.
A run was deemed converged if its final loss fell below \texttt{TARGET\_LOSS}~$= 10^{-12}$; $296$ of the $1000$ runs converged.
The sweep was run locally on the JAX CPU backend on a MacBook Air with an Apple M4 chip, 10 CPU cores, and 16 GB unified memory; it required approximately 9 minutes of wall-clock time and used at most 0.5 GB of peak process memory.
All experiments were implemented in JAX \parencite[v0.10.0,][]{jax2018github} using the Equinox neural-network library \parencite[v0.13.8,][]{kidger2021equinox}.

\begin{figure}[t!]
  \centering
  \includegraphics[width=0.8\textwidth]{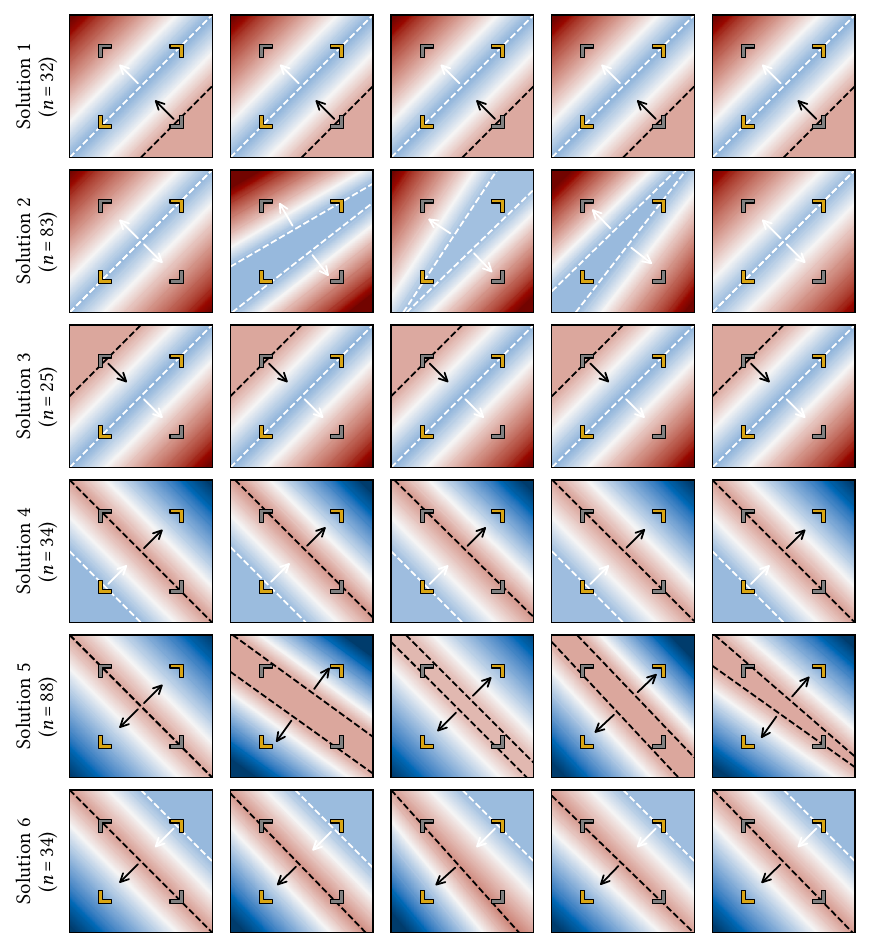}
  \caption{\textbf{Cluster medoids and maximally dissimilar members.}
    The leftmost panel in each row shows the cluster medoid; moving right, each panel shows the member farthest from those already selected.
    Because training produces substantially larger readout weights than the closed-form representatives in \cref{tab:xor-relu-solutions}, the readouts of each network are rescaled for visualization to match the logit scale of \cref{eq:xor-logits}.
  }
  \label{fig:xor-sweep-clusters}
\end{figure}
\space

For each converged network, we converted the affine hidden parameters to the geometric parameterization of \cref{eq:xor-geometric-pre-activation}:
\begin{equation}
  g_{j} = \norm{\mathbf{w}_{j}},
  \qquad
  \phi_{j} = \mathrm{atan2}(w_{j,2}, w_{j,1}),
  \qquad
  d_{j} = -\frac{b_{j}}{g_{j}}.
\end{equation}
Before comparing networks, we accounted for two structural symmetries: the positive-scaling symmetry of ReLU and the permutation symmetry of hidden neurons.
We removed positive scaling from the neuron representation and quotiented out hidden-unit permutations in the network-level distance.
For the former, we absorbed the gain $g_{j}$ into the readout and defined the effective readout weight $\beta_{j} \coloneqq a_{j}g_{j}$.
Each hidden neuron is then represented by
\begin{equation}
  \mathbf{s}_{j}
  \coloneqq
  (\cos\phi_{j}, \sin\phi_{j}, d_{j}, \beta_{j}),
\end{equation}
where the unit-circle representation of $\phi_{j}$ avoids branch-cut artifacts at $\phi = \pm\pi$.
A network is thus represented by its two neuron descriptors $\mathbf{s}_{1},\mathbf{s}_{2}$ and its output bias $b$.

For two networks $\boldsymbol{\theta}$ and $\boldsymbol{\xi}$, let $\mathbf{s}_{j}(\boldsymbol{\theta})$ and $\mathbf{s}_{j}(\boldsymbol{\xi})$ denote their neuron descriptors and let $b_{\boldsymbol{\theta}}$ and $b_{\boldsymbol{\xi}}$ denote their output biases.
We define
\begin{equation}
  \label{eq:xor-quotient-metric}
  d_{\mathrm{quot}}(\boldsymbol{\theta}, \boldsymbol{\xi})
  \coloneqq
  \min_{\pi\in \mathfrak{S}_{2}}
  \Bigl(
    (b_{\boldsymbol{\theta}}-b_{\boldsymbol{\xi}})^{2}
    +
    \sum_{j=1}^{2}
    \norm{
      \mathbf{s}_{j}(\boldsymbol{\theta})
      -
      \mathbf{s}_{\pi(j)}(\boldsymbol{\xi})
    }^{2}
  \Bigr)^{1/2}.
\end{equation}
This distance optimally matches the two hidden neurons before computing their Euclidean distance and therefore removes the hidden-unit permutation symmetry.
Because the neuron descriptors are themselves invariant under positive scaling, $d_{\mathrm{quot}}$ is invariant under both symmetries.

Before computing distances, we standardized each feature dimension to zero mean and unit variance to prevent differences in scale from dominating the comparison.
In particular, the unit-circle coordinates are bounded in $[-1,1]$, whereas signed distances and effective readout weights span wider ranges.
For each neuron-level dimension, the standardization statistics were pooled across both hidden-neuron positions, ensuring that the same transformation is applied regardless of neuron ordering and preserving the permutation invariance of $d_{\mathrm{quot}}$.
We then performed average-linkage hierarchical agglomerative clustering using $d_{\mathrm{quot}}$ and cut the resulting dendrogram at the largest gap in its sequence of merge distances.
This yielded six clusters, with $88$, $83$, $34$, $34$, $32$, and $25$ networks.
Each cluster corresponds to one of the six solution types listed in \cref{tab:xor-relu-solutions}.
Solutions~5 and~2, whose two activation boundaries pass through the origin, account for the two largest clusters, with $88$ and $83$ networks, followed by solutions~4, 6, 1, and~3 with $34$, $34$, $32$, and $25$ networks, respectively.
\Cref{fig:xor-sweep-clusters} shows a small set of diverse representatives of each cluster, selected using medoid-seeded farthest-first (maxmin) sampling.

The corresponding six solution types form two families of three, which we call \emph{diagonal} and \emph{anti-diagonal} according to the orientation of their activation boundaries.
Representatives are shown in Panel~B of \cref{fig:dissociation}, with all six solution types displayed individually in \cref{fig:xor-solutions}.
These types characterize the dominant solutions obtained in this sweep; we do not claim that they exhaust the two-ReLU \ac{xor} solutions attainable under other optimizers, learning rates, initialization distributions, or training protocols.

\subsection{Closed-form characterization of the six dominant solution types}
\label[appendix]{app-subsec:xor-closed-form-characterization}

Each of the six solution types identified by gradient descent admits a convenient closed-form representative using the geometric parameterization introduced in \cref{eq:xor-geometric-pre-activation}.
We choose representatives with unit gain ($g_{1} = g_{2} = 1$), so that $\mathbf{w}_{j} = \mathbf{n}(\phi_{j})$ and $b_{j} = -d_{j}$.
\Cref{tab:xor-relu-solutions} lists their complete specifications, and \cref{fig:xor-solutions} visualizes the networks they define.

\begin{table}[t]
  \caption{Closed-form representatives of the six ReLU solution types for \ac{xor} identified through the gradient-descent sweep.
    Each row specifies the angles $(\phi_{1}, \phi_{2})$, signed distances $(d_{1}, d_{2})$, readout weights $(a_{1}, a_{2})$, and output bias $b$.
    All solutions listed here use unit gain.
  }
  \label{tab:xor-relu-solutions}
  \centering
  \begin{tabular}{clcccc}
    \toprule
    \# & Family        & $(\phi_{1}, \phi_{2})$             & $(d_{1}, d_{2})$ & $(a_{1}, a_{2})$ & $b$                             \\
    \midrule
    1  & Diagonal      & $(\frac{3\pi}{4}, \frac{3\pi}{4})$ & $(0, -\sqrt{2})$ & $(2, -1)$        & $\phantom{-}\frac{1}{\sqrt{2}}$ \\[4pt]
    2  & Diagonal      & $(\frac{3\pi}{4}, \frac{7\pi}{4})$ & $(0, 0)$         & $(1, 1)$         & $-\frac{1}{\sqrt{2}}$           \\[4pt]
    3  & Diagonal      & $(\frac{7\pi}{4}, \frac{7\pi}{4})$ & $(-\sqrt{2}, 0)$ & $(-1, 2)$        & $\phantom{-}\frac{1}{\sqrt{2}}$ \\[4pt]
    4  & Anti-diagonal & $(\frac{\pi}{4}, \frac{\pi}{4})$   & $(-\sqrt{2}, 0)$ & $(1, -2)$        & $-\frac{1}{\sqrt{2}}$           \\[4pt]
    5  & Anti-diagonal & $(\frac{\pi}{4}, \frac{5\pi}{4})$  & $(0, 0)$         & $(-1, -1)$       & $\phantom{-}\frac{1}{\sqrt{2}}$ \\[4pt]
    6  & Anti-diagonal & $(\frac{5\pi}{4}, \frac{5\pi}{4})$ & $(0, -\sqrt{2})$ & $(-2, 1)$        & $-\frac{1}{\sqrt{2}}$           \\
    \bottomrule
  \end{tabular}
\end{table}

\paragraph{Reduction to one dimension.}
A key structural property shared by all six solutions is that both weight vectors $\mathbf{w}_{1}$ and $\mathbf{w}_{2}$ are collinear.
Specifically:
\begin{itemize}
  \item Solutions 1--3 have angles $\phi_{j} \in \set{\nicefrac{3\pi}{4}, \nicefrac{7\pi}{4}}$.
  Since $\mathbf{n}(\nicefrac{7\pi}{4}) = -\mathbf{n}(\nicefrac{3\pi}{4})$,
  both weight vectors lie on the line spanned by $\mathbf{n}(\nicefrac{3\pi}{4})$.
  The network output depends on $\mathbf{x}$ only through the \emph{diagonal} projection $u_{d} = \mathbf{n}(\nicefrac{3\pi}{4})^{\top} \mathbf{x} = (x_{2} - x_{1}) / \sqrt{2}$.
  \item Solutions 4--6 use angles $\phi_{j} \in \set{\nicefrac{\pi}{4}, \nicefrac{5\pi}{4}}$, and the output depends only on the \emph{anti-diagonal} projection $u_{a} = \mathbf{n}(\nicefrac{\pi}{4})^{\top} \mathbf{x} = (x_{1} + x_{2}) / \sqrt{2}$.
\end{itemize}
In both cases, the four \ac{xor} points project to exactly three values $u \in \set{-\sqrt{2}, 0, +\sqrt{2}}$, with two points collapsing onto $u = 0$.
For the diagonal family, the label-$1$ points project to $u_{d} = \pm \sqrt{2}$ and the label-$0$ points to $u_{d} = 0$.
For the anti-diagonal family, the roles are reversed, with label-$0$ at $u_{a} = \pm \sqrt{2}$ and label-$1$ at $u_{a} = 0$.

\begin{figure}[t]
  \centering
  \includegraphics[width=0.85\textwidth]{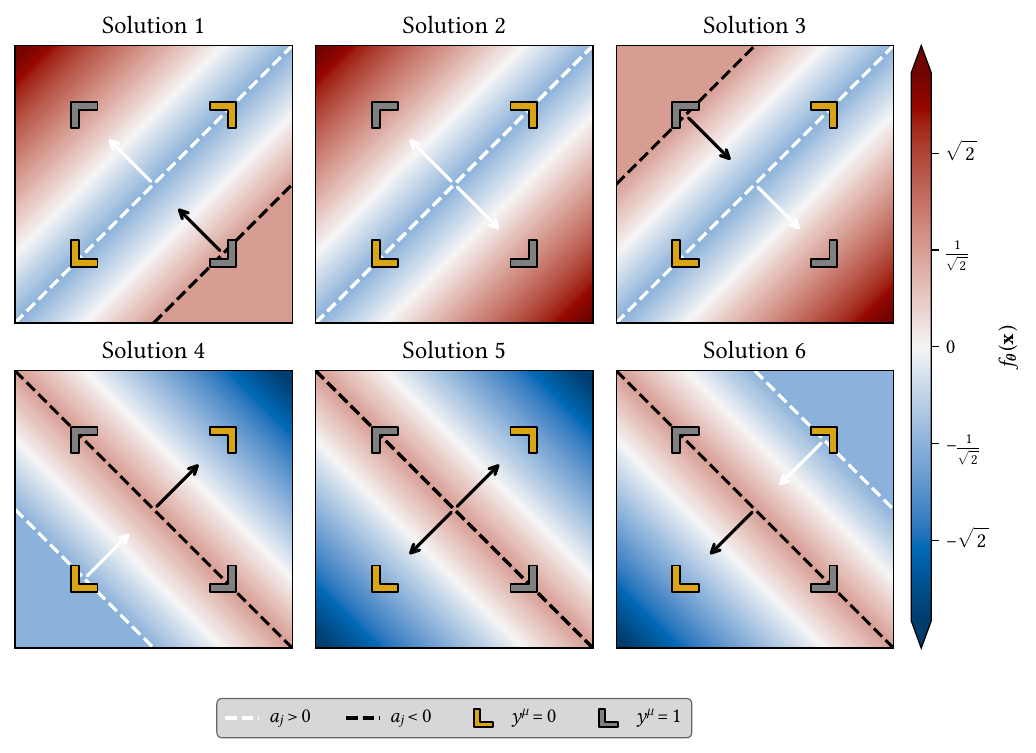}
  \caption{\textbf{The six closed-form ReLU solutions of \cref{tab:xor-relu-solutions} at unit gain.}
    Solutions~1--3 (top row) form the diagonal family, while solutions~4--6 (bottom row) form the anti-diagonal family.
    Within each family, the activation boundaries share a common orientation, while the solutions differ in their signed distances $d_j$ and readout weights $a_j$.
    Each panel shows the logit $f_{\boldsymbol{\theta}}(\mathbf{x})$ of \cref{eq:xor-network} as a heatmap, the activation boundaries $z_j(\mathbf{x})=0$ as dashed lines with unit normals $\mathbf{n}(\phi_j)$, and the four \ac{xor} data points.
    All six networks assign every data point a logit of magnitude $1/\sqrt{2}$.
  }
  \label{fig:xor-solutions}
\end{figure}
\space

\paragraph{Logits on the \ac{xor} data.}
One can verify by direct computation that every solution in \cref{tab:xor-relu-solutions} produces the same logit values on the four \ac{xor} points:
\begin{equation}
  \label{eq:xor-logits}
  f_{\boldsymbol{\theta}}(\mathbf{x}^{\mu}) =
  \begin{cases}
    +\frac{1}{\sqrt{2}}, & y^{\mu} = 1 \\
    -\frac{1}{\sqrt{2}}, & y^{\mu} = 0
  \end{cases}
  \qquad
  \mu = 1, \dots, 4.
\end{equation}
That is, all six networks classify \ac{xor} correctly, with every data point receiving the same logit magnitude $1 / \sqrt{2}$.
We illustrate this for one solution per family.

\emph{Solution~2} (diagonal family):
The network computes
\begin{equation}
  f_{\boldsymbol{\theta}}(\mathbf{x}) = \psi(u_{d}) + \psi(-u_{d}) - \frac{1}{\sqrt{2}} = \abs{u_{d}} - \frac{1}{\sqrt{2}},
\end{equation}
where $u_{d} = (x_{2} - x_{1}) / \sqrt{2}$.
On the data: $f = -1/\sqrt{2}$ at $u_{d} = 0$ (label~$0$) and $f = +1 / \sqrt{2}$ at $u_{d} = \pm \sqrt{2}$ (label~$1$).

\emph{Solution~5} (anti-diagonal family):
The network computes
\begin{equation}
  f_{\boldsymbol{\theta}}(\mathbf{x}) = -\psi(u_{a}) - \psi(-u_{a}) + \frac{1}{\sqrt{2}} = -\abs{u_{a}} + \frac{1}{\sqrt{2}},
\end{equation}
where $u_{a} = (x_{1} + x_{2}) / \sqrt{2}$.
On the data: $f = +1 / \sqrt{2}$ at $u_{a} = 0$ (label~$1$) and $f = -1 / \sqrt{2}$ at $u_{a} = \pm \sqrt{2}$ (label~$0$).

\subsection{Approaching zero \ac{bce} loss by parameter scaling}
\label[appendix]{app-subsec:xor-approaching-zero-loss-by-scaling}

Because the logistic sigmoid maps $\mathbb{R}$ into $(0, 1)$ without reaching the endpoints, no finite parameter vector $\boldsymbol{\theta}$ achieves exactly zero \ac{bce} loss.
However, the parameter vector $\boldsymbol{\theta}$ of each of the six solutions in \cref{tab:xor-relu-solutions} sits on a \emph{scaling ray} along which the loss decreases monotonically to zero.

\begin{proposition}
  \label[proposition]{prop:xor-bce-scaling-ray}
  Let $\boldsymbol{\theta}$ be any of the six solutions from \cref{tab:xor-relu-solutions}, and define the scaled parameterization
  \begin{equation}
    \label{eq:xor-scaling-ray}
    \boldsymbol{\theta}_{\tau}
    \coloneqq (\psi; \tau \mathbf{w}_{1}, \tau \mathbf{w}_{2}, \tau b_{1}, \tau b_{2}, a_{1}, a_{2}, \tau b),
    \qquad
    \tau > 0.
  \end{equation}
  Then $f_{\boldsymbol{\theta}_{\tau}}(\mathbf{x}^{\mu}) = \tau \, f_{\boldsymbol{\theta}}(\mathbf{x}^{\mu})$, for $\mu = 1, \dots, 4$, and the \ac{bce} loss along this ray satisfies
  \begin{equation}
    \label{eq:xor-bce-along-ray}
    \mathcal{L}(\tau) \coloneqq
    \mathcal{L}(\boldsymbol{\theta}_{\tau})
    = \log(1 + \exp(-\tau / \sqrt{2})),
  \end{equation}
  which is strictly decreasing in $\tau$, with $\lim_{\tau \to \infty} \mathcal{L}(\tau) = 0$.
\end{proposition}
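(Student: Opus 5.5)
The plan is to exploit positive $1$-homogeneity of the ReLU $\psi$, then compute the loss in closed form. First I will show the scaling identity. For each hidden neuron, $\psi(\tau\mathbf{w}_{j}^{\top}\mathbf{x}+\tau b_{j}) = \tau\,\psi(\mathbf{w}_{j}^{\top}\mathbf{x}+b_{j})$ for $\tau>0$ (\cref{lem:characterization-of-positively-homogeneous-functions}). With the readouts $a_{j}$ left unchanged and the output bias scaled to $\tau b$, this gives
\begin{equation}
  f_{\boldsymbol{\theta}_{\tau}}(\mathbf{x})
  = \tau\bigl(a_{1}\psi(z_{1}) + a_{2}\psi(z_{2}) + b\bigr)
  = \tau f_{\boldsymbol{\theta}}(\mathbf{x})
\end{equation}
for every $\mathbf{x}$, in particular on the four \ac{xor} points.

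Next I will substitute the logits from \cref{eq:xor-logits}. Every solution in \cref{tab:xor-relu-solutions} gives $f_{\boldsymbol{\theta}}(\mathbf{x}^{\mu}) = +1/\sqrt{2}$ when $y^{\mu}=1$ and $-1/\sqrt{2}$ when $y^{\mu}=0$. That identity is asserted in the paper to follow by direct computation and is illustrated only for solutions~2 and~5. To be self-contained, I would check the remaining four rows using the one-dimensional reduction: evaluate at $u\in\{-\sqrt{2},0,\sqrt{2}\}$ with the tabulated $d_{j}$, $a_{j}$, and $b$. This routine verification is the only real work.

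Then I will compute the loss. If $y^{\mu}=1$, the term is $-\log\varsigma(\tau/\sqrt{2}) = \log(1+e^{-\tau/\sqrt{2}})$. If $y^{\mu}=0$, the term is $-\log(1-\varsigma(-\tau/\sqrt{2})) = -\log\varsigma(\tau/\sqrt{2})$, using $1-\varsigma(-z)=\varsigma(z)$; this is the same value. Averaging four equal terms yields \cref{eq:xor-bce-along-ray}.

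Finally, I will establish monotonicity and the limit. The map $\tau\mapsto e^{-\tau/\sqrt{2}}$ is strictly decreasing to $0$, and $\log(1+\cdot)$ is strictly increasing and continuous with $\log 1 = 0$. Hence $\mathcal{L}(\tau)$ is strictly decreasing in $\tau$, with $\mathcal{L}(\tau)\to 0$ as $\tau\to\infty$.

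I expect no real obstacle. The step needing care is checking the logit table for all six solutions, in particular the sign conventions $b_{j}=-d_{j}$ and the directions of the normals $\mathbf{n}(\phi_{j})$.
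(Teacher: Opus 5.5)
Your proposal is correct and follows the paper's proof essentially step for step. Positive homogeneity of ReLU, with the output bias also scaled by $\tau$, gives $f_{\boldsymbol{\theta}_{\tau}} = \tau f_{\boldsymbol{\theta}}$. The common logits $\pm 1/\sqrt{2}$ then make all four BCE terms equal to $\log(1+\exp(-\tau/\sqrt{2}))$ via the reflection identity of $\varsigma$, and monotonicity and the limit are immediate. Your plan to check the logit table row by row is a worthwhile addition; the entries do check out under $b_{j} = -d_{j}$. The paper instead relies on its earlier claim that the logits follow by direct computation.
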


\begin{proof}
  Under the scaling in \cref{eq:xor-scaling-ray}, the network output satisfies $f_{\boldsymbol{\theta}_{\tau}}(\mathbf{x}) = \tau \, f_{\boldsymbol{\theta}}(\mathbf{x})$, since ReLU is positively homogeneous of degree~$1$.
  By \cref{eq:xor-logits}, the logit at each data point thus has magnitude $\tau / \sqrt{2}$ with the appropriate sign.
  For a label-$1$ point, the \ac{bce} contribution is $-\log \varsigma(\tau / \sqrt{2})$.
  For a label-$0$ point, it is $-\log(1 - \varsigma(-\tau / \sqrt{2})) = -\log \varsigma(\tau / \sqrt{2})$, where the last step uses the identity $1 - \varsigma(z) = \varsigma(-z)$ of the logistic sigmoid.
  Since all four terms are identical, the mean \ac{bce} is $\mathcal{L}(\tau) = \log(1 + \exp(-\tau / \sqrt{2}))$.
  The map $\tau \mapsto -\tau / \sqrt{2}$ is strictly decreasing, while $\exp$ and $\log$ are strictly increasing;
  hence $\mathcal{L}(\tau)$ is strictly decreasing.
  The fact that $\lim_{\tau \to \infty} \mathcal{L}(\tau) = 0$ is evident.
\end{proof}
\space
\space

\end{document}